\newcommand{\stepsize}{t}
\newcommand{\ngrad}{\mbox{$\hat{g}$}}
\newcommand{\vngrad}{\mbox{$\myvecsym{\ngrad}$}}
\newcommand{\gradop}[2]{\partial_{#1} #2}
\newcommand{\hessop}[2]{\partial_{#1}^2 #2}
\newcommand{\gradrdop}[2]{\partial_{#1}^3 #2}
\newcommand{\crossop}[3]{\gradop{#1}{\gradop{#2}{#3}}}
\newcommand{\crossrdop}[4]{\gradop{#1}{\gradop{#2}{\gradop{#3}{#4}}}}
\newcommand{\anyvar}{\lat}
\newcommand{\vanyvar}{\vlat}
\newcommand{\mixCompA}{B}
\newcommand{\vlat}{\vz}
\newcommand{\lat}{z}
\newcommand{\varpar}{\lambda}
\newcommand{\vvarpar}{\vlambda}
\newcommand{\vmix}{\vw}
\newcommand{\mix}{w}
\newcommand{\varmean}{m}
\newcommand{\vvarmean}{\vm}
\newcommand{\fim}{F}
\newcommand{\vfim}{\vF}
\newcommand{\dkls}[3]{\mathbb{D}_{KL}^{#1}[#2 \, \|\, #3]}
\newcommand\cut[1]{}
\newcommand{\elbofinal}{\mathcal{L}}
\newcommand{\squishlist}{
   \begin{list}{$\bullet$}
    { \setlength{\itemsep}{0pt}      \setlength{\parsep}{3pt}
      \setlength{\topsep}{3pt}       \setlength{\partopsep}{0pt}
      \setlength{\leftmargin}{1.5em} \setlength{\labelwidth}{1em}
      \setlength{\labelsep}{0.5em} } }
\newcommand{\squishlisttwo}{
   \begin{list}{$\bullet$}
    { \setlength{\itemsep}{0pt}    \setlength{\parsep}{0pt}
      \setlength{\topsep}{0pt}     \setlength{\partopsep}{0pt}
      \setlength{\leftmargin}{2em} \setlength{\labelwidth}{1.5em}
      \setlength{\labelsep}{0.5em} } }
\newcommand{\squishend}{
    \end{list}  }
\newtheorem{thm}{Theorem}{}
{}
{}
\newtheorem{defn}{Definition}{}
\newcommand{\half}{\mbox{$\frac{1}{2}$}}
\newcommand{\real}{\mbox{$\mathbb{R}$}}
\newcommand{\rnd}[1]{\left(#1\right)}
\newcommand{\sqr}[1]{\left[#1\right]}
\newcommand{\crl}[1]{\left\{#1\right\}}
\newcommand{\myang}[1]{\langle#1\rangle}
\newcommand{\myexpect}{\mathbb{E}}
\newcommand{\Unmyexpect}[1]{\mathbb{E}_{\scaleto{#1\mathstrut}{6pt}}}
\newcommand{\lapdist}{\mbox{Lap}}
\newcommand{\gauss}{\mbox{${\cal N}$}}
\newcommand{\Student}{\mbox{${\cal T}$}}
\newcommand{\myvec}[1]{\mbox{$\mathbf{#1}$}}
\newcommand{\myvecsym}[1]{\mbox{$\boldsymbol{#1}$}}
\newcommand{\valpha}{\mbox{$\myvecsym{\alpha}$}}
\newcommand{\vepsilon}{\mbox{$\myvecsym{\epsilon}$}}
\newcommand{\veta}{\mbox{$\myvecsym{\eta}$}}
\newcommand{\vmu}{\mbox{$\myvecsym{\mu}$}}
\newcommand{\vlambda}{\mbox{$\myvecsym{\lambda}$}}
\newcommand{\vphi}{\mbox{$\myvecsym{\phi}$}}
\newcommand{\vpi}{\mbox{$\myvecsym{\pi}$}}
\newcommand{\vsigma}{\mbox{$\myvecsym{\sigma}$}}
\newcommand{\vSigma}{\mbox{$\myvecsym{\Sigma}$}}
\newcommand{\vb}{\mbox{$\myvec{b}$}}
\newcommand{\ve}{\mbox{$\myvec{e}$}}
\newcommand{\vg}{\mbox{$\myvec{g}$}}
\newcommand{\vh}{\mbox{$\myvec{h}$}}
\newcommand{\vm}{\mbox{$\myvec{m}$}}
\newcommand{\vs}{\mbox{$\myvec{s}$}}
\newcommand{\vu}{\mbox{$\myvec{u}$}}
\newcommand{\vw}{\mbox{$\myvec{w}$}}
\newcommand{\vz}{\mbox{$\myvec{z}$}}
\newcommand{\vA}{\mbox{$\myvec{A}$}}
\newcommand{\vF}{\mbox{$\myvec{F}$}}
\newcommand{\vG}{\mbox{$\myvec{G}$}}
\newcommand{\vI}{\mbox{$\myvec{I}$}}
\newcommand{\vL}{\mbox{$\myvec{L}$}}
\newcommand{\vQ}{\mbox{$\myvec{Q}$}}
\newcommand{\vR}{\mbox{$\myvec{R}$}}
\newcommand{\vS}{\mbox{$\myvec{S}$}}
\newcommand{\vU}{\mbox{$\myvec{U}$}}
\newcommand{\vV}{\mbox{$\myvec{V}$}}
\newcommand{\vX}{\mbox{$\myvec{X}$}}
\newcommand{\calD}{\mbox{${\cal D}$}}
\newcommand{\data}{\calD}
\newcommand{\be}{\begin{equation}}
\newcommand{\ee}{\end{equation}}
\newcommand{\bea}{\begin{eqnarray}}
\newcommand{\eea}{\end{eqnarray}}
\newcommand{\beaa}{\begin{eqnarray*}}
\newcommand{\eeaa}{\end{eqnarray*}}
\newtheorem{lemma}{Lemma}
\newenvironment{proof}{\paragraph{Proof:}}{\hfill$\square$}
\icmltitlerunning{
Handling the Positive-Definite Constraint in the Bayesian Learning Rule
}
\begin{document}

\twocolumn[
\icmltitle{
Handling the Positive-Definite Constraint in the Bayesian Learning Rule
}




\begin{icmlauthorlist}
\icmlauthor{Wu Lin}{ubc}
\icmlauthor{Mark Schmidt}{ubc,amii}
\icmlauthor{Mohammad Emtiyaz Khan}{riken}
\end{icmlauthorlist}

\icmlaffiliation{ubc}{University of British Columbia, Vancouver, Canada.}
\icmlaffiliation{riken}{RIKEN Center for Advanced Intelligence Project, Tokyo, Japan}
\icmlaffiliation{amii}{CIFAR AI Chair, Alberta Machine Intelligence Institute, Canada.}

\icmlcorrespondingauthor{Wu Lin}{wlin2018@cs.ubc.ca}

\icmlkeywords{Machine Learning, ICML}

\vskip 0.25in
]



\printAffiliationsAndNotice{} 

\begin{abstract}
The Bayesian learning rule is a natural-gradient variational inference method, which not only contains many existing learning algorithms as special cases but also enables the design of new algorithms.
Unfortunately, when variational parameters lie in an open constraint set, the rule may not satisfy the constraint and requires line-searches which could slow down the algorithm.
In this work, we address this issue for  positive-definite constraints by proposing an improved rule that naturally handles the constraints.
Our modification is obtained by using Riemannian gradient methods, and is valid when the approximation attains a \emph{block-coordinate natural parameterization} (e.g., Gaussian distributions and their mixtures).
We propose a  principled way to derive Riemannian gradients and retractions from scratch. 
Our method outperforms existing methods without any significant increase in computation. 
Our work makes it easier to apply the rule in the presence of positive-definite constraints in parameter spaces.

\end{abstract}

\section{Introduction}
\label{sec:intro}
The Bayesian learning rule, a recently proposed method, enables derivation of learning algorithms from Bayesian principles \cite{emti2020bayesprinciple}.
It is a natural-gradient variational inference method \cite{khan2017conjugate} where, by carefully choosing a posterior approximation, we can derive many  algorithms in fields such as probabilistic graphical models, continuous optimization, and deep learning. 
\citet{khan2017conjugate} derive approximate inference methods, such as stochastic variational inference and variational message passing; \citet{khan18a} derive connections to deep-learning algorithms; and \citet{emti2020bayesprinciple} derive many classical algorithms such as least-squares, gradient descent, Newton's method, and the forward-backward algorithm. 
We can also design new algorithms using this rule such as uncertainty estimation in deep learning \cite{osawa2019practical} and 
the ensemble of Newton methods \cite{lin2019fast}.

An issue with the rule is that when  parameters of a posterior approximation lie in an open constraint set, the update may not always satisfy the constraints. For Gaussian approximations, the posterior covariance needs to be positive definite but the rule may violate this; see Appendix D.1~in \citet{khan18a} for detail.
A straightforward solution is to use a backtracking line-search to keep the updates within the constraint set \cite{khan2017conjugate}, but this can lead to slow convergence.
In some cases, we can find an approximate update which always satisfies the constraints, e.g., for Gaussian approximations \cite{khan18a}.
However, in general, it is difficult to come up with such approximations that are both fast and reasonably accurate.
Our goal in this paper is to modify the Bayesian learning rule so that it can naturally handle such constraints. 

We propose an improved Bayesian learning rule to handle the positive-definite constraints. This is obtained by using a generalization of natural-gradient methods called  Riemannian-gradient methods.
We show that, for many useful approximations with a specific block-diagonal structure on the Fisher information matrix, the constraints are satisfied after an additional term is added to the rule.
Such a structure is possible when the parameters of the approximation are partitioned in what we call the \emph{block-coordinate natural (BCN) parameterizations}.
Fortunately, for many approximations with such parameterizations, the improved rule requires almost the same computation as the original rule.
An example is shown in Figure \ref{fig:ourAdam} where our improved rule fixes an implementation issue with an algorithm proposed by \citet{osawa2019practical} for deep learning. 
We present examples where the improved rule converges faster than the original rule and many existing baseline methods.

\begin{figure*}[!t]
\center

	\fbox{
			\begin{minipage}{.47\textwidth}
            \textbf{Variational Online Gauss-Newton (VOGN) Algorithm}
				\begin{algorithmic}[1]
               \STATE $\vlat \leftarrow \vmu {\, +\,  \left(N\hat{\vs}\right)^{-1/2} \odot \vepsilon}$, where  $\vepsilon \sim \gauss(\mathbf{0},\vI)$
               \STATE Randomly sample a minibatch $\mathcal{M}$ of size $M$ 
               \STATE {\color{blue} Compute and store individual gradients $\vg_i, \forall i \in \mathcal{M}$} 
               \STATE $\vg_\mu \leftarrow \frac{\lambda}{N} \vmu + \frac{1}{M} \sum_{i=1}^{M}\vg_i$
					\STATE $\vm \leftarrow r_1 \, \vm + (1-r_1) \,\vg_\mu, \quad \bar{\vm} \leftarrow \vm/(1-r_1^k)$
               \STATE $\vg_s \leftarrow \frac{\lambda}{N} - \hat{\vs} + {\color{blue}\frac{1}{M}  \sum_{i=1}^{M} (\vg_i \odot \vg_i) }   $ 
               
               \STATE
                $\hat{\vs} \leftarrow  \hat{\vs} + (1-r_2) \, \vg_s $ 
               \STATE $\vmu \leftarrow \vmu - \stepsize \,\, \bar{\vm} / \bar{\vs}$,\,\, where $\bar{\vs} \leftarrow \hat{\vs} /(1-r_2^k)$
				\end{algorithmic}
	\end{minipage}
	}
   \fbox{
			\begin{minipage}{.47\textwidth}
				\textbf{Our Adam-like Optimizer}
				\begin{algorithmic}[1]
               \STATE $\vlat \leftarrow \vmu {\, +\,  \left(N\hat{\vs}\right)^{-1/2} \odot \vepsilon}$, where  $\vepsilon \sim \gauss(\mathbf{0},\vI)$
               \STATE Randomly sample a minibatch  $\mathcal{M}$ of size $M$ 
               \STATE {\color{red} Compute a mini-batch gradient $\bar{\vg} \leftarrow  \frac{1}{M}\sum_{i=1}^{M}\vg_i$} 
               \STATE $\vg_\mu \leftarrow \frac{\lambda}{N} \vmu + \bar{\vg}$
					\STATE $\vm \leftarrow r_1 \, \vm + (1-r_1) \,\vg_\mu, \quad \bar{\vm} \leftarrow \vm/(1-r_1^k)$
               \STATE $\vg_s \leftarrow  \frac{\lambda}{N} - \hat{\vs} + {\color{red}\sqr{ \left(N\hat{\vs}\right) \odot \left( \vlat - \vmu \right)} \odot \bar{\vg}}  $ 
               
               \STATE $\vmu \leftarrow \vmu - \stepsize \,\, \bar{\vm} / \bar{\vs}$, \,\, where $\bar{\vs} \leftarrow \hat{\vs} /(1-r_2^k)$
               \STATE $\hat{\vs} \leftarrow  \hat{\vs} + (1-r_2) \, \vg_s \, {\color{red}+ \half (1-r_2)^2\vg_s \odot \hat{\vs}^{-1} \odot \vg_s} $

				\end{algorithmic}
	\end{minipage}
	}
   \label{fig:ourAdam}
   \caption{Our improved Bayesian learning rule solves an implementation issue with an existing algorithm known as VOGN \cite{khan18a} (shown in the left). VOGN is an Adam-like optimizer which gives state-of-the-art results on large deep learning problems \citep{osawa2019practical}. However, it requires us to store individual gradients in a minibatch which makes the algorithm slow (shown with blue in line 3 and 6).
   This is necessary for the scaling vector $\hat{\vs}$ to obtain a good estimate of uncertainty.
   Our work in this paper fixes this issue using the improved Bayesian learning rule. Our Adam-like optimizer (shown in the right) only requires average over the minibatch (see line 3). Line 6 is simply changed to use the re-parametrization trick with the averaged gradient.
   The additional terms added to the Bayesian learning rule is shown  in red in line 8. These changes do not increase the computation cost significantly while fixing the implementation issue of VOGN. Due to our modification, the scaling vector $\hat{\vs}$ always remains positive.
   A small difference is that the mean $\vmu$ is updated before in our optimizer (see line 7 and 8), while in VOGN it is the opposite.
   The difference shows that NGD depends on parameterization.
}
   \vspace{-0.3cm}
\end{figure*}

\section{Bayesian Learning Rule}

Given a dataset $\data$, it is common to estimate unknown variables $\vlat$ of a statistical model by minimizing\footnote{We assume  $\nabla_\lat \bar{\ell}(\vlat)$ and $\nabla_\lat^2 \bar{\ell}(\vlat)$ exist almost surely whenever they are needed. $\nabla$ denotes the standard  derivative in this paper. } $ \bar{\ell}(\vlat) \equiv \ell(\data,\vlat) + R(\vlat)$ where $\ell(\data,\vlat)$ is a loss function and $R(\vlat)$ is a regularizer. Many estimation strategies can be used, giving rise to various learning algorithms. E.g., maximum-likelihood approaches use gradient-based methods such as gradient descent and Newton's method,
while Bayesian approaches use inference algorithms such as message passing.

\citet{emti2020bayesprinciple} showed that many learning algorithms can be obtained from Bayesian principles. The key idea is to use the following Bayesian formulation where, instead of minimizing over $\vlat$, we minimize over a distribution $q(\vlat)$: 
\begin{align}
   \min_{q(\lat) \in \mathcal{Q} } \Unmyexpect{q(\lat)}{ \sqr{ \ell(\data, \vlat) } }  + \dkls{}{q(\vlat)}{p(\vlat)} \equiv \elbofinal(q).
   \label{eq:bayes}
\end{align} 
Here, $q(\vlat)$ is an approximation of the posterior of $\vlat$ given $\data$, $\mathcal{Q}$ is the set of approximation distributions, $p(\vlat) \propto \exp(-R(\vlat))$ is the prior, and $\mathbb{D}_{KL}$ denotes the Kullback-Leibler divergence.
To obtain existing learning algorithms from the above formulation, we need to carefully choose the approximation family $\mathcal{Q}$. \citet{emti2020bayesprinciple} consider the following \emph{minimal} exponential family (EF) distribution:
\begin{align*}
  q(\vlat|\vvarpar) := h(\vlat)\exp\sqr{ \myang{\vphi(\vlat), \vvarpar} - A(\vvarpar)} 
\end{align*}
where $\vphi(\vlat)$ is a vector containing sufficient statistics, 
$h(\vlat)$ is the base measure, $\vvarpar \in \Omega$ is the natural parameter,
$\Omega$ is the set of valid natural-parameters so that the log-partition function $A(\vvarpar)$ is finite,
and $\myang{ \cdot , \cdot}$ denotes an inner product.

\citet{emti2020bayesprinciple} present the Bayesian learning rule to optimize \eqref{eq:bayes}, which is a natural-gradient descent (NGD) update originally proposed by \citet{khan2017conjugate} for variational inference. The update takes the following form:
\begin{align}
 \text{NGD}: \,\,\,  \vvarpar\leftarrow \vvarpar - \stepsize \vngrad, \textrm{ with } \vngrad := \vF^{-1} \partial_{\varpar} \, \elbofinal(\vvarpar)
   \label{eq:ngvi}
\end{align}
where $\stepsize>0$ is a scalar step-size and $\vngrad$ is the natural gradient defined using the Fisher information matrix (FIM) $\vF := -\myexpect_q [ \partial_{\varpar}^2 \log q(\vlat|\vvarpar)]$ of $q$ and $\elbofinal(\vvarpar)$ which is equal to $\elbofinal(q)$ but defined in terms of $\vvarpar$.
\citet{emti2020bayesprinciple} proposed further simplifications,
e.g., for approximations with base measure $h(\vlat) \equiv 1$,
we can write \eqref{eq:ngvi} as
\begin{align}
   \vvarpar & \leftarrow 
    (1-\stepsize) \vvarpar -   \stepsize \partial_{\varmean} \, \Unmyexpect{q}{ \sqr{ \bar{\ell}(\vlat) }  }
   \label{eq:blr}
\end{align}
where $\vvarmean:=\Unmyexpect{q(\lat)}{ \sqr{ \vphi(\vlat) } }$ denotes the expectation parameter.

Existing learning algorithms can be derived as special cases by choosing an approximate form for $q(\vlat)$.
For example, when $q(\vlat) := \gauss(\vlat|\vmu,\vS^{-1})$ is a multivariate Gaussian approximation with the mean $\vmu$ and the precision matrix $\vS$, the learning rule \eqref{eq:blr} can be expressed as follows:
\begin{align}
   \vS & \leftarrow (1-\stepsize) \vS  + \stepsize \Unmyexpect{q}{\sqr{ \nabla_{\lat}^2 \bar{\ell}(\vlat) } }  \label{eq:von_1} \\
   \vmu & \leftarrow \vmu - \stepsize \vS^{-1}\Unmyexpect{q}{\sqr{ \nabla_\lat \bar{\ell}(\vlat) } } \label{eq:von_2} 
\end{align}
This algorithm uses the Hessian to update $\vS$ which is then used to scale the update for $\vmu$, in a similar fashion as Newton's method. The main difference here is that the gradient and Hessian are obtained at samples from $q(\vlat)$ instead of the current iterate $\vmu$. \citet{emti2020bayesprinciple} approximate the expectation at $\vmu$ to obtain an online Newton method. This algorithm is closely related to deep-learning optimizers, such as, RMSprop and Adam \cite{khan18a,zhang2018noisy}. 
A simplified version of this algorithm obtains  state-of-the-art results on large deep-learning problems for uncertainty estimation as shown by \citet{osawa2019practical}.

Many other examples are discussed in \citet{emti2020bayesprinciple}, including algorithms such as  stochastic gradient descent. The relationship to message passing algorithms and stochastic variational inference is shown in \citet{khan2017conjugate}. In summary, the Bayesian learning rule is a generic learning rule that can be used not only to derive existing algorithms, but also to improve them and design new ones. 

\subsection{Positive-Definite Constraints}
An issue  with updates \eqref{eq:ngvi} and \eqref{eq:blr} is that  the constraint $\vvarpar\in\Omega$ is not taken into account, where $\Omega$ is the set of valid  parameters. The update is valid when $\Omega$ is unconstrained (e.g., a Euclidean  space), but otherwise it may violate the constraint. An  example is the multivariate Gaussian of dimension $d$ where the precision matrix $\vS\in \mathbb{S}^{d\times d}_{++}$ is required to be
real and  positive-definite,
while the mean $\vmu\in \real^d$ is unconstrained. In such cases, the update may violate the constraint. E.g., in the update \eqref{eq:von_1},  $\vS$ can be indefinite, when the loss $\bar{\ell}(\vlat)$ is nonconvex. A similar issue appears when flexible approximations are used such as Gaussian mixtures.

Another example is a gamma distribution:
   $q(\lat|\alpha,\beta) \propto \lat^{\alpha-1} e^{-\lat\beta}$ where both $\alpha,\beta>0$. We denote the positivity constraint using $\mathbb{S}^1_{++}$. The rule takes the following form:
\begin{align}
   \alpha \leftarrow  (1-\stepsize) \alpha  - \stepsize    \ngrad_{\alpha}, \quad\quad    
   \beta \leftarrow  (1-\stepsize) \beta  - \stepsize  \ngrad_{\beta}
   \label{eq:blr_gamma}
\end{align}
where $\ngrad_{\alpha}$ and $\ngrad_\beta$ are gradient of $\Unmyexpect{q(\lat)}{ \sqr{ \bar{\ell}(\lat) - \log\lat  } }$ with respect to the expectation parameters $\varmean_\alpha=\Unmyexpect{q(\lat)}{\sqr{\log\lat } }$ and 
$\varmean_\beta=\Unmyexpect{q(\lat)}{\sqr{-\lat } }$ respectively; see a detailed derivation in Appedix E.3~in \citet{khan2017conjugate}.
Here again the learning rule does not ensure that $\alpha$ and $\beta$ are always positive.

In general, a backtracking line search proposed in \citet{khan2017conjugate} can be used so that the iterates stay within the constraint set. However, this could be slow in practice. \citet{khan18a} discuss this issue for the Gaussian case; see Appendix D.1 in their paper. They found that using   line-search is computationally expensive and non-trivial to implement for deep-learning problems.
They address this issue by approximating the
Hessian in \eqref{eq:von_1} with a positive-definite matrix.
This ensures that $\vS$ is always positive-definite.
However,
such approximations are difficult to come up with for general cases. E.g., for the gamma case, there is no such straight-forward approximation in update \eqref{eq:blr_gamma} to ensure positivity of $\alpha$ and $\beta$.
It is also possible to use an unconstrained transformation (e.g., a Cholesky factor). This approach uses automatic-differentiation (Auto-Diff), which can be much slower than explicit gradient forms (see the discussion in Section \ref{sec:related}).
Handling constraints within the Bayesian learning rule is an open issue which limits its applications.

In this paper, we focus on  positive-definite constraints and show that, in many cases, such constraints can be naturally handled by adding an additional term to the Bayesian learning rule. We show that, for this to happen, the approximation needs to follow a specific parameterization. We will now describe the modification in the next section, and later give its derivation using Riemannian gradient methods.

\section{Improved Bayesian Learning Rule}
\label{sec:IBLR}
We will give a new rule to handle the positive-definite constraints. Our  idea is to partition the parameter into blocks
so that each constraint is isolated in an individual block.

{\bf Assumption 1 [Mutually-Exclusive Constraints] :} \emph{
We assume parameter $\vvarpar=\{\vvarpar^{[1]}, \dots,\vvarpar^{[m]}\}$ can be partitioned into $m$ blocks with mutually-exclusive constraints $\Omega =\Omega_1 \times \dots \times \Omega_m$,  where square bracket $[i]$ denotes the $i$-th block and each block $\vvarpar^{[i]}$ is either unconstrained or positive-definite.
}

For example, consider  multivariate Gaussian approximations with the two blocks: one block containing the mean $\vmu$ and another containing the full precision $\vS$. This satisfies the above assumption because the first block is unconstrained and the second block is positive definite.
In $d$-dimensional diagonal Gaussian cases, we consider $2d$ blocks: one block containing the mean $\mu_i$ and  one block containing the precision $s_i$ for each dimension $i$, where each $s_i$ is positive.
Other examples such as gammas and inverse Gaussians can be partitioned to two blocks, where each block is positive.


{\bf Assumption 2 [Block Coordinate Parameterization] :} \emph{
   A parameterization satisfied Assumption 1 is block coordinate (BC) if the FIM  is block-diagonal according to  the block structure of the parameterization.
}

For Gaussians, using the mean and the covariance/precision as two blocks is a BC parameterization (see Appendix \ref{app:gauss_case}), while the natural parameterization is not \citep{malago2015information}. 
For EFs, we could use the Crouzeix identity \citep{nielsen2019geodesic} to identify a BC parameterization.

{\bf Assumption 3 [Block Natural Parameterization for EF] :}
   \emph{
      For $q(\vlat|\vvarpar)$ and each block $\vvarpar^{[i]}$, there exist function $\phi_i$ and $h_i$ such that $q(\vlat|\vvarpar)$ can be re-expressed as a minimal EF  distribution  given that the rest of blocks $\vvarpar^{[-i]}$ are known.
   \begin{align*}
      q(\vlat|\vlambda) \equiv h_i\rnd{\vlat, \vvarpar^{[-i]}} \exp\sqr{ \left\langle {\vphi_i\rnd{ \vlat,\vvarpar^{[-i]} }, \vvarpar^{[i]}} \right\rangle - A(\vvarpar)} 
   \end{align*} 
   }
\citet{lin2019fast} originally use Assumption 3 to define a multilinear EF.
We illustrate this assumption on the Gaussian distribution which can be written as the following exponential form, 
where
$A(\vmu,\vS)=\half\big[ \vmu^T\vS\vmu - \log \left|\vS/(2\pi)\right| \big]$ is the log-partition function.
\begin{align*}
q(\vlat|\vmu,\vS)=
\exp\Big( -\half \vlat^T\vS\vlat + \vlat^T\vS\vmu  - A(\vmu,\vS)\Big) 
\end{align*} 

Considering two blocks with $\vmu$ and $\vS$ respectively, we can express this distribution as follows, where the first equation is for $\vmu$ while the second equation is for  $\vS$:
\begin{align*}
   q(\vlat|\vmu,\vS) &= \underbrace{ \exp(-\half \vlat^T\vS\vlat)}_{h_1(\bf{\lat},\bf{S})} \exp\Big(  \myang{ \underbrace{\vS\vlat}_{\phi_1(\bf{\lat}, \bf{S})}, \vmu}    - A(\vmu,\vS)\Big)  \\
    &= \underbrace{1}_{h_2(\bf{\lat},\boldsymbol{\mu})} \exp\Big(  \myang{ \underbrace{-\half\vlat\vlat^T + \vmu\vlat^T }_{\phi_2(\bf{\lat},\boldsymbol{\mu})}, \vS}   - A(\vmu,\vS)\Big) 
\end{align*}
\vspace{-0.4cm}

We define the \emph{block-coordinate natural (BCN) parameterization} for an  EF  distribution as the parameterization which satisfies Assumptions from 1 to 3. Therefore, Gaussian distribution with $\vmu$ and $\vS$  can be expressed in a BCN parameterization $\vvarpar=\{\vvarpar^{[1]}, \vvarpar^{[2]}\}$, where 
$\vvarpar^{[1]}=\vmu$ and 
$\vvarpar^{[2]}=\vS$. Let $\varpar^{a_i}$ denote  the $a$-th entry of the $i$-th block parameter $\vvarpar^{[i]}$, where $a_i$ is a local index for block $i$.
$\ngrad^{c_i}$ is the $c$-th entry of natural gradient $\vngrad^{[i]}$ with respect to $\vvarpar^{[i]}$.

We now present the rule (see Section \ref{sec:deriv} for a derivation).
Under a BCN parameterization  $\vvarpar$, our rule  for block $i$ takes the following form with an extra  term shown in red:
\begin{align}
\varpar^{c_i}  \leftarrow \varpar^{c_i} - \stepsize \ngrad^{c_i} {\color{red} - \frac{\stepsize^2}{2} \sum_{a_i} \sum_{b_i}\Gamma_{\ a_ib_i}^{c_i} \ngrad^{a_i}\ngrad^{b_i} },
\label{eq:our_iblr}
\end{align} 
where
each summation is to sum over all entries of the $i$-th block,
$\Gamma^{c_i}_{\ a_i b_i} :=  \half  \partial_{\varmean_{c_i}} \partial_{\varpar^{a_i}} \partial_{\varpar^{b_i}}A(\vvarpar)$,
and
$\varmean_{c_i}$ is  the $c$-th entry of the BC
expectation parameter  $\vvarmean_{[i]}:=\Unmyexpect{q}{\big[ \vphi_i\big( \vlat,\vvarpar^{[-i]} \big) \big] } = \partial_{\varpar^{[i]}} A(\vvarpar)$.

The modification involves  computation of  the third-order term of the log-partition function\footnote{ We assume $A(\vvarpar)$ is (jointly) $C^{3}$-smooth. Note that $A(\vvarpar)$ is block-wisely $C^{3}$-smooth as shown in \citet{johansen1979introduction}. Approximations considered in this paper  satisfy this assumption. } $A(\vvarpar)$.
In the following Section \ref{sec:rgvi_gauss_case} and  \ref{sec:rgvi_gamma_case},
we discuss two examples where this computation is simplified and can be carried out like the original rule  with minimal computational increase.

Table \ref{tab:examples} in Appendix \ref{app:all_example_table} lists  more examples satisfying Assumption 1-3, where our rule can be applied and simplified.

\subsection{Example: Online Newton using  Gaussian Approximation}
\label{sec:rgvi_gauss_case}
The original rule for Gaussian approximations gives the update \eqref{eq:von_1}-\eqref{eq:von_2}, where the natural parameterization of Gaussian is used. We consider the parameterization $\vmu$ and $\vS$, in which the improved  rule takes the form (a detailed simplification is in Appendix \ref{app:gauss_case}) with an extra non-zero term shown in red:
\begin{align}
   \vmu & \leftarrow \vmu - \stepsize \vS^{-1}\Unmyexpect{q}{\sqr{ \nabla_\lat \bar{\ell}(\vlat) } } { \color{red} +  \mathbf{0} } \label{eq:ivon_1} \\
   \vS & \leftarrow (1-\stepsize) \vS  + \stepsize \Unmyexpect{q}{\sqr{ \nabla_{\lat}^2 \bar{\ell}(\vlat) } } { \color{red} +  \frac{\stepsize^2}{2} \hat{\vG} \vS^{-1} \hat{\vG} } \label{eq:ivon_2} ,
\end{align} where
$\hat{\vG}:= \vS-\Unmyexpect{q}{\sqr{ \nabla_{\lat}^2 \bar{\ell}(\vlat) } }$.
 The extra term  ensures that the positive definite constraint is  satisfied due to Theorem \ref{thm:gauss_update}.
 
 \begin{thm}
\label{thm:gauss_update}
   The updated $\vS$ in \eqref{eq:ivon_2} is positive definite if the initial $\vS$ is positive-definite.
\end{thm}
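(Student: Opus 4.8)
The key algebraic observation is that the right-hand side of \eqref{eq:ivon_2} should "complete the square" in $\vS$. Writing $\vH := \Unmyexpect{q}{\sqr{ \nabla_{\lat}^2 \bar{\ell}(\vlat) }}$ and $\hat{\vG} = \vS - \vH$, the update reads $\vS_{\text{new}} = (1-\stepsize)\vS + \stepsize \vH + \tfrac{\stepsize^2}{2}\hat{\vG}\vS^{-1}\hat{\vG}$. First I would substitute $\vH = \vS - \hat{\vG}$ to express everything in terms of the current $\vS$ and the correction $\hat{\vG}$: this gives $\vS_{\text{new}} = \vS - \stepsize \hat{\vG} + \tfrac{\stepsize^2}{2}\hat{\vG}\vS^{-1}\hat{\vG}$. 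Factoring out $\vS^{1/2}$ (the symmetric positive-definite square root of the current $\vS$, which exists by the induction hypothesis) on both sides, I would write $\vS_{\text{new}} = \vS^{1/2}\bigl( \vI - \stepsize \vN + \tfrac{\stepsize^2}{2}\vN^2 \bigr)\vS^{1/2}$, where $\vN := \vS^{-1/2}\hat{\vG}\vS^{-1/2}$ is a symmetric matrix.

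**The main step is then a scalar fact applied via the spectral theorem.** Since $\vN$ is symmetric, it suffices to show the polynomial $p(x) = 1 - \stepsize x + \tfrac{\stepsize^2}{2}x^2$ is strictly positive for all real $x$ and all $\stepsize > 0$. Its discriminant is $\stepsize^2 - 2\stepsize^2 = -\stepsize^2 < 0$, and the leading coefficient $\stepsize^2/2$ is positive, so $p(x) > 0$ for every $x \in \real$. Applying the spectral decomposition of $\vN$, the matrix $\vI - \stepsize\vN + \tfrac{\stepsize^2}{2}\vN^2 = p(\vN)$ is therefore positive-definite. Conjugating a positive-definite matrix by the invertible matrix $\vS^{1/2}$ preserves positive-definiteness, so $\vS_{\text{new}} \succ 0$. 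The induction on iterations then closes the argument: if the initial $\vS$ is positive-definite, so is every subsequent one.

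**I expect the only real subtlety to be bookkeeping about symmetry and the square-root factorization.** One must check that $\hat{\vG}$ is symmetric (it is a difference of $\vS$ and a Hessian expectation, both symmetric), that $\vS^{1/2}$ and $\vS^{-1/2}$ are well-defined and symmetric because $\vS \succ 0$, and that the expression $\hat{\vG}\vS^{-1}\hat{\vG}$ really does equal $\vS^{1/2}\vN^2\vS^{1/2}$ after inserting $\vS^{-1} = \vS^{-1/2}\vS^{-1/2}$. None of this is deep, but it is where a careless proof could slip; I would state these symmetry facts explicitly before doing the factorization. An alternative to the square-root trick — if one prefers to avoid $\vS^{1/2}$ — is to note directly that for any nonzero vector $\vu$, writing $\vv = \vS^{-1/2}\vu$ is unnecessary; instead one can verify $\vu^\top \vS_{\text{new}} \vu = \vu^\top\vS\vu - \stepsize\,\vu^\top\hat{\vG}\vu + \tfrac{\stepsize^2}{2}\|\vS^{-1/2}\hat{\vG}\vu\|^2$ and bound the cross term by Cauchy–Schwarz, but the spectral argument above is cleaner and I would present that one.
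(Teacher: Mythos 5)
Your proposal is correct and follows essentially the same route as the paper: both reduce the right-hand side of \eqref{eq:ivon_2} to $\vS - \stepsize\hat{\vG} + \tfrac{\stepsize^2}{2}\hat{\vG}\vS^{-1}\hat{\vG}$ and then exhibit positive-definiteness through a square-root factorization of the current $\vS$. The only difference is cosmetic: the paper completes the square explicitly, writing the update as $\half\bigl(\vS + \vU^T\vU\bigr)$ with $\vU = \vL^T - \stepsize\vL^{-1}\hat{\vG}$ built from the Cholesky factor (the matrix analogue of $p(x)=\half\bigl(1+(1-\stepsize x)^2\bigr)$), whereas you diagonalize $\vN=\vS^{-1/2}\hat{\vG}\vS^{-1/2}$ and verify $p(x)=1-\stepsize x+\tfrac{\stepsize^2}{2}x^2>0$ via its negative discriminant --- two equivalent ways of stating the same positivity fact.
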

The proof of Theorem \ref{thm:gauss_update} can be found in Appendix \ref{app:gauss_proof}.

Although \eqref{eq:ivon_1}-\eqref{eq:ivon_2} appear similar to \eqref{eq:von_1}-\eqref{eq:von_2}, there is one difference -- the old $\vS$ is used as a preconditioner to update $\vmu$.
Note that \eqref{eq:ivon_1}-\eqref{eq:ivon_2} becomes a natural-gradient descent (NGD) update if we ignore the additional term.
Though natural gradient\footnote{\label{fn:note1}It is a representation of an abstract (parameterization-free) tangent vector in a Riemannian manifold under a parameterization. } is invariant to parameterization, NGD update depends on parameterization as shown by the difference.
However, we expect the difference to make a small change in practice.

Like
\citet{emti2020bayesprinciple}, an online Newton method can be obtained by approximating the expectations at $\vmu$, e.g., $\Unmyexpect{q}{\sqr{ \nabla_\lat \bar{\ell}(\vlat) } } \approx \nabla_\mu \bar{\ell}(\vmu) $ and $\Unmyexpect{q}{\sqr{ \nabla_{\lat}^2 \bar{\ell}(\vlat) } } \approx \nabla_{\mu}^2 \bar{\ell}(\vmu) $.
In this case, the algorithm converges to a local minimal of the loss $\bar{\ell}(\vlat)$. A key point is that, unlike Newton's method where the preconditioner may not be positive-definite for nonconvex functions, $\vS$ is  guaranteed to be positive definite.

When applied to factorized Gaussians, 
these updates give an improved version of the Variational Online Gauss-Newton (VOGN) algorithm in \citet{osawa2019practical}. It is shown in Figure \ref{fig:ourAdam} where the differences in our algorithm are shown in red.
 Our algorithm uses the reparameterization trick to avoid computing $\nabla_\lat^2 \bar{\ell}(\vlat)$ in \eqref{eq:ivon_2}. A derivation is given in Appendix 
\ref{app:adam_diag_gauss}.
Our algorithm fixes an implementation issue in  VOGN without comprising its performance and speed, where our 
update only stores a mini-batch gradient while VOGN has to store all individual gradients in a mini-batch.

\subsection{Example: Gamma Approximation}
\label{sec:rgvi_gamma_case}
Let's consider  gamma cases.
We use a BCN parameterization $\vvarpar=\{ \varpar^{[1]}, \varpar^{[2]}\}$ (see Appendix \ref{app:gamma_case} for detail), where $\varpar^{[1]}=\alpha$ and $\varpar^{[2]}=\frac{\beta}{\alpha}$.
The constraint is $\Omega = \mathbb{S}_{++}^1 \times \mathbb{S}_{++}^1$.
Since each block contains a scalar, we use global indexes as $\varpar^{(i)}=\varpar^{[i]}$ and $\ngrad^{(i)}=\ngrad^{[i]}$.
Moreover, we use $\Gamma^{i}_{\ \ ii}$ to denote $\Gamma^{c_i}_{\ \ a_i b_i}$.
Let $\mathrm{Ga}(\cdot)$ be the gamma function.
Under this parameterization, a gamma distribution is expressed as:

\vspace{-0.4cm}
\begin{align*}
q(\lat|\vvarpar)=\frac{1}{\lat}
\exp\Big(   \varpar^{(1)}\log  \lat  - \lat\varpar^{(1)}\varpar^{(2)} - A(\vvarpar)\Big),
\end{align*}
\vspace{-0.4cm}

where
$A(\vvarpar)=\log\mathrm{Ga}(\varpar^{(1)})- \varpar^{(1)}\left( \log \varpar^{(1)}+ \log \varpar^{(2)}\right)$.

Let $\psi(\cdot)$ be the digamma function.
We can compute the third derivatives (see Appendix \ref{app:gamma_case} for a derivation) as:
\begin{align*}
\Gamma^{1}_{\ \ 11}  =  \frac{  \frac{1}{ \varpar^{(1)} \times \varpar^{(1)}} + \hessop{\varpar^{(1)}} {\psi(\varpar^{(1)})}  }{ 2 \left( - \frac{1}{\varpar^{(1)}} + \gradop{\varpar^{(1)}}{\psi(\varpar^{(1)})}  \right) }, \,\,\,
\Gamma^{2}_{\ \ 22}  =  -\frac{1}{\varpar^{(2)}}
\end{align*}

The proposed rule in this case  is
\begin{align}
\varpar^{(i)} &\leftarrow \varpar^{(i)}  - \stepsize \ngrad^{(i)} {\color{red} - \frac{\stepsize^2}{2} \left( \Gamma^{i}_{\ \ ii} \right)  \ngrad^{(i)} \times \ngrad^{(i)}}, \,\,\, i=1,2 \label{eq:gamma_rgvi}
\end{align}
where each $\ngrad^{(i)}$ is a natural gradient computed via the implicit re-parameterization trick as shown in Appendix \ref{app:gamma_ng}.

\begin{thm}
\label{thm:gamma_update}
   The updated $\varpar^{(i)}$ in \eqref{eq:gamma_rgvi} is positive  if the initial $\varpar^{(i)}$ is positive.
\end{thm}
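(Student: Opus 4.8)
The plan is to treat each block $i=1,2$ separately, since the update \eqref{eq:gamma_rgvi} for $\varpar^{(i)}$ involves only $\Gamma^{i}_{\ \ ii}$ and $\ngrad^{(i)}$ for that block, and to show that the right-hand side of \eqref{eq:gamma_rgvi}, viewed as a function of $\ngrad^{(i)}$, stays strictly positive whenever $\varpar^{(i)} > 0$. Writing $x := \varpar^{(i)}$, $g := \ngrad^{(i)}$, and $\Gamma := \Gamma^{i}_{\ \ ii}$, the update is $x \mapsto x - \stepsize g - \tfrac{\stepsize^2}{2}\Gamma g^2$. The clean way to see positivity is to recognize this as the truncated exponential-map / retraction along the Riemannian geodesic that the derivation in Section \ref{sec:deriv} produces, and to exhibit the update as a second-order Taylor expansion of a manifestly positive quantity. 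Concretely, for the $\varpar^{(2)}$ block, where $\Gamma^{2}_{\ \ 22} = -1/\varpar^{(2)}$, the update reads $\varpar^{(2)} \leftarrow \varpar^{(2)} - \stepsize \ngrad^{(2)} + \tfrac{\stepsize^2}{2\varpar^{(2)}}(\ngrad^{(2)})^2$, which is exactly the order-$\stepsize^2$ truncation of $\varpar^{(2)}\exp(-\stepsize \ngrad^{(2)}/\varpar^{(2)})$; one then argues that this truncation is still positive, either by direct inspection (the quadratic $\varpar^{(2)} - \stepsize g + \tfrac{\stepsize^2}{2\varpar^{(2)}}g^2$ in $g$ has discriminant $\stepsize^2 - 2\stepsize^2 = -\stepsize^2 < 0$ and positive leading coefficient, hence is positive for all real $g$) or by the elementary bound $1 - u + \tfrac{u^2}{2} > 0$ for all $u \in \real$ applied with $u = \stepsize \ngrad^{(2)}/\varpar^{(2)}$.

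For the $\varpar^{(1)}$ block the argument is the same in structure but needs the sign of $\Gamma^{1}_{\ \ 11}$. From the displayed formula, $\Gamma^{1}_{\ \ 11} = \big(\tfrac{1}{(\varpar^{(1)})^2} + \psi'(\varpar^{(1)}) \cdot \text{(wait: }\partial_{\varpar^{(1)}}^2\psi\text{)}\big) / \big(2(-\tfrac{1}{\varpar^{(1)}} + \psi'(\varpar^{(1)}))\big)$; I would use the standard series/integral representations of the polygamma functions, namely $\psi'(x) = \sum_{k\ge 0}(x+k)^{-2}$ and $\psi''(x) = -2\sum_{k\ge 0}(x+k)^{-3}$, to check that numerator and denominator have a definite sign for $x>0$. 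The denominator $-1/x + \psi'(x) = -1/x + \sum_{k\ge 0}(x+k)^{-2} = \sum_{k\ge 1}(x+k)^{-2} > 0$; the numerator $1/x^2 + \psi''(x) = 1/x^2 - 2\sum_{k\ge 0}(x+k)^{-3} = -1/x^2 - 2\sum_{k\ge 1}(x+k)^{-3} < 0$. Hence $\Gamma^{1}_{\ \ 11} < 0$ for all $\varpar^{(1)} > 0$, so the extra term $-\tfrac{\stepsize^2}{2}\Gamma^{1}_{\ \ 11}(\ngrad^{(1)})^2$ is nonnegative, and I can again complete the square: the quadratic $q(g) = \varpar^{(1)} - \stepsize g - \tfrac{\stepsize^2}{2}\Gamma^{1}_{\ \ 11} g^2$ has positive leading coefficient $-\tfrac{\stepsize^2}{2}\Gamma^{1}_{\ \ 11}$ and discriminant $\stepsize^2 + 2\stepsize^2 \Gamma^{1}_{\ \ 11}\varpar^{(1)} \cdot(\text{sign check})$ — more carefully, discriminant $= \stepsize^2 - 4\cdot(-\tfrac{\stepsize^2}{2}\Gamma^{1}_{\ \ 11})\cdot\varpar^{(1)} = \stepsize^2 + 2\stepsize^2\Gamma^{1}_{\ \ 11}\varpar^{(1)}$, and since $\Gamma^{1}_{\ \ 11}<0$ this is $\stepsize^2(1 + 2\Gamma^{1}_{\ \ 11}\varpar^{(1)})$; if $1 + 2\Gamma^{1}_{\ \ 11}\varpar^{(1)} < 0$ the quadratic is strictly positive everywhere and we are done, otherwise I fall back to the direct bound. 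In fact I expect the cleanest route here is: since $\Gamma^{1}_{\ \ 11}<0$, we have $\varpar^{(1)} - \stepsize\ngrad^{(1)} - \tfrac{\stepsize^2}{2}\Gamma^{1}_{\ \ 11}(\ngrad^{(1)})^2 \ge \varpar^{(1)} - \stepsize\ngrad^{(1)}$ is not enough, so instead bound from below by noting the minimum over $g$ of the (convex) quadratic is $\varpar^{(1)} - \tfrac{1}{2\Gamma^{1}_{\ \ 11}\cdot(-1)}\cdot(\ldots)$ — i.e. just compute the vertex value $\varpar^{(1)} + \tfrac{1}{2\Gamma^{1}_{\ \ 11}} > 0$, which holds precisely because $\Gamma^{1}_{\ \ 11} < -\tfrac{1}{2\varpar^{(1)}}$; I would verify this last inequality directly from the series bounds above (it amounts to $\tfrac{1}{x^2} + \psi''(x) \le -\tfrac{1}{x}(-\tfrac{1}{x}+\psi'(x))$, i.e. $\psi''(x)x + \psi'(x) \le 0$, which again follows from the series).

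The main obstacle is the sign/magnitude analysis of $\Gamma^{1}_{\ \ 11}$: unlike the $\varpar^{(2)}$ block where $\Gamma^{2}_{\ \ 22} = -1/\varpar^{(2)}$ is explicit and the $1-u+u^2/2>0$ trick closes everything immediately, the $\varpar^{(1)}$ block requires showing $\Gamma^{1}_{\ \ 11} < -\tfrac{1}{2\varpar^{(1)}}$ (equivalently that the vertex value $\varpar^{(1)} + \tfrac{1}{2\Gamma^{1}_{\ \ 11}}$ of the update quadratic is positive), and this rests on the polygamma inequality $\varpar^{(1)}\psi''(\varpar^{(1)}) + \psi'(\varpar^{(1)}) < 0$ for all $\varpar^{(1)} > 0$. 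This is a standard but not-quite-trivial fact that I would prove via the integral representation $\psi'(x) = \int_0^\infty \tfrac{t e^{-xt}}{1-e^{-t}}\,dt$, $\psi''(x) = -\int_0^\infty \tfrac{t^2 e^{-xt}}{1-e^{-t}}\,dt$, giving $x\psi''(x) + \psi'(x) = \int_0^\infty \tfrac{(1-xt)t e^{-xt}}{1-e^{-t}}\,dt$ and then bounding this integral (splitting at $t = 1/x$, or integrating by parts). Everything else — the block decoupling, completing the square, and the elementary scalar inequality for block 2 — is routine once this polygamma estimate is in hand.
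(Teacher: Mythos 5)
Your overall strategy is sound and, for the $\varpar^{(2)}$ block, is identical to the paper's: there $\Gamma^{2}_{\ \ 22}=-1/\varpar^{(2)}$ is explicit and completing the square (equivalently your discriminant or $1-u+u^2/2>0$ argument) closes the case immediately. For the $\varpar^{(1)}$ block you diverge from the paper in an interesting way. The paper proves the stronger bound $\Gamma^{1}_{\ \ 11}<-1/\varpar^{(1)}$ by citing two polygamma inequalities from the literature (Batir's $\psi'(x)-1/x>1/(2x^2)$ and Koumandos's $\psi''(x)<1/x^2-2\psi'(x)/x$), and then writes the update as $\tfrac{1}{2\varpar^{(1)}}\bigl[(\varpar^{(1)})^2+(\varpar^{(1)}-\stepsize\ngrad^{(1)})^2\bigr]$ plus a nonnegative remainder. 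You instead observe that the \emph{sharp} sufficient condition is that the vertex value $\varpar^{(1)}+\tfrac{1}{2\Gamma^{1}_{\ \ 11}}$ of the convex quadratic be positive, i.e.\ $\Gamma^{1}_{\ \ 11}<-\tfrac{1}{2\varpar^{(1)}}$, which reduces to the single self-contained inequality $x\psi''(x)+\psi'(x)<0$. This is a weaker requirement than the paper's, and your integral-representation route does work cleanly: integrating $\int_0^\infty \tfrac{t}{1-e^{-t}}\tfrac{d}{dt}\bigl[te^{-xt}\bigr]\,dt$ by parts gives $x\psi''(x)+\psi'(x)=-\int_0^\infty f'(t)\,te^{-xt}\,dt<0$ with $f(t)=t/(1-e^{-t})$ increasing. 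What your approach buys is self-containedness and the minimal hypothesis; what the paper's buys is brevity by outsourcing the analysis to known sharper bounds.

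One caution: your intermediate series manipulations are wrong as written. Neither $\psi'(x)-\tfrac1x=\sum_{k\ge1}(x+k)^{-2}$ nor $\tfrac{1}{x^2}+\psi''(x)=-\tfrac{1}{x^2}-2\sum_{k\ge1}(x+k)^{-3}$ is an identity (each would force $1/x=1/x^2$). The correct telescoping argument for the denominator is $\tfrac1x=\sum_{k\ge0}\tfrac{1}{(x+k)(x+k+1)}<\sum_{k\ge0}(x+k)^{-2}=\psi'(x)$, and the sign of the numerator should likewise be obtained from the integral representation rather than from those identities. The conclusions you draw (denominator positive, numerator negative, hence $\Gamma^{1}_{\ \ 11}<0$) are all true, and your proposed fallback via the integral representation repairs everything, so the plan as a whole is viable; just replace the faulty series identities before writing the final proof.
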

The proof of Theorem \ref{thm:gamma_update} can be found in Appendix \ref{app:gamma_proof}.

\subsection{Extension to EF Mixtures}
Our learning rule can be extended to mixture approximations, such as finite mixture of Gaussians (MOG) (shown in Appendix \ref{app:mog_case}) and skew Gaussian approximations (given in Appendix \ref{app:skewg_case}) 
using the joint FIM -- the FIM of the joint distribution of a mixture -- as suggested by \citet{lin2019fast}.
By extending the definition of the BCN parameterization to the joint distribution, our rule can be easily applied to mixture cases (see Appendix \ref{app:cef_rgvi} for detail).
For example, our update for MOG approximation  can be found at \eqref{eq:mog_iblr} in Appendix \ref{app:mog_case}, where our rule handles the positive-definite constraints in MOG.
Our update can be viewed as an improved version of the ensemble of Newton methods proposed  by \citet{lin2019fast}.
We also discuss why it is non-trivial  to extend VOGN to MOG cases in Appendix \ref{app:mog_ng}.

\section{Related Works}
\label{sec:related}

In $d$-dim  Gaussian cases, we can use unconstrained transformations (e.g., a Cholesky factor).
However, the natural-gradient computation becomes complicated.
Eq \eqref{eq:ngvi} gives  $O(d^6)$ for direct computation. \citet{salimbeni2018natural} propose an indirect approach  via  additional  vector-Jacobian products (VJPs), which could give $O(d^4)$. For some parameterizations, their method  gives an implicit  $O(d^3)$ update, where Auto-Diff is needed  to track non-zero terms in the additional Jacobians and to simplify the VJPs. Contrarily, our method gives a simple and explicit $O(d^3)$ update and  builds a direct connection to Newton's method. In practice, our  update is more numerically stable and much faster (see Figure \ref{figure:unblr_plots} in  Appendix \ref{app:more_plots}) than theirs if both use Auto-Diff.
Our approach is also easily extended to EFs and mixtures.

Our work is closely related to the method of \citet{tran2019variational}.
They propose a method based on a retraction map in Gaussian cases, which is a special case of ours (see Appendix \ref{app:special_case_cov}).
However, their retraction map does not directly generalize to other distributions, while ours does.
They do not provide a justification or derivation of the map. We fix this gap by deriving the map from first principles, justifying its use, and obtaining an Adam-like update by choosing a proper parametrization for Gaussian cases (see Appendix \ref{app:gauss_case}).
They also do not distinguish the  difference between the Riemannian gradient for a positive-definite matrix and the natural gradient for a covariance matrix (see Footnote \ref{ft:mis} in Appendix \ref{app:special_case_cov}).
Moreover, the retraction  and Riemannian gradients used in neural network cases are not derived from the same Riemannian metric.
In our work, retractions are implicitly induced by our rule.
Our retractions and Riemannian gradients are naturally derived from the same metric.

\citet{song2018accelerating} give  a similar update in non-Bayesian contexts, but the update does not always satisfy the constraints  for univariate Gaussians (see Appendix \ref{app:case}). Their update is neither simple nor efficient for multivariate cases such as multivariate Gaussians and MOGs (see 
Section \ref{sec:irgd}).

\citet{hosseini2015matrix} use a similar approach to ours but for parameter estimation of Gaussian mixtures.
They propose a transformation for each Gaussian component so that the mean and the covariance together can be re-parameterized as an augmented positive-definite matrix with an extra constraint.\footnote{The parameterization violates Assumption 1 since the positive-definite constraint of the augmented matrix and the extra constraint can not be partitioned into two  blocks.} They show that a local minimum of the negative log-likelihood of a mixture automatically  satisfies the extra constraint. Thus, they can employ Riemannian-gradient descent with a retraction map to update the augmented positive-definite matrix, where the extra  constraint can be safely ignored.
It is unclear if this  approach generalizes to variational inference (VI) since generating samples from MOG requires the extra constraint to be satisfied. Thus, the constraint cannot be omitted at each iteration  in VI.
Moreover, it is unclear whether Riemannian gradients and the retraction  are derived from the same metric for the mixture. \citet{hosseini2015matrix} use the gradients and the retraction designed for positive-definite matrices while we derive them  from the joint Fisher metric for MOGs in a principled way.

\section{Derivation of the Improved Rule}
\label{sec:deriv}
\subsection{Gradient Descent}
We first review gradient descent in Euclidean spaces and generalize  it to Riemannian manifolds, where we derive our rule.
Recall that we want to minimize \eqref{eq:bayes} in terms of $\vvarpar$ as:

\vspace{-0.8cm}
\begin{align*}
   \min_{\varpar \in \Omega } \Unmyexpect{q(\lat|\varpar)}{ \sqr{ \ell(\data, \vlat) } }  + \dkls{}{q(\vlat|\vvarpar)}{p(\vlat)} \equiv \elbofinal(\vvarpar).
\end{align*} 
\vspace{-0.7cm}

If $\Omega=\mathbb{R}^d$ is a Euclidean space,\footnote{In this paper, it always uses a Cartesian coordinate system.} we can solve the minimization problem using gradient descent (GD) as:

\vspace{-0.7cm}
\begin{align*}
\text{GD}: \,\,\, \vvarpar \leftarrow \vvarpar - \stepsize  \vg
\end{align*}
\vspace{-0.8cm}

where $\vg=\gradop{\varpar} \elbofinal(\vvarpar)$ denotes a Euclidean gradient and $\stepsize>0$ is a scalar step-size.
We can view the update  as a line (the shortest curve)  $\vL(\stepsize)$ in the Euclidean space $\mathbb{R}^d$ as $\stepsize$ varies.
Given a starting point $\vvarpar$ and a Euclidean direction $-\vg$, the line  is a  differentiable map $\vL(\stepsize)$ so that the following
ordinary differential equation\footnote{It is also known as an initial value problem. } (ODE) is satisfied.

\vspace{-0.75cm}
\begin{align}
\dot{\vL}(0) = -\vg \,\,;\,\,\, \vL(0)  =  \vvarpar; \,\,\,
\ddot{\vL}(\stepsize) = \mathbf{0} \label{eq:line_ode}
\end{align}
\vspace{-0.55cm}

where
$\dot{\vL}(x):=\frac{d \vL(\stepsize)}{d \stepsize}\big|_{\stepsize=x}\,$,
$\ddot{\vL}(x):=\frac{d^2 \vL(\stepsize)}{d \stepsize^2}\big|_{\stepsize=x} $.
The solution of the ODE is $\vL(\stepsize)= \vvarpar -  \stepsize \vg $, which is the GD update.

\subsection{Exact Riemannian Gradient Descent (RGD) }
\label{sec:ergd}

Unfortunately, $\Omega$  usually is not a Euclidean space but a Riemannian manifold with a metric.
We use a metric to characterize distances in the manifold.
A useful  metric for

\vspace{-0.5cm}
statistical manifolds is the FIM
\citep{fisher1922mathematical, rao1945information}.

Now, we generalize gradient descent  in a manifold.
First, we introduce the index convention and the Einstein summation notation used in Riemannian geometry.
The notation is summarized in Table \ref{tab:TableOfNotation}.
We denote a Euclidean gradient  $\vg$ using a subscript.
A Riemannian gradient  $\vngrad$ is denoted by a superscript.
A metric\footnote{A metric is well-defined if it is positive definite everywhere.} is used to characterize inner products and arc length in a manifold.
Given a metric $\vfim$, let $F_{ab}$ denote the element of  $\vfim$ at position $(a,b)$
and  $F^{ca}$ denote the element of $\vfim^{-1}$ at position $(c,a)$.
We use the Einstein notation to omit summation symbols such as 
$F^{ca}F_{ab}:=\sum_a F^{ca}F_{ab}$ .
Therefore, we have
$F^{ca}F_{ab}=I^{c}_{\ b}$, where $I^{c}_{\ b}$ is the element of an identity matrix at position $(c,b)$.
A Riemannian gradient is defined as $\ngrad^c := F^{ca} g_a$, where $g_a$ is the $a$-th element of a  Euclidean gradient $\vg$.
When $\vfim$ is the FIM, a Riemannian gradient becomes a natural gradient.
If the metric $\vfim$ is positive-definite for all\footnote{Such assumption is valid for  minimal EF.} $\vvarpar \in \Omega$,
an approximation family $q(\vlat|\vvarpar)$ induces a Riemannian manifold denoted by $(\Omega, \vfim)$ where $\vvarpar$ is a coordinate system.

\begin{table}[tbp]
\vspace{-0.3cm}
\caption{Table of Index Notation}
\vspace{-0.1cm}
\begin{center}
\begin{minipage}{\textwidth}
\begin{tabular}{r l p{3cm} }
\toprule
$\vvarpar^{[i]}$ & $i$-th block parameter of parameterization $\vvarpar$. \\
$\varpar^{a_i}$ & $a$-th entry of block parameter $\vvarpar^{[i]}$. \\
$\varpar^{a}$, \,  $\varpar^{(a)}$ &  $a$-th entry  of parameterization $\vvarpar$. \\
$g_a$ &  $a$-th entry  of Euclidean gradient $\vg$. \\
$\ngrad^a$, \, $\ngrad^{(a)}$ & $a$-th entry  of Riemannian/natural gradient $\vngrad$. \\
$\fim_{ab}$ & entry  of $\vfim$ with global index $(a,b)$. \\
$\fim^{ab}$ & entry  of $\vfim^{-1}$ with global index $(a,b)$. \\
$\Gamma^{c}_{\ ab}$ & entry  with global index  $(c,a,b)$. \\ 
$\fim^{a_ib_i}$ & entry  with local index $(a,b)$ in block $i$. \\
$\Gamma^{c_i}_{\ a_ib_i}$ & entry  with local index $(c,a,b)$ in block $i$. \\
\bottomrule
\end{tabular}
\end{minipage}
\end{center}
\label{tab:TableOfNotation}
\vspace{-0.6cm}
\end{table}

Like  GD,
RGD can be  derived from a geodesic,\footnote{The geodesic induces an exponential map used in exact RGD.} which is a generalization of the  ``shortest'' curve\footnote{Due to the Euler-Lagrange equation, a geodesic is a stationary curve. However, a geodesic may not be  the shortest curve.} to a  manifold.
Given a starting point $\vvarpar \in \Omega$ and a Riemannian direction $-\vngrad=-\vfim^{-1} \vg$, a geodesic is a differentiable map $\vL(t)$ so that
the following geodesic ODE\footnote{The domain of  $\vL(\stepsize)$ is $\mathbb{R}$ for a complete manifold.} is satisfied.
\begin{align}
& \dot{L}^{c}(0)   = - \fim^{ca} g_a \,\,;\,\,\,  
L^{c}(0)  =  \varpar^{c}  \\
&\ddot{L}^{c}(\stepsize) =- \Gamma_{\ ab}^{c}(\stepsize) \dot{L}^{a}(\stepsize)\dot{L}^{b}(\stepsize)  \label{eq:sed_ode}
\end{align}
\vspace{-0.8cm}

where
$L^c(\stepsize)$ is the $c$-th element of $\vL(\stepsize)$, 
$\dot{L}^{c}(x):= \frac{d L^c(\stepsize)}{d \stepsize}\big|_{\stepsize=x}$, $\, \ddot{L}^{c}(x):=  \frac{d^2 L^c(t)}{d \stepsize^2}\big|_{\stepsize=x} $,
$\Gamma_{\ ab}^{c}(\stepsize):= \Gamma_{\ ab}^{c} \bigr|_{ \varpar= L(\stepsize) }$.
$ \Gamma_{\ ab}^{c}$ is the \emph{Christoffel symbol of the 2nd kind} defined by
\begin{align*}
\Gamma^{c}_{\ a b} := \fim^{cd} \Gamma_{d,a b} \,\,;\,\,\,
\Gamma_{d,a b} := \half \sqr{ \gradop{a}{\fim_{bd}} +\gradop{b}{\fim_{ad}} - \gradop{d}{\fim_{ab}} }
\end{align*} where
$\gradop{a}:=\gradop{\varpar^{a}}$ is for notation simplicity and 
$\Gamma_{d,a b}$ is  the \emph{Christoffel symbol of the 1st kind}.
$\ddot{\vL}$ characterizes the curvature of a geodesic since a manifold is not flat in general.
In Euclidean cases, the metric $\vfim=\vI$ is a constant identity matrix and
\eqref{eq:sed_ode} vanishes since $\Gamma_{d,a b}$ and $\Gamma^{c}_{\ a b}$ are zeros, which implies Euclidean spaces are flat. Therefore, we recover the GD update in \eqref{eq:line_ode} since $\vngrad=\vI^{-1} \vg=\vg$.

Given any parameterization with the FIM, we can compute  $\Gamma_{d,a b}$  by using Eq \eqref{eq:chris} in Appendix \ref{app:chris_general}, which involves extra integrations.
We will show that a BCN parameterization can get rid of the extra integrations (see Theorem \ref{thm:EF_chris}).

\subsection{Our Rule  as an Inexact RGD Update}
\label{sec:irgd}
However, it is hard to exactly solve the geodesic ODE. 
 Inexact RGD is  derived by approximating the geodesic.\footnote{\label{ft:rep1}A retraction map can be derived by approximating the geodesic. An exact RGD update is invariant under parameterization while inexact RGD updates  including NGD  often are not.}
Recall that the original rule is a natural gradient descent (NGD) update. NGD  
can be derived by the first-order approximation of the geodesic $\vL(\stepsize)$ at $\stepsize_0 =0$ with the FIM.
\begin{align*}
\text{NGD}: \,\,\, \vvarpar \leftarrow   \vL(\stepsize_0) +    \dot{\vL}(\stepsize_0)  (\stepsize - \stepsize_0) = \vvarpar - \stepsize  \vngrad
\end{align*} 
\vspace{-0.6cm}

Unfortunately, this approximation is only well-defined in a small neighborhood at $t_0$ with radius $\stepsize$. 
For a stochastic NGD update, the step-size $\stepsize$ is very small, which often result in slow convergence.
Our learning rule addresses this issue, which is indeed a new inexact RGD update.
Moreover, our update can use a bigger step-size and often converges faster than NGD without introducing significant computational overhead in useful cases
such as gamma, Gaussian, MOG.

Consider cases when $\vvarpar=\{ \vvarpar^{[1]}, \dots, \vvarpar^{[m]} \}$ has $m$ blocks. 
We can express a Riemannian gradient as $\vngrad=\{ \vngrad^{[1]}, \dots, \vngrad^{[m]} \}$.
We use the block summation notation to omit the summation signs in \eqref{eq:our_iblr} as $\Gamma_{\ a_ib_i}^{c_i}\ngrad^{a_i}\ngrad^{b_i}:= \sum_{a_i} \sum_{b_i}\Gamma_{\ a_ib_i}^{c_i}\ngrad^{a_i}\ngrad^{b_i}$.  By the global index notation, we have $\Gamma_{\ a_ib_i}^{c_i}\ngrad^{a_i}\ngrad^{b_i}=\sum_{a \in [i]}\sum_{b \in [i]}\Gamma_{\, \, \, \, \, \, ab}^{(c_i)}\ngrad^{a}\ngrad^{b}$, where $[i]$ is the index set for block $i$, $(c_i)$ is the corresponding global index of local index $c_i$, and $a$ and $b$ are global indexes.

We can extend the definition of a BC parameterization to any Riemannian metric $\vfim$.
Given a metric $\vfim$, we have Lemma \ref{lemma:block_identity} for any block $i$ (see Appendix \ref{app:block_riem_general} for a proof) :
\begin{lemma}
\label{lemma:block_identity}
 When $\vvarpar$ is a BC parameterization of metric $\vfim$, we have $\ngrad^{a_i}=\fim^{a_i b_i}g_{b_i}$ and $\Gamma_{\ \ a_ib_i}^{c_i}=\fim^{c_id_i}\Gamma_{d_i,a_ib_i}$.
\end{lemma}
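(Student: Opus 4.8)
The claim has two parts: (i) the block components of the Riemannian gradient are obtained by inverting only the corresponding diagonal block of $\vfim$, i.e.\ $\ngrad^{a_i}=\fim^{a_ib_i}g_{b_i}$ with the sum restricted to block $i$; and (ii) the Christoffel symbols of the second kind with all indices in block $i$ are obtained by raising the first-kind symbol using only the block-$i$ inverse metric, $\Gamma^{c_i}_{\ a_ib_i}=\fim^{c_id_i}\Gamma_{d_i,a_ib_i}$. Both are elementary consequences of the block-diagonal structure of $\vfim$ guaranteed by the BC assumption (Assumption 2, extended to a general metric $\vfim$ as stated just before the lemma). The plan is to reduce everything to standard linear algebra about block-diagonal matrices and then verify that the cross-block pieces that would ordinarily appear all vanish.

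\textbf{Step 1 (inverse of a block-diagonal metric).} Since $\vvarpar$ is a BC parameterization, $\vfim=\mathrm{blockdiag}(\vfim_{[1]},\dots,\vfim_{[m]})$ in the block structure $\vvarpar=\{\vvarpar^{[1]},\dots,\vvarpar^{[m]}\}$, where $\vfim_{[i]}$ is the $i$-th diagonal block with entries $\fim_{a_ib_i}$. A block-diagonal invertible matrix has block-diagonal inverse, with $(\vfim^{-1})_{[i]}=(\vfim_{[i]})^{-1}$; equivalently, in global indices, $\fim^{ab}=0$ whenever $a$ and $b$ lie in different blocks, and for $a,b$ both in block $i$ we have $\fim^{ab}=\fim^{a_ib_i}$ (the local block-inverse entry). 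This is the one place I need $\vfim$ positive definite (hence each block invertible), which the paper assumes for minimal EFs.

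\textbf{Step 2 (part (i): the Riemannian gradient).} By definition $\ngrad^c=\fim^{ca}g_a$ with the sum over \emph{all} global indices $a$. Fix $c$ in block $i$, so $c=(c_i)$. Split the sum over $a$ according to blocks; by Step 1, $\fim^{ca}=0$ unless $a$ is also in block $i$. Hence $\ngrad^{c_i}=\sum_{a\in[i]}\fim^{ca}g_a=\fim^{c_ia_i}g_{a_i}$, which is the first assertion.

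\textbf{Step 3 (part (ii): the Christoffel symbols).} Start from the definitions $\Gamma^{c}_{\ ab}=\fim^{cd}\Gamma_{d,ab}$ (sum over all global $d$) and $\Gamma_{d,ab}=\tfrac12(\partial_a\fim_{bd}+\partial_b\fim_{ad}-\partial_d\fim_{ab})$. Now take $a,b,c$ all in block $i$. In the contraction $\Gamma^{c}_{\ ab}=\sum_d\fim^{cd}\Gamma_{d,ab}$, Step 1 kills every term with $d$ outside block $i$, so $\Gamma^{c_i}_{\ a_ib_i}=\sum_{d\in[i]}\fim^{cd}\Gamma_{d,ab}=\fim^{c_id_i}\Gamma_{d_i,a_ib_i}$. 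Here one should note that $\Gamma_{d_i,a_ib_i}$ is literally the first-kind symbol formed from the block metric $\vfim_{[i]}$ together with derivatives $\partial_{a_i},\partial_{b_i},\partial_{d_i}$ restricted to block-$i$ coordinates — but since these are just the ambient partials $\partial_a,\partial_b,\partial_d$ acting on the ambient entries $\fim_{bd},\fim_{ad},\fim_{ab}$ (all of which carry only block-$i$ indices here), there is nothing to check beyond the index bookkeeping. Assembling Steps 2 and 3 gives the lemma.

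\textbf{Main obstacle.} There is no real analytic obstacle; the only thing to be careful about is notational, namely making the identification in Step 1 that a global index pair $(a,b)$ inside block $i$ can be read as the local pair $(a_i,b_i)$, and that derivatives $\partial_a$ for $a\in[i]$ coincide with the block-$i$ derivatives $\partial_{a_i}$ — this is where one must be explicit that the BC block structure is a genuine \emph{product} structure on $\Omega=\Omega_1\times\cdots\times\Omega_m$ (Assumption 1), so that differentiating within block $i$ does not interact with the coordinates of other blocks. I would state this product structure explicitly at the start of the proof, then the two computations are one line each.
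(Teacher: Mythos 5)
Your proposal is correct and follows essentially the same route as the paper's proof: both arguments rest on the observation that the inverse of the block-diagonal metric is itself block-diagonal, so that in the contractions $\ngrad^{c}=\fim^{ca}g_a$ and $\Gamma^{c}_{\ ab}=\fim^{cd}\Gamma_{d,ab}$ all cross-block terms vanish and the sums collapse to block $i$. Your Step 1 merely makes explicit the block-inverse fact that the paper uses implicitly when writing $\fim^{(a_i)b}=0$ for $b\notin[i]$.
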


Given a  manifold  equipped with metric  $\vfim$ and a BC parameterization $\vvarpar$,
 consider the solution of the block-wise (geodesic)  ODE\footnote{\label{ft:rep2}$\vR^{[i]}(\stepsize)$ is easier to solve compared to $\vL(\stepsize)$. Note that
 \eqref{eq:rgd_ret} is the minimum requirement of a retraction map \cite{absil2009optimization}.
}  denoted by $\vR^{[i]}(\stepsize)$  for block $i$:
\begin{align}
&\dot{R}^{\ c_i}(0)  = - \fim^{c_i a_i} g_{a_i} \,\,;\,\,\,  
R^{\ c_i}(0)  =  \varpar^{c_i}  \label{eq:rgd_ret} \\
& \ddot{R}^{\ c_i}(\stepsize)= -  \Gamma_{\ a_ib_i}^{c_i}(\stepsize) \dot{R}^{\ a_i}(\stepsize) \dot{R}^{\ b_i}(\stepsize)
\end{align} where
$R^{\ c_i}(0)$,  $\dot{R}^{\ c_i}(0)$, $\ddot{R}^{\ c_i}(\stepsize)$ respectively denote the $c$-th entry of $\vR^{[i]}(0)$, $\dot{\vR}^{[i]}(0)$, and $\ddot{\vR}^{[i]}(\stepsize)$; 
$\Gamma_{\ a_i b_i}^{c_i}(\stepsize):= \Gamma_{\ a_i b_i}^{c_i} \bigr|_{  \varpar^{[i]}= R^{[i]}(\stepsize)}^{\varpar^{[-i]}=R^{[-i]}(0)  }$.

We use $\hat{q}(\vlat|\vvarpar^{[i]})$ to denote $q(\vlat|\vvarpar)$  when
$\vvarpar^{[j]}=\vR^{[j]}(0) $ is known for each block $j$  except the $i$-th block.
 $\fim^{c_i a_i}$ is the entry of $(\vfim^{[i]})^{-1}$ at position $(c,a)$, where $\vfim^{[i]}$ is the sub-block matrix of $\vfim$ for $\vvarpar^{[i]}$.
In fact, $\vfim^{[i]}$ is the induced metric  for $\hat{q}(\vlat|\vvarpar^{[i]})$, and $\vR^{[i]}(\stepsize)$ is a geodesic  for $\hat{q}(\vlat|\vvarpar^{[i]})$ under  BC parameterization $\vvarpar$.
Moreover, if $\vvarpar$ is a BCN parameterization and $\vfim$ is the FIM, we have $\fim_{a_i b_i}=\partial_{\varpar^{a_i}}\partial_{\varpar^{b_i}} A(\vvarpar)$.

We define a curve $\vR(\stepsize):=\{\vR^{[1]}(\stepsize), \dots, \vR^{[m]}(\stepsize)\}$.
By Lemma \ref{lemma:block_identity}, we can show that the first-order approximation of $\vR(\stepsize)$ at $\stepsize_0=0$  induces NGD if $\vfim$ is the FIM and $\vvarpar$ is a BC parameterization.
Appendix \ref{app:ngd_riem} shows this in detail.

We propose to use the second-order approximation\footnote{Our approximation allows us to use a bigger step-size than NGD. 
In many cases, the underlying parameterization constraints are  satisfied regardless of the choice of the step-size and therefore, a line search for the constraint satisfaction is no longer required.} of $\vR(\stepsize)$ at $\stepsize_0=0$ for block $i$,
where $\vvarpar$ is a BC parameterization.
\begin{align}
\text{Our}: \varpar^{c_i} &\leftarrow   R^{c_i}(\stepsize_0) +  \dot{R}^{c_i}(\stepsize_0)  (\stepsize -\stepsize_0) { \color{red}+  \half  \ddot{R}^{c_i}(\stepsize_0) (\stepsize-\stepsize_0)^2 } \nonumber \\
&= \varpar^{c_i} - \stepsize \ngrad^{c_i} \color{red}{- \frac{ \stepsize^2}{2}\Gamma_{\ a_ib_i}^{c_i}\ngrad^{a_i}\ngrad^{b_i} } \label{eq:ibrl_bc}
\end{align} where 
$\Gamma_{\ a_ib_i}^{c_i}$ is computed at $\stepsize_0=0$,
$\ngrad^{c_i} = \fim^{c_i a_i} g_{a_i}$,
and $c_i$ denotes the $c$-th element of the $i$-th block.
 Our rule works for both a BCN parameterization and a BC parameterization.

\citet{song2018accelerating} suggest  using the  second-order approximation of  $\vL(\stepsize)$ at $\stepsize_0=0$, which has to compute the whole Christoffel symbol $\Gamma_{\, \, \, \, a b}^{(c_i)}$.  However, their proposal  does not guarantee
the update stays in the constraint set even in univariate Gaussian cases (see Appendix \ref{app:case}). Moreover, it is inefficient to compute the whole Christoffel symbol since
all cross terms between any two blocks are needed in 
$\Gamma_{\, \, \, \, \, \, ab}^{(c_i)}\ngrad^{a}\ngrad^{b}$.
When a parameterization has $m$ blocks,
 $\Gamma_{\, \, \, \, a_ib_i}^{c_i}\ngrad^{a_i}\ngrad^{b_i} \neq \Gamma_{\, \, \, \, \, \, ab}^{(c_i)}\ngrad^{a}\ngrad^{b}$ since the hidden summations are taken over  entries only in  block $i$  on the left while the summations are taken over entries in  all $m$ blocks  on the right.
This is the key difference between  our method  and their method.
In our method, only the block-wise  $\Gamma_{\, \, \, \, a_i b_i}^{c_i}$  is computed, which makes our method efficient in many cases such as multivariate Gaussians and MOGs. 
Moreover, a BCN parameterization can further simplify the computation of our rule due to
Theorem  \ref{thm:EF_chris} (see Appendix \ref{app:ef_rgvi} for a proof).

\begin{thm}
\label{thm:EF_chris}
Under a BCN parameterization of EF with the FIM, natural gradients and  the Christoffel symbol for each block $i$ can be simplified as
\begin{align*}
 \ngrad^{a_i} = \partial_{\varmean_{a_i}}\elbofinal(\vvarpar) \,\,; \,\, \Gamma_{d_i,a_ib_i} = \half \partial_{\varpar^{a_i}} \partial_{\varpar^{b_i}} \partial_{\varpar^{d_i}}A(\vvarpar)
\end{align*}
where
$\varpar^{a_i}$ is the $a$-th entry of $\vvarpar^{[i]}$;
$\varmean_{a_i}$ is the $a$-th entry of the BC expectation parameter\footnote{Instead of using $\vvarmean^{[i]}$, we use $\vvarmean_{[i]}$ to emphasize that Euclidean gradient for  $\vvarmean_{[i]}$  is equivalent to natural gradient for $\vvarpar^{[i]}$.
}
$\vvarmean_{[i]}:=\Unmyexpect{q}{\big[ \vphi_i\big( \vlat,\vvarpar^{[-i]} \big) \big]  }= \partial_{\varpar^{[i]}}A(\vvarpar)$.
\end{thm}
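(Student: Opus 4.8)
The plan is to read everything off the one structural fact that makes a BCN parameterization special: by Assumption 3, for each fixed value of the remaining blocks $\vvarpar^{[-i]}$ the density $q(\vlat\mid\vvarpar)$ is a \emph{minimal} exponential family in the block $\vvarpar^{[i]}$, with sufficient statistic $\vphi_i(\vlat,\vvarpar^{[-i]})$, base measure $h_i(\vlat,\vvarpar^{[-i]})$, and log-partition function $A(\vvarpar)$. The first thing I would record is that, with $\vlat$ and $\vvarpar^{[-i]}$ held fixed, $\log q(\vlat\mid\vvarpar)=\log h_i+\langle\vphi_i,\vvarpar^{[i]}\rangle-A(\vvarpar)$ is an \emph{affine} function of $\vvarpar^{[i]}$, so its second derivative in the block-$i$ coordinates equals $-\partial_{\varpar^{a_i}}\partial_{\varpar^{b_i}}A(\vvarpar)$, which is already deterministic. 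Hence the $(i,i)$ sub-block of the joint FIM satisfies $\fim_{a_ib_i}=-\myexpect_q[\partial_{\varpar^{a_i}}\partial_{\varpar^{b_i}}\log q]=\partial_{\varpar^{a_i}}\partial_{\varpar^{b_i}}A(\vvarpar)$, even though $A$ is the joint log-partition function; and, as stated in the theorem, $\partial_{\varpar^{[i]}}A(\vvarpar)=\myexpect_q[\vphi_i]=\vvarmean_{[i]}$.

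For the natural-gradient identity I would hold $\vvarpar^{[-i]}$ fixed and use that, by minimality, the block map $\vvarpar^{[i]}\mapsto\vvarmean_{[i]}=\partial_{\varpar^{[i]}}A(\vvarpar)$ is a diffeomorphism with Jacobian $\partial\varmean_{a_i}/\partial\varpar^{b_i}=\fim_{a_ib_i}$ (the identity from the previous paragraph, using symmetry of the FIM), hence inverse Jacobian $\partial\varpar^{b_i}/\partial\varmean_{a_i}=\fim^{a_ib_i}$. The chain rule then gives $\partial_{\varmean_{a_i}}\elbofinal(\vvarpar)=\sum_{b_i}\fim^{a_ib_i}\,\partial_{\varpar^{b_i}}\elbofinal(\vvarpar)=\fim^{a_ib_i}g_{b_i}$, while Lemma \ref{lemma:block_identity} (block-diagonality of the FIM, so the block-$i$ natural gradient only involves $(\vfim^{[i]})^{-1}$) says exactly $\ngrad^{a_i}=\fim^{a_ib_i}g_{b_i}$. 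Combining the two yields $\ngrad^{a_i}=\partial_{\varmean_{a_i}}\elbofinal(\vvarpar)$.

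For the Christoffel symbol I would start from the definition of the first-kind symbol restricted to block-$i$ indices (the quantity that enters via Lemma \ref{lemma:block_identity}), $\Gamma_{d_i,a_ib_i}=\half[\partial_{\varpar^{a_i}}\fim_{b_id_i}+\partial_{\varpar^{b_i}}\fim_{a_id_i}-\partial_{\varpar^{d_i}}\fim_{a_ib_i}]$, and substitute $\fim_{b_id_i}=\partial_{\varpar^{b_i}}\partial_{\varpar^{d_i}}A(\vvarpar)$ and its analogues from the first paragraph into each of the three terms. Because $A$ is (block-wise and jointly) $C^3$ — the smoothness hypothesis cited from \citet{johansen1979introduction} — the mixed third partials commute, so the three terms are $+\partial_{\varpar^{a_i}}\partial_{\varpar^{b_i}}\partial_{\varpar^{d_i}}A$, $+\partial_{\varpar^{b_i}}\partial_{\varpar^{a_i}}\partial_{\varpar^{d_i}}A$ and $-\partial_{\varpar^{d_i}}\partial_{\varpar^{a_i}}\partial_{\varpar^{b_i}}A$, i.e.\ $(1+1-1)$ copies of the same quantity, leaving $\Gamma_{d_i,a_ib_i}=\half\,\partial_{\varpar^{a_i}}\partial_{\varpar^{b_i}}\partial_{\varpar^{d_i}}A(\vvarpar)$.

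The one step I would take most care over is the first one: establishing that the block-$i$ Hessian of the \emph{joint} $A$ simultaneously equals the FIM of the conditional exponential family and the $(i,i)$ block of the joint FIM. This relies on reading Assumption 3 as a genuine density identity (so that $A$ really normalizes the conditional family for every admissible $\vvarpar^{[-i]}$) and on the finiteness and $C^3$-smoothness of $A$ needed to differentiate under the integral for $\partial_{\varpar^{[i]}}A=\myexpect_q[\vphi_i]$ and to commute partials. Everything after that is the chain rule together with the index bookkeeping of Table \ref{tab:TableOfNotation}.
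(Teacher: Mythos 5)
Your proposal is correct, and the two halves of it relate to the paper's proof differently. For the natural-gradient identity you follow essentially the same path as Appendix \ref{app:proof_1}: establish $\varmean_{a_i}=\partial_{\varpar^{a_i}}A(\vvarpar)$ and $\fim_{a_ib_i}=\partial_{\varpar^{a_i}}\partial_{\varpar^{b_i}}A(\vvarpar)$ from the fact that, with $\vvarpar^{[-i]}$ fixed, $q$ is a minimal one-parameter EF with the joint $A$ as its log-partition function; invoke positive-definiteness of $\vfim^{[i]}$ (minimality) so the block Legendre map is invertible with inverse Jacobian $\fim^{a_ib_i}$; and finish with the chain rule plus Lemma \ref{lemma:block_identity}. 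For the Christoffel symbol, however, you take a genuinely different and more elementary route. The paper first derives the general integral formula \eqref{eq:chris} for $\Gamma_{d,ab}$ (expectations of products of derivatives of $\log q$), then shows that under a BCN parameterization the first three expectation terms vanish because $\partial_{a_i}\partial_{b_i}\log q$ is deterministic and the score has zero mean, leaving $-\half\,\myexpect_q[\partial_{a_i}\partial_{b_i}\partial_{d_i}\log q]=\half\,\partial_{a_i}\partial_{b_i}\partial_{d_i}A$. You instead substitute $\fim_{b_id_i}=\partial_{\varpar^{b_i}}\partial_{\varpar^{d_i}}A(\vvarpar)$ directly into the metric-derivative definition $\Gamma_{d_i,a_ib_i}=\half[\partial_{a_i}\fim_{b_id_i}+\partial_{b_i}\fim_{a_id_i}-\partial_{d_i}\fim_{a_ib_i}]$ and use Schwarz symmetry of the third partials of $A$ to collapse the three terms to $(1+1-1)$ copies of $\partial_{a_i}\partial_{b_i}\partial_{d_i}A$. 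Both arguments are valid and rest on the same two structural facts (affineness of $\log q$ in $\vvarpar^{[i]}$ and $C^3$-smoothness of $A$); yours avoids any expectation computation and needs no differentiation under the integral for the Christoffel part, while the paper's detour through \eqref{eq:chris} reuses machinery it needs anyway for non-BCN parameterizations (Appendix \ref{app:case}) and makes explicit exactly which ``extra integrations'' the BCN assumption eliminates. Your closing caveat --- that the identification of the block Hessian of the joint $A$ with both the conditional-family FIM and the $(i,i)$ block of the joint FIM is the load-bearing step --- is well placed; it is precisely what Assumption 3 together with block-diagonality (Assumption 2) delivers.
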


Since $A(\vvarpar)$ is $C^3$-smooth for block $\vvarpar^{[i]}$ \cite{johansen1979introduction}, we have $\partial_{\varpar^{a_i}} \partial_{\varpar^{b_i}} \partial_{\varpar^{d_i}}A(\vvarpar) =   \partial_{\varpar^{d_i}} \partial_{\varpar^{a_i}} \partial_{\varpar^{b_i}}A(\vvarpar)$.
Thus, by Theorem \ref{thm:EF_chris},
 we have $\Gamma_{\, \, \, \, \, \, a_i b_i}^{c_i} = \half  \partial_{\varmean_{c_i}} \partial_{\varpar^{a_i}} \partial_{\varpar^{b_i}}A(\vvarpar)$.
 

A similar theorem for EF mixtures is  in Appendix \ref{app:cef_rgvi}.

To sum up, our rule is an instance of RGD with a retraction map.
We give a principled way to derive Riemannian gradients and retractions from scratch (see Footnote \ref{ft:rep1},\ref{ft:rep2}).
The convergence analysis could be obtained by existing works \cite{bonnabel2013stochastic} if the retraction satisfies some properties.

\section{Numerical Results\protect\footnote{Our implementation:
\href{https://github.com/yorkerlin/iBayesLRule}{github.com/yorkerlin/iBayesLRule}}
}
\begin{figure*}[t]
\centering
\hspace*{-1.5cm}
\subfigure[]{
\label{fig:a}
	\includegraphics[width=0.26\linewidth]{./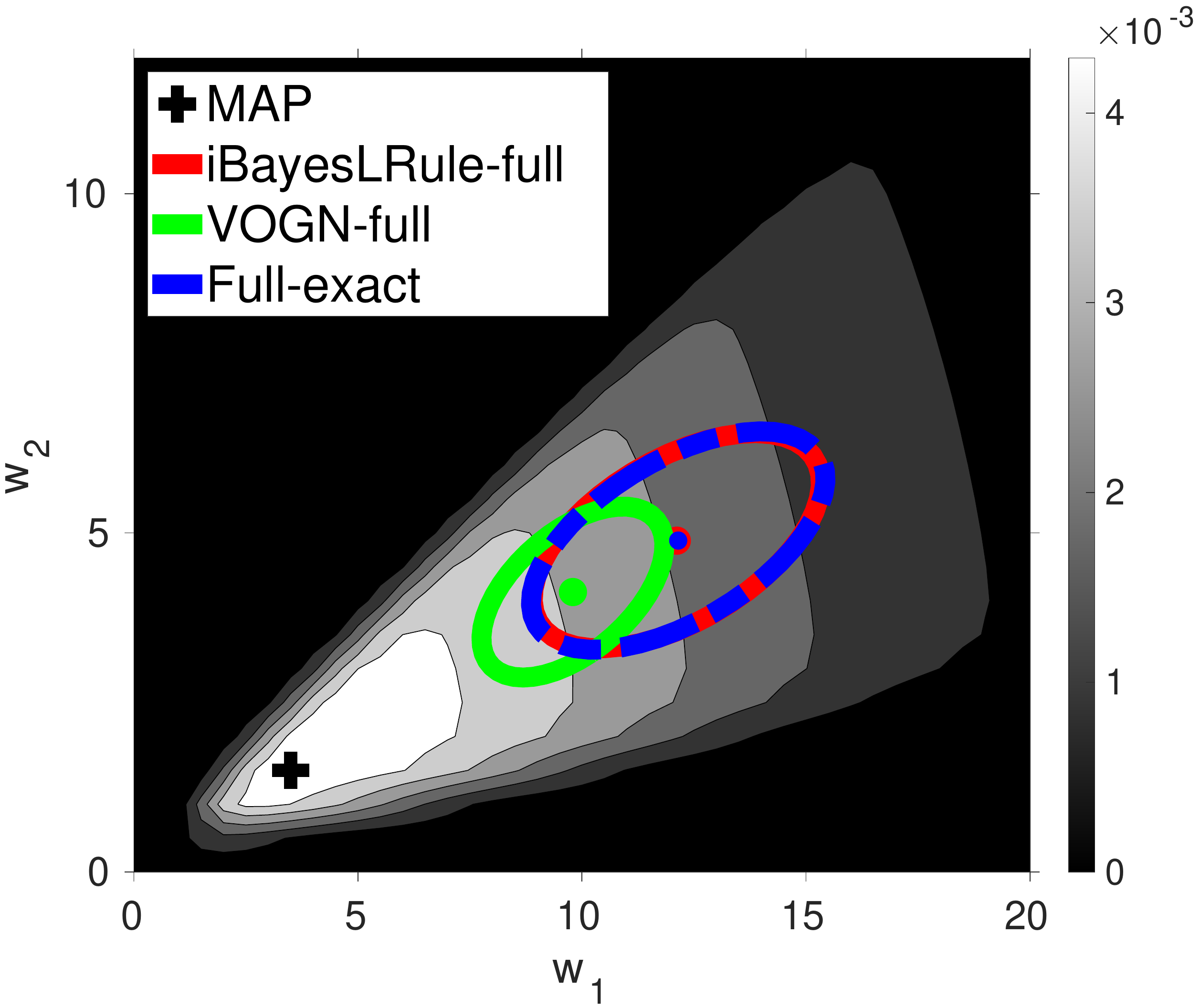}
}
	\hspace*{-0.4cm}
\subfigure[]{
\label{fig:b}
	\includegraphics[width=0.26\linewidth]{./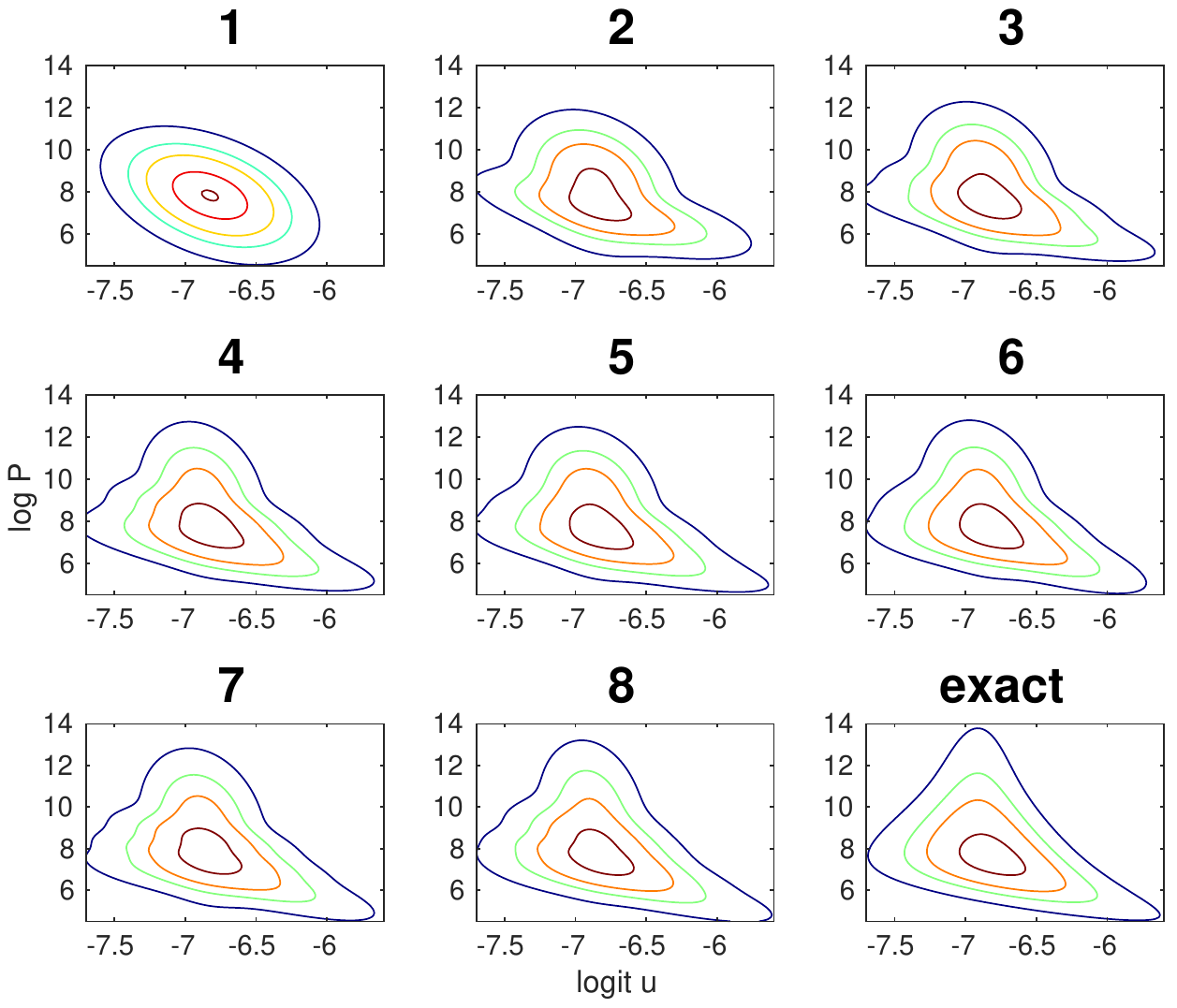}
}
	\hspace*{-0.4cm}
\subfigure[]{
\label{fig:c}
	\includegraphics[width=0.26\linewidth]{./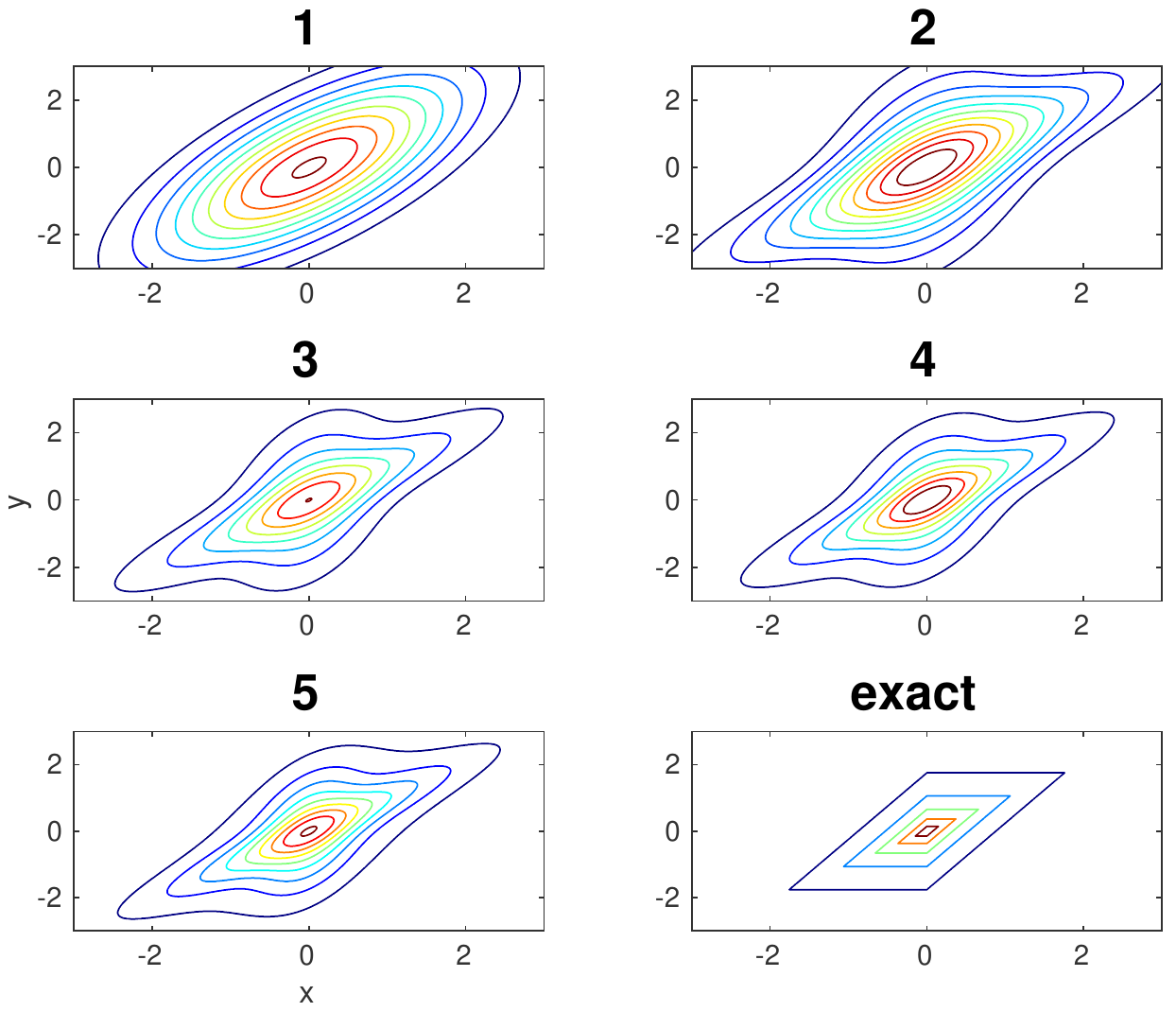}
}
	\hspace*{-0.4cm}
\subfigure[]{
\label{fig:d}
	\includegraphics[width=0.26\linewidth]{./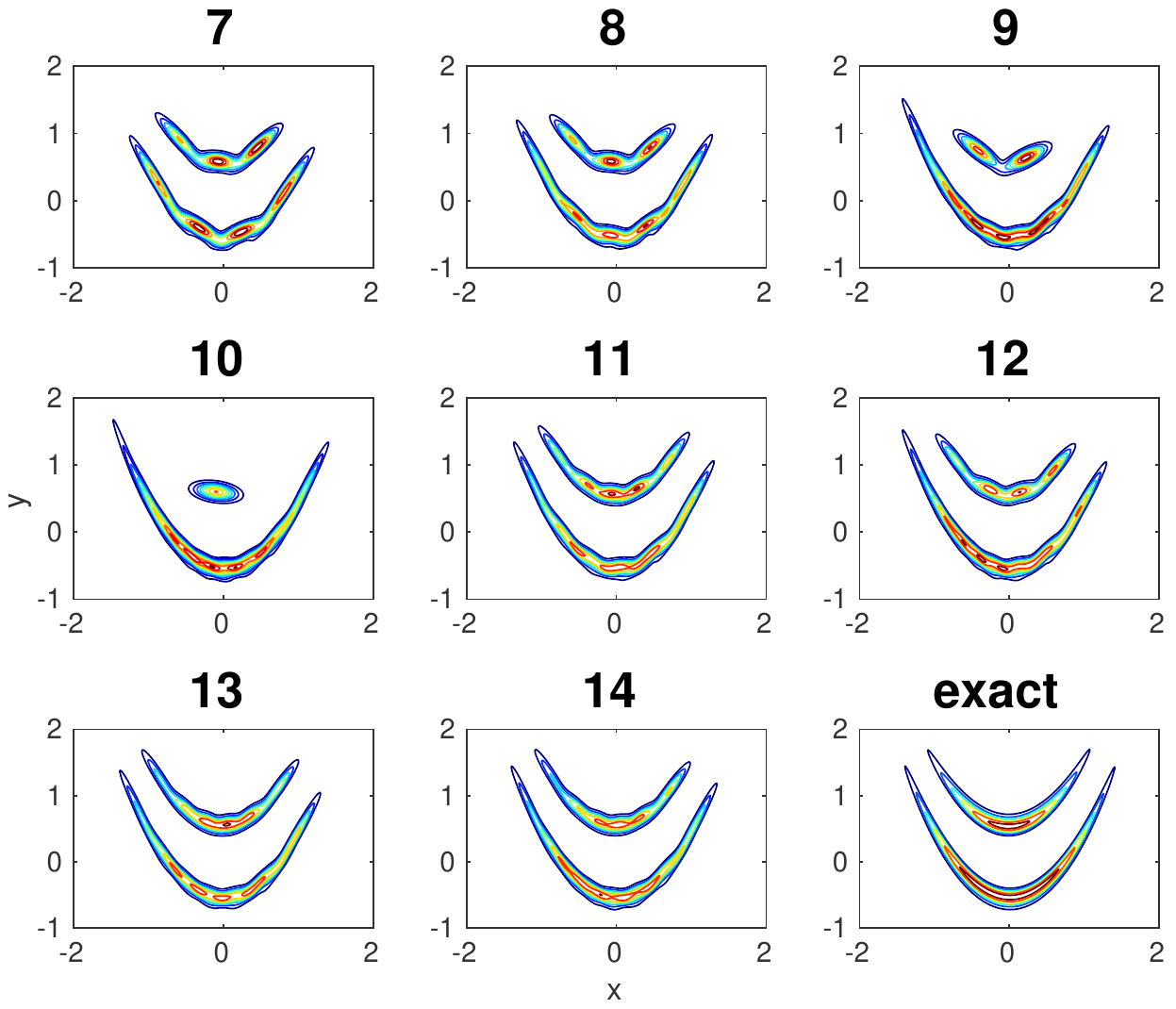}
}
\hspace*{-1.2cm}
\vspace{-0.4cm}
   \caption{Visualization of posterior approximations on 2-D toy examples. Figure \ref{fig:a} shows the Gaussian approximation to fit a Bayesian logistic model, where our approximation matches the exact variational Gaussian approximation.
Figure \ref{fig:b} shows MOG approximation fit to a beta-binomial model in a 2-D problem. The number indicates
 the number of mixture components. By increasing the number of components, we get better results. Figure \ref{fig:c} shows MOG approximation fit to a correlated 2-D Laplace distribution. The number indicates the number of mixtures. We get smooth approximations of the non-smooth distribution.
Figure \ref{fig:d} shows MOG approximation fit to a double banana distribution. The number indicates the number of mixtures, where we only show the last 8 MOG approximations.
The complete MOG approximations can be found in Appendix \ref{app:more_plots}.
 As the number of components increases, we get better results.
 }
\label{figure:toy_examples}
\end{figure*}

\begin{figure*}[t]
	\centering
	\hspace*{-1.2cm}
	\includegraphics[width=0.3\linewidth]{./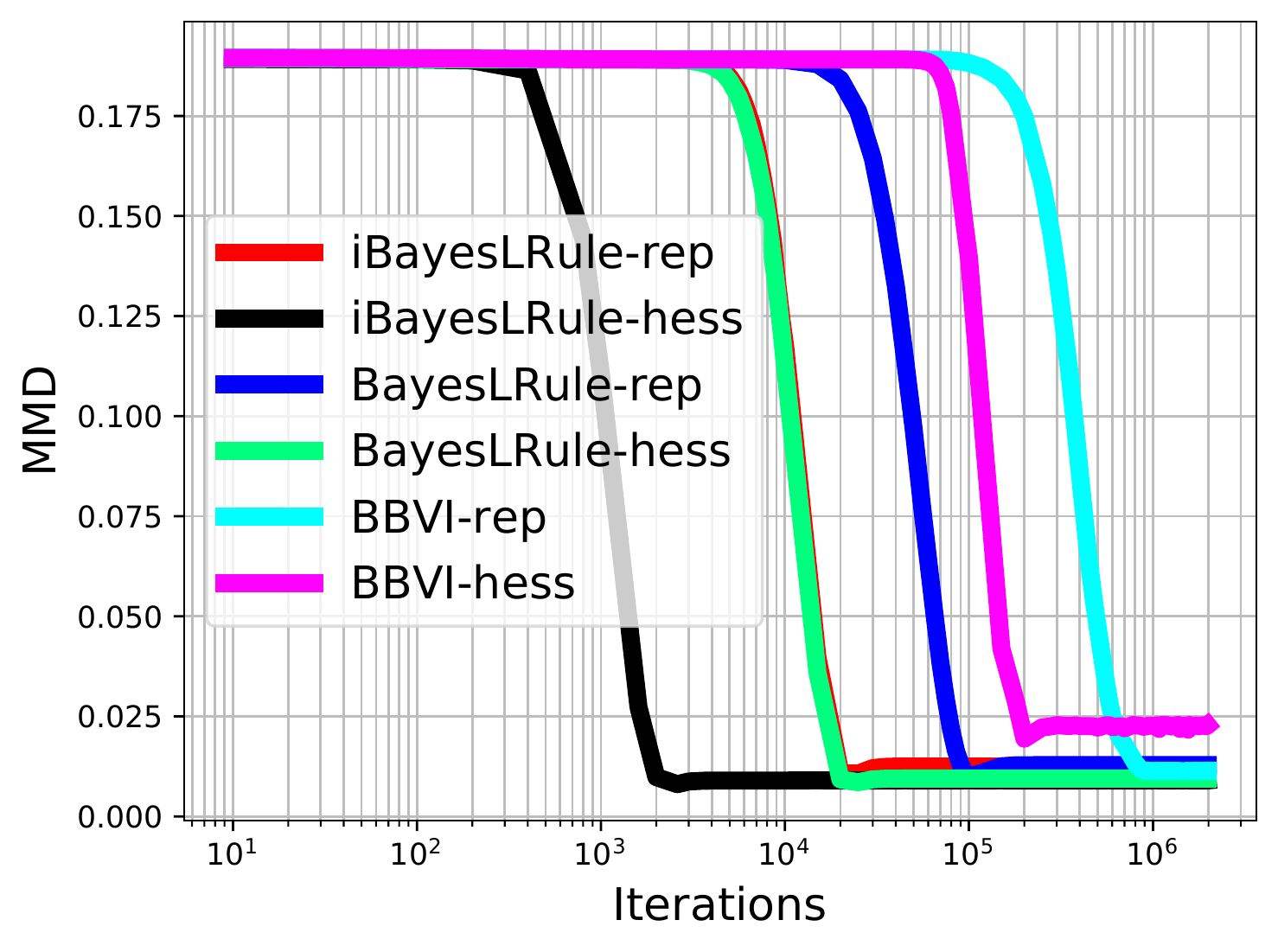}
	\includegraphics[width=0.3\linewidth]{./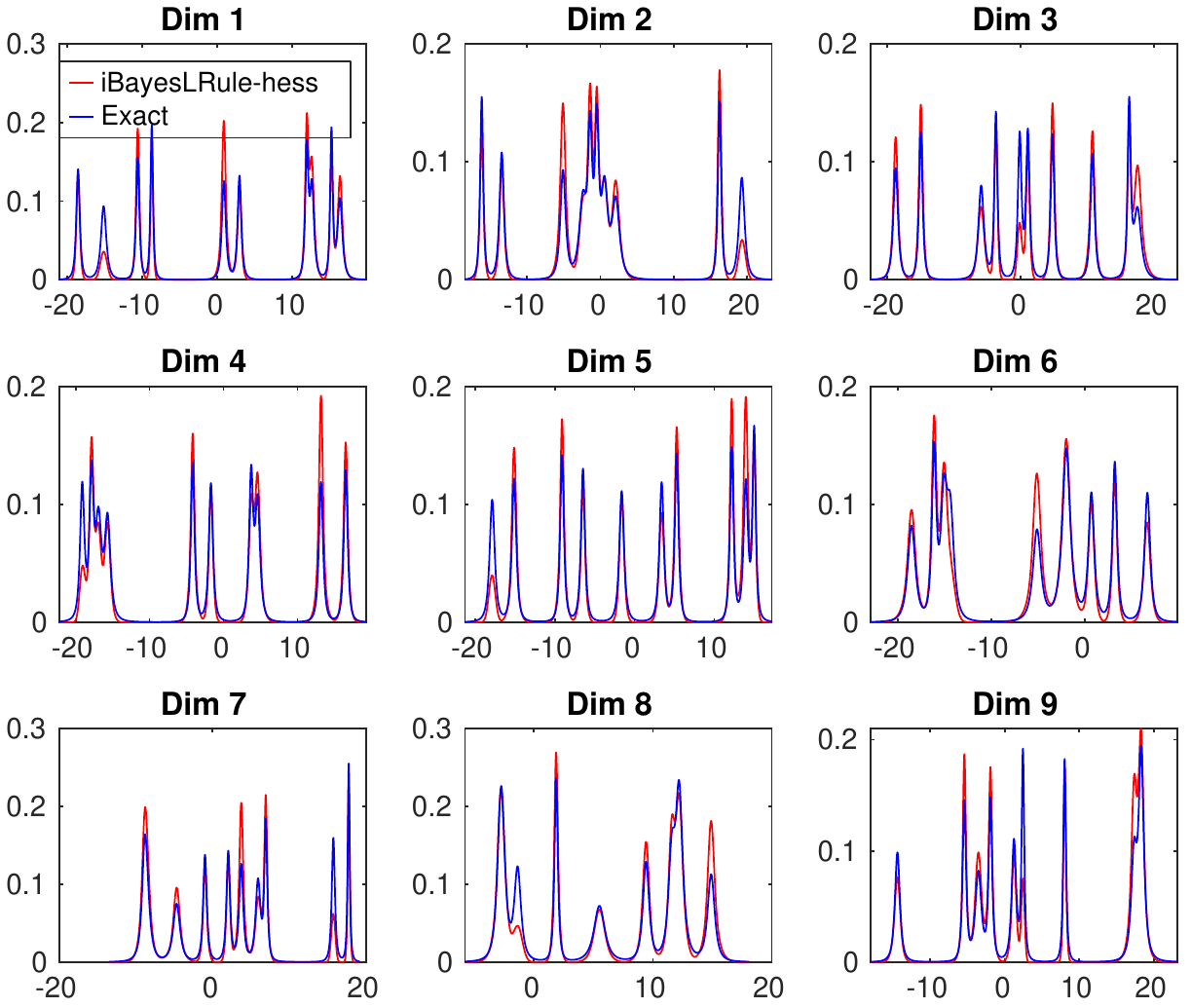}
	\includegraphics[width=0.3\linewidth]{./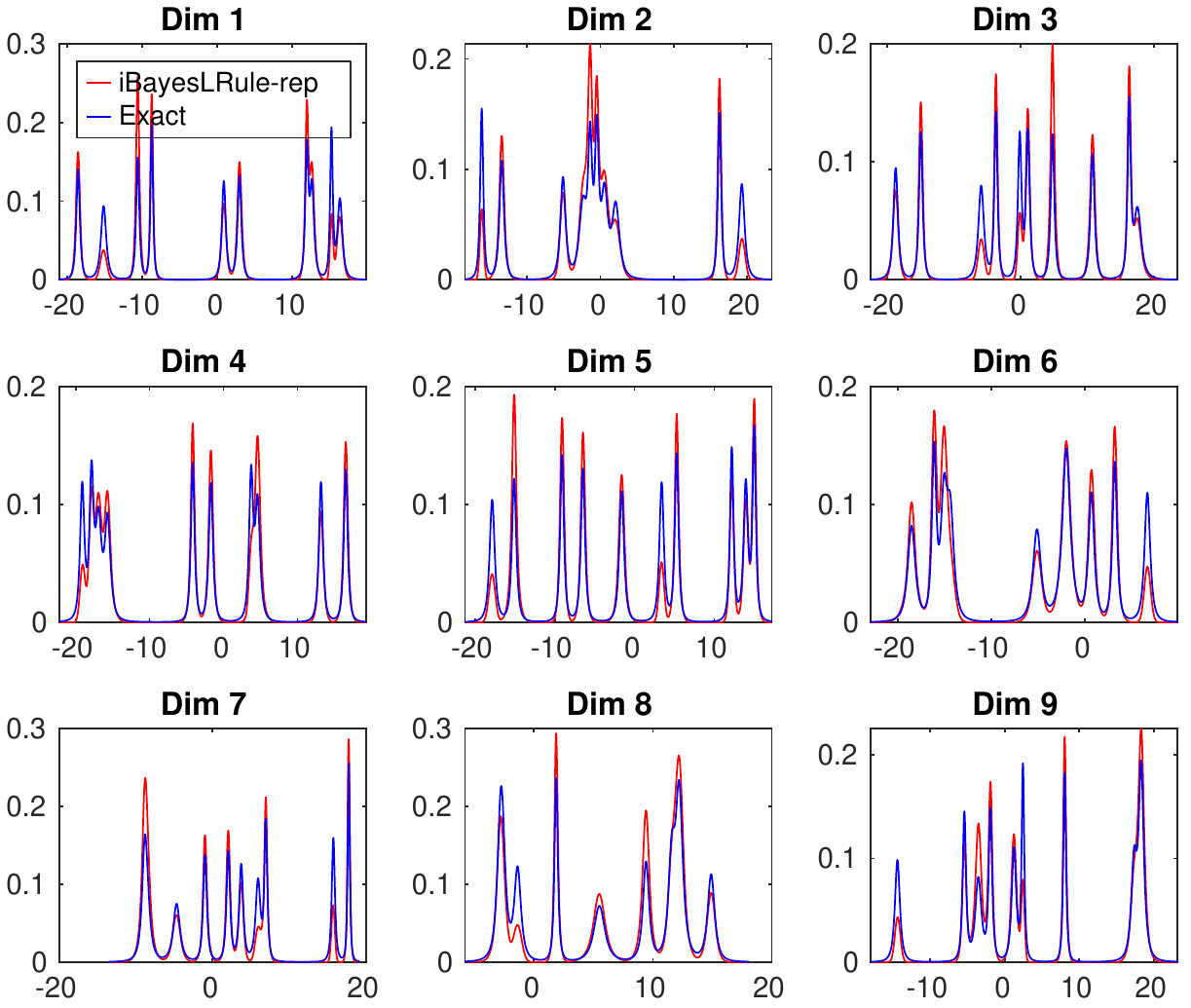}
	\hspace*{-1.2cm}
	\vspace{-0.2cm}
   \caption{Comparison results on a 20-D mixture of Student's t distributions  with $10$ components by MOG approximations. The leftmost figure shows the performance of each  method, where our method outperforms existing methods.
The first 9 dimensions obtained by our method are shown in the figure where MOG approximation fits the marginals well.
We also test a 300-D mixture problem in Appendix \ref{app:more_plots}.
}
	\label{figure:mixT}
\end{figure*}

\begin{figure*}[t]
	\centering
	\hspace*{-1.5cm}
\subfigure[]{
\label{fig:ra}
	\includegraphics[width=0.25\linewidth]{./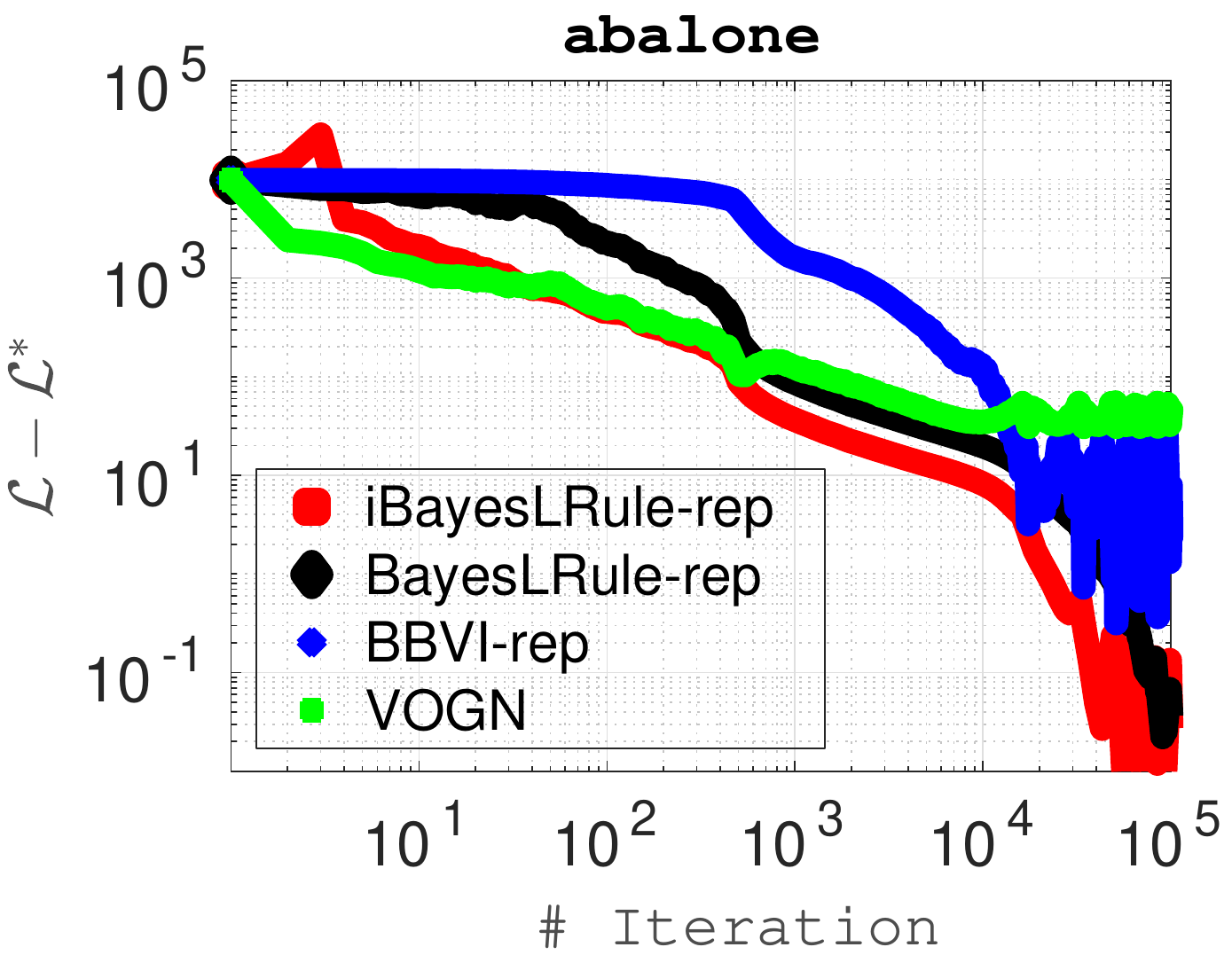}
	}
	\hspace*{-0.2cm}
\subfigure[]{
\label{fig:rb}
	\includegraphics[width=0.25\linewidth]{./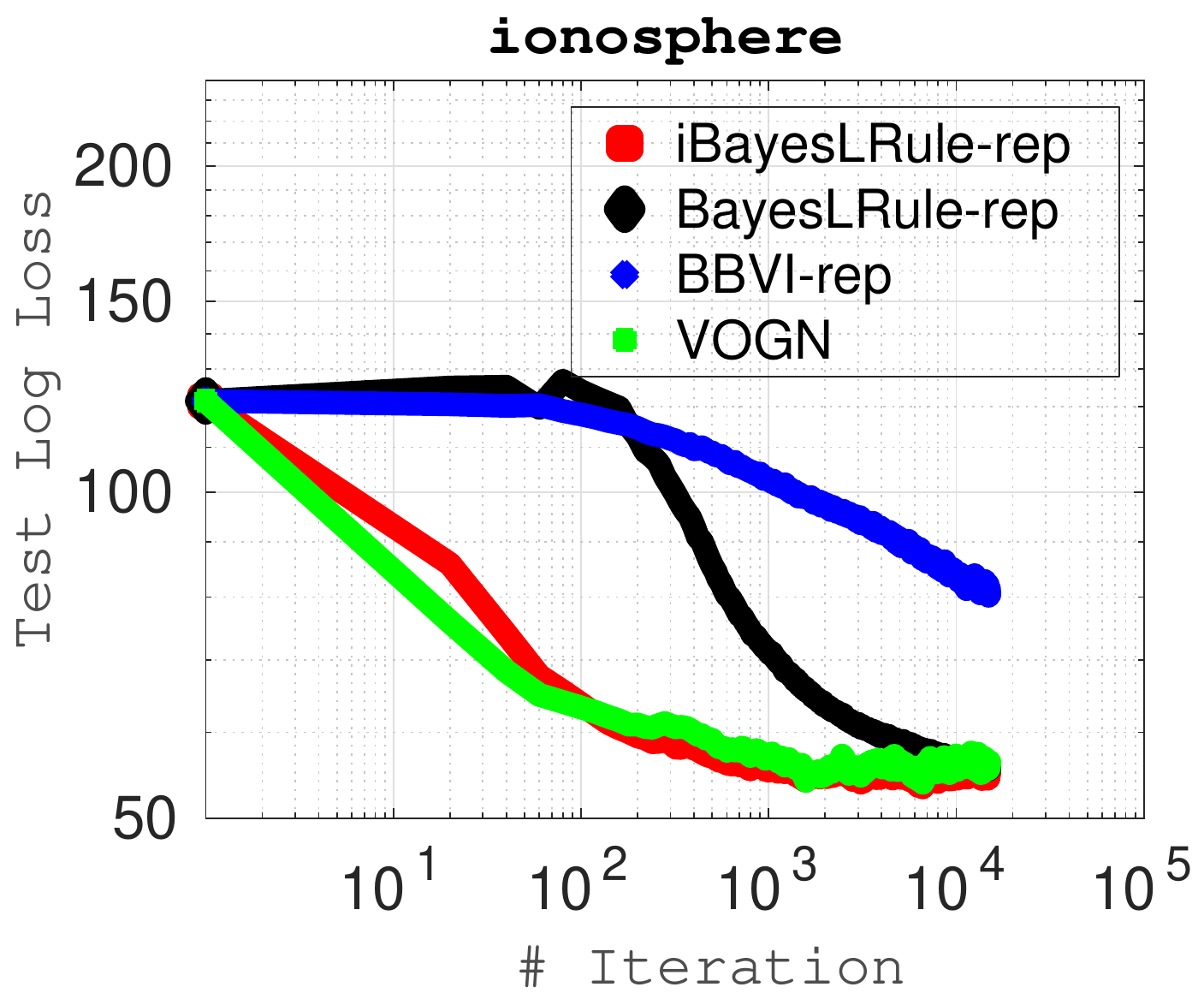}
	}
	\hspace*{-0.2cm}
\subfigure[]{
\label{fig:rc}
	\includegraphics[width=0.25\linewidth]{./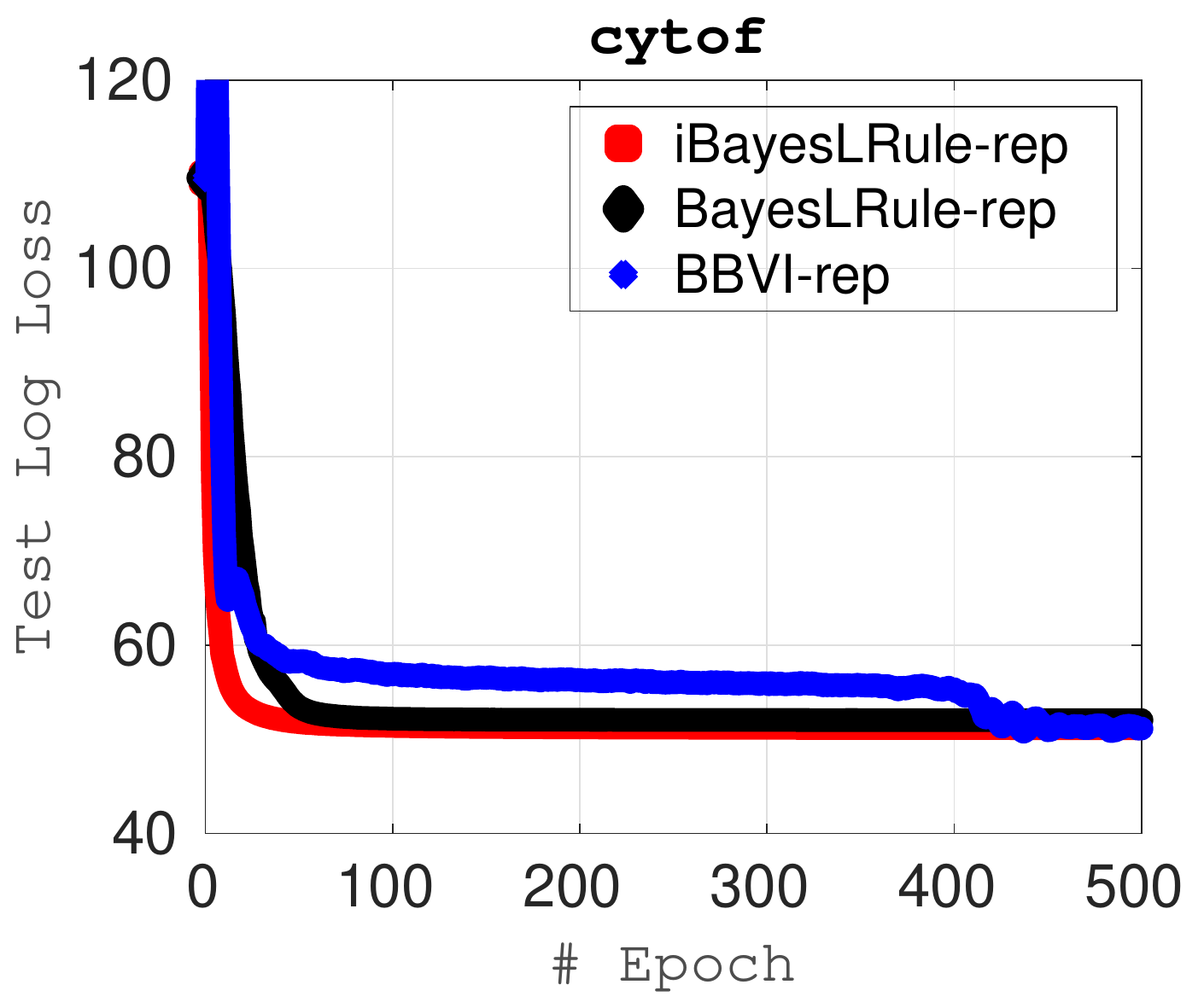}
	}
	\hspace*{-0.2cm}
\subfigure[]{
\label{fig:rd}
	\includegraphics[width=0.25\linewidth]{./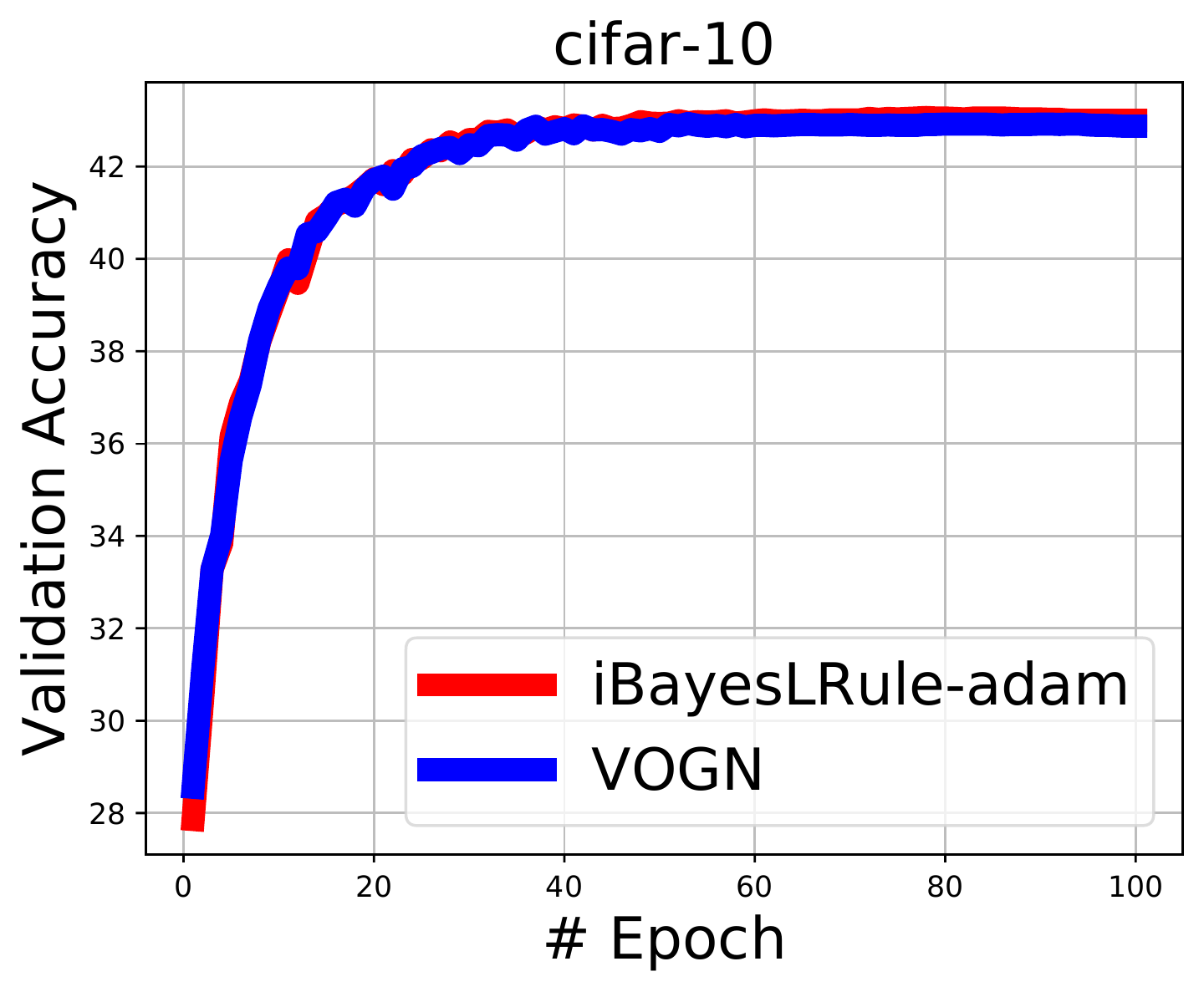}
	}
	\hspace*{-1.5cm}
\vspace{-0.4cm}
   \caption{
   Results on real-world datasets showing the performances of our method (iBayesLRule) highlighted in red compared to BBVI, BayesLRule with a line search, and VOGN.
  Figure \ref{fig:ra}  and \ref{fig:rb} show the performances  using Gaussian approximations with full covariance structure to fit a Bayesian linear regression and a Bayesian logistic regression, respectively, where our method converges faster than  BayesLRule and BBVI and gives a more accurate approximation than VOGN.
   Figure \ref{fig:rc}  shows the performances   using Gamma approximations to fit a Gamma factor model, where our method converges faster.
    Figure \ref{fig:rd}  shows the performances of methods in a Bayesian MLP network with diagonal Gaussian approximations, where our method performs comparably to VOGN.
   }
	\label{figure:real_examples}
\vspace{-0.2cm}
\end{figure*}

\subsection{Results on Synthetic Examples}
\label{sec:syn_examples}
To validate our rule, we visualize our approximations in 2-dimensional toy examples, where we use the re-parametrization trick  suggested  by \citet{lin2019fast,wu-report} (see \eqref{eq:gauss_cov_rep} in Appendix \ref{app:gauss_case} for full  Gaussian  and \eqref{eq:mog_cov_rep} in Appendix \ref{app:mog_case} for mixture of Gaussians (MOG)) 
to compute gradients.
Due to the  trick, $\nabla_\lat^2 \bar{\ell}(\vlat)$ is not needed.
See Figure \ref{figure:banana_plots}-\ref{figure:blr_plots} in Appendix \ref{app:more_plots} for 
more visualization examples such as the banana distribution \citep{haario2001adaptive} and a BNN example taken from \citet{au2020manifold}.
We then compare our method to baseline methods in a higher dimensional example.

We first visualize Gaussian approximations with full covariance structures for the Bayesian Logistic regression example taken from \citet{murphy2013machine} ($N = 60, d = 2$).
Figure \ref{fig:a} shows posterior approximations obtained from various methods. From the figure, our approximation matches the exact variational Gaussian approximation.
For skew-Gaussian approximations \citep{lin2019fast} and mean-field Gaussian approximations, see
Figure \ref{figure:blr_plots} in Appendix \ref{app:more_plots}.

In the second example, we approximate the beta-binomial model  \citep{salimans2013fixed} ($N = 20, d = 2$) by MOG.
The exact posterior is skewed.
From Figure \ref{fig:b}, we see that the approximation matches the exact posterior better and better as we increase the number of mixtures.

In the third example, we approximate a correlated Laplace distribution $\exp( - \bar{\ell}(\vlat) ) =  \lapdist(\lat_1|0,1)  \lapdist(\lat_2|\lat_1,1)$ by using MOG, where 
$ \lapdist(\lat_2|\lat_1,1)=\half \exp(-| \lat_2 - \lat_1 |)$. The distribution is non-smooth and thus
 $\nabla_\lat^2 \bar{\ell}(\vlat)$ does not exist. However,
we can use the re-parametrization trick since
 $\nabla_\lat \bar{\ell}(\vlat)$ exists almost surely.
From Figure \ref{fig:c}, we see that our method gives smooth approximations of the target.

In the fourth example, we approximate the double banana distribution constructed by \citet{detommaso2018stein}. The true distribution has two modes and is skewed.
As shown in  Figure \ref{fig:d}, our MOG approximation approximates the target  better and better when we increase the number of mixtures. 
For a complete plot using the approximation with various components, see
Figure \ref{fig:mb} in Appendix \ref{app:more_plots}.

Finally, we conduct a comparison study on approximations for a mixture of Student's t distributions $\exp( - \bar{\ell}(\vlat) ) =\frac{1}{C}\sum_{k=1}^{C} \Student(\vlat|\vu_k,\vV_k,\alpha)$ with degrees of freedom $\alpha=2$, where $\vlat \in \mathbb{R}^d$.
We generate each entry of location vector $\vu_k$ uniformly in an interval $(-s,s)$. Each shape matrix $\vV_k$ is taken a form of $\vV_k=\vA_k^T \vA_k + \vI_d$, where each entry of the $d \times d$ matrix $\vA_k$ is independently drawn from a Gaussian distribution with mean $0$ and standard deviation $0.1d$.
We approximate the posterior distribution by MOG with $K$ components and
 use the  importance sampling technique to compute gradients as suggested by \citet{lin2019fast} so that the number of Monte Carlo (MC) samples does not depend  the number of Gaussian components $K$.
We compare our method to  existing methods, where the BayesLRule for MOG is proposed by \citet{lin2019fast}.

We consider a case with $K = 25, C = 10, d = 20, s=20$.
For simplicity, we fix the mixing weight to be $\frac{1}{K}$ and only update each Gaussian component with the precision $\vS_c$ and the mean $\vmu_c$ during training.
We use 10 MC samples to compute  gradients, where  gradients are computed using either the re-parametrization trick  (referred to as ``-rep'') as shown in \eqref{eq:mog_cov_rep} in Appendix \ref{app:mog_case}  or the Hessian trick (referred to as ``-hess'') as shown in \eqref{eq:mog_cov_hess} in Appendix \ref{app:mog_case} .
Note that BayesLRule with either the re-parametrization trick or the Hessian trick does not stay in the constraint set.
We use the same initialization and tune the step size by grid search for each method .
The leftmost plot of Figure \ref{figure:mixT} shows the performance. We clearly see that our methods converge fastest, when we use the 
maximum mean discrepancy (MMD) to measure the difference between an approximation and the ground-truth.
The remaining plots of Figure \ref{figure:mixT} show the first 9 marginal distributions of the true distribution and our  approximations with two kinds of gradient estimation, where  MOG closely matches the marginals.
All 20 marginal distributions are in Figure \ref{figure:mixT20_plots} in Appendix \ref{app:more_plots}.
We also consider
a more difficult case with $K=60, C=20, d=300, s=25$ using 10 MC samples.
Figure \ref{figure:mixT300_p1_plots}-\ref{figure:mixT300_p5_plots} in Appendix \ref{app:more_plots} show all 300 marginal distributions obtained by our method.

\subsection{Results on Real Data}
Now, we show results on real-world datasets. We consider four models in our experiments.
The first model is the Bayesian linear regression, where we can obtain the exact solution and the optimal negative ELBO denoted by $\elbofinal^*$.
We present
results for full Gaussian approximations on the ``Abalone'' dataset ($N=4,177, d=8$)
with 3341 chosen for  training. We train the model with mini-batch size 168.
In Figure  \ref{fig:ra}, we plot the difference of ELBO between the exact and an approximation.
We compare our
method (referred to as ``iBayesLRule'' ) to 
the black-box gradient method (referred to as ``BBVI'' ) using the Adam optimizer \citep{kingma2014adam} and 
the original Bayesian learning rule (referred to as ``BayesLRule'' )
with the re-parametrization trick (referred to as ``-rep'') and the VOGN method.
 BBVI requires us to use an unconstrained parametrization.
BayesLRule with the re-parametrization trick does not stay in the constraint set so a line search has to be used in BayesLRule.
We can see that our method converges faster than BayesLRule and BBVI and is more accurate than VOGN.

Next, we consider the Bayesian logistic regression and present
results for full Gaussian approximations on the ``Ionosphere'' dataset ($N=351, d=34$)
with 175 chosen for training. We train the model with mini-batch size 17.
In Figure  \ref{fig:rb}, we plot the test log-loss and compare our
method to BBVI and BayesLRule
with the re-parametrization trick (referred to as ``-rep'').
We also consider the VOGN method proposed for Gaussian approximations.
Note that BayesLRule using the re-parametrization trick does not stay in the constraint set and a line search is used.
From the plot, we can see  our method outperforms BayesLRule and performs comparably to VOGN.

Then, we consider the Gamma factor model \citep{knowles2015stochastic,khan2017conjugate} using Gamma approximations on the ``CyTOF'' dataset ($N=522,656, d=40$) with 300,000 chosen for training, where gradients are computed using
the implicit re-parametrization trick \citep{figurnov2018implicit} (referred to as ``-rep'').
We train the model with mini-batch size 39 and tune the step size for all methods.
In Figure  \ref{fig:rc}, we plot the test log-loss and compare our to BayesLRule and BBVI.
BayesLRule uses a line search since the updates using the re-parametrization trick  do not satisfy the constraint.
Our method outperforms BayesLRule and BBVI.

Finally, we consider a  Bayesian MLP network with 2 hidden layers, where we use 1000 units for each layer.  We
train the network with diagonal Gaussian approximations on
the ``CIFAR-10'' dataset ($N=60,000, d=3\times 32\times 32$) with
50,000 images for training and 10,000 images for validation.
We train the model with mini-batch size 128 and compare our Adam-like update (referred to as ``iBayesLRule-adam'') to VOGN.
We use the same initialization and hyper-parameters in both methods.
In Figure  \ref{fig:rd}, we plot the validation accuracy.
Our method performs similarly to VOGN.

\section{Discussion}
We present an improved learning rule to handle positive-definite parameterization constraints.
We propose a BCN parameterization so that natural gradients and the extra terms are easy to compute. Under this parameterization,  the Fisher matrix and the Christoffel symbols admit a closed-form via differentiation without introducing extra integrations.

Our main focus is  on the derivation of simple and efficient updates that naturally handle positive-definite constraints.
We give examples where our  updates have low iteration cost.
We hope to perform large-scale experiments in the future. 

\section*{Acknowledgements}
We would like to thank Hiroyuki Kasai (Waseda University) for useful discussions at ICML 2019 at the early stage of  this project.
WL is supported by a UBC International Doctoral Fellowship.

\bibliography{refs}
\bibliographystyle{icml2020}


\clearpage
\pagebreak
\onecolumn
\begin{appendices}
In the appendices, we will use the index notation and the Einstein summation notation introduced  in Section \ref{sec:ergd}.

\section{A counter-example for \citet{song2018accelerating}}
\label{app:case}
We show that the update suggested by \citet{song2018accelerating}  does not stay in the constraint set while ours  does.

Let's consider the following univariate Gaussian distribution under a BC parameterization $\vvarpar=\{\mu,\sigma \}$, where $\sigma$ denotes the standard deviation\footnote{ It is also used as an unconstrained parameterization of Gaussian distributions for BBVI.  Technically, this parameterization has a positivity constraint, which is often ignored in practice. In multivariate cases,  the Cholesky factor is used as an unconstrained parameterization, where the positivity constraint in the diagonal elements is often ignored. }. The constraint is $\Omega_1=\mathbb{R}$ and $\Omega_2=\mathbb{S}_{++}^{1}$.
$\ngrad^{(1)}$ and $\ngrad^{(2)}$ are  natural gradients for $\mu$ and $\sigma$, respectively.

\vspace{-0.6cm}
\begin{align*}
q(\lat|\vvarpar) = \exp\crl{ - \half  \left( \frac{\lat- \mu  }{ \sigma } \right)^2  -\half \log( 2\pi) -\log(\sigma) }
\end{align*}

Recall that the Christoffel symbols of the second kind can be computed as
$\Gamma^{c}_{\ a b} = \fim^{cd} \Gamma_{d,a b}$ where  $\Gamma_{d,a b}$
is the  Christoffel symbols of the first kind and $\fim^{cd}$ is the entry of the inverse the FIM, $\vfim^{-1}$, at position $(c,d)$.

Under this parameterization, the FIM  and the Christoffel symbols of the second kind are given below,  where the  Christoffel symbols of the first kind are computed by using Eq. \eqref{eq:chris}.
The computation of the Christoffel symbols can be difficult since the parameterization is not a BCN parameterization.

\vspace{-0.6cm}
\begin{align*}
\fim_{ab}  = \begin{bmatrix}
\frac{1}{\sigma^2} & 0 \\
           0 &  \frac{2}{\sigma^2}
            \end{bmatrix}, \,\,\,
\Gamma^{1}_{\ \ ab}  = \begin{bmatrix}
                0  & -\frac{1}{\sigma} \\
                    -\frac{1}{\sigma} & 0
                   \end{bmatrix}, \,\,\,
\Gamma^{2}_{\ \ ab} = \begin{bmatrix}
                    \frac{1}{2\sigma} & 0 \\
                    0& -\frac{1}{\sigma} 
                   \end{bmatrix}
\end{align*}

The update suggested by  \citet{song2018accelerating} is

\vspace{-0.6cm}
\begin{align*}
 \mu &\leftarrow \mu  - \stepsize \ngrad^{(1)} - \stepsize \ngrad^{(1)} - \frac{\stepsize \times \stepsize}{2} \Gamma^{1}_{\ \ ab} \ngrad^{(a)}\ngrad^{(b)} 
 = \mu - \stepsize \ngrad^{(1)} + \frac{\stepsize^2}{2} \left( \frac{2 \ngrad^{(1)} \ngrad^{(2)}}{\sigma} \right) \\
  \sigma  &\leftarrow \sigma - \stepsize \ngrad^{(2)} 
  - t \ngrad^{(2)} - \frac{\stepsize \times \stepsize}{2} \Gamma^{2}_{\ \ ab} \ngrad^{(a)}\ngrad^{(b)} 
 = \sigma - \stepsize \ngrad^{(2)} + \frac{\stepsize^2}{2} \left( \frac{ 2(\ngrad^{(2)})^2 - (\ngrad^{(1)})^2 }{2\sigma}   \right) 
\end{align*}

Clearly, the updated $\sigma$ does not always satisfy the positivity constraint $\mathbb{S}_{++}^{1}$.

As shown in Eq. \eqref{eq:ibrl_bc},
our rule can be used in not only a BCN parameterization but also a BC parameterization.
Since every block contains only a scalar, we use global indexes such as
$\varpar^{(i)}=\varpar^{a_i}$, $\ngrad^{(i)}=\ngrad^{[i]}$ and $\Gamma_{i,ii}=\Gamma_{a_i, b_ic_i}$ for notation simplicity.
Note that  $\Gamma^{1}_{\ \ 11}=0$ is the entry at the upper-left corner of $\Gamma^{1}_{\ \ ab}$
and $\Gamma^{2}_{\ \ 22}=-\frac{1}{\sigma}$ is the entry at the lower-right corner of $\Gamma^{2}_{\ \ ab}$.
In our update (see Eq. \eqref{eq:ibrl_bc}), we can see the update automatically satisfies the constraint as shown below.
\begin{align*}
 \overbrace{\mu}^{\varpar^{(1)} } &\leftarrow \overbrace{\mu}^{\varpar^{(1)} } 
 - \frac{\stepsize^2}{2} \Gamma^{1}_{\ \ 11} \ngrad^{(1)}\ngrad^{(1)}
 = \mu - \stepsize \ngrad^{(1)} \\
 \underbrace{\sigma}_{\varpar^{(2)} }  &\leftarrow \underbrace{\sigma }_{\varpar^{(2)} }
- \frac{\stepsize^2}{2} \Gamma^{2}_{\ \ 22} \ngrad^{(2)}\ngrad^{(2)} 
 = \sigma  - \stepsize \ngrad^{(2)} + \frac{\stepsize^2}{2} \left( \frac{ (\ngrad^{(2)})^2  }{\sigma}   \right)  
= \underbrace{\frac{1}{2\sigma}}_{>0}\Big[ \underbrace{ \sigma^2 }_{>0}+ \underbrace{\left( \sigma -  \stepsize \ngrad^{(2)}  \right)^2}_{\geq 0}   \Big]
\end{align*}

As we discuss at Section \ref{sec:irgd} of the main text, only the block-wise Christoffel symbol $\Gamma_{i,ii}$ for each block $i$ is required, which becomes essential for multivariate Gaussians and mixture of Gaussians. 

Let's consider another BC parameterization $\vvarpar=\{\mu,v \}$ for the Gaussian distribution, where $v=\sigma^2$ denotes the variance.
Note that we consider the parameterization for univariate Gaussian. For multivariate Gaussian, see Appendix \ref{app:special_case_cov}.
The underlying constraint is $\Omega =  \mathbb{R} \times \mathbb{S}_{++}^{1}$.
$\ngrad^{(1)}$ and $\ngrad^{(2)}$ are  natural gradients for $\mu$ and $v$, respectively.
\begin{align*}
q(\lat|\vvarpar) = \exp\crl{ - \half  \frac{(\lat-\mu)^2}{v}  -\half \log( 2\pi) -\half \log(v) }
\end{align*}
Under this parameterization, the FIM and the Christoffel symbols of the second kind are given below,  where the  Christoffel symbols of the first kind are computed by using Eq. \eqref{eq:chris}.
The computation of the Christoffel symbols can be difficult since the parameterization is not a BCN parameterization.
\begin{align*}
\fim_{ab}  = \begin{bmatrix}
\frac{1}{v}  & 0 \\
           0 &  \frac{1}{2 v^2} 
            \end{bmatrix}, \,\,\,
\Gamma^{1}_{\ \ ab}  = \begin{bmatrix}
                    0 & -\frac{1}{2 v} \\
                    -\frac{1}{2 v} & 0
                   \end{bmatrix}, \,\,\,
\Gamma^{2}_{\ \ ab} = \begin{bmatrix}
                    1 & 0 \\
                    0& -\frac{1}{v} 
                   \end{bmatrix}
\end{align*}
The update suggested by  \citet{song2018accelerating} is
\begin{align*}
 \mu  &\leftarrow \mu -\stepsize  \ngrad^{(1)} - \frac{\stepsize^2}{2} \Gamma^{1}_{\ \ ab} \ngrad^{(a)}\ngrad^{(b)} 
 = \mu  - \stepsize \ngrad^{(1)} + \frac{\stepsize^2}{2} \left( \frac{ \ngrad^{(1)}\ngrad^{(2)}}{ v  } \right) \\
v &\leftarrow v - \stepsize \ngrad^{(2)} - \frac{\stepsize^2}{2} \Gamma^{2}_{\ \ ab} \ngrad^{(a)}\ngrad^{(b)} 
 = v  - \stepsize \ngrad^{(2)} + \frac{\stepsize^2}{2} \left( \frac{ (\ngrad^{(2)})^2  }{v}  - (\ngrad^{(1)})^2   \right) 
\end{align*}

Obviously, the above updated $v$ does not always satisfy the positivity constraint.

Similarly, we use global indexes such as
$\varpar^{(i)}=\varpar^{a_i}$, $\ngrad^{(i)}=\ngrad^{[i]}$ and $\Gamma_{i,ii}=\Gamma_{a_i, b_ic_i}$ for notation simplicity
since every block contains only a scalar.
Note that  $\Gamma^{1}_{\ \ 11}=0$ is the entry at the upper-left corner of $\Gamma^{1}_{\ \ ab}$
and $\Gamma^{2}_{\ \ 22}=-\frac{1}{v}$ is the entry at the lower-right corner of $\Gamma^{2}_{\ \ ab}$.
In our update (see Eq. \eqref{eq:ibrl_bc}), we can see the update automatically satisfies the constraint as shown below.
\begin{align*}
\mu &\leftarrow \mu  - \stepsize \ngrad^{(1)} - \frac{\stepsize^2}{2} \Gamma^{1}_{\ \ 11} \ngrad^{(1)}\ngrad^{(1)}
 = \mu  - \stepsize \ngrad^{(1)} \\
v &\leftarrow v  - \stepsize \ngrad^{(2)} - \frac{\stepsize^2}{2} \Gamma^{2}_{\ \ 22} \ngrad^{(2)}\ngrad^{(2)} 
 = v  - \stepsize \ngrad^{(2)} + \frac{\stepsize^2}{2} \left( \frac{ (\ngrad^{(2)})^2  }{v}   \right)  
= \underbrace{\frac{1}{2 v}}_{>0}\Big[ \underbrace{v^2}_{>0} + \underbrace{ \left( v-  \stepsize \ngrad^{(2)}  \right)^2 }_{\geq 0} \Big] 
\end{align*}

\section{Riemannian Optimization}
\label{app:riem}

\subsection{Proof of Lemma \ref{lemma:block_identity} }
\label{app:block_riem_general}

Let's consider a parameterization $\vvarpar:=\{\vvarpar^{[1]},\dots,\vvarpar^{[m]}\}$ with $m$ blocks for a statistical manifold with metric $\vfim$.
We first define a BC parameterization $\vvarpar$ for a general metric $\vfim$. 
\begin{defn}
\label{assump:BC_general}
{\bf Block Coordinate Parameterization}:
   A parameterization is block coordinate (BC) if the metric $\vfim$ under this parameterization is block-diagonal according to the block structure of the parameterization.
\end{defn}

Recall that we use the following block notation: $\Gamma_{\ a_ib_i}^{c_i}\ngrad^{a_i}\ngrad^{b_i}:=\sum_{a \in [i]}\sum_{b \in [i]}\Gamma_{\, \, \, \, \, \, ab}^{(c_i)}\ngrad^{a}\ngrad^{b}$ where $[i]$ denotes the index set of block $i$, $(c_i)$ is the corresponding global index of $c_i$, and $a$ and $b$ are global indexes. 

Now, we prove Lemma \ref{lemma:block_identity}.
\begin{proof}
By the definition of a Riemannian gradient $\vngrad$, we have
\begin{align*}
\ngrad^{a_i} &= \sum_{b} \fim^{(a_i) b} g_{b}
= \sum_{b \in [i] }\fim^{(a_i) b} g_{b} + \sum_{b \not \in [i]} \underbrace{ \fim^{(a_i) b}}_{0} g_{b} 
=\sum_{b \in [i] }\fim^{(a_i) b} g_{b} =\fim^{a_i b_i}g_{b_i}, 
\end{align*} where in the second step,  $\fim^{(a_i) b}=0$ for any $b \not \in [i]$ (see \eqref{eq:bfim} for visualization) since the parameterization is BC,
and we use the definition of the block summation notation in the last step.

Similarly, we have
\begin{align*}
 \Gamma_{\ \ a_ib_i}^{c_i} &=\sum_{d} \fim^{(c_i)d}\Gamma_{d,(a_i)(b_i)} 
= \sum_{d \in [i]} \fim^{(c_i)d}\Gamma_{d,(a_i)(b_i)} + \sum_{d \not \in [i]} \underbrace{\fim^{(c_i)d}}_{0}\Gamma_{d,(a_i)(b_i)} 
= \sum_{d \in [i]} \fim^{(c_i)d}\Gamma_{d,(a_i)(b_i)} 
=\fim^{c_id_i}\Gamma_{d_i,a_ib_i}
\end{align*}
\end{proof}

\subsection{NGD is a First-order Approximation of $\vR(\stepsize)$}
\label{app:ngd_riem}
Now, we assume parameterization $\vvarpar=\{\vvarpar^{[1]},\dots, \vvarpar^{[m]}\}$ is a BC parameterization with $m$ blocks.
Recall that we define the curve $\vR(\stepsize)$ as $\vR(\stepsize):=\{\vR^{[1]}(\stepsize), \dots, \vR^{[m]}(\stepsize)\}$, where $\vR^{[i]}(\stepsize)$ is the solution of following ODE for block $i$.
\begin{align*}
&\dot{R}^{\ c_i}(0)  = - \fim^{c_i a_i} g_{a_i} \,\,;\,\,\,  
R^{\ c_i}(0)  =  \varpar^{c_i}   \\
& \ddot{R}^{\ c_i}(\stepsize)= -  \Gamma_{\ a_ib_i}^{c_i}(\stepsize) \dot{R}^{\ a_i}(\stepsize) \dot{R}^{\ b_i}(\stepsize)
\end{align*} where
$R^{\ c_i}(0)$,  $\dot{R}^{\ c_i}(0)$, $\ddot{R}^{\ c_i}(\stepsize)$ respectively denote the $c$-th entry of $\vR^{[i]}(0)$, $\dot{\vR}^{[i]}(0)$, and $\ddot{\vR}^{[i]}(\stepsize)$; 
$\Gamma_{\ a_i b_i}^{c_i}(\stepsize):= \Gamma_{\ a_i b_i}^{c_i} \bigr|_{  \varpar^{[i]}= R^{[i]}(\stepsize)}^{\varpar^{[-i]}=R^{[-i]}(0)  }$.

Recall that $\fim^{c_i a_i} $ is the entry of $(\vfim^{[i]})^{-1}$ at position $(c,a)$, where $\vfim^{[i]}$ is the $i$-th block of $\vfim$.
Note that $\vfim$ and $\vngrad$ are computed at $\vvarpar=\vR(0)$.
Since $\vvarpar$
is a  BC parameterization, 
by Lemma \ref{lemma:block_identity}, we have
$\fim^{c_i a_i} g_{a_i} = \ngrad^{c_i}$.

Therefore, when $\vfim$ is the FIM, the first-order approximation of $\vR(\stepsize)$ at $\stepsize_0=0$ is also a NGD update as shown below.
\begin{align*}
 \varpar^{c_i} &\leftarrow   R^{c_i}(\stepsize_0) +  \dot{R}^{c_i}(\stepsize_0)  (\stepsize-\stepsize_0) \\
&= \varpar^{c_i} - \stepsize \ngrad^{c_i}
\end{align*}


\section{Summary of Approximations Considered in This Work}
\label{app:all_example_table}

\begin{table}[ht]
\center
   \caption{ Summary of the Proposed Updates Induced by Our Rule in Various Approximations }
\begin{tabular}{l|l|l|l}
Approximation
   & Parameterization ($\vvarpar$)
   & Constraints 
   & Additional Term
    \\
    \hline
   Inverse Gaussian  (Appendix \ref{app:ig_case}) 
  &$\varpar^{(1)}=\beta^2$
   &$\varpar^{(1)} \in \mathbb{S}_{++}^1$
   &
   $\frac{\stepsize^2}{2} \left( \frac{3}{4\varpar^{(1)}}\right) \left( \ngrad^{(1)} \right)^2$
    \\
    ~
  & $\varpar^{(2)}=\alpha$
   &$\varpar^{(2)} \in \mathbb{S}_{++}^1$
   &
   $\frac{\stepsize^2}{2}\left( \frac{1}{\varpar^{(2)}}\right) \left( \ngrad^{(2)} \right)^2$
    \\
    \hline
    
  Gamma
  (Appendix \ref{app:gamma_case})
  &$\varpar^{(1)}=\alpha$
  &$\varpar^{(1)} \in \mathbb{S}_{++}^1$
  &$- \frac{\stepsize^2}{2}  \frac{ \hessop{\varpar^{(1)}} {\psi(\varpar^{(1)})} + \frac{1}{\left( \varpar^{(1)} \right)^2 } }{ 2 \left( \gradop{\varpar^{(1)}}{\psi(\varpar^{(1)})} - \frac{1}{\varpar^{(1)}} \right) } \left( \ngrad^{(1)} \right)^2$
  \\
  ~
  &$\varpar^{(2)}=\frac{\beta}{\alpha}$
  &$\varpar^{(2)} \in \mathbb{S}_{++}^1$
  &$\frac{\stepsize^2}{2}\left( \frac{1}{\varpar^{(2)}}\right) \left( \ngrad^{(2)} \right)^2$
  \\
  \hline
  
  Exponential
  (Appendix \ref{app:exp_case})
  &$\varpar^{(1)}=\varpar$
  &$\varpar^{(1)} \in \mathbb{S}_{++}^1$
  &$\frac{\stepsize^2}{2}\left( \frac{1}{\varpar^{(1)}}\right) \left( \ngrad^{(1)} \right)^2$
  \\
  \hline
  
  Multivariate Gaussian 
  (Appendix \ref{app:gauss_case})
  &$\vvarpar^{[1]}=\vmu$
  &$\vvarpar^{[1]} \in \mathbb{R}^d$
  &$\mathbf{0}$
  \\
  ~
  &$\vvarpar^{[2]}=\vSigma^{-1}$
  &$\vvarpar^{[2]} \in \mathbb{S}^{d \times d}_{++}$
  & $\frac{\stepsize^2}{2}\vngrad^{[2]} \left( \vvarpar^{[2]} \right)^{-1} \vngrad^{[2]}$
  \\
  \hline
 
   Mixture of Gaussians
   (Appendix \ref{app:mog_case}) 

  &$\{\vvarpar_c^{[1]}\}_{c=1}^{K}=\{\vmu_c\}_{c=1}^{K}$
  &$\vvarpar_c^{[1]} \in \mathbb{R}^d$ 
  &$\mathbf{0}$
  \\
  ~
  &$\{\vvarpar_c^{[2]}\}_{c=1}^{K}=\{\vSigma_c^{-1}\}_{c=1}^{K}$
  &$\vvarpar_c^{[2]} \in \mathbb{S}^{d \times d}_{++}$
  & $\frac{\stepsize^2}{2}\vngrad_c^{[2]} \left( \vvarpar_c^{[2]} \right)^{-1} \vngrad_c^{[2]}$
  \\
  
  ~
     &$\vvarpar_\mix=\{ \log(\pi_c/(1-\sum_{k=1}^{K-1}\pi_{k}) ) \}_{c=1}^{K-1}$
  &$\vvarpar_\mix \in \mathbb{R}^{K-1}$ 
  &$\mathbf{0}$\footnote{We do not compute the additional term in MOG since $\vvarpar_\mix \in \mathbb{R}^{K-1}$ is unconstrained.} 
  \\
   
  \hline
  Skew Gaussian 
   (Appendix \ref{app:skewg_case}) 
  &$\vvarpar^{[1]}=\begin{bmatrix} \vmu \\ \valpha \end{bmatrix}$
  &$\vvarpar^{[1]} \in \mathbb{R}^{2d}$ 
  &$\mathbf{0}$
  \\
  ~
  &$\vvarpar^{[2]}=\vSigma^{-1}$
  &$\vvarpar^{[2]} \in \mathbb{S}^{d \times d}_{++}$
  & $\frac{\stepsize^2}{2}\vngrad^{[2]} \left( \vvarpar^{[2]} \right)^{-1} \vngrad^{[2]}$
  \\
  \hline
\end{tabular}
\label{tab:examples}
\end{table}

Recall that we give Assumption 1-3 for exponential family distributions in Section \ref{sec:IBLR}.  We also 
extend  Assumption 1-3 to exponential family  mixtures as shown in Appendix \ref{app:cef_rgvi}.

In  Appendix \ref{app:ig_case}, \ref{app:gamma_case}, \ref{app:exp_case}, \ref{app:gauss_case},  \ref{app:mog_case}, \ref{app:skewg_case}, we show that Assumption 1-3 are satisfied and the additional term for each approximation is simplified.
In the corresponding appendix,  we also show how to compute natural gradients with the (implicit) reparameterization trick for each approximation listed in Table 
\ref{tab:examples}.

\section{Exponential Family (EF) Approximation}
\label{app:ef_rgvi}
\subsection{Christoffel Symbols}
\label{app:chris_general}
We first show how to simplify the Christoffel symbols of the first kind.
The FIM and the corresponding Christoffel symbols of the first kind are defined as follows.
\begin{align*}
 \fim_{ab} := -\Unmyexpect{q(\anyvar|\varpar)}\sqr{ \crossop{a}{b}{ \log q(\vanyvar|\vvarpar)} }; \,\,\,\,
  \Gamma_{d,a b}  := \half \sqr{ \gradop{a}{\fim_{bd}} +\gradop{b}{\fim_{ad}} - \gradop{d}{\fim_{ab}} }
\end{align*} where we denote $\gradop{a}{}=\gradop{\varpar^{a}}{}$ for notation simplicity.

Since
$\gradop{a}{\fim_{bd}} =-\Unmyexpect{q(\anyvar|\varpar)}\sqr{ \crossop{b }{d}{\log q(\vanyvar|\vvarpar)} \gradop{a }{ \log q(\vanyvar|\vvarpar)} }
-\Unmyexpect{q(\anyvar|\varpar)}\sqr{ \crossrdop{a}{b}{d}{ \log q(\vanyvar|\vvarpar)} }$, the Christoffel symbols of the first kind induced by the FIM can be computed as follows
, where $\vvarpar$ can be any parameterization.
\begin{align}
  \Gamma_{d,a b}
=   \half \Big[ & \Unmyexpect{q(\anyvar|\varpar)}\sqr{ \crossop{a}{b}{\log q(\vanyvar|\vvarpar)} \gradop{d} {\log q(\vanyvar|\vvarpar)} }    
-\Unmyexpect{q(\anyvar|\varpar)}\sqr{ \crossop{b }{d}{\log q(\vanyvar|\vvarpar)} \gradop{a }{ \log q(\vanyvar|\vvarpar)} } \nonumber \\
& -\Unmyexpect{q(\anyvar|\varpar)}\sqr{ \crossop{a }{d}{\log q(\vanyvar|\vvarpar)} \gradop{b }{ \log q(\vanyvar|\vvarpar)} } 
-\Unmyexpect{q(\anyvar|\varpar)}\sqr{ \crossrdop{a}{b}{d}{ \log q(\vanyvar|\vvarpar)} } \Big] \label{eq:chris}
\end{align} 
Note that Eq \ref{eq:chris} is also applied to a general distribution beyond exponential family.
However, the Christoffel symbol is not easy to compute due to  extra integrations in Eq \ref{eq:chris} and the FIM can be singular in general.
The Christoffel symbol could be easy to compute for an exponential family distribution under a BCN parameterization since we  compute the symbol via  differentiation without  the extra integrations. Moreover, the FIM is always positive-definite under a BCN parameterization.
Theorem \ref{thm:EF_chris} show this.

\subsection{Proof of Theorem \ref{thm:EF_chris}}
\label{app:proof_1}
In this case, $q(\vanyvar|\vvarpar)$ is an EF distribution. 
Since $\vvarpar$ is a BCN parameterization, 
given that $\vvarpar^{[-i]}$ is known, $q(\vanyvar|\vvarpar)$ is a one-parameter EF distribution as
\begin{align*}
 q(\vanyvar|\vvarpar) = h_i(\vanyvar,\vvarpar^{[-i]})\exp\sqr{ \myang{\vphi_i(\vanyvar,\vvarpar^{[-i]}), \vvarpar^{[i]}} - A(\vvarpar)} 
\end{align*}
Therefore, we have the following identities given $\vvarpar^{[-i]}$ is known.
\begin{align*}
 \crossop{a_i}{b_i}{\log q(\vanyvar|\vvarpar)} = -\crossop{a_i}{b_i}{A(\vvarpar)}; \, \,\,\,
 \Unmyexpect{q(\anyvar|\varpar)}\sqr{ \gradop{a_i} {\log q(\vanyvar|\vvarpar)}}  = 0 
\end{align*} where 
$\gradop{a_i}{}=\gradop{\varpar^{a_i}}{}$ for notation simplicity.

Using the above identities, we have
\begin{align*}
\Unmyexpect{q(\anyvar|\varpar)}\sqr{ \crossop{a_i}{b_i}{\log q(\vanyvar|\vvarpar)} \gradop{d_i} {\log q(\vanyvar|\vvarpar)} } 
= -\crossop{a_i}{b_i}{A(\vvarpar)} \underbrace{ \Unmyexpect{q(\anyvar|\varpar)}\sqr{ \gradop{d_i} {\log q(\vanyvar|\vvarpar)}} }_{0}
= 0
\end{align*}

Therefore, by Eq. \eqref{eq:chris}, $\Gamma_{d_i, a_ib_i}$ can be computed as follows
 \begin{align*}
   \Gamma_{d_i,a_i b_i}
= -\half \Unmyexpect{q(\anyvar|\varpar)}\sqr{ \crossrdop{a_i}{b_i}{d_i}{ \log q(\vanyvar|\vvarpar)} }
=  \half \crossrdop{a_i}{b_i}{d_i}{A(\vvarpar)}
 \end{align*}

Let $\vvarmean_{[i]}=\Unmyexpect{q(\anyvar|\varpar)}\sqr{ \vphi_i(\vanyvar) }$ denote the block coordinate expectation (BCE) parameter.
We have
\begin{align*}
0 =  \Unmyexpect{q(\anyvar|\varpar)}\sqr{ \gradop{a_i} {\log q(\vanyvar|\vvarpar)}} = \varmean_{a_i} -\gradop{a_i}{A(\vvarpar)}
\end{align*} where $\varmean_{a_i}$ denotes the $a$-th element of $\vvarmean_{[i]}$.

Therefore, we know that  $\varmean_{a_i} =\gradop{a_i}{A(\vvarpar)}$

Recall that the $i$-th block of $\vfim$ denoted by $\vfim^{[i]}$, can be computed as
\begin{align*}
\fim_{a_ib_i}  = - \Unmyexpect{q(\anyvar|\varpar)}\sqr{ \crossop{{b_i}}{{a_i}}{\log q(\vanyvar|\vvarpar)} } 
= \partial_{{b_i}}\partial_{{a_i}}{A(\vvarpar)} 
= \partial_{{b_i}} \sqr{ \partial_{{a_i}}{A(\vvarpar)} } = \gradop{\varpar^{b_i}} \varmean_{a_i} 
\end{align*} where $\partial_{{b_i}}=\partial_{\varpar^{b_i}}$ is for notation simplicity.

Recall that  $\vvarpar$ is a BC parameterization with $n$ blocks and  $\vfim$ is block diagonal as shown below.
\begin{align}
 \vfim =
\begin{bmatrix}
 \vfim^{[1]}  & \dots & \mathbf{0}\\
 \vdots &  \ddots & \vdots \\
 \mathbf{0}  & \dots & \vfim^{[n]}\\
 \end{bmatrix} \label{eq:bfim}
\end{align}

Recall that $\fim^{ab}$ denotes the element of $\vfim^{-1}$ with global index $(a,b)$ and 
$\fim^{a_ib_i}$ denotes the element of $\left(\vfim^{[i]}\right)^{-1}$ with local index $(a,b)$ in block $i$.

If $\vfim^{[i]}$  is positive definite everywhere, we have
\begin{align*}
\fim^{a_ib_i} &= \gradop{\varmean_{a_i}} \varpar^{b_i}
\end{align*}

Note that $\vfim^{[i]}$ is positive definite everywhere when $q(\vanyvar|\vvarpar^{[i]},\vvarpar^{[-i]})$ is a one-parameter minimal EF distribution given  $\vvarpar^{[-i]}$ is known
(See Theorem 1 of \citet{lin2019fast}).

By Lemma \ref{lemma:block_identity}, Riemannian gradient $\ngrad^{a_i}$ can be computed as
\begin{align*}
 \ngrad^{a_i} = \fim^{a_i b_i} g_{b_i}=\sqr{  \gradop{\varmean_{a_i}} \varpar^{b_i}} \sqr{ \gradop{\varpar^{b_i}}\elbofinal } = \gradop{\varmean_{a_i}} \elbofinal
\end{align*} where $g_{b_i}=\gradop{\varpar^{b_i}}\elbofinal$ is a Euclidean gradient.

\section{Example: Gaussian Approximation}
\label{app:gauss_case}
We consider the following parameterization $\vvarpar=\{ \vmu , \vS \}$, where $\vmu$ is the mean and $\vS$ is the precision. The open-set constraint is $\Omega_1 = \mathbb{R}^d$ and $\Omega_2 = \mathbb{S}^{d \times d}_{++}$.
Under this parameterization, the distribution can be expressed as below.
\begin{align*}
q(\lat|\vvarpar)=
\exp\Big( -\half \vlat^T\vS \vlat + \vlat^T\vS\vmu  - A(\vvarpar)\Big) 
\end{align*} where
$A(\vvarpar)=\half\big[ \vmu^T\vS\vmu - \log \left|\vS/(2\pi)\right| \big]$

\begin{lemma}
The Fisher information matrix under this parameterization is block diagonal with two blocks
\begin{align*}
\vfim = \begin{bmatrix}
\vfim_{\mu} &  \mathbf{0}  \\
\smash[b]{\underbrace{ \mathbf{0} }_{\vfim_{\mu S}} } & \vfim_{S} \\
        \end{bmatrix},\\
\end{align*} where $\vfim_{\mu S}= -\Unmyexpect{q(\lat)}\sqr{ \gradop{\mathrm{vec}(S)}\gradop{\mu}  \log q(\vlat|\vmu,\vS) }$ and
 $\vfim_{S}= -\Unmyexpect{q(\lat)}\sqr{  \gradop{\mathrm{vec}(S)}^2 \log q(\vlat|\vmu,\vS) }$.
 
Therefore, $\vvarpar=\{ \vmu , \vS \}$ is a BC parameterization.
\end{lemma}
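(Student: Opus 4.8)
The plan is to show that the off-diagonal block of $\vfim$ vanishes, i.e.\ $\vfim_{\mu S}=\mathbf{0}$; block-diagonality of $\vfim$ and hence the BC property (Definition \ref{assump:BC_general}) follow at once, since the two parameter blocks $\vmu$ and $\vS$ are exactly the blocks appearing in $\vfim$. By symmetry of the Fisher matrix, $\vfim_{S\mu}=\vfim_{\mu S}^T$, so it suffices to handle one cross block.

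First I would read off the two score functions from the exponential form $\log q(\vlat|\vvarpar)=-\half\vlat^T\vS\vlat+\vlat^T\vS\vmu-A(\vvarpar)$ with $A(\vvarpar)=\half[\vmu^T\vS\vmu-\log|\vS/(2\pi)|]$. Using $\gradop{\mu}{A}=\vS\vmu$ one gets the clean expression $\gradop{\mu}{\log q}=\vS(\vlat-\vmu)=\vS\vu$, where I abbreviate $\vu:=\vlat-\vmu$. For the score in the precision block I would use the conditional exponential-family structure already exhibited under Assumption 3 (the display with $h_2\equiv 1$, $\vphi_2=-\half\vlat\vlat^T+\vmu\vlat^T$): holding $\vmu$ fixed, $q$ is a minimal EF in the natural parameter $\vS$, so $\gradop{\mathrm{vec}(S)}{\log q}=\mathrm{vec}\!\big(\vphi_2(\vlat,\vmu)-\Unmyexpect{q}{\vphi_2(\vlat,\vmu)}\big)$; substituting $\vlat=\vu+\vmu$, taking the differential over the subspace of \emph{symmetric} matrices (the actual coordinate space of $\vS$), and using $\Unmyexpect{q}{\vu\vu^T}=\vS^{-1}$, this collapses to $\gradop{\mathrm{vec}(S)}{\log q}=-\half\,\mathrm{vec}(\vu\vu^T-\vS^{-1})$, an \emph{even} function of $\vu$.

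Next I would assemble the cross block. The quick route is the Hessian form from the lemma statement: $\gradop{\mathrm{vec}(S)}{\gradop{\mu}{\log q}}=\gradop{\mathrm{vec}(S)}{(\vS\vu)}$ is linear in $\vu$ with coefficients not involving $\vlat$, so $\vfim_{\mu S}=-\Unmyexpect{q}{\gradop{\mathrm{vec}(S)}{\gradop{\mu}{\log q}}}$ is a linear combination of the entries of $\Unmyexpect{q}{\vu}=\mathbf{0}$, hence $\vfim_{\mu S}=\mathbf{0}$. The conceptually cleaner route is the outer-product form $\vfim_{\mu S}=\Unmyexpect{q}{(\gradop{\mu}{\log q})(\gradop{\mathrm{vec}(S)}{\log q})^T}=-\half\,\Unmyexpect{q}{(\vS\vu)\,\mathrm{vec}(\vu\vu^T-\vS^{-1})^T}$, which vanishes by a parity argument: under $q$, $\vu\sim\gauss(\mathbf{0},\vS^{-1})$ is symmetric about the origin, and the integrand is an \emph{odd} function of $\vu$ (the linear factor $\vS\vu$ times an even factor). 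Either way $\vfim_{\mu S}=\mathbf{0}$, so $\vfim=\mathrm{blkdiag}(\vfim_\mu,\vfim_S)$ with respect to $\vvarpar=\{\vmu,\vS\}$, which is the definition of a BC parameterization.

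The only real obstacle is bookkeeping: one must be consistent about $\vS$ living in the space of symmetric matrices. If the $d^2$ entries $S_{jk}$ were treated as free coordinates, the term $\vlat^T\vS\vmu$ would inject a spurious antisymmetric, linear-in-$\vu$ piece into $\gradop{\mathrm{vec}(S)}{\log q}$ and the sub-block $\vfim_S$ would moreover be singular; restricting differentials to the symmetric subspace (using $\mathrm{vech}(\vS)$, or symmetrized matrix derivatives) is exactly what removes that piece and leaves the even expression above, after which the parity / odd-moment argument finishes it. The same pattern — score for the location block odd in $\vlat-\vmu$, score for the precision block even in $\vlat-\vmu$ — is what I would reuse for the Gaussian-mixture and skew-Gaussian cases treated later.
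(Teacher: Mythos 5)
Your proposal is correct and its primary (``quick'') route is essentially the paper's own proof: both compute the mixed second derivative $\partial_{S^{jk}}\partial_{\mu^i}\log q$, observe it is linear in $\vlat-\vmu$ with coefficients independent of $\vlat$, and conclude $\vfim_{\mu S}=\mathbf{0}$ from $\Unmyexpect{q}{\sqr{\vlat-\vmu}}=\mathbf{0}$. The alternative parity argument and the remarks on restricting to symmetric matrices are sound additions but not needed for the conclusion.
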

\begin{proof}
We denote the $i$-th element of $\vmu$ using $\mu^i$. Similarly, we denote the element of $\vS$ at position $(j,k)$ using $S^{jk}$.
We prove this statement by showing cross terms in the Fisher information matrix denoted by $\vfim_{\mu S}$ are all zeros.
To show $\vfim_{\mu S}=\mathbf{0}$, it is equivalent to show
$- \Unmyexpect{q(\lat|\varpar)}\sqr{ \crossop{S^{jk}}{\mu^i} {\log q(\vlat|\vvarpar)} } = 0$ each $\mu^i$ and $S^{jk}$.

Notice that
$\Unmyexpect{q(\lat|\varpar)}\sqr{\vlat}=\vmu$.
We can obtain the above expression since
\begin{align*}
 \Unmyexpect{q(\lat|\varpar)}\sqr{ \crossop{S^{jk}}{\mu^i} {\log q(\vlat|\vvarpar)} }  &=
 \Unmyexpect{q(\lat|\varpar)}\sqr{ \gradop{S^{jk}} {\left( \vlat^T \vS  \ve_i - \ve_i^T \vS \vmu \right)} } \\
&= \Unmyexpect{q(\lat|\varpar)}\sqr{  \left( \vlat^T \vI_{jk} \ve_i - \ve_i^T \vI_{jk} \vmu \right) } \\
&= \Unmyexpect{q(\lat|\varpar)}\sqr{  \left( \ve_i^T \vI_{jk} \left( \vlat- \vmu   \right) \right) } \\
&= \ve_i^T \vI_{jk} \underbrace{\Unmyexpect{q(\lat|\varpar)}\sqr{  \vlat- \vmu   }}_{\mathbf{0}} = 0
\end{align*} where $\ve_i$ denotes an one-hot vector where all entries are zeros except the $i$-th entry with value 1, and $\vI_{jk}$ denotes an one-hot matrix where all entries are zeros except the entry at position $(j,k)$ with value 1.

The above expression also implies that
$\Unmyexpect{q(\lat|\varpar)}\sqr{\crossop{S}{\mu^i} {\log q(\vlat|\vvarpar)}  }=\mathbf{0}$.
\end{proof}

Now, we show that
$\vvarpar=\{ \vmu , \vSigma \}$ is also a BC parameterization.
Note that
\begin{align*}
- \Unmyexpect{q(\lat|\varpar)}\sqr{ \crossop{\Sigma^{jk}}{\mu^i} {\log q(\vlat|\vvarpar)} } 
=  -\Unmyexpect{q(\lat|\varpar)}\big[\mathrm{Tr} \big\{ (\gradop{\Sigma^{jk}}{ \vS } )  \crossop{S}{\mu^i} {\log q(\vlat|\vvarpar)}   \big\}   \big] = -\mathrm{Tr} \big\{ (\gradop{\Sigma^{jk}}{ \vS } )  \underbrace{\Unmyexpect{q(\lat|\varpar)}\big[\crossop{S}{\mu^i} {\log q(\vlat|\vvarpar)}    \big] }_{\mathbf{0}}  \big\} =  0.
\end{align*} 
Since $\vfim_{\mu \Sigma} =- \Unmyexpect{q(\lat|\varpar)}\sqr{ \crossop{\mathrm{vec}(\Sigma)}{\mu} {\log q(\vlat|\vvarpar)} }$ and 
$- \Unmyexpect{q(\lat|\varpar)}\sqr{ \crossop{\Sigma^{jk}}{\mu^i} {\log q(\vlat|\vvarpar)} }=0$ from above expression for any $i$, $j$, and $k$, we have
$\vfim_{\mu \Sigma}=\mathbf{0}$.
Therefore, 
$\vvarpar=\{ \vmu , \vSigma \}$ is also a BC parameterization since the cross terms of FIM under this new parameterization denoted by $\vfim_{\mu \Sigma}$ are zeros.

We denote the Christoffel symbols of the first kind and the second kind for $\vmu$ as ${\Gamma_{a_1,b_1 c_1}}$ and ${\Gamma^{a_1}_{\ \ \ \ b_1 c_1}}$, respectively.
\begin{lemma}
\label{lemma:gauss_gamma1}
All entries of ${\Gamma^{a_1}_{\ \ b_1 c_1}}$ are zeros.
\end{lemma}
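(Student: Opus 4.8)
The plan is to obtain the first-kind symbols for the mean block in closed form via Theorem~\ref{thm:EF_chris} and then raise the index using Lemma~\ref{lemma:block_identity}. Theorem~\ref{thm:EF_chris} applies because $\vvarpar=\{\vmu,\vS\}$ is a BCN parameterization of the Gaussian family (this is exactly what the preceding discussion, together with the just-proved lemma that it is block-coordinate, establishes). For the mean block it yields
\[
\Gamma_{d_1,b_1c_1}\;=\;\half\,\partial_{\varpar^{b_1}}\partial_{\varpar^{c_1}}\partial_{\varpar^{d_1}}A(\vvarpar),
\]
where $\varpar^{a_1},\varpar^{b_1},\varpar^{c_1},\varpar^{d_1}$ range over the entries of $\vmu$. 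Since $A(\vvarpar)=\half\big[\vmu^T\vS\vmu-\log\left|\vS/(2\pi)\right|\big]$ is a quadratic polynomial in the entries of $\vmu$ with $\vS$ held fixed, every third-order partial derivative of $A$ in the $\vmu$-coordinates vanishes, so $\Gamma_{d_1,b_1c_1}=0$ for all local indices. Finally, by Lemma~\ref{lemma:block_identity} (index raising within a block of a BC parameterization uses only the inverse of that block of the FIM), $\Gamma^{a_1}_{\ \ b_1c_1}=\fim^{a_1d_1}\Gamma_{d_1,b_1c_1}=0$, which is the claim.

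An equivalent route, which does not even need Theorem~\ref{thm:EF_chris}, is to work from the definition $\Gamma_{d_1,b_1c_1}=\half\big[\partial_{b_1}\fim_{c_1d_1}+\partial_{c_1}\fim_{b_1d_1}-\partial_{d_1}\fim_{b_1c_1}\big]$ directly: the mean block of the FIM is $\vfim_\mu=\vS$ (immediate from $\partial_{\vmu}^2\log q(\vlat|\vmu,\vS)=-\vS$), which is independent of $\vmu$, so each of the three partials above — all taken with respect to $\vmu$-coordinates — is zero; then raise the index as before. I would probably present the Theorem~\ref{thm:EF_chris} version as the main argument and note this as a remark.

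There is no real obstacle here; the content is just recording the two structural facts being used: that $\{\vmu,\vS\}$ is block-coordinate, so the index raising stays inside the mean block (Lemma~\ref{lemma:block_identity}), and that $A$ — equivalently $\vfim_\mu=\vS$ — depends on $\vmu$ at most quadratically (respectively, not at all). The only point worth stating explicitly is that ``all entries of $\Gamma^{a_1}_{\ \ b_1c_1}$'' refers to local indices $a_1,b_1,c_1$ within the $\vmu$-block; Christoffel symbols carrying indices from both blocks are a separate matter treated in the subsequent lemmas.
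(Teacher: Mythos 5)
Your proposal is correct and follows essentially the same route as the paper: the paper's proof likewise reduces $\Gamma_{a_1,b_1c_1}$ to $\half\,\partial_{\mu^{b}}\partial_{\mu^{c}}\partial_{\mu^{a}}A(\vvarpar)$ (via Theorem~\ref{thm:EF_chris}) and observes that this vanishes because $A$ is quadratic in $\vmu$, after which the second-kind symbols vanish by contracting with the inverse of the mean block of the FIM. Your alternative remark via the constancy of $\vfim_{\mu}=\vS$ in $\vmu$ is a valid shortcut the paper does not state, but it is not a substantively different argument.
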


\begin{proof}
We will prove this by showing that  all entries of  $\Gamma_{a_1 ,b_1 c_1}$  are zeros.
For notation simplicity, we  use ${\Gamma_{a,b c}}$ to denote $\Gamma_{a_1 ,b_1 c_1}$ in the  proof.
Let $\mu^a$ denote the $a$-th element of $\vmu$.
The following expression holds for any valid $a$, $b$, and $c$.
\begin{align*}
 \Gamma_{a,bc} = \half \Unmyexpect{q(\lat|\varpar)}\sqr{ \crossrdop{\mu^b}{\mu^c}{\mu^a}{A(\vvarpar)} } = 0
\end{align*}
We can obtain the above expression since
\begin{align*}
 \Unmyexpect{q(\lat|\varpar)}\sqr{ \crossrdop{\mu^b}{\mu^c}{\mu^a}{A(\vvarpar)} } =  \Unmyexpect{q(\lat|\varpar)}\sqr{ \crossop{\mu^b}{\mu^c}{\left(\ve_a^T \vS  \vmu \right)} } =  \Unmyexpect{q(\lat|\varpar)}\sqr{ \gradop{\mu^b}{\left(\ve_a^T \vS  \ve_c \right)} } = 0
\end{align*} where in the last step we use the fact that 
$\vS$, $\ve_a$, and $\ve_c$ do not depend on $\vmu$.
\end{proof}

Similarly, we denote the Christoffel symbols of the second kind for $\mathrm{vec}(\vS)$ as ${\Gamma^{a_2}_{\ \ \ \ b_2 c_2}}$.
Note that $\vS$ is now a matrix.
It is possible but tedious to directly compute the Christoffel symbol and  element-wisely validate the expression of the additional term for $\vS$.
Below, we give an alternative approach to identify the additional term for $\vS$ as shown in the proof of
Lemma \ref{lemma:gauss_s_chris}.

Recall that $\vR^{[2]}(\stepsize)$ is 
the solution of the following ODE for block $\mathrm{vec}(\vS)$:
\begin{align*}
&\dot{R}^{\ a_2}(0)  = -\ngrad^{a_2} \,\,;\,\,\,  
R^{\ a_2}(0)  =  S^{a_2}   \\
& \ddot{R}^{\ a_2}(\stepsize)= -  \Gamma_{\ \ \ b_2c_2}^{a_2}(\stepsize) \dot{R}^{\ b_2}(\stepsize) \dot{R}^{\ c_2}(\stepsize),
\end{align*} where $R^{\ a_2}(\stepsize)$ denotes the $a$-th element of $\vR^{[2]}(\stepsize)$
and $S^{a_2}$ denotes the $a$-th entry of $\mathrm{vec}(\vS)$.

\begin{lemma}
\label{lemma:gauss_s_chris}
The additional term for $\vS$ is $
\mathrm{Mat}( {\Gamma^{a_2}_{\ \ b_2 c_2}} {\ngrad^{b_2}} {\ngrad^{c_2}} )
=- \vngrad^{[2]}    \vS^{-1}  \vngrad^{[2]}   
$ where ${\ngrad^{a_2}}$ denotes the $a$-th element of $\mathrm{vec}(\vngrad^{[2]})$.
\end{lemma}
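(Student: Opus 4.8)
The plan is to bypass the element-wise Christoffel computation for the precision block and instead work directly with the block geodesic ODE that defines $\vR^{[2]}(\stepsize)$, using the classical geometry of the symmetric positive-definite cone. The first step is to identify the metric $\vfim^{[2]}$ induced on the $\vS$-block. Fixing $\vmu=\vR^{[1]}(0)$ makes $\hat q(\vlat\mid\vvarpar^{[2]})=\gauss(\vlat\mid\vmu,\vS^{-1})$ a minimal one-parameter EF with natural parameter $\vS$, so Theorem~\ref{thm:EF_chris} applies block-wise and gives $\fim_{a_2b_2}=\partial_{S^{a_2}}\partial_{S^{b_2}}A(\vvarpar)$ and $\Gamma_{d_2,a_2b_2}=\tfrac12\,\partial_{S^{a_2}}\partial_{S^{b_2}}\partial_{S^{d_2}}A(\vvarpar)$. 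Plugging in $A(\vvarpar)=\tfrac12[\vmu^T\vS\vmu-\log|\vS/(2\pi)|]$ and discarding the term $\tfrac12\vmu^T\vS\vmu$, which is linear in $\vS$ and hence contributes nothing to the second or third $\vS$-derivatives, one finds that $\vfim^{[2]}$ acts on a symmetric matrix $\vU$ by $\vfim^{[2]}[\vU]=\tfrac12\vS^{-1}\vU\vS^{-1}$; that is, $\vfim^{[2]}$ is a constant multiple of the affine-invariant metric on $\mathbb{S}^{d\times d}_{++}$.

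Since a constant rescaling of a metric leaves its Levi--Civita connection, and hence its geodesics, unchanged, the block geodesic ODE~\eqref{eq:rgd_ret} for $\vR^{[2]}(\stepsize)$ is, in matrix form, the classical affine-invariant geodesic equation $\mathrm{Mat}\big(\ddot{\vR}^{[2]}(\stepsize)\big)=\mathrm{Mat}\big(\dot{\vR}^{[2]}(\stepsize)\big)\,\vR^{[2]}(\stepsize)^{-1}\,\mathrm{Mat}\big(\dot{\vR}^{[2]}(\stepsize)\big)$, with initial conditions $\vR^{[2]}(0)=\vS$ and $\mathrm{Mat}\big(\dot{\vR}^{[2]}(0)\big)=-\vngrad^{[2]}$ (the latter from \eqref{eq:rgd_ret} together with Lemma~\ref{lemma:block_identity}, using that the natural gradient $\vngrad^{[2]}$ with respect to the symmetric matrix $\vS$ is itself symmetric). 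Equivalently, the corresponding second-kind Christoffel symbol acts as $\Gamma_{\vS}(\vU,\vV)=-\tfrac12(\vU\vS^{-1}\vV+\vV\vS^{-1}\vU)$; one may either invoke this as the well-known connection of the SPD manifold, or verify it by the routine (but tedious) direct computation from the three-derivative formula above, raising the index with $(\vfim^{[2]})^{-1}[\vG]=2\,\vS\vG\vS$.

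Evaluating the geodesic equation at $\stepsize=0$ then gives $\mathrm{Mat}\big(\ddot{\vR}^{[2]}(0)\big)=(-\vngrad^{[2]})\vS^{-1}(-\vngrad^{[2]})=\vngrad^{[2]}\vS^{-1}\vngrad^{[2]}$. Finally, the block geodesic ODE also reads $\ddot R^{a_2}(0)=-\Gamma^{a_2}_{\ b_2c_2}\,\dot R^{b_2}(0)\dot R^{c_2}(0)=-\Gamma^{a_2}_{\ b_2c_2}\ngrad^{b_2}\ngrad^{c_2}$, using $\dot R^{b_2}(0)=-\ngrad^{b_2}$, so $\mathrm{Mat}\big(\Gamma^{a_2}_{\ b_2c_2}\ngrad^{b_2}\ngrad^{c_2}\big)=-\mathrm{Mat}\big(\ddot{\vR}^{[2]}(0)\big)=-\vngrad^{[2]}\vS^{-1}\vngrad^{[2]}$, which is exactly the claim; substituting this into \eqref{eq:ibrl_bc} also reproduces the $\tfrac{\stepsize^2}{2}\hat{\vG}\vS^{-1}\hat{\vG}$ term of \eqref{eq:ivon_2} with $\hat{\vG}=\vngrad^{[2]}$, which is a useful consistency check. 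The main obstacle is the first step: identifying $\vfim^{[2]}$ and its connection on the precision block while keeping the symmetric-matrix vectorization and the factor-of-$\tfrac12$ conventions consistent with how $\vngrad^{[2]}$ is normalized in Appendix~\ref{app:gauss_case}; once the geometry is pinned down as (a scalar multiple of) the affine-invariant metric, the rest is a two-line consequence of the standard SPD geodesic equation.
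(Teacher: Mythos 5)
Your proof is correct and follows essentially the same route as the paper's: both sidestep the element-wise Christoffel computation by recognizing that, with $\vmu$ fixed, the block geodesic for $\vS$ is the classical affine-invariant geodesic on $\mathbb{S}^{d\times d}_{++}$, and then read off $\mathrm{Mat}(\ddot{\vR}^{[2]}(0))=\vngrad^{[2]}\vS^{-1}\vngrad^{[2]}$. The only cosmetic difference is that the paper differentiates the closed-form exponential-map solution $\vU\,\mathrm{Exp}(\stepsize\,\vU^{-1}\vngrad^{[2]}\vU^{-1})\,\vU$ twice, whereas you evaluate the matrix geodesic ODE $\ddot{X}=\dot{X}X^{-1}\dot{X}$ directly at $\stepsize=0$ after explicitly verifying that $\vfim^{[2]}$ is a constant multiple of the affine-invariant metric --- a justification the paper leaves implicit behind its citations.
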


\begin{proof}
As discussed in Sec \ref{sec:deriv}, $\mathbf{R}^{[i]}(\stepsize)$ is a (block coordinate) geodesic given $\vvarpar^{[-i]}$ is known.
In this case,
given that $\vmu$ is known, $\mathbf{R}^{[2]}(\stepsize)$  has the following closed-form expression \citep{pennec2006riemannian,fletcher2004principal,minh2017covariances}.
\begin{align*}
\mathrm{Mat}(\vR^{[2]}(\stepsize)) &=  \vU \mathrm{Exp}( \stepsize \vU^{-1}  \vngrad^{[2]}  \vU^{-1} ) \vU  
\end{align*} where $\vU= \vS^{\half}$ denotes the matrix square root and $\mathrm{Exp}(\vX):=\vI+\sum_{n=1}^{\infty} \frac{\vX^{n}}{n!} $ denotes the matrix exponential function.\footnote{The function is well-defined since the matrix series is absolutely convergent element-wisely.}

The additional term for $\vS$ can be obtained as follows.
\begin{align*}
- \mathrm{Mat}( \Gamma^{a_2}_{\ \ \ \ b_2 c_2} \ngrad^{b_2} \ngrad^{c_2} )
&= \mathrm{Mat}(\ddot{\vR}^{[2]}(0)) \\
&= \mathrm{Mat}(\nabla_{\stepsize}^2{\vR^{[2]}(\stepsize)}\big|_{\stepsize=0}) \\
&= \nabla_{\stepsize}^2{\mathrm{Mat}(\vR^{[2]}(\stepsize))}\big|_{\stepsize=0} \\
&=  \nabla_{\stepsize}^2 {\left( \vU \mathrm{Exp}( \vU^{-1} \stepsize \vngrad^{[2]}  \vU^{-1} ) \vU \right)} \big|_{\stepsize=0}\\
&= \vU  \nabla_{\stepsize}^2 {\left( \mathrm{Exp}( \vU^{-1} \stepsize \vngrad^{[2]}   \vU^{-1} )  \right)} \big|_{\stepsize=0}\vU  \\
&= \vU  ( \vU^{-1} \vngrad^{[2]}  \vU^{-1} ) ( \vU^{-1} \vngrad^{[2]}  \vU^{-1} ) \vU  \\
&= \vU  ( \vU^{-1} \vngrad^{[2]}  \vS^{-1} \vngrad^{[2]} \vU^{-1} ) \vU  \\
&= \vngrad^{[2]}    \vS^{-1}  \vngrad^{[2]}   
\end{align*} where we use the following expression to move from step 5 to step 6.
\begin{align*}
\nabla_{\stepsize}^2 {\mathrm{Exp}(\stepsize\vX)}\big|_{\stepsize=0} &= \nabla_{\stepsize}^2 {\left(\vI+\sum_{n=1}^{\infty} \frac{\left(\stepsize\vX\right)^{n}}{n!}\right)} \big|_{\stepsize=0} = \vX^2
\end{align*}
\end{proof}

Finally, by Lemma 
\ref{lemma:gauss_gamma1} and \ref{lemma:gauss_s_chris}, the update induced by the proposed rule is
\begin{align*}
 \mu^c & \leftarrow \mu^c - \stepsize \ngrad^{c_1}  - \frac{ \stepsize \times \stepsize}{2} \overbrace{\Gamma_{\ a_1b_1}^{c_1}}^{0}\ngrad^{a_1}\ngrad^{b_1} \\ 
s^c & \leftarrow s^c- \stepsize \ngrad^{c_2}  - \frac{ \stepsize \times \stepsize}{2}  \Gamma_{\ a_2b_2}^{c_2} \ngrad^{a_2}\ngrad^{b_2} 
\end{align*} where $s^c$ is the $c$-th element of $\mathrm{vec}(\vS)$.

Therefore, we have
\begin{align*}
\vmu &\leftarrow \overbrace{\vmu}^{\mathrm{vec}(\mu^c) } - \stepsize \overbrace{ \vngrad^{[1]} }^{\mathrm{vec}( \ngrad^{c_1} ) }  \\
\vS &\leftarrow \underbrace{ \vS}_{ \mathrm{Mat}(s^c) } - \stepsize \underbrace{\vngrad^{[2]}}_{ \mathrm{Mat}(\ngrad^{c_2}) } + \frac{ \stepsize \times \stepsize}{2}  \underbrace{  \vngrad^{[2]}   \vS^{-1}  \vngrad^{[2]}}_{ - \mathrm{Mat}(  \Gamma_{\ a_2b_2}^{c_2} \ngrad^{a_2}\ngrad^{b_2})  }
\end{align*}

\subsection{Proof of Theorem  \ref{thm:gauss_update}}
\label{app:gauss_proof}

Now, we give a proof of Theorem  \ref{thm:gauss_update}.
\begin{proof}
   First note that $\hat{\vG}=\vS-\Unmyexpect{q}{\sqr{ \nabla_{\lat}^2 \bar{\ell}(\vlat) } }$ is a symmetric matrix. Let $\vL$ be the Cholesky of the current $\vS = \vL \vL^T$.  We can simplify the right hand side of \eqref{eq:ivon_2} as follows:
\begin{align*}
  & (1-\stepsize) \vS  + \stepsize \Unmyexpect{q}{\sqr{ \nabla_{\lat}^2 \bar{\ell}(\vlat) } }  + { \frac{\stepsize^2}{2} \hat{\vG} \vS^{-1} \hat{\vG} } 
 =  \vS - \stepsize \hat{\vG} + \frac{\stepsize^2}{2}\hat{\vG} \vS^{-1} \hat{\vG}  
=  \half \left(  \vS + \left( \vL- \stepsize \hat{\vG} \vL^{-T} \right) \left( \vL^T- \stepsize \vL^{-1} \hat{\vG} \right)  \right) 
 =   \half \Big( \vS + \vU^T \vU \Big) ,
\end{align*}
where  $\vU:=\vL^T- \stepsize \vL^{-1} \hat{\vG}$.
Since the current $\vS$ is positive-definite, and $ \vU^T \vU $ is positive semi-definite, we know that the update for $\vS$ is positive-definite.
\end{proof}

\subsection{Natural Gradients and  the Reparameterization Trick}
\label{app:gauss_ng}
Since $\vvarpar=\{\vmu,\vS\}$ is a BCN parameterization of a exponential family distribution, gradients w.r.t. BC expectation parameters are natural gradients for BC natural parameters as shown in  Theorem \ref{thm:EF_chris}.

Given that $\vS$ is known, the BC expectation parameter is $\vm_{[1]}=\Unmyexpect{q(\lat)}\sqr{ \vS \vlat}= \vS \vmu$.
In this case, we know that $\gradop{ \mu }{\elbofinal}= \vS \gradop{m_{[1]}}{\elbofinal}$.
Therefore, the natural gradient w.r.t. $\vmu$ is $\vngrad^{[1]}= \gradop{ m_{[1]}}{\elbofinal}= \vS^{-1}\gradop{\mu}{\elbofinal}= \vSigma \gradop{\mu}{\elbofinal}$.

Likewise, 
 given that $\vmu$ is known, the BC expectation parameter is $\vm_{[2]}=\Unmyexpect{q(\lat)}\sqr{ -\half \vlat \vlat^T +  \vmu  \vlat^T }=\half \left( \vmu \vmu^T -  \vS^{-1} \right)$.
Therefore,  the natural gradient w.r.t. $\vS$ is $\vngrad^{[2]}=\gradop{m_{[2]}}{\elbofinal}=-2\gradop{S^{-1} }{\elbofinal}=-2 \gradop{\Sigma}{\elbofinal}$.

Recall that $\elbofinal(\vvarpar)=\Unmyexpect{q(\lat|\varpar)}\sqr{ \ell(\data,\vlat) -\log p(\vlat) + \log q(\vlat|\vvarpar) }$, by the Gaussian identities \citep{opper2009variational,sarkka2013bayesian} (see \citet{wu-report} for a derivation of these identities), we have
\begin{align}
\gradop{\mu} \elbofinal(\vvarpar) &= \gradop{\mu} \sqr{ \Unmyexpect{q(\lat|\varpar)}\sqr{ \ \ell(\data,\vlat) -\log p(\vlat) } - \half \log| 2\pi e \vSigma| }\nonumber \\
&= \gradop{\mu} \sqr{ \Unmyexpect{q(\lat|\varpar)}\sqr{ \ \ell(\data,\vlat) -\log p(\vlat) }  } \nonumber \\
&= \Unmyexpect{q(\lat|\varpar)}\sqr{ \nabla_\lat \sqr{ \ell(\data,\vlat) -\log p(\vlat) } }  \label{eq:gauss_mean_rep} \\
\gradop{\Sigma} \elbofinal(\vvarpar) &= \gradop{\Sigma} \sqr{ \Unmyexpect{q(\lat|\varpar)}\sqr{ \ \ell(\data,\vlat) -\log p(\vlat) } - \half \log| 2\pi e \vSigma| } \nonumber \\
&= \gradop{\Sigma} \sqr{ \Unmyexpect{q(\lat|\varpar)}\sqr{ \ \ell(\data,\vlat) -\log p(\vlat) }  } -\half \vSigma^{-1} \nonumber \\
&= \half  \Unmyexpect{q(\lat|\varpar)}\sqr{ \vSigma^{-1} (\vlat-\vmu) \nabla_\lat^T \sqr{ \ell(\data,\vlat) -\log p(\vlat) }  } -\half \vSigma^{-1} \label{eq:gauss_cov_rep} \\
 &= \half \Unmyexpect{q(\lat|\varpar)}\sqr{ \nabla_\lat^2 \sqr{ \ell(\data,\vlat) -\log p(\vlat) } } -\half \vSigma^{-1} \label{eq:gauss_cov_hess}
\end{align} where \eqref{eq:gauss_mean_rep} is also known as the reparameterization trick for the mean,
\eqref{eq:gauss_cov_rep} is also known as the reparameterization trick for the covariance, and we call \eqref{eq:gauss_cov_hess}  the Hessian trick.

Using Monte Carlo approximation, we have
\begin{align*}
 \gradop{\mu} \elbofinal &\approx  \nabla_\lat \sqr{ \ell(\data,\vlat) -\log p(\vlat) } \\
 \gradop{\Sigma} \elbofinal &\approx \frac{1}{4} \sqr{ \bar{\vS} + \bar{\vS}^T }  -\half \vSigma^{-1}  & \text{ referred to as ``-rep'' } \\
 \gradop{\Sigma} \elbofinal &\approx \half \sqr{\nabla_\lat^2 \sqr{ \ell(\data,\vlat) -\log p(\vlat) } } -\half \vSigma^{-1}  & \text{ referred to as ``-hess'' }
\end{align*} where $\bar{\vS}:=\vSigma^{-1} (\vlat-\vmu) \nabla_\lat^T \sqr{ \ell(\data,\vlat) -\log p(\vlat) }$
and $\vlat \sim q(\vlat|\vvarpar)=\gauss(\vlat|\vmu,\vSigma)$.

\subsection{Adam-like Update}
\label{app:adam_diag_gauss}
We consider to solve the following  problem, 
where
we use a diagonal Gaussian approximation
$q(\vlat|\vmu,\vs)=\gauss(\vlat|\vmu,\vs)$ and $\vs=\vsigma^{-2}$.
\begin{align*}
\min_{\mu, s} \elbofinal(\vmu, \vs)=  \textrm{E}_{q(\lat|\mu,s)}\sqr{\left(\sum_{i=1}^{N}\ell_i(\vlat)\right) - \log \gauss(\vlat|\mathbf{0}, \lambda^{-1}\vI) +\log q(\vlat|\vmu,\vs) } 
\end{align*}

Note that
\begin{align*}
\gradop{\mu}{\elbofinal(\vmu,\vs)} &:=  \sum_{i=1}^{N} \gradop{\mu}{\Unmyexpect{q(\lat|\mu,s)}\sqr{ \ell_i(\vlat)}} + \lambda \vmu \\
\gradop{\sigma^2}{\elbofinal(\vmu,\vs)} &:=  \sum_{i=1}^{N} \gradop{\sigma^{2}}{\Unmyexpect{q(\lat|\mu,s)}\sqr{ \ell_i(\vlat)}} +\half \lambda - \half \vs 
\end{align*} where $\gradop{\mu}{\Unmyexpect{q(\lat|\mu,s)}\sqr{\ell_i(\vlat)}}$ and $\gradop{\sigma^{2}}{\Unmyexpect{q(\lat|\mu,s)}\sqr{ \ell_i(\vlat)}}$ can be computed by the reparameterization trick with MC approximations where 
$\vlat \sim \gauss(\vlat|\vmu,\vs)$.
\begin{align*}
 \gradop{\mu}{\Unmyexpect{q(\lat|\mu,s)}\sqr{  \ell_i(\vlat)} } = \Unmyexpect{q(\lat|\mu,s)}\sqr{\nabla_{\lat}{\ell_i(\vlat)}}  & \approx \nabla_{\lat}{\ell_i(\vlat)} \\
 \gradop{\sigma^{2}}{\Unmyexpect{q(\lat|\mu,s)}\sqr{ \ell_i(\vlat)}} =  \half \Unmyexpect{q(\lat|\mu,s)}\sqr{ \vs \odot \left( \vlat - \vmu \right) \odot \nabla_{\lat}{\ell_i(\vlat)}} & \approx \half\sqr{ \vs \odot \left( \vlat - \vmu \right)} \odot \nabla_{\lat}{\ell_i(\vlat)}
\end{align*}

The natural gradients can be computed as follows.
\begin{align*}
\vngrad_k^{[1]} &=  \vsigma_k^2\left( \gradop{\mu} {\elbofinal(\vmu,\vs)}\big|_{\mu=\mu_k,s=s_k}\right)\\
\vngrad_k^{[2]} &= -2 \gradop{\sigma^2} {\elbofinal(\vmu,\vs)}\big|_{\mu=\mu_k,s=s_k}
\end{align*}

The  update induced by our rule with exponential decaying step-sizes and the natural momentum \citep{khan18a} shown in blue  is given as follows.
\begin{align*}
 \vmu_{k+1} & = \vmu_k - \stepsize_1  \vngrad_k^{[1]} + \stepsize_2 { \color{blue} \vsigma_k^{2} \odot \vsigma^{-2}_{k-1} \odot \left( \vmu_k -\vmu_{k-1} \right) }\\
\vsigma_{k+1}^{-2} & = \vsigma_k^{-2} - \stepsize_3 \vngrad_k^{[2]} + \frac{\stepsize_3^2}{2} \vngrad_k^{[2]} \odot \vsigma_k^2 \odot \vngrad_k^{[2]}  
\end{align*} where $\stepsize_1=\stepsize (1-r_1) \frac{ 1-r_2^k}{ 1-r_1^k}$, $\stepsize_2=r_1 \frac{ 1-r_2^k}{ 1-r_1^k} \frac{ 1-r_1^{k-1}}{ 1-r_2^{k-1}}$,
and $\stepsize_3=(1-r_2)$.

Recall that $\vs=\vsigma^{-2}$. The proposed update can be expressed as
\begin{align*}
 \vmu_{k+1} & = \vmu_k - \stepsize (1-r_1) \frac{ 1-r_2^k}{ 1-r_1^k}  \hat{\vs}_k^{-1} \odot \vg_k + r_1 \frac{ 1-r_2^k}{ 1-r_1^k} \frac{ 1-r_1^{k-1}}{ 1-r_2^{k-1}} \hat{\vs}_k^{-1} \odot \hat{\vs}_{k-1} \odot \left( \vmu_k -\vmu_{k-1} \right) \\
\hat{\vs}_{k+1} & = \hat{\vs}_k + (1-r_2) \vh_k + \frac{(1-r_2)^2}{2}\vh_k \odot \hat{\vs}_k^{-1} \odot \vh_k  \\
\vs_{k+1} &= N \hat{\vs}_{k+1}
\end{align*} where $\vg_k:=\frac{1}{N}\sum_{i=1}^{N} \gradop{\mu}{\Unmyexpect{q(\lat|\mu,s)}\sqr{  \ell_i(\vlat)}}\big|_{\mu=\mu_k,s=s_k} + \frac{\lambda}{N} \vmu_k$ and 
$\vh_k:= \frac{2}{N}\sum_{i=1}^{N} \gradop{\sigma^{2}}{\Unmyexpect{q(\lat|\mu,s)}\sqr{ \ell_i(\vlat)}}\big|_{\mu=\mu_k,s=s_k}  +\frac{\lambda}{N} - \hat{\vs}_k $.

Let's define $\vm_k:=\frac{ 1-r_1^{k-1}}{ \stepsize(1-r_2^{k-1})}  \hat{\vs}_{k-1}\odot \left( \vmu_{k-1} -\vmu_{k} \right)$.
We can further simplify the above update as shown below.
\begin{align*}
 \vmu_{k+1} & = \vmu_k - \stepsize (1-r_1) \frac{ 1-r_2^k}{ 1-r_1^k} \hat{\vs}_k^{-1} \odot \vg_k + \stepsize r_1 \frac{ 1-r_2^k}{ 1-r_1^k}    \hat{\vs}_k^{-1} \odot \left( \frac{ 1-r_1^{k-1}}{ \stepsize (1-r_2^{k-1})} \hat{\vs}_{k-1}\odot \left( \vmu_k -\vmu_{k-1} \right) \right) \\
 &= \vmu_k - \stepsize\frac{ 1-r_2^k}{ 1-r_1^k}  \hat{\vs}_k^{-1} \odot \sqr{(1-r_1)\vg_k + r_1 \vm_k } \\
 \vm_{k+1} &=\frac{ 1-r_1^{k}}{ \stepsize(1-r_2^{k})}  \hat{\vs}_{k}\odot \left( \vmu_{k} -\vmu_{k+1} \right) \\
 &=\frac{ 1-r_1^{k}}{ \stepsize(1-r_2^{k})}\stepsize\frac{ 1-r_2^k}{ 1-r_1^k}  \sqr{(1-r_1)\vg_k + r_1 \vm_k }  \\
 &= (1-r_1)\vg_k + r_1 \vm_k \\
\hat{\vs}_{k+1} & = \hat{\vs}_k + (1-r_2) \vh_k + \frac{(1-r_2)^2}{2}\vh_k \odot \hat{\vs}_k^{-1} \odot \vh_k  \\
 & = \half \sqr{ \hat{\vs}_k + ( \hat{\vs}_k + (1-r_2) \vh_k) \odot \hat{\vs}_k^{-1} \odot ( \hat{\vs}_k + (1-r_2) \vh_k) } \\
\vs_{k+1} &= N \hat{\vs}_{k+1}
\end{align*}
where $\vlat \sim q(\vlat|\vmu_k,\vs_k)$, 
$ \vg_k \approx \nabla_{\lat}{\ell_i(\vlat)} + \frac{\lambda}{N} \vmu_k$, and
$ \vh_k \approx  \sqr{ (N\hat{\vs}_k) \odot \left( \vlat - \vmu \right)} \odot \nabla_{\lat}{\ell_i(\vlat)} +\frac{\lambda}{N} - \hat{\vs}_k $.

\subsection{  \citet{tran2019variational} is a special case of our update}
\label{app:special_case_cov}
In the Gaussian case,  \citet{tran2019variational} consider the following update by using  parameterization $\vvarpar=\{\vmu, \vSigma\}$, where $\vSigma$ is the covariance matrix.
  \begin{align}
  \vmu & \leftarrow \vmu - \stepsize  \vSigma ( \gradop{\mu}\elbofinal ) \\
  \vSigma & \leftarrow \vSigma - \stepsize  \vngrad^{[2]} +\frac{\stepsize \times \stepsize}{2} \vngrad^{[2]}    \vSigma^{-1}  \vngrad^{[2]}  
  = Ret(\vSigma, -\stepsize \vngrad^{[2]}) \label{eq:special_gauss}.
  \end{align} where the natural gradient\footnote{\label{ft:mis}There is a typo in Algorithm 2 of \citet{tran2019variational}. The natural gradient for $\vSigma$ should be  $2 \vSigma ( \gradop{\Sigma}\elbofinal )  \vSigma$ instead of $\vSigma ( \gradop{\Sigma}\elbofinal ) \vSigma$. Note that the Riemannian gradient for a positive-definite matrix is  $\vSigma ( \gradop{\Sigma}\elbofinal ) \vSigma$  (see Table 1 of \citet{hosseini2015matrix}) while the natural/Riemannian gradient for Gaussian distribution w.r.t. $\vSigma$ is $2 \vSigma ( \gradop{\Sigma}\elbofinal )  \vSigma$. } for $\vSigma$ is  $\vngrad^{[2]}:=2 \vSigma ( \gradop{\Sigma}\elbofinal )  \vSigma$ and 
 the retraction map is $Ret(\vSigma, \vb):= \vSigma + \vb + \half \vb \vSigma^{-1} \vb$.

However, \citet{tran2019variational} do not justify the use of the retraction map, which is just one of retraction maps developed for positive definite matrices. In this section, we show that how to derive this update from our rule.

As shown in Eq. \eqref{eq:ibrl_bc},
our rule can be used under not only a BCN parameterization but also a BC parameterization.
Now, we show that our rule  can recover the above update using the parameterization $\vvarpar=\{ \vmu ,\vSigma\}$.
Recall that this parameterization is a BC parameterization.
It only requires us to show that natural gradients and the additional terms are described in Eq. \eqref{eq:special_gauss}.

Given that  $\vSigma$ is known, $\vmu$ is the natural parameter and the expectation parameter is $\vm_{[1]}=\Unmyexpect{q(\lat)}\sqr{ \vSigma^{-1} \vlat}= \vSigma^{-1} \vmu$
as shown in Appendix  \ref{app:gauss_ng}.
Therefore, the natural gradient w.r.t. $\vmu$ is $\vngrad^{[1]}= \gradop{m_{[1]}}{\elbofinal} =  \vSigma \gradop{\mu}{\elbofinal}$.
  
Now, we show that the natural gradients w.r.t. $\vSigma$ is 
\begin{align*}
\vngrad^{[2]} &= 2 \vSigma ( \gradop{\Sigma}\elbofinal )  \vSigma  
\end{align*}

A proof using matrix calculus is provided below. See \citet{malago2015information} for alternative proofs.
By matrix calculus, we have
\begin{align*}
&-\Unmyexpect{q(\lat)}\sqr{ \gradop{\Sigma^{ij}} \gradop{\Sigma} \sqr{ \log q(\vlat|\vmu,\vSigma) }  } \\
=& \Unmyexpect{q(\lat)}\sqr{ \gradop{\Sigma^{ij}} \gradop{\Sigma} \sqr{ \half (\vlat-\vmu)^T \vSigma^{-1} (\vlat-\vmu) + \half \log|\vSigma/(2\pi)| }  } \\
=& \half\Unmyexpect{q(\lat)}\sqr{ \gradop{\Sigma^{ij}}  \sqr{ -  \vSigma^{-1} (\vlat-\vmu)(\vlat-\vmu)^T\vSigma^{-1} + \vSigma^{-1} }  } \\
=& \half\Unmyexpect{q(\lat)}\sqr{  - \gradop{\Sigma^{ij}} \sqr{\vSigma^{-1} } (\vlat-\vmu)(\vlat-\vmu)^T\vSigma^{-1} - \vSigma^{-1}  (\vlat-\vmu)(\vlat-\vmu)^T \gradop{\Sigma^{ij}} \sqr{\vSigma^{-1} }+ \gradop{\Sigma^{ij}} \sqr{ \vSigma^{-1}}  }   \\
=& \half\Unmyexpect{q(\lat)}\sqr{  - \gradop{\Sigma^{ij}} \sqr{\vSigma^{-1} } (\vlat-\vmu)(\vlat-\vmu)^T\vSigma^{-1} - \vSigma^{-1}  (\vlat-\vmu)(\vlat-\vmu)^T \gradop{\Sigma^{ij}} \sqr{\vSigma^{-1} }+ \gradop{\Sigma^{ij}} \sqr{ \vSigma^{-1}}  }   \\
=& -  \half \gradop{\Sigma^{ij}} \sqr{\vSigma^{-1} } \underbrace{\Unmyexpect{q(\lat)}\sqr{  (\vlat-\vmu)(\vlat-\vmu)^T }}_{\vSigma} \vSigma^{-1} - \half \vSigma^{-1} \underbrace{ \Unmyexpect{q(\lat)}\sqr{  (\vlat-\vmu)(\vlat-\vmu)^T }}_{\vSigma} \gradop{\Sigma^{ij}} \sqr{\vSigma^{-1} }   + \half \gradop{\Sigma^{ij}} \sqr{ \vSigma^{-1}}  \\
=& \half \sqr{  - \gradop{\Sigma^{ij}} \sqr{\vSigma^{-1} } \vI  - \vI \gradop{\Sigma^{ij}} \sqr{\vSigma^{-1} }+ \gradop{\Sigma^{ij}} \sqr{ \vSigma^{-1}}    } \\
=&  -\half \gradop{\Sigma^{ij}} \sqr{ \vSigma^{-1}}  
\end{align*}
Therefore, the block matrix of the FIM related to $\vSigma$ is $\vfim_{\Sigma}:= -\Unmyexpect{q(\lat)}\sqr{  \gradop{\mathrm{vec}(\Sigma)}^2 \sqr{ \log q(\vlat|\vmu,\vSigma) }  }= -\half \gradop{\mathrm{vec}(\Sigma)} \sqr{ \mathrm{vec}(\vSigma^{-1}) }$ due to the above expression.
Note that $\vfim_{\Sigma}^{-1}=-2 \gradop{\mathrm{vec}(\Sigma^{-1})} \sqr{ \mathrm{vec}(\vSigma) }$.

Note that $\vngrad^{[2]}$ is the natural gradient for $\vSigma$.
Since $\vvarpar=\{\vmu,\vSigma\}$ is a BC parameterization, by Lemma \ref{lemma:block_identity},
the natural gradient w.r.t. $\mathrm{vec}(\vSigma)$ is
\begin{align*}
\mathrm{vec}(\vngrad^{[2]}) &:= \vfim_{\Sigma}^{-1} \mathrm{vec}( \gradop{\Sigma}\elbofinal ) \\
&= -2 \gradop{\mathrm{vec}(\Sigma^{-1})} \sqr{ \mathrm{vec}(\vSigma) }\mathrm{vec}( \gradop{\Sigma}\elbofinal ) \\
&= -2 \gradop{\mathrm{vec}(\Sigma^{-1})} \sqr{ \mathrm{vec}(\vSigma) } \gradop{\mathrm{vec}(\Sigma)}\elbofinal  \\
&= -2  \gradop{\mathrm{vec}(\Sigma^{-1})}\elbofinal  \\
&= -2  \mathrm{vec}(\gradop{\Sigma^{-1}}\elbofinal ) 
\end{align*} where we obtain the fourth step using the chain rule.

Therefore, we have $\vngrad^{[2]} = -2 \gradop{\Sigma^{-1}}\elbofinal$.
By matrix calculus, we have
\begin{align*}
 \gradop{\Sigma^{-1}}\elbofinal = - \vSigma  ( \gradop{\Sigma}\elbofinal ) \vSigma
\end{align*}
Finally,  we have
\begin{align*}
\vngrad^{[2]} = 2  \vSigma  ( \gradop{\Sigma}\elbofinal ) \vSigma
\end{align*}

Now, we show that the additional term for $\vmu$ is $\mathbf{0}$ under parameterization $\vvarpar=\{\vmu, \vSigma\}$.
Since $\vvarpar$ is a BC parameterization, by Lemma \ref{lemma:gauss_gamma1}, all entries of ${\Gamma^{a_1}_{\ \ b_1 c_1}}$ for $\vmu$ are zeros.
Therefore, the additional term for $\vmu$ is $\mathbf{0}$.

We denote the Christoffel symbol of the second kind for $\mathrm{vec}(\vSigma)$ as ${\Gamma^{a_2}_{\ \ \ \ b_2 c_2}}$.
Now, we show that the additional term for $\vSigma$ is $\frac{t \times t}{2} \vngrad^{[2]}  \vSigma^{-1}  \vngrad^{[2]}$.
It is equivalent to show
$ \mathrm{Mat}( {\Gamma^{a_2}_{\ \ b_2 c_2}} {\ngrad^{b_2}} {\ngrad^{c_2}} ) = - \vngrad^{[2]}  \vSigma^{-1}  \vngrad^{[2]}$.

Recall that the natural gradient for $\vS=\vSigma^{-1}$ is $\vG=-2 \gradop{\Sigma}\elbofinal$.
Under parameterization $\bar{\vvarpar}=\{\vmu,\vS \}$, 
$\bar{\vR}^{[2]}(\stepsize)$  has the following closed-form expression, which is used in the proof of  Lemma \ref{lemma:gauss_s_chris}.
\begin{align*}
\mathrm{Mat}(\bar{\vR}^{[2]}(\stepsize)) &=  \vU \mathrm{Exp}( \stepsize \vU^{-1}  \vG  \vU^{-1} ) \vU  
\end{align*} where $\vU= \vS^{\half}$ and $\mathrm{Exp}(\vX):=\vI+\sum_{n=1}^{\infty} \frac{\vX^{n}}{n!} $.

Note that $\vSigma=\vS^{-1}$.
Therefore, under parameterization $\vvarpar=\{\vmu,\vSigma \}$, we have
\begin{align*}
\overbrace{\mathrm{Mat}(\mathbf{R}^{[2]}(\stepsize))}^{\vSigma_{\text{new}}} & =   \big[ \,\, \overbrace{\mathrm{Mat}(\bar{\vR}^{[2]}(\stepsize)) }^{\vS_{\text{new}}} \,\, \big]^{-1} \\
&= ( \vU \mathrm{Exp}( \stepsize \vU^{-1}  \vG  \vU^{-1} ) \vU )^{-1} \\
&=  \vU^{-1} \mathrm{Exp}( - \stepsize \vU^{-1}  \vG  \vU^{-1} ) \vU^{-1} \\
&=  \vSigma^{1/2} \mathrm{Exp}( - \stepsize \vSigma^{1/2}  \vG  \vSigma^{1/2} ) \vSigma^{1/2} \\
&=  \vSigma^{1/2} \mathrm{Exp}(   \stepsize \vSigma^{1/2}  ( 2 \gradop{\Sigma}\elbofinal ) \vSigma^{1/2} ) \vSigma^{1/2} \\
&=  \vSigma^{1/2} \mathrm{Exp}(   \stepsize \vSigma^{-1/2} \underbrace{[ 2 \vSigma  (\gradop{\Sigma}\elbofinal) \vSigma ]}_{ \vngrad^{[2]} } \vSigma^{-1/2} ) \vSigma^{1/2} \\
&=  \vSigma^{1/2} \mathrm{Exp}(   \stepsize \vSigma^{-1/2}  \vngrad^{[2]}  \vSigma^{-1/2} ) \vSigma^{1/2},
\end{align*} where we use the identity 
$ (\mathrm{Exp}(  \stepsize \vU^{-1}  \vG  \vU^{-1} ) )^{-1}=\mathrm{Exp}( - \stepsize \vU^{-1}  \vG  \vU^{-1} )$.

Note that a geodesic is invariant under parameterization.
Alternatively, we can obtain the above equation by using the fact that 
$\mathbf{R}^{[2]}(\stepsize)$ is a geodesic of Gaussian distribution with a constant mean.

Using a similar proof as shown in Lemma \ref{lemma:gauss_s_chris}, the additional term for $\vSigma$ is
\begin{align*}
\mathrm{Mat}( {\Gamma^{a_2}_{\ \ b_2 c_2}} {\ngrad^{b_2}} {\ngrad^{c_2}} )
=- \vngrad^{[2]}    \vSigma^{-1}  \vngrad^{[2]}   
\end{align*}
 where $\Gamma^{a_2}_{\ \ b_2 c_2}$ is the Christoffel symbol of the second kind for $\mathrm{vec}(\vSigma)$ and  ${\ngrad^{a_2}}$ denotes the $a$-th element of $\mathrm{vec}(\vngrad^{[2]})$.

\section{Example: Gamma Approximation}
\label{app:gamma_case}
We consider the gamma distribution under the parameterization $\vvarpar=\{ \varpar^{[1]}, \varpar^{[2]}\}$, where $\varpar^{[1]}=\alpha$
and $\varpar^{[2]}=\frac{\beta}{\alpha}$.

Since every block contains only a scalar, we use global indexes such as $\varpar^{(i)}=\varpar^{[i]}$ , 
$\varpar^{(i)}=\varpar^{a_i}$ and $\Gamma_{i,ii}=\Gamma_{a_i, b_ic_i}$ for notation simplicity.
The open-set constraint is $\Omega_1 = \mathbb{S}_{++}^1$ and $\Omega_2=\mathbb{S}_{++}^1$.
Under this parameterization, we can express the distribution as below.
\begin{align*}
q(\lat|\vvarpar)=\lat^{-1}
\exp\Big(   \varpar^{(1)}\log  \lat  - \lat\varpar^{(1)}\varpar^{(2)} - A(\vvarpar)\Big) 
\end{align*} where
$A(\vvarpar)=\log\mathrm{Ga}(\varpar^{(1)})- \varpar^{(1)}\left( \log \varpar^{(1)}+ \log \varpar^{(2)}\right)$ and $\mathrm{Ga}(\cdot)$ is the gamma function.
\begin{lemma}
\label{lemma:gamma_bc}
The Fisher information matrix is diagonal under this parameterization.
It implies that this parameterization is a BC parameterization.
\end{lemma}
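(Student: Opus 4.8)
The plan is to compute the Fisher information matrix directly from the log-density and to show that its single off-diagonal entry vanishes. Since $\vvarpar=\{\varpar^{(1)},\varpar^{(2)}\}$ consists of two scalar blocks, $\vfim$ is a symmetric $2\times 2$ matrix, so it suffices to prove $\fim_{12}=-\Unmyexpect{q(\lat|\varpar)}{\sqr{\crossop{\varpar^{(1)}}{\varpar^{(2)}}{\log q(\lat|\vvarpar)}}}=0$; diagonality then immediately gives that $\vvarpar$ is a BC parameterization by the definition of a block coordinate parameterization (Definition~\ref{assump:BC_general}).

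First I would write $\log q(\lat|\vvarpar) = -\log\lat + \varpar^{(1)}\log\lat - \lat\,\varpar^{(1)}\varpar^{(2)} - A(\vvarpar)$ and take the mixed partial derivative in $\varpar^{(1)}$ and $\varpar^{(2)}$. Among the four terms, only $-\lat\,\varpar^{(1)}\varpar^{(2)}$ involves both parameters together with $\lat$, and only $A(\vvarpar)$ contributes a deterministic piece; using $\crossop{\varpar^{(1)}}{\varpar^{(2)}}{A(\vvarpar)}=-1/\varpar^{(2)}$ read off from $A(\vvarpar)=\log\mathrm{Ga}(\varpar^{(1)})-\varpar^{(1)}(\log\varpar^{(1)}+\log\varpar^{(2)})$, one obtains $\crossop{\varpar^{(1)}}{\varpar^{(2)}}{\log q(\lat|\vvarpar)} = -\lat + 1/\varpar^{(2)}$, hence $\fim_{12}=\Unmyexpect{q(\lat|\varpar)}{\sqr{\lat}}-1/\varpar^{(2)}$.

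The last step is to evaluate $\Unmyexpect{q(\lat|\varpar)}{\sqr{\lat}}$. Matching the given exponential form with the usual gamma parameterization identifies the shape as $\alpha=\varpar^{(1)}$ and the rate as $\beta=\varpar^{(1)}\varpar^{(2)}$, so $\Unmyexpect{q(\lat|\varpar)}{\sqr{\lat}}=\alpha/\beta=1/\varpar^{(2)}$, giving $\fim_{12}=0$. For completeness I would also record the diagonal entries, $\fim_{11}=\gradop{\varpar^{(1)}}{\psi(\varpar^{(1)})}-1/\varpar^{(1)}$ and $\fim_{22}=\varpar^{(1)}/(\varpar^{(2)})^2$, which are strictly positive on $\mathbb{S}_{++}^1\times\mathbb{S}_{++}^1$ (using $\psi'(\alpha)>1/\alpha$ for $\alpha>0$), so that $\vfim$ is a genuine positive-definite metric; these expressions also feed directly into the Christoffel-symbol formulas stated for the gamma case.

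I do not expect a real obstacle here: the argument amounts to two elementary derivative computations plus the mean of a gamma random variable. The only points needing a little care are getting the cross derivative of $A$ correct and correctly reading off the rate $\beta=\varpar^{(1)}\varpar^{(2)}$ in this parameterization. An alternative would be to invoke a Crouzeix-type identity as in \citet{nielsen2019geodesic} to recognise the block structure abstractly, but the direct computation above is shorter and has the side benefit of producing the explicit diagonal entries.
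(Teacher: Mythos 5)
Your proposal is correct and follows essentially the same route as the paper: compute the Hessian of $\log q$, observe that the off-diagonal entry is $-\lat + 1/\varpar^{(2)}$, and use $\Unmyexpect{q(\lat|\varpar)}{\sqr{\lat}} = 1/\varpar^{(2)}$ (the mean of a gamma with shape $\varpar^{(1)}$ and rate $\varpar^{(1)}\varpar^{(2)}$) to conclude it vanishes in expectation. Your explicit diagonal entries and the positivity remark via $\psi'(\alpha) > 1/\alpha$ match what the paper records and later uses.
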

\begin{proof}
Notice that $\Unmyexpect{q(\lat|\varpar)}\sqr{\lat}=\frac{1}{\varpar^{(2)}}$. The Fisher information matrix is diagonal as shown below.
\begin{align*}
\vfim &= - \Unmyexpect{q(\lat|\varpar)}\sqr{ \hessop{\varpar} {\log q(\lat|\vvarpar)} } \\
&= - \Unmyexpect{q(\lat|\varpar)} \begin{bmatrix}
    - \hessop{\varpar^{(1)}}{A(\vvarpar)} & \left(-\lat+\frac{1}{\varpar^{(2)} } \right)  \\
  \left(-\lat+\frac{1}{\varpar^{(2) }} \right)  & - \hessop{\varpar^{(2)}}{A(\vvarpar)}
   \end{bmatrix} \\
 &=  \Unmyexpect{q(\lat|\varpar)} \begin{bmatrix}
     \hessop{\varpar^{(1)}}{A(\vvarpar)} & 0 \\
  0  &  \hessop{\varpar^{(2)}}{A(\vvarpar)}
   \end{bmatrix}  \\ 
   &= \begin{bmatrix}
\gradop{\varpar^{(1)}}{\psi(\varpar^{(1)})} - \frac{1}{\varpar^{(1)}} & 0 \\
0 & \frac{\varpar^{(1)} }{ \left( \varpar^{(2)} \right)^{2} }
 \end{bmatrix}
\end{align*} where $\psi(\cdot)$ denotes the digamma function.
\end{proof}

\begin{lemma}
 $\vvarpar$ is a BCN  parameterization.
\end{lemma}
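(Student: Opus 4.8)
The plan is to verify, one at a time, the three assumptions whose conjunction defines a BCN parameterization, reusing as much as possible what has already been established. First I would dispatch Assumption~1: the parameter splits as $\vvarpar=\{\varpar^{(1)},\varpar^{(2)}\}$ with $\Omega=\Omega_1\times\Omega_2=\mathbb{S}^1_{++}\times\mathbb{S}^1_{++}$, so each block carries a single positivity constraint, the two constraints are mutually exclusive, and each block is positive-definite. Assumption~2 is then precisely the content of Lemma~\ref{lemma:gamma_bc}: the Fisher information matrix is diagonal under $\vvarpar$, hence block-diagonal with respect to the $\{\varpar^{(1)},\varpar^{(2)}\}$ partition, so $\vvarpar$ is a BC parameterization.

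The only substantive step is Assumption~3, namely that for each $i$, with the other block held known, $q(\lat\mid\vvarpar)$ is a \emph{minimal} one-parameter EF in $\varpar^{(i)}$. Starting from $q(\lat\mid\vvarpar)=\lat^{-1}\exp\!\big(\varpar^{(1)}\log\lat-\lat\varpar^{(1)}\varpar^{(2)}-A(\vvarpar)\big)$, for $i=1$ (so $\varpar^{(2)}$ is known) I would collect the terms linear in $\varpar^{(1)}$ and read off $h_1(\lat,\varpar^{(2)})=\lat^{-1}$ and $\phi_1(\lat,\varpar^{(2)})=\log\lat-\varpar^{(2)}\lat$; for $i=2$ (so $\varpar^{(1)}$ is known) I would collect the terms linear in $\varpar^{(2)}$, obtaining $h_2(\lat,\varpar^{(1)})=\lat^{\varpar^{(1)}-1}$ and $\phi_2(\lat,\varpar^{(1)})=-\varpar^{(1)}\lat$. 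In both cases the resulting form matches the template in Assumption~3 with the same log-partition $A(\vvarpar)$.

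To finish I would check minimality and well-posedness of these conditional families. For block~$1$, $\phi_1(\lat,\varpar^{(2)})=\log\lat-\varpar^{(2)}\lat$ is a non-constant function of $\lat$ for each fixed $\varpar^{(2)}>0$, so the representation is minimal, and $\{\varpar^{(1)}>0\}$ is the open set on which $A$ is finite; for block~$2$, $\phi_2(\lat,\varpar^{(1)})=-\varpar^{(1)}\lat$ is non-constant in $\lat$ since $\varpar^{(1)}>0$, giving minimality, and $\{\varpar^{(2)}>0\}$ is again the open domain of finiteness of $A$. Having verified Assumptions~1--3, I conclude that $\vvarpar$ is a BCN parameterization. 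The one point that requires any care — and hence the ``hard part'' of the argument — is confirming minimality of each conditional EF (that the conditional sufficient statistic is affinely non-degenerate in $\lat$) together with the openness of the conditional natural-parameter domains; everything else is bookkeeping and an appeal to Lemma~\ref{lemma:gamma_bc}.
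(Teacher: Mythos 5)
Your proposal is correct and follows essentially the same route as the paper: invoke Lemma~\ref{lemma:gamma_bc} for the block-coordinate property, then re-express the gamma density as a one-parameter EF in each block with exactly the same $h_i$ and $\phi_i$ the paper identifies. The only difference is that you spell out the minimality and domain-openness checks that the paper dismisses in a single sentence, which is a harmless (and arguably welcome) extra bit of care.
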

\begin{proof}
By Lemma  \ref{lemma:gamma_bc}, we know that $\vvarpar$ is a BC parameterization.
Now, we show that $\vvarpar=\{\varpar^{(1)}, \varpar^{(2)}\}$ is a BCN parameterization.
Clearly, each  $\varpar^{(i)} \in \mathbb{S}_{++}^{1}$ has all degrees of freedom. 

The gamma distribution which can be written as following exponential form:
\begin{align*}
q(\lat|\varpar^{(1)},\varpar^{(2)})=
\lat^{-1}
\exp\Big(   \varpar^{(1)}\log  \lat  - \lat\varpar^{(1)}\varpar^{(2)} - A(\vvarpar)\Big) 
\end{align*} 
Considering two blocks with $\varpar^{(1)}$ and $\varpar^{(2)}$ respectively, we can express this distribution in the following two ways where the first equation is for the $\varpar^{(1)}$ block while the second equation is for the $\varpar^{(2)}$ block:
\begin{align*}
   q(\lat|\varpar^{(1)},\varpar^{(2)}) &= \underbrace{ \lat^{-1}}_{h_1(\lat,\varpar^{(2)})} \exp\Big(  \myang{ \underbrace{ \log\lat - \lat \varpar^{(2)} }_{\phi_1(\lat, \varpar^{(2)})}, \varpar^{(1)}}    - A(\vvarpar)\Big)  \\
    &= \underbrace{\lat^{-1} \exp( \varpar^{(1)} \log \lat ) }_{h_2(\lat,\varpar^{(1)})} \exp\Big(  \myang{ \underbrace{ -\lat \varpar^{(1)} }_{\phi_2(\lat,\varpar^{(1)})}, \varpar^{(2)}}   - A(\vvarpar)\Big) 
\end{align*}
 
Therefore, by the definition of BCN, we know that $\vvarpar$ is a BCN parameterization.

\end{proof}

Using this BCN parameterization, the Christoffel symbols can be readily computed  as below.
\begin{align*}
\Gamma_{1,11} &= \half \gradrdop{\varpar^{(1)}}{A(\vvarpar)} = \half\big( \hessop{\varpar^{(1)}}{\psi(\varpar^{(1)})} + \frac{1}{\left( \varpar^{(1)} \right)^2 }\big)\, ,
\quad 
\Gamma_{2,22} = \half \gradrdop{\varpar^{(2)}}{A(\vvarpar)} = - \frac{ \varpar^{(1)} }{ \left( \varpar^{(2)} \right)^3 } \\
\Gamma^{1}_{\ \ 11} & = \frac{\Gamma_{1,11}}{F_{11}} = \frac{ \hessop{\varpar^{(1)}}{\psi(\varpar^{(1)})} + \frac{1}{\left( \varpar^{(1)} \right)^2 } }{ 2 \left( \gradop{\varpar^{(1)}}{\psi(\varpar^{(1)})} - \frac{1}{\varpar^{(1)}} \right) }\, , \quad
\Gamma^{2}_{\ \ 22}  = \frac{\Gamma_{2,22}}{F_{22}} = -\frac{1}{\varpar^{(2)}}
\end{align*}

\subsection{Proof of Theorem \ref{thm:gamma_update} }
\label{app:gamma_proof}
We first prove the following lemma.
\begin{lemma}
\label{lemma:chris_gamma}
$\Gamma^{1}_{\ \ 11} < -\frac{1}{\varpar^{(1)}}$ when $\varpar^{(1)}>0$. 
\end{lemma}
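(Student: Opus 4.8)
Write $x:=\varpar^{(1)}>0$ and let $\psi',\psi''$ denote the trigamma and tetragamma functions. From Appendix~\ref{app:gamma_case} we have $F_{11}(x)=\psi'(x)-\tfrac1x$, $\Gamma_{1,11}(x)=\tfrac12\bigl(\psi''(x)+\tfrac1{x^2}\bigr)=\tfrac12 F_{11}'(x)$, and $\Gamma^{1}_{\ \ 11}=\Gamma_{1,11}/F_{11}$. Since $F_{11}(x)>0$ (it is a diagonal entry of the FIM, which is positive definite under a BCN parameterization; equivalently $\psi'(x)>1/x$), multiplying the target inequality by $2F_{11}(x)>0$ shows that $\Gamma^{1}_{\ \ 11}<-1/x$ is equivalent to $F_{11}'(x)+\tfrac2x F_{11}(x)<0$, i.e.\ to $\frac{d}{dx}\bigl(x^2 F_{11}(x)\bigr)<0$. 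Noting $x^2 F_{11}(x)=x^2\psi'(x)-x=:G(x)$, the whole lemma reduces to: \emph{$G$ is strictly decreasing on $(0,\infty)$.} This reformulation is the main conceptual step; everything after it is routine.

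The plan is to differentiate a Laplace-transform representation of $G$. Using $\psi'(x)=\sum_{n\ge0}(x+n)^{-2}=\int_0^\infty t e^{-xt}\sum_{n\ge0}e^{-nt}\,dt=\int_0^\infty \frac{t e^{-xt}}{1-e^{-t}}\,dt$ (interchange justified by Tonelli) together with $\frac1x=\int_0^\infty e^{-xt}\,dt$, I would write
\begin{align*}
G(x)=x^2\int_0^\infty w(t)\,e^{-xt}\,dt,\qquad w(t):=\frac{t}{1-e^{-t}}-1=(t-1)+\frac{t}{e^t-1},
\end{align*}
where $w$ extends smoothly to $t=0$ with $w(0)=0$ and $w'(0)=\tfrac12$. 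Differentiating under the integral sign (dominated on compact subsets of $(0,\infty)$), using $(2x-x^2t)e^{-xt}=-\partial_t^2\!\bigl(te^{-xt}\bigr)$, and integrating by parts twice — all boundary terms vanish since $w(0)=0$ while $te^{-xt}$ and its $t$-derivatives decay exponentially and $w$ grows only linearly — gives
\begin{align*}
G'(x)=\int_0^\infty(2x-x^2t)\,w(t)\,e^{-xt}\,dt=-\int_0^\infty t\,w''(t)\,e^{-xt}\,dt.
\end{align*}

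It then remains to verify $w''(t)>0$ for all $t>0$, which makes $G'(x)<0$ for every $x>0$. From $w(t)=(t-1)+\frac{t}{e^t-1}$ a direct computation yields $w''(t)=\dfrac{e^t\bigl(e^t(t-2)+t+2\bigr)}{(e^t-1)^3}$, so for $t>0$ the sign of $w''(t)$ is that of $r(t):=e^t(t-2)+t+2$. Here $r(0)=0$, $r'(t)=e^t(t-1)+1$ with $r'(0)=0$, and $r''(t)=te^t>0$ on $(0,\infty)$; hence $r'$ is strictly increasing and positive on $(0,\infty)$, so $r$ is strictly increasing and positive there, giving $w''>0$. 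The main (mild) obstacle is thus just establishing $w''>0$, which collapses to the elementary positivity of $r$. An alternative, more hands-on route — write $G'(x)=\sum_{n\ge1}\frac{2xn}{(x+n)^3}-1$ and compare the series with $\int_0^\infty\frac{2xt}{(x+t)^3}\,dt=1$ using trapezoidal/midpoint error bounds and the fact that $t\mapsto\frac{2xt}{(x+t)^3}$ is concave on $[0,x]$ and convex on $[x,\infty)$ — also works, but is delicate because the quantity $1-G'(x)$ is itself only of order $x^{-2}$, the same order as the individual error terms, so the estimates must be essentially sharp; I would prefer the Laplace-transform argument above.
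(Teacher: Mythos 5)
Your proposal is correct, but it takes a genuinely different route from the paper. The paper's proof is a short algebraic combination of two inequalities cited from the literature: $\psi'(x)-\tfrac1x>\tfrac{1}{2x^2}>0$ (Batir, used only to guarantee the denominator $F_{11}$ is positive) and $\psi''(x)<\tfrac{1}{x^2}-\tfrac{2\psi'(x)}{x}$ (Koumandos); dividing the second by the first immediately gives $2\Gamma^{1}_{\ \ 11}<-\tfrac{2}{x}$. Your key reformulation — that the lemma is equivalent to $\frac{d}{dx}\bigl(x^2(\psi'(x)-\tfrac1x)\bigr)<0$ — is, after expanding the derivative, \emph{exactly} the Koumandos inequality the paper cites, so what you have really done is re-prove that inequality from scratch via the Laplace representation $\psi'(x)=\int_0^\infty \frac{te^{-xt}}{1-e^{-t}}\,dt$, two integrations by parts, and the elementary positivity of $r(t)=e^t(t-2)+t+2$ (whose verification via $r(0)=r'(0)=0$, $r''(t)=te^t>0$ is correct, as is your formula for $w''$ and the vanishing of all boundary terms). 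Your argument is longer but self-contained, and as a bonus the same representation yields $w(t)>0$, hence $F_{11}>0$, without appealing to Batir or to positive-definiteness of the FIM; the paper's argument is two lines but outsources the analytic content to the references. Both are valid proofs of the lemma.
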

\begin{proof}
By Eq 1.4 at \citet{batir2005some} and the last inequality at page 13 of \citet{koumandos2008monotonicity},
we have the following inequalities when $\varpar^{(1)}>0$.
\begin{align}
 \gradop{\varpar^{(1)}}{\psi(\varpar^{(1)})} - \frac{1}{\varpar^{(1)}} & > \frac{1}{2 \left(\varpar^{(1)}\right)^2} >0  & \text{  \citet{batir2005some}}
 \label{eq:gamma_ieq1} \\
 \hessop{\varpar^{(1)}}{\psi(\varpar^{(1)})} & <   \frac{1}{\left( \varpar^{(1)}\right)^2} - \frac{ 2  \gradop{\varpar^{(1)}}{\psi(\varpar^{(1)})} }{ \varpar^{(1)} } & \text{  \citet{koumandos2008monotonicity}}
 \label{eq:gamma_ieq2}
\end{align}
By \eqref{eq:gamma_ieq2}, we have
\begin{align*}
 \hessop{\varpar^{(1)}}{\psi(\varpar^{(1)})} +\frac{1}{\left( \varpar^{(1)}\right)^2} <   \frac{2}{\left( \varpar^{(1)}\right)^2} - \frac{ 2  \gradop{\varpar^{(1)}} {\psi(\varpar^{(1)})} }{ \varpar^{(1)} } = \frac{2}{ \varpar^{(1)}}\left( \frac{1}{\varpar^{(1)}} - \gradop{\varpar^{(1)}}{\psi(\varpar^{(1)})} \right)
\end{align*}
Since $\gradop{\varpar^{(1)}}{\psi(\varpar^{(1)})} - \frac{1}{\varpar^{(1)}}>0$, we have
\begin{align*}
 2 \Gamma^{1}_{\ \ 11} = \frac{ \hessop{\varpar^{(1)}}{\psi(\varpar^{(1)})} +\frac{1}{\left( \varpar^{(1)}\right)^2} }{ \gradop{\varpar^{(1)}}{\psi(\varpar^{(1)})} - \frac{1}{\varpar^{(1)}}} < - \frac{2}{ \varpar^{(1)} }
\end{align*} which shows
$\Gamma^{1}_{\ \ 11} < -\frac{1}{\varpar^{(1)}}$.
\end{proof}

Now, We give a proof  for Theorem \ref{thm:gamma_update}.
\begin{proof}
The proposed update for $\varpar^{(1)}$ with step-size $\stepsize$ is given below.
\begin{align*}
\varpar^{(1)} &\leftarrow \varpar^{(1)}  - \stepsize \ngrad^{(1)} - \frac{\stepsize^2}{2} \left( \Gamma^{1}_{\ \ 11}  \right) \left( \ngrad^{(1)} \right)^2 \\
&> \varpar^{(1)}  - \stepsize \ngrad^{(1)} + \frac{\stepsize^2}{2} \left( \frac{1}{\varpar^{(1)}}  \right) \left( \ngrad^{(1)} \right)^2 \\
&= \frac{1}{2\varpar^{(1)}}\sqr{ 2\left( \varpar^{(1)} \right)^2 - 2 \stepsize \ngrad^{(1)} \varpar^{(1)} + \left( \stepsize \ngrad^{(1)} \right)^2 } \\
&= \underbrace{\frac{1}{2\varpar^{(1)}}}_{>0}\sqr{\underbrace{ \left(  \varpar^{(1)} \right)^2}_{>0} + \underbrace{\left( \varpar^{(1)}-  \stepsize \ngrad^{(1)}  \right)^2}_{\geq 0}  } 
\end{align*} where in the second step we use the inequality 
$\Gamma^{1}_{\ \ 11} < -\frac{1}{\varpar^{(1)}}$ shown in  Lemma \ref{lemma:chris_gamma} since the current/old 
$\varpar^{(1)}>0$.

Similarly, we can show the update for $\varpar^{(2)}$ also satisfies the constraint.
\begin{align*}
\varpar^{(2)} &\leftarrow \varpar^{(2)}  - \stepsize \ngrad^{(2)} + \frac{\stepsize^2}{2}\left( \frac{1}{\varpar^{(2)}}\right) \left( \ngrad^{(2)} \right)^2 \\
&= \underbrace{ \frac{1}{2\varpar^{(2)}} }_{>0}\sqr{ \underbrace{\left( \varpar^{(2)} \right)^2}_{>0} + \underbrace{\left( \varpar^{(2)}-  \stepsize \ngrad^{(2)}  \right)^2}_{\geq 0}  } 
\end{align*}

It is obvious to see that the proposed update satisfies the underlying constraint.
\end{proof}

\subsection{Natural Gradients}
\label{app:gamma_ng}
Recall that $\ngrad$ are the natural-gradients, which can be computed as shown below.
\begin{align*}
\ngrad^{(1)} = \frac{\gradop{\varpar^{(1)} }{\elbofinal}}{  \gradop{\varpar^{(1)}}{\psi(\varpar^{(1)})} - \frac{1}{\varpar^{(1)}}} , \,\,
\ngrad^{(2)} =  \frac{ \left( \varpar^{(2)} \right)^2}{ \varpar^{(1)} } \gradop{\varpar^{(2)}}{\elbofinal}
\end{align*}
Recall that $\varpar^{(1)}=\alpha$ and $\varpar^{(2)}=\frac{\beta}{\alpha}$.
Using the chain rule, we know that
\begin{align*}
 \gradop{ \varpar^{(1)}}{\elbofinal} = \gradop{ \alpha }{\elbofinal} +\frac{\beta}{\alpha}\gradop{ \beta }{\elbofinal}  ,  \,\,\, \gradop{\varpar^{(2)}}{\elbofinal}= \alpha \gradop{\beta }{\elbofinal}
\end{align*} 
$\gradop{\alpha }{\elbofinal}$ and $\gradop{ \beta }{\elbofinal}$ can be computed by the implicit reparameterization trick \citep{salimans2013fixed,figurnov2018implicit}.

\section{Example: Exponential Approximation}
\label{app:exp_case}
In this case, there is only one block with a scalar. We use global indexes such as $\varpar^{(1)}=\varpar^{[1]}$ and $\Gamma_{1,11}=\Gamma_{a_1, b_1c_1}$ for notation simplicity.
We consider an exponential distribution under the natural parameterization $\varpar=\varpar^{(1)}$ with the open-set constraint $\Omega = \mathbb{S}_{++}^1$:
\begin{align*}
q(\lat|\varpar)= \exp \left( -\varpar^{(1)} \lat-A(\varpar) \right) 
\end{align*} where $A(\varpar)=- \log \varpar^{(1)}$. 
The FIM is a scalar $\fim_{11} = \frac{1}{\left(\varpar^{(1)}\right)^2} $.
It is obvious that $\varpar$ is a BCN parameterization.
the Christoffel symbols can be readily computed as below.
\begin{align*}
\Gamma_{1,11} = \half \gradrdop{\varpar^{(1)}}{A(\varpar)} = -\frac{1}{\left(\varpar^{(1)}\right)^3}\, , \quad
\Gamma^{1}_{\ \ 11}  = \frac{\Gamma_{1,11}}{F_{11}} = - \frac{1}{\varpar^{(1)}}
\end{align*}
The proposed natural-gradient update with step-size $\stepsize$ is
\begin{align*}
\varpar^{(1)} &= \varpar^{(1)}  - \stepsize \ngrad^{(1)} \color{red}{ + \frac{\stepsize^2}{2}\left( \frac{1}{\varpar^{(1)}}\right) \left( \ngrad^{(1)} \right)^2} 
\end{align*}
where  $\ngrad^{(1)}$ is the natural-gradient.
Note that $\ngrad^{(1)}$ is the natural-gradient, which can be computed as shown below.
\begin{align*}
\ngrad^{(1)} = \left(\varpar^{(1)}\right)^2\gradop{ \varpar^{(1)}}{\elbofinal}.
\end{align*} where $\gradop{ \varpar^{(1)}}{\elbofinal}$ can be computed by the implicit reparameterization trick
as $\gradop{ \varpar^{(1)}}{\elbofinal}\approx \sqr{\gradop{\varpar}{\lat}} \sqr{\gradop{ \lat }{b(\lat)}}$.
where 
$\lat \sim q(\lat|\varpar^{(1)})$ and 
$ b(\lat) :=\bar{\ell}(\lat)+\log q(\lat|\varpar^{(1)})$

\begin{lemma}
The proposed update satisfies the underlying constraint.
\end{lemma}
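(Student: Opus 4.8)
The plan is to prove this by the same completion-of-square trick used for $\varpar^{(2)}$ in the proof of Theorem \ref{thm:gamma_update}. The key point is that here $\Gamma^{1}_{\ \ 11} = -1/\varpar^{(1)}$, so the extra red term in the update appears with a \emph{positive} sign, i.e. the update reads
\begin{align*}
\varpar^{(1)} \leftarrow \varpar^{(1)} - \stepsize \ngrad^{(1)} + \frac{\stepsize^2}{2}\left( \frac{1}{\varpar^{(1)}}\right) \left( \ngrad^{(1)} \right)^2 .
\end{align*}

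First I would multiply and divide by $2\varpar^{(1)}$ to rewrite the right-hand side as a quadratic in $\stepsize \ngrad^{(1)}$:
\begin{align*}
\varpar^{(1)} - \stepsize \ngrad^{(1)} + \frac{\stepsize^2}{2\varpar^{(1)}} \left( \ngrad^{(1)} \right)^2
&= \frac{1}{2\varpar^{(1)}} \sqr{ 2 \left( \varpar^{(1)} \right)^2 - 2\stepsize \ngrad^{(1)} \varpar^{(1)} + \left( \stepsize \ngrad^{(1)} \right)^2 } \\
&= \frac{1}{2\varpar^{(1)}} \sqr{ \left( \varpar^{(1)} \right)^2 + \left( \varpar^{(1)} - \stepsize \ngrad^{(1)} \right)^2 }.
\end{align*}
Then I would observe that, since the current (old) iterate satisfies $\varpar^{(1)} > 0$, the prefactor $\tfrac{1}{2\varpar^{(1)}}$ is strictly positive, and the bracket is a sum of a strictly positive term $(\varpar^{(1)})^2$ and a nonnegative square $(\varpar^{(1)} - \stepsize \ngrad^{(1)})^2$, hence strictly positive. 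Therefore the updated $\varpar^{(1)}$ lies in $\mathbb{S}_{++}^1 = \Omega$, which is the claim; this holds for any step-size $\stepsize > 0$.

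There is essentially no obstacle here: the argument is structurally identical to the $\varpar^{(2)}$ case of Theorem \ref{thm:gamma_update}, and indeed simpler than the $\varpar^{(1)}$ case there, because no auxiliary inequality on the Christoffel symbol (of the type in Lemma \ref{lemma:chris_gamma}) is needed — the exact value $\Gamma^{1}_{\ \ 11} = -1/\varpar^{(1)}$ is already of the right form. The only thing to be careful about is the sign bookkeeping in substituting $\Gamma^{1}_{\ \ 11}$ into \eqref{eq:our_iblr}, which is what makes the correction term positive and the completion of square work out.
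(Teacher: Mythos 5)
Your proof is correct and is essentially identical to the paper's own argument: the same completion of the square giving $\frac{1}{2\varpar^{(1)}}\big[(\varpar^{(1)})^2 + (\varpar^{(1)} - \stepsize\ngrad^{(1)})^2\big]$, followed by positivity of the prefactor and the bracket. Your observation that no auxiliary inequality on the Christoffel symbol is needed here (unlike the gamma case) is accurate.
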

\begin{proof}
The proposed natural-gradient update with step-size $\stepsize$ is given below.
\begin{align*}
\varpar^{(1)} &\leftarrow \varpar^{(1)}  - \stepsize \ngrad^{(1)} + \frac{\stepsize^2}{2}\left( \frac{1}{\varpar^{(1)}}\right) \left( \ngrad^{(1)} \right)^2 \\
&= \frac{1}{2\varpar^{(1)}}\sqr{ 2\left( \varpar^{(1)} \right)^2 - 2 \stepsize \ngrad^{(1)} \varpar^{(1)} + \left( \stepsize \ngrad^{(1)} \right)^2 } \\
&= \frac{1}{2\varpar^{(1)}}\sqr{ \left( \varpar^{(1)} \right)^2 + \left( \varpar^{(1)}-  \stepsize \ngrad^{(1)}  \right)^2  } 
\end{align*} 
It is obvious to see that the proposed update satisfies the underlying constraint.
\end{proof}

\subsection{Implicit reparameterization gradient}

Now, we discuss how to compute the gradients w.r.t. $\varpar$  using the implicit reparameterization trick.
To use the implicit reparameterization trick, we have to compute the following term.
\begin{align*}
\gradop{\varpar}{\lat} = - \frac{ \gradop{\varpar}{Q(\lat|\varpar)} }{q(\lat|\varpar)} 
= - \frac{ \gradop{\varpar}{\left( 1-  \exp(-\varpar\lat) \right)} }{ \varpar \exp(-\varpar\lat) } 
= - \frac{ \lat \exp(-\varpar\lat)  }{ \varpar \exp(-\varpar\lat) } 
= - \frac{ \lat   }{ \varpar } 
\end{align*} where $Q(\lat|\varpar)$ is the C.D.F. of $q(\lat|\varpar)$.

\section{Example: Inverse Gaussian Approximation}
\label{app:ig_case}
We consider the following  distribution.
\begin{align*}
q(\lat|\alpha, \beta)=  \sqrt{\frac{1}{2\pi \lat^3}}  \exp \left( -\frac{\lat\alpha \beta^2}{2}  - \frac{\alpha}{2\lat}+  \frac{\log \alpha}{2} + \alpha\beta  \right) 
\end{align*}
where $\{ \frac{1}{\beta},  \alpha \}$ is a BC parameterization.

We consider a BCN parameterization $\vvarpar=\{\varpar^{[1]},\varpar^{[2]}\}$, where $\varpar^{[1]}=\beta^2$ and $\varpar^{[2]}=\alpha$ and the open-set constraint is $\Omega_1= \mathbb{S}_{++}^1$ and $ \Omega_2=\mathbb{S}_{++}^1$.
Since every block contains only a scalar, we use global indexes such as $\varpar^{(i)}=\varpar^{[i]}$ and $\Gamma_{i,ii}=\Gamma_{a_i, b_ic_i}$ for notation simplicity.
Under this parameterization, we can re-express the distribution as
\begin{align*}
q(\lat|\vvarpar)= \sqrt{\frac{1}{2\pi \lat^3}}  \exp \left(  -\frac{\lat}{2} \varpar^{(1)} \varpar^{(2)} - \frac{ \varpar^{(2)}}{2\lat} -A(\vvarpar) \right) 
\end{align*} where $A(\vvarpar)=-\frac{\log \varpar^{(2)}}{2} - \varpar^{(2)}\sqrt{\varpar^{(1)}} $.


\begin{lemma}
\label{lemma:fisher_ig}
The FIM is (block) diagonal under this parameterization.
\end{lemma}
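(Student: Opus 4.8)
The plan is to exhibit the $2\times 2$ Fisher matrix $\vfim$ under $\vvarpar=\{\varpar^{(1)},\varpar^{(2)}\}$ explicitly and check that its single off-diagonal entry vanishes; since $\vfim$ is symmetric, this makes it diagonal, hence block-diagonal with respect to the two scalar blocks, which is what ``BC'' means here. Concretely, I need only show $\fim_{12} = -\Unmyexpect{q(\lat|\varpar)}{\sqr{\crossop{\varpar^{(1)}}{\varpar^{(2)}}{\log q(\lat|\vvarpar)}}} = 0$. From the stated exponential form, $\log q(\lat|\vvarpar) = -\half\log(2\pi\lat^3) - \frac{\lat}{2}\varpar^{(1)}\varpar^{(2)} - \frac{\varpar^{(2)}}{2\lat} - A(\vvarpar)$, and the only terms depending on both parameters are $-\frac{\lat}{2}\varpar^{(1)}\varpar^{(2)}$ and $-A(\vvarpar)$. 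Differentiating first in $\varpar^{(2)}$ and then in $\varpar^{(1)}$ gives $\crossop{\varpar^{(1)}}{\varpar^{(2)}}{\log q(\lat|\vvarpar)} = -\frac{\lat}{2} - \crossop{\varpar^{(1)}}{\varpar^{(2)}}{A(\vvarpar)}$, and since $A(\vvarpar)=-\half\log\varpar^{(2)} - \varpar^{(2)}\sqrt{\varpar^{(1)}}$ one computes $\crossop{\varpar^{(1)}}{\varpar^{(2)}}{A(\vvarpar)} = -\frac{1}{2\sqrt{\varpar^{(1)}}}$. Hence $\fim_{12} = \half\,\Unmyexpect{q(\lat|\varpar)}{\sqr{\lat}} - \frac{1}{2\sqrt{\varpar^{(1)}}}$.

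The only remaining step is to evaluate the first moment $\Unmyexpect{q(\lat|\varpar)}{\sqr{\lat}}$, and I would do this without invoking special-function identities by using that the score has zero mean: $\Unmyexpect{q(\lat|\varpar)}{\sqr{\gradop{\varpar^{(1)}}{\log q(\lat|\vvarpar)}}} = 0$. Since $\gradop{\varpar^{(1)}}{\log q(\lat|\vvarpar)} = -\frac{\lat\varpar^{(2)}}{2} - \gradop{\varpar^{(1)}}{A(\vvarpar)}$ with $\gradop{\varpar^{(1)}}{A(\vvarpar)} = -\frac{\varpar^{(2)}}{2\sqrt{\varpar^{(1)}}}$, this identity reads $-\frac{\varpar^{(2)}}{2}\Unmyexpect{q(\lat|\varpar)}{\sqr{\lat}} + \frac{\varpar^{(2)}}{2\sqrt{\varpar^{(1)}}} = 0$, i.e.\ $\Unmyexpect{q(\lat|\varpar)}{\sqr{\lat}} = 1/\sqrt{\varpar^{(1)}}$ (equivalently, the inverse-Gaussian mean $1/\beta$). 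Substituting this back, the two contributions to $\fim_{12}$ cancel, so $\fim_{12}=0$ and $\vfim$ is diagonal, proving the lemma.

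There is essentially no hard step here; the only thing requiring care is the bookkeeping of the derivatives of $A(\vvarpar)$ and the (routine, regularity-justified) use of the zero-mean-score identity to get the first moment cheaply. As a remark I would also note an even shorter route: $\vvarpar$ is obtained from the parameterization $\{1/\beta,\alpha\}$ --- already asserted to be BC --- by a change of variables acting only within the first block ($1/\beta=(\varpar^{(1)})^{-1/2}$, $\alpha=\varpar^{(2)}$), so the Jacobian $\vJ$ is block-diagonal and the FIM in the new coordinates, $\vJ^{\top}\vfim\vJ$, is block-diagonal whenever $\vfim$ is. I would keep the direct computation as the primary argument since it is self-contained and also supplies the explicit diagonal entries used later for the Christoffel symbols, and mention the change-of-variables argument only in passing.
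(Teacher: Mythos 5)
Your proof is correct and follows essentially the same route as the paper's: both compute the single off-diagonal entry of the $2\times 2$ FIM, find it proportional to $\Unmyexpect{q(\lat|\varpar)}{\sqr{\lat}}-1/\sqrt{\varpar^{(1)}}$, and conclude it vanishes since the inverse-Gaussian mean is $1/\beta=1/\sqrt{\varpar^{(1)}}$. The only (harmless) difference is that you derive this first moment from the zero-mean-score identity rather than simply quoting it, which makes the argument slightly more self-contained.
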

\begin{proof}
Notice that $\Unmyexpect{q(\lat|\varpar)}\sqr{\lat}=\frac{1}{\sqrt{\varpar^{(1)}}}$.
The FIM is (block) diagonal as shown below.
\begin{align*}
\vfim &= - \Unmyexpect{q(\lat|\varpar)}\sqr{ \hessop{\varpar}{\log q(\lat|\vvarpar)} } \\
&= - \Unmyexpect{q(\lat|\varpar)} \begin{bmatrix}
    - \hessop{\varpar^{(1)}} {A(\vvarpar)}& \half\left(-\lat+\frac{1}{\sqrt{ \varpar^{(1)} }} \right)  \\
  \half\left(-\lat+\frac{1}{\sqrt{ \varpar^{(1)} }} \right)  & - \hessop{\varpar^{(2)}} {A(\vvarpar)}
   \end{bmatrix} \\
 &=  \Unmyexpect{q(\lat|\varpar)} \begin{bmatrix}
     \hessop{\varpar^{(1)}} {A(\vvarpar)} & 0 \\
  0  &  \hessop{\varpar^{(2)}} {A(\vvarpar)}
   \end{bmatrix}  \\ 
   &= \begin{bmatrix}
 \frac{1}{4} \left( \varpar^{(1)} \right)^{-3/2} \varpar^{(2)} & 0 \\
0 & \frac{1}{2} \left(\varpar^{(2)}\right)^{-2}
 \end{bmatrix}
\end{align*}
\end{proof}

It is easy to show that $\vvarpar$ is a BCN parameterization since 
$\vvarpar$ satisfies Assumption 1 to 3.

Due to the BCN parameterization, the Christoffel symbols can be readily computed as below.
\begin{align*}
\Gamma_{1,11} &= \half \gradrdop{\varpar^{(1)}}{A(\vvarpar)} = -\frac{3}{16} \left( \varpar^{(1)} \right)^{-5/2} \varpar^{(2)}\, , \quad
\Gamma_{2,22} = \half \gradrdop{\varpar^{(2)}}{A(\vvarpar)} = -\frac{1}{2} \left( \varpar^{(2)} \right)^{-3} \\
\Gamma^{1}_{\ \ 11} & = \frac{\Gamma_{1,11}}{F_{11}} = - \frac{3}{4\varpar^{(1)}} \, , \quad
\Gamma^{2}_{\ \ 22}  = \frac{\Gamma_{2,22}}{F_{22}} = - \frac{1}{\varpar^{(2)}}
\end{align*}

The proposed natural-gradient update with step-size $\stepsize$ is
\begin{align*}
\varpar^{(1)} &\leftarrow \varpar^{(1)}  - \stepsize \ngrad^{(1)}  \color{red}{+ \frac{\stepsize^2}{2} \left( \frac{3}{4\varpar^{(1)}}\right) \left( \ngrad^{(1)} \right)^2} \\
\varpar^{(2)} &\leftarrow \varpar^{(2)}  - \stepsize \ngrad^{(2)}  \color{red}{+ \frac{\stepsize^2}{2}\left( \frac{1}{\varpar^{(2)}}\right) \left( \ngrad^{(2)} \right)^2 }
\end{align*}

\begin{lemma}
The update above satisfies the underlying constraint.
\end{lemma}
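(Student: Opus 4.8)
The plan is to follow exactly the template used for the gamma and exponential approximations (proof of Theorem~\ref{thm:gamma_update} and the lemma in Appendix~\ref{app:exp_case}). Because $\vvarpar=\{\varpar^{(1)},\varpar^{(2)}\}$ is a BC parameterization, the FIM is block-diagonal (Lemma~\ref{lemma:fisher_ig}) and the two scalar updates decouple, so it suffices to show each updated coordinate stays positive given the current value is positive. The key observation is that each additional (red) term has a strictly positive coefficient in front of $(\ngrad^{(i)})^2$, so after clearing denominators each update is a positive multiple of a quadratic form in $\varpar^{(i)}$ and $\stepsize\ngrad^{(i)}$ that I will recognize as a sum of squares.

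First I would dispatch the $\varpar^{(2)}$ block, which is literally the same update as in the exponential case and the second gamma block: multiplying through by $2\varpar^{(2)}>0$ and completing the square gives
\begin{align*}
\varpar^{(2)} - \stepsize\ngrad^{(2)} + \frac{\stepsize^2}{2}\Big(\frac{1}{\varpar^{(2)}}\Big)(\ngrad^{(2)})^2
= \frac{1}{2\varpar^{(2)}}\sqr{ (\varpar^{(2)})^2 + \big(\varpar^{(2)}-\stepsize\ngrad^{(2)}\big)^2 }>0 .
\end{align*}
For the $\varpar^{(1)}$ block, the only change from gamma is the constant $\tfrac34$ in $\Gamma^{1}_{\ \ 11}=-\tfrac{3}{4\varpar^{(1)}}$ in place of $1$. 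Multiplying the update by $8\varpar^{(1)}>0$ produces $8(\varpar^{(1)})^2 - 8\stepsize\ngrad^{(1)}\varpar^{(1)} + 3(\stepsize\ngrad^{(1)})^2$, and the algebraic identity I would use is
\begin{align*}
8a^2 - 8ab + 3b^2 = 2(2a-b)^2 + b^2 ,
\end{align*}
which is nonnegative and vanishes only if $a=b=0$. Since $a=\varpar^{(1)}>0$, the updated value equals $\tfrac{1}{8\varpar^{(1)}}\big[\,2(2\varpar^{(1)}-\stepsize\ngrad^{(1)})^2 + (\stepsize\ngrad^{(1)})^2\,\big]>0$, so $\varpar^{(1)}\in\mathbb{S}_{++}^1$ is preserved. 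Combining the two blocks shows the update stays in $\Omega=\mathbb{S}_{++}^1\times\mathbb{S}_{++}^1$ for every step-size $\stepsize>0$.

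I do not expect a real obstacle: unlike the gamma case, no external monotonicity or bounding inequalities on digamma/trigamma are needed, because here the Christoffel symbols $\Gamma^{1}_{\ \ 11}$ and $\Gamma^{2}_{\ \ 22}$ are explicit rational functions of $\vvarpar$. The only point requiring a moment of care is that the sum-of-squares completion for the $\varpar^{(1)}$ block must use the constant $3/4$ coming from $\Gamma^{1}_{\ \ 11}$; the identity $8a^2-8ab+3b^2=2(2a-b)^2+b^2$ is exactly what lets the cross term $-8ab$ be absorbed, and verifying it by expansion is the one routine computation to include.
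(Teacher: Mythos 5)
Your proposal is correct and follows essentially the same argument as the paper: complete the square in each scalar block and observe that the resulting sum of squares cannot vanish when the current iterate is positive. Your identity $8a^2-8ab+3b^2=2(2a-b)^2+b^2$ is just twice the paper's decomposition $4a^2-4ab+\tfrac{3}{2}b^2=(2a-b)^2+\tfrac{1}{2}b^2$, so the two proofs coincide up to normalization.
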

\begin{proof}
The proposed natural-gradient update with step-size $\stepsize$ is given below.
\begin{align*}
\varpar^{(1)} &\leftarrow \varpar^{(1)}  - \stepsize \ngrad^{(1)} + \frac{\stepsize^2}{2} \left( \frac{3}{4\varpar^{(1)}}\right) \left( \ngrad^{(1)} \right)^2 \\
&= \frac{1}{4\varpar^{(1)}}\sqr{ 4\left( \varpar^{(1)} \right)^2 - 4 \stepsize \ngrad^{(1)} \varpar^{(1)} + \frac{3}{2}\left( \stepsize \ngrad^{(1)} \right)^2 } \\
&= \frac{1}{4\varpar^{(1)}}\sqr{ \underbrace{ \left( 2\varpar^{(1)}  -  \stepsize \ngrad^{(1)} \right)^2}_{\text{Term I}} + \underbrace{ \frac{1}{2}\left( \stepsize \ngrad^{(1)} \right)^2}_{\text{Term II}} } \\
\varpar^{(2)} &\leftarrow \varpar^{(2)}  - \stepsize \ngrad^{(2)} + \frac{\stepsize^2}{2}\left( \frac{1}{\varpar^{(2)}}\right) \left( \ngrad^{(2)} \right)^2 \\
&= \frac{1}{2\varpar^{(2)}}\sqr{ 2\left( \varpar^{(2)} \right)^2 - 2 \stepsize \ngrad^{(2)} \varpar^{(2)} + \left( \stepsize \ngrad^{(2)} \right)^2 } \\
&= \frac{1}{2\varpar^{(2)}}\sqr{ \left( \varpar^{(2)} \right)^2 + \left( \varpar^{(2)}-  \stepsize \ngrad^{(2)}  \right)^2  } 
\end{align*} 
Note that Term I and Term II cannot be both zero at the same time when $\varpar^{(1)}>0$.
A similar argument can be made for the update about $\varpar^{(2)}$.
Therefore, the proposed update satisfies the underlying constraint.
\end{proof}

Recall that $\ngrad$ are the natural-gradients, which can be computed as shown below.
\begin{align*}
\ngrad^{(1)} =  \frac{4   }{\varpar^{(2)}} \left( \varpar^{(1)} \right)^{3/2}\gradop{\varpar^{(1)}}{\elbofinal}, \,\,
\ngrad^{(2)} =  2\left( \varpar^{(2)} \right)^2 \gradop{\varpar^{(2)}}{\elbofinal}
\end{align*}
Using the chain rule, we know that
\begin{align*}
 \gradop{\varpar^{(1)}}{\elbofinal} = \frac{1}{ 2\beta} \gradop{\beta}{\elbofinal}  ,  \,\,\, \gradop{\varpar^{(2)}}{\elbofinal}= \gradop{\alpha}{\elbofinal}
\end{align*}
$\gradop{\alpha}{\elbofinal}$ and $\gradop{\beta}{\elbofinal}$ can be computed by the implicit reparameterization trick \citep{salimans2013fixed,figurnov2018implicit}
as $\gradop{\eta}{\elbofinal} \approx \sqr{\gradop{\eta}{\lat}}\sqr{\nabla_{\lat}{b(\lat)}}$, where $\eta=\{\alpha,\beta\}$, $\lat \sim q(\lat|\alpha,\beta)$ and $ b(\lat) :=\bar{\ell}(\lat)+\log q(\lat|\alpha,\beta)$

\subsection{Implicit reparameterization gradient}
Now, we discuss how to compute the gradients w.r.t. $\alpha$ and $\beta$ using the implicit reparameterization trick.
To use the implicit reparameterization trick, we have to compute the following term.
\begin{align*}
\gradop{\eta}{\lat} &= - \frac{ \gradop{\eta}{Q(\lat|\veta)} }{q(\lat|\veta)} \\
&= -\frac{ \gradop{\eta}{\sqr{ \Phi( \sqrt{\frac{\alpha}{\lat}} \left(\lat \beta-1 \right) ) + \exp(2\alpha\beta)\Phi(-\sqrt{\frac{\alpha}{\lat}} \left(\lat\beta+1\right) )} }}{\sqrt{\frac{1}{2\pi \lat^3}}   \exp \left( -\frac{\lat\alpha \beta^2}{2}  - \frac{\alpha}{2\lat}+  \frac{\log \alpha}{2} + \alpha\beta  \right)}
\end{align*} where $\veta=\{\alpha,\beta\}$, $Q(\lat|\veta)$ is the C.D.F. of the inverse Gaussian distribution, and $\Phi(x)=\int_{-\infty}^{x} \gauss(t|0,1) dt$ is the C.D.F. of the standard Gaussian distribution.
We use the following fact to simplify the above expression.
\begin{align*}
\delta(\lat,\alpha,\beta) :=\frac{\exp(2\alpha\beta)\Phi(-\sqrt{\frac{\alpha}{\lat}} \left(\lat\beta+1 \right) ) }{ \gauss( \sqrt{\frac{\alpha}{\lat}}\left(\lat\beta-1 \right)|0,1 ) } 
= \frac{ \Phi( -\sqrt{ \frac{\alpha}{\lat} } \left( \lat\beta+1\right) )  }{\gauss( -\sqrt{ \frac{\alpha}{\lat} } \left( \lat\beta+1\right)   |0,1) }
\end{align*} where $\delta(\lat,\alpha,\beta)$  is known as the Mills ratio of Gaussian distribution.
Using this fact, we can get the simplified expressions as follows.
\begin{align*}
\gradop{\alpha} {\lat} &= \frac{\lat}{\alpha} - 2\beta \lat^{3/2}\alpha^{-1/2}\delta(\lat,\alpha,\beta) \\
\gradop{\beta} {\lat} &=  - 2 \lat^{3/2}\alpha^{1/2}\delta(\lat,\alpha,\beta) 
\end{align*} where we compute  $\log (\delta(\lat,\alpha,\beta))$ for numerical stability since
the logarithm of Gaussian cumulative distribution function can be computed by using existing libraries, such as
the scipy.special.log\_ndtr() function.

In fact, we have closed-form expressions of gradients of the entropy term as shown below.
\begin{align*}
\Unmyexpect{q(\lat|\eta)}\sqr{-\log q(\lat|\veta)} &= \half \sqr{ -\log \alpha - 3\left( \log \beta + \exp(2\alpha\beta) E_1(2\alpha\beta)  \right) + 1 + \log(2\pi)}  \\
\gradop{\alpha}{\Unmyexpect{q(\lat|\eta)}\sqr{-\log q(\lat|\veta)}} &= \frac{1}{\alpha} - 3\beta \exp(2\alpha\beta) E_1(2\alpha\beta) \\
\gradop{\beta}{\Unmyexpect{q(\lat|\eta)}\sqr{-\log q(\lat|\veta)}} &= -3\alpha \exp(2\alpha\beta) E_1(2\alpha\beta) 
\end{align*} where $E_1(x):= \int_{x}^{\infty} \frac{e^{-t}}{t} dt $ is the exponential integral.
It is not numerical stable to compute the product $\exp(x) E_1(x)$ when $x>100$.
In this case, we can use the asymptotic expansion (see Eq 3 at \citet{tseng1998numerical}) for the exponential integral to approximate the product as shown below.
\begin{align*}
\exp(x) E_1(x) \approx \frac{1}{x}\sqr{ 1+\sum_{n=1}^{N} \frac{ (-1)^n n! }{x^n} } \,\,\, \text{ when } x>100,
\end{align*} where $N$ is an integer such as $N\leq x< N+1$.

\section{Mixture of Exponential Family Distributions}
\label{app:cef_rgvi}
Let's consider the following mixture of exponential family distributions $q(\vlat)=\int q(\vlat,\vmix) d\vmix$.
The joint distribution $q(\vlat,\vmix|\vvarpar)=q(\vmix|\vvarpar_\mix)q(\vlat|\vmix,\vvarpar_\lat)$ is known as the conditional exponential family (CEF) defined by  \citet{lin2019fast}.
\begin{align*}
q(\vmix|\vvarpar_\mix) &:= h_{\mix}(\vmix)\exp\sqr{ \myang{\vphi_\mix(\vmix), \vvarpar_\mix} - A_{\mix}(\vvarpar_\mix)} \\
q(\vlat|\vmix,\vvarpar_\lat) &:= h_{\lat}(\vmix,\vlat)\exp\sqr{ \myang{\vphi_\lat(\vmix,\vlat), \vvarpar_\lat} - A_{\lat}(\vvarpar_\lat,\vmix)}  
\end{align*} where $\vvarpar=\{\vvarpar_\lat,\vvarpar_\mix\}$.

We will use the joint Fisher information matrix(FIM) suggested by \citet{lin2019fast} as the metric $\vfim$ to derive our improved learning rule for mixture approximations.

\subsection{The Joint Fisher Information Matrix and the Christoffel Symbol}
\citet{lin2019fast} propose to use the FIM of the joint distribution $q(\vmix,\vlat|\vvarpar)$ , where they refer this FIM as the joint FIM.
The joint FIM and the corresponding Christoffel symbol of the first kind are defined as follows.
\begin{align*}
 \fim_{ab} &:= -\Unmyexpect{q(\mix,\lat|\varpar)}\sqr{ \crossop{a}{b}{ \log q(\vmix,\vlat|\vvarpar)} } \\
  \Gamma_{d,a b} & := \half \sqr{ \gradop{a}{\fim_{bd}} +\gradop{b}{\fim_{ad}} - \gradop{d}{\fim_{ab}} }
\end{align*} where we denote
$\gradop{a}{}=\gradop{\varpar^{a}}{}$ for notation simplicity.

Like the exponential family cases as shown in Eq. \eqref{eq:chris}, the  Christoffel symbol of the first kind can be computed as
\begin{align}
  \Gamma_{d,a b}
=   \half \Big[ & \Unmyexpect{q(\mix,\lat|\varpar)}\sqr{ \crossop{a}{b}{\log q(\vmix,\vlat|\vvarpar)} \gradop{d} {\log q(\vmix,\vlat|\vvarpar)} }    
-\Unmyexpect{q(\mix,\lat|\varpar)}\sqr{ \crossop{b }{d}{\log q(\vmix,\vlat|\vvarpar)} \gradop{a }{ \log q(\vmix,\vlat|\vvarpar)} } \nonumber \\
& -\Unmyexpect{q(\mix,\lat|\varpar)}\sqr{ \crossop{a }{d}{\log q(\vmix,\vlat|\vvarpar)} \gradop{b }{ \log q(\vmix,\vlat|\vvarpar)} } 
-\Unmyexpect{q(\mix,\lat|\varpar)}\sqr{ \crossrdop{a}{b}{d}{ \log q(\vmix,\vlat|\vvarpar)} } \Big] 
\label{eq:chris_cef}
\end{align}

\subsection{The BCN Parameterization}
\label{app:BCN_CEF}

Now, we show that how to simplify the computation of the Christoffel symbol by extending the BCN parameterization for this kind of mixtures.

To this end,  we first assume that $\vvarpar$ can be partitioned with $(m+n)$ blocks to satisfy Assumption 1 in the main text.
\begin{align*}
\vvarpar = \{ \underbrace{\vvarpar_\lat^{[1]}, \dots, \vvarpar_\lat^{[m]}}_{\boldsymbol{\varpar_\lat}}, \underbrace{\vvarpar_\mix^{[m+1]}, \dots, \vvarpar_\mix^{[m+n]}}_{\boldsymbol{\varpar_\mix}} \} 
\end{align*}

Then,  we  extend the definition of BC parameterization to the conditional exponential family, which is similar to Assumption 2 in the main text and a concrete example of Definition \ref{assump:BC_general} in Appendix \ref{app:block_riem_general}.

{\bf Assumption 2 [Block Coordinate Parameterization] :} \emph{
   A parameterization satisfied Assumption 1 is block coordinate (BC) if the {\color{red}joint FIM} under this parameterization is block-diagonal according to the block structure of the parameterization.
}

\citet{lin2019fast} show that, for any parameterization $\vvarpar=\{\vvarpar_\lat,\vvarpar_\mix\}$, the joint FIM has the following two blocks: $\vfim_\lat$ for block $\vvarpar_\lat$ and $\vfim_\mix$ for block $\vvarpar_\mix$ as given  below.
\begin{align*}
\vfim =\begin{bmatrix}
\vfim_\lat & \mathbf{0} \\
\mathbf{0} & \vfim_\mix
       \end{bmatrix}
\end{align*}

Assumption 2 implies that $\vfim_\mix$ and $\vfim_\lat$ are both block-diagonal according to the block structure of $\vvarpar_\mix$ and $\vvarpar_\lat$, respectively. The block diagonal structure is given below if $\vvarpar=\{ \vvarpar_\lat^{[1]}, \dots, \vvarpar_\lat^{[m]}, \vvarpar_\mix^{[m+1]}, \dots, \vvarpar_\mix^{[m+n]} \}$ is a BC parameterization.
\begin{align*}
 \vfim =\begin{bmatrix}
  \overbrace{  \begin{bmatrix}
 \vfim_\lat^{[1]}  & \dots & \mathbf{0}\\
 \vdots &   \ddots & \vdots \\
 \mathbf{0}  & \dots & \vfim_\lat^{[m]}\\
 \end{bmatrix}}^{\vfim_\lat} &\mathbf{0} \\
 \mathbf{0} &  \underbrace{
 \begin{bmatrix}
 \vfim_\mix^{[m+1]}  & \dots & \mathbf{0}\\
 \vdots &   \ddots & \vdots \\
 \mathbf{0}  & \dots & \vfim_\mix^{[m+n]}\\
 \end{bmatrix}}_{\vfim_\mix} 
                   \end{bmatrix}
\end{align*}

{\bf Assumption 3 [Block Natural Parameterization for the Conditional Exponential-Family] :}
   \emph{
      For a conditional exponential-family distribution $q(\vmix,\vlat|\vvarpar)=q(\vmix|\vvarpar_\mix)q(\vlat|\vmix,\vvarpar_\lat)$,
       a parameterization $\vvarpar=\{\vvarpar_\lat,\vvarpar_\mix\}$ has
      the following properties.
      \begin{itemize}
       \item  $\vvarpar_\mix$ is a BCN parameterization of the exponential family distribution $q(\vmix|\vvarpar_\mix)$ as defined  in the main text.
       \item $\vvarpar_\lat$ is a parameterization of $q(\vlat|\vmix,\vvarpar_\lat)$, where
there exist  function $\phi_{\lat_i}$ and $h_{\lat_i}$ for each block $\vvarpar_\lat^{[i]}$ such that conditioning on $\vmix$, $q(\vlat|\vmix,\vvarpar_\lat)$ can be re-expressed as a minimal conditional exponential family  distribution (see \citet{lin2019fast} for the definition of the minimality) given that the rest of blocks $\vvarpar_\lat^{[-i]}$  are known.
   \begin{align*}
       q(\vlat|\vmix,\vvarpar_\lat) \equiv h_{\lat_i}(\vmix,\vlat,\vvarpar_\lat^{[-i]}) \exp\big[ \myang{\vphi_{\lat_i}(\vmix,\vlat,\vvarpar_\lat^{[-i]}), \vvarpar_\lat^{[i]}}  - A_{\lat}(\vvarpar_\lat,\vmix) \big]
   \end{align*} 
      \end{itemize}
   }

We say $\vvarpar=\{\vvarpar_\lat,\vvarpar_\mix\}$ is a BCN parameterization for the mixture if it satisfies Assumption 1 to 3.

Many mixture approximations studied in \citet{lin2019fast} have a BCN parameterization.
For concrete examples, see Appendix \ref{app:mog_case} and  \ref{app:skewg_case}.

\subsection{Our Learning Rule for Mixture Approximations}
Now, we are ready to discuss the learning rule for mixture approximations.
Under a BC parameterization $\vvarpar=\{\vvarpar_\lat,\vvarpar_\mix\}$, our learning rule remains the same as shown below. 
\begin{align*}
\varpar^{c_i}  \leftarrow \varpar^{c_i} - \stepsize \ngrad^{c_i} \color{red}{ - \frac{\stepsize^2}{2} \Gamma_{\ a_ib_i}^{c_i} \ngrad^{a_i}\ngrad^{b_i} }
\end{align*} where block $i$ can be either a block of $\vvarpar_\mix$ or $\vvarpar_\lat$.

First, note that the sub-block matrix $\vfim_\mix$ of the joint FIM  is indeed the FIM of $q(\vmix|\vvarpar_\mix)$. Furthermore, $q(\vmix|\vvarpar_\mix)$ is an exponential family distribution.
If $\vvarpar=\{\vvarpar_\lat,\vvarpar_\mix\}$ is a BCN parameterization, it is easy to see that the computation of the Christoffel symbol for $\vvarpar_\mix$ is
 exactly the same as the exponential family cases as discussed in Appendix \ref{app:ef_rgvi}.
 
Furthermore, we can simplify the Christoffel symbol for $\vvarpar_\lat$ due to the following Theorem.
\begin{thm}
\label{thm:CEF_chris}
If $\vvarpar$ is a BCN parameterization of a conditional exponential family (CEF) with the joint FIM,
 natural gradient and 
the Christoffel symbol of the first kind for block $\vvarpar_\lat^{[i]}$  can be simplified as
 \begin{align*}
 \ngrad^{a_i} = \partial_{\varmean_{\lat_{a_i}}}\elbofinal  \,\,; \,\,  \Gamma_{d_i,a_ib_i} =  \half \Unmyexpect{q(\mix|\varpar_\mix)}\sqr{ \crossrdop{\varpar_{\lat^{a_i}}}{\varpar_{\lat^{b_i}}}{\varpar_{\lat^{d_i}}}{ A_{\lat}(\vvarpar_\lat,\vmix) } } 
\end{align*}
 where
 $\varpar_\lat^{a_i}$ is the $a$-th element of $\vvarpar_\lat^{[i]}$;
$\varmean_{\lat_{a_i}}$ denotes the $a$-th element of the block coordinate expectation parameter $\vvarmean_{\lat_{[i]}}=\Unmyexpect{q(\mix,\lat|\varpar)}\sqr{ \vphi_{\lat_i}(\vmix,\vlat,\vvarpar_\lat^{[-i]}) }= \Unmyexpect{q(\mix|\varpar_\mix)}\sqr{ \partial_{\varpar_\lat^{[i]}} A_{\lat}(\vvarpar_\lat,\vmix) }$.
\end{thm}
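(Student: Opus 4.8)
\emph{Proof sketch.} The plan is to follow the proof of Theorem~\ref{thm:EF_chris} in Appendix~\ref{app:proof_1} almost verbatim, replacing the single exponential family $q(\vanyvar|\vvarpar)$ by the joint distribution $q(\vmix,\vlat|\vvarpar)=q(\vmix|\vvarpar_\mix)q(\vlat|\vmix,\vvarpar_\lat)$ and the FIM by the joint FIM, and exploiting that a block $\vvarpar_\lat^{[i]}$ of $\vvarpar_\lat$ enters $\log q(\vmix,\vlat|\vvarpar)=\log q(\vmix|\vvarpar_\mix)+\log q(\vlat|\vmix,\vvarpar_\lat)$ only through the second summand and, by Assumption~3, only \emph{linearly} via $\vphi_{\lat_i}$. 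Writing $\gradop{a_i}{}:=\gradop{\varpar_\lat^{a_i}}{}$ for notational simplicity and conditioning on $\vmix$, the BCN structure gives $\gradop{a_i}{\log q(\vmix,\vlat|\vvarpar)}=\gradop{a_i}{\log q(\vlat|\vmix,\vvarpar_\lat)}=\phi_{\lat_i}^{a_i}(\vmix,\vlat,\vvarpar_\lat^{[-i]})-\gradop{a_i}{A_\lat(\vvarpar_\lat,\vmix)}$, hence $\crossop{a_i}{b_i}{\log q(\vmix,\vlat|\vvarpar)}=-\crossop{a_i}{b_i}{A_\lat(\vvarpar_\lat,\vmix)}$ and $\crossrdop{a_i}{b_i}{d_i}{\log q(\vmix,\vlat|\vvarpar)}=-\crossrdop{a_i}{b_i}{d_i}{A_\lat(\vvarpar_\lat,\vmix)}$, both of which are \emph{free of $\vlat$} (they depend on $\vmix$ only). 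Differentiating $\int q(\vlat|\vmix,\vvarpar_\lat)\,d\vlat=1$ in $\varpar_\lat^{a_i}$ yields the conditional score identity $\Unmyexpect{q(\lat|\mix,\varpar_\lat)}\sqr{\gradop{a_i}{\log q(\vlat|\vmix,\vvarpar_\lat)}}=0$; taking $\Unmyexpect{q(\mix|\varpar_\mix)}$ of it gives $\varmean_{\lat_{a_i}}=\Unmyexpect{q(\mix,\lat|\varpar)}\sqr{\phi_{\lat_i}^{a_i}(\vmix,\vlat,\vvarpar_\lat^{[-i]})}=\Unmyexpect{q(\mix|\varpar_\mix)}\sqr{\gradop{\varpar_\lat^{a_i}}{A_\lat(\vvarpar_\lat,\vmix)}}$, which is the claimed expression for the BC expectation parameter.

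Next I would substitute these two facts into the Christoffel formula~\eqref{eq:chris_cef} with $a=a_i$, $b=b_i$, $d=d_i$ all lying in block $i$ of $\vvarpar_\lat$. In each of the first three terms the second-derivative factor is $\vlat$-free, so it pulls outside the inner $\vlat$-integral and leaves a factor $\Unmyexpect{q(\lat|\mix,\varpar_\lat)}\sqr{\gradop{d_i}{\log q(\vlat|\vmix,\vvarpar_\lat)}}=0$ (respectively with $a_i$ or $b_i$); after averaging over $\vmix$ all three vanish. The fourth term equals $-\Unmyexpect{q(\mix,\lat|\varpar)}\sqr{\crossrdop{a_i}{b_i}{d_i}{\log q(\vmix,\vlat|\vvarpar)}}=\Unmyexpect{q(\mix|\varpar_\mix)}\sqr{\crossrdop{\varpar_\lat^{a_i}}{\varpar_\lat^{b_i}}{\varpar_\lat^{d_i}}{A_\lat(\vvarpar_\lat,\vmix)}}$, so $\Gamma_{d_i,a_ib_i}=\half\Unmyexpect{q(\mix|\varpar_\mix)}\sqr{\crossrdop{\varpar_\lat^{a_i}}{\varpar_\lat^{b_i}}{\varpar_\lat^{d_i}}{A_\lat(\vvarpar_\lat,\vmix)}}$, as stated. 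For the natural-gradient identity, Assumption~2 makes the joint FIM block-diagonal along the block structure, so Lemma~\ref{lemma:block_identity} applies with $\vfim$ the joint FIM; moreover $\fim_{a_ib_i}=-\Unmyexpect{q(\mix,\lat|\varpar)}\sqr{\crossop{a_i}{b_i}{\log q(\vmix,\vlat|\vvarpar)}}=\Unmyexpect{q(\mix|\varpar_\mix)}\sqr{\crossop{a_i}{b_i}{A_\lat(\vvarpar_\lat,\vmix)}}$, and since $q(\vmix|\vvarpar_\mix)$ does not depend on $\vvarpar_\lat^{[i]}$ we may differentiate under the expectation to obtain $\fim_{a_ib_i}=\gradop{\varpar_\lat^{b_i}}{\varmean_{\lat_{a_i}}}$; i.e.\ the block $\vfim_\lat^{[i]}$ is the Jacobian $\gradop{\varpar_\lat^{[i]}}{\vvarmean_{\lat_{[i]}}}$. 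Inverting it (see below), Lemma~\ref{lemma:block_identity} and the chain rule then give $\ngrad^{a_i}=\fim^{a_ib_i}g_{b_i}=\sqr{\gradop{\varmean_{\lat_{a_i}}}{\varpar_\lat^{b_i}}}\sqr{\gradop{\varpar_\lat^{b_i}}{\elbofinal}}=\gradop{\varmean_{\lat_{a_i}}}{\elbofinal}$.

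The step I expect to be the main obstacle is justifying that the \emph{expected} Fisher block $\vfim_\lat^{[i]}=\Unmyexpect{q(\mix|\varpar_\mix)}\sqr{\crossop{\varpar_\lat^{a_i}}{\varpar_\lat^{b_i}}{A_\lat(\vvarpar_\lat,\vmix)}}$ is positive definite, hence invertible. For each fixed $\vmix$ the conditional Hessian $\crossop{\varpar_\lat^{a_i}}{\varpar_\lat^{b_i}}{A_\lat(\vvarpar_\lat,\vmix)}$ is positive definite by minimality of the conditional exponential family (via the argument of Theorem~1 of \citet{lin2019fast}), and averaging over $q(\vmix|\vvarpar_\mix)$ --- a convex combination of positive-definite matrices --- preserves positive-definiteness; alongside this one must verify the routine regularity conditions (interchange of differentiation and integration, joint $C^3$-smoothness of $A_\lat$ in $\vvarpar_\lat$) that underpin the score identities and the differentiation-under-the-integral steps above, which hold for the mixture approximations of interest (e.g.\ MOG and skew-Gaussian) by the same reasoning as in \citet{johansen1979introduction,lin2019fast}.
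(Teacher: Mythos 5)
Your proposal is correct and follows essentially the same route as the paper's own proof in Appendix I.4: conditioning on $\vmix$, using the one-parameter EF structure of $q(\vlat|\vmix,\vvarpar_\lat)$ in block $i$ to make the second and third derivatives of the log-density $\vlat$-free, killing the first three terms of \eqref{eq:chris_cef} via the conditional score identity, and identifying $\vfim_\lat^{[i]}$ with the Jacobian $\partial_{\varpar_\lat^{[i]}}\vvarmean_{\lat_{[i]}}$ before applying Lemma \ref{lemma:block_identity} and the chain rule. The only cosmetic difference is that you spell out the positive-definiteness of the expected Fisher block as an average of positive-definite conditional Hessians, where the paper simply cites Theorem 2 of \citet{lin2019fast}.
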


\subsection{Proof of Theorem  \ref{thm:CEF_chris}}
\begin{proof}

We assume $\vvarpar_\lat=\{\vvarpar_\lat^{[1]},\cdots,\vvarpar_\lat^{[m]}\}$ is partitioned with $m$ blocks.

Since $\vvarpar$ is a BCN  parameterization,
conditioning on $\vmix$ and given $\vvarpar_\lat^{[-i]}$ and $\vvarpar_\mix$ are known,
we can re-express $q(\vlat|\vmix,\vvarpar_\lat)$ as
\begin{align*}
 q(\vlat|\vmix, \vvarpar_\lat) = h_{\lat_i}(\vlat,\vmix,\vvarpar_\lat^{[-i]})\exp\sqr{ \myang{\vphi_{\lat_i}(\vlat,\vmix,\vvarpar_\lat^{[-i]}), \vvarpar_\lat^{[i]}} - A_\lat(\vvarpar_\lat,\vmix)} 
\end{align*}
where  $q(\vlat|\vmix,\vvarpar_\lat)$ is also a one-parameter EF distribution conditioning on $\vvarpar_\lat^{[-i]}$ and $\vmix$. Similarly, we have the following results.
\begin{align*}
 \crossop{a_i}{b_i}{\log q(\vlat|\vmix,\vvarpar_\lat)} &= -\crossop{a_i}{b_i}{A_\lat(\vvarpar_\lat,\vmix)}\\
 \Unmyexpect{q(\lat|\mix,\varpar_\lat)}\sqr{ \gradop{a_i} {\log q(\vlat|\vmix,\vvarpar_\lat)}}  &= 0 
\end{align*}
where $\gradop{a_i}{}=\gradop{\varpar_\lat^{a_i}}{}$ is for notation simplicity.
Using the above identities, we have
\begin{align*}
 \Unmyexpect{q(\lat,\mix|\varpar)}\sqr{ \crossop{a_i}{b_i}{\log q(\vlat,\vmix|\vvarpar)} \gradop{d_i} {\log q(\vlat,\vmix|\vvarpar)} }
&= \Unmyexpect{q(\lat,\mix|\varpar)}\sqr{ \crossop{a_i}{b_i}{\log q(\vlat|\vmix,\vvarpar_\lat)} \gradop{d_i} {\log q(\vlat|\vmix,\vvarpar_\lat)} } \\
&= \Unmyexpect{q(\mix|\varpar_\mix)}\sqr{ \Unmyexpect{q(\lat|\mix,\varpar_\lat)}\sqr{ \crossop{a_i}{b_i}{\log q(\vlat|\vmix,\vvarpar_\lat)} \gradop{d_i} {\log q(\vlat|\vmix,\vvarpar_\lat)} } }\\
&= -\Unmyexpect{q(\mix|\varpar_\mix)}\Big[ \crossop{a_i}{b_i}{A_\lat(\vvarpar_\lat,\vmix)}
 \underbrace{\Unmyexpect{q(\lat|\mix,\varpar_\lat)}\sqr{ \gradop{d_i} {\log q(\vlat|\vmix,\vvarpar_\lat)}}}_{0} \Big]\\
&= 0 
\end{align*}

Therefore, by Eq. \eqref{eq:chris_cef}, we can simplify the Christoffel symbol for $\vvarpar_\lat^{[i]}$  as follows.
 \begin{align*}
  \Gamma_{d_i,a_i b_i}
=&   -\half \Unmyexpect{q(\lat,\mix|\varpar)}\sqr{ \crossrdop{a_i}{b_i}{d_i}{ \log q(\vlat,\vmix|\vvarpar)} } \\
=&   -\half \Unmyexpect{q(\mix|\varpar_\mix)}\sqr{ \crossrdop{a_i}{b_i}{d_i}{ \log q(\vanyvar|\vmix,\vvarpar_\lat)} } \\
=&   \half \Unmyexpect{q(\mix|\varpar_\mix)}\sqr{ \crossrdop{a_i}{b_i}{d_i}{ A_\lat(\vvarpar_\lat,\vmix) } } 
\end{align*} where we use $d_i$ to denote the $d$-th entry of block $\vvarpar_\lat^{[i]}$.

Likewise, let $\vvarmean_{\lat_{[i]}}=\Unmyexpect{q(\lat,\mix|\varpar)}\sqr{ \vphi_{\lat_i}(\vlat,\vmix,\vvarpar_\lat^{[-i]}) }$ denote the block coordinate expectation parameter.
We have
\begin{align*}
 0 = \Unmyexpect{q(\mix|\varpar_\mix)}\Big[ \underbrace{ \Unmyexpect{q(\lat|\mix,\varpar_\lat)}\sqr{ \gradop{a_i} {\log q(\vlat|\vmix,\vvarpar_\lat)}}}_{0}\Big] =
 \varmean_{\lat_{a_i}} - \Unmyexpect{q(\mix|\varpar_\mix)}\sqr{ \gradop{a_i}{ A_\lat(\vvarpar_\lat,\vmix)} } 
\end{align*} where $\varmean_{\lat_{a_i}} $ denotes the $a$-th element of  $\vvarmean_{\lat_{[i]}}$.

Therefore, we know that $ \varmean_{\lat_{a_i}} = \Unmyexpect{q(\mix|\varpar_\mix)}\sqr{ \gradop{a_i}{ A_\lat(\vvarpar_\lat,\vmix)} } $.

Recall that the sub-block of the joint FIM for $\vvarpar_\lat^{[i]}$ denoted by $\vfim_\lat^{[i]}$ can be computed as
\begin{align*}
F_{a_ib_i} & = - \Unmyexpect{q(\lat,\mix|\varpar)}\sqr{ \crossop{b_i}{a_i}{\log q(\vlat,\vmix|\vvarpar)} } \\
&=- \Unmyexpect{q(\lat,\mix|\varpar)}\sqr{ \crossop{b_i}{a_i}{\log q(\vlat|\vmix,\vvarpar_\lat)}} \\
&=- \Unmyexpect{q(\lat,\mix|\varpar)}\sqr{ - \crossop{b_i}{a_i}{A_\lat(\vvarpar_\lat,\vmix)}} \\
&= \Unmyexpect{q(\mix|\varpar_\mix)}\sqr{\crossop{b_i}{a_i}{A_\lat(\vvarpar_\lat,\vmix)}} \\
&= \gradop{b_i} \Unmyexpect{q(\mix|\varpar_\mix)}\sqr{\gradop{a_i}{A_\lat(\vvarpar_\lat,\vmix)}} \\
&= \gradop{b_i} \varmean_{\lat_{a_i}}
\end{align*} where we use the fact that $\vvarpar_\mix$ does not depend on $\varpar_\lat^{b_i} \in \vvarpar_\lat$  and
$\gradop{b_i}{}=\gradop{\varpar_\lat^{b_i}}{}$  to move from the fourth step to the fifth step.

Recall that when $\vvarpar$ is a BC parameterization, the joint FIM $\vfim$ is block-diagonal as shown below.
\begin{align*}
 \vfim =\begin{bmatrix}
  \overbrace{  \begin{bmatrix}
 \vfim_\lat^{[1]}  & \dots & \mathbf{0}\\
 \vdots &   \ddots & \vdots \\
 \mathbf{0}  & \dots & \vfim_\lat^{[m]}\\
 \end{bmatrix}}^{\vfim_\lat} &\mathbf{0} \\
 \mathbf{0} &  \underbrace{
 \begin{bmatrix}
 \vfim_\mix^{[m+1]}  & \dots & \mathbf{0}\\
 \vdots &   \ddots & \vdots \\
 \mathbf{0}  & \dots & \vfim_\mix^{[m+n]}\\
 \end{bmatrix}}_{\vfim_\mix} 
                   \end{bmatrix}
\end{align*}

If $\vfim_\lat^{[i]}$  is positive definite everywhere, we have
\begin{align*}
\fim^{a_ib_i} &= \gradop{\varmean_{\lat_{a_i}}} \varpar_\lat^{b_i}
\end{align*}

The above assumption is true if
given that  $\vvarpar_\lat^{[-i]}$ and $\vvarpar_\mix$ are known, $q(\vmix,\vlat|\vvarpar)$ is a one-parameter minimal CEF distribution (See Theorem 2 of \citet{lin2019fast}).

The above result implies that we can  compute natural gradients as follows.
\begin{align*}
 \ngrad^{a_i} = \fim^{a_i b_i} g_{b_i} = \sqr{\gradop{\varmean_{\lat_{a_i}}} \varpar_\lat^{b_i}}\sqr{ \partial_{\varpar_\lat^{b_i}} \elbofinal }  = \gradop{\varmean_{\lat_{a_i}}} \elbofinal
\end{align*} where $g_{b_i}=\partial_{\varpar_\lat^{b_i}} \elbofinal$.

\end{proof}

If we can interchange the differentiations and the integration, we can show,
by Theorem \ref{thm:CEF_chris},
 we have $\Gamma_{\, \, \, \, \, \, a_i b_i}^{c_i} = \half \crossrdop{\varmean_{\lat_{c_i}}}{\varpar_{\lat^{a_i}}}{\varpar_{\lat^{b_i}}} \Unmyexpect{q(\mix|\varpar_\mix)}\sqr{ { A_{\lat}(\vvarpar_\lat,\vmix) } } $ since $A_{\lat}(\vvarpar_\lat,\vmix)$ is $C^3$-smooth w.r.t. $\vvarpar_\lat^{[i]}$ .

\section{Example: Finite Mixture of Gaussians Approximation}
\label{app:mog_case}
We consider a K-mixture of Gaussians under this parameterization $\vvarpar=\{ \{ \vmu_c, \vS_c \}_{c=1}^{K}, \vvarpar_\mix \}$
\begin{align*}
q(\vlat|\vpi, \{\vmu_c,\vS_c\}_{c=1}^{K}) &  = \sum_{c=1}^{K} \pi_c \gauss(\vlat|\vmu_c,\vS_c)
\end{align*} where $\pi_c$ is the  mixing weight so that $\sum_{c=1}^{K}\pi_c=1$ , $\vS_c=\vSigma_c^{-1}$, $\vvarpar_\mix = \{ \log(\pi_c/\pi_K) \}_{c=1}^{K-1} $ and $\pi_K=1-\sum_{c=1}^{K-1}\pi_c$.
The  constraints are  $\vvarpar_\mix \in \mathbb{R}^{K-1}$, $ \vmu_c \in \mathbb{R}^d$, and $\vS_c \in \mathbb{S}^{d \times d}_{++}$.

Under this parameterization, the joint distribution can be expressed as below.
\begin{align*}
q(\vlat,\mix|\vvarpar) &= q(\mix|\vvarpar_\mix) q(\vlat|\mix, \{\vmu_c, \vS_c\}_{c=1}^{K}) \\
q(\mix|\vvarpar_\mix) &=  \exp( \sum_{c=1}^{K-1} \mathbb{I}(\mix=c) \varpar_{\mix_c} - A_\mix(\vvarpar_\mix) ) \\
q(\vlat|\mix, \{\vmu_c, \vS_c\}_{c=1}^{K}) &=
\exp\Big( \sum_{c=1}^{K} \mathbb{I}(\mix=c)\sqr{ -\half  \vlat^T\vS_c\vlat + \vlat^T\vS_c\vmu_c} - A_\lat(\{\vmu_c, \vS_c\}_{c=1}^{K},\mix)\Big) 
\end{align*} where 
$\mixCompA(\vmu_c, \vS_c)=\half\sqr{ \vmu_{c}^T\vS_{c}\vmu_{c} - \log \left|\vS_{c}/(2\pi)\right| }$,
$A_\lat(\{\vmu_c, \vS_c\}_{c=1}^{K},\mix)= \sum_{c=1}^{K} \mathbb{I}(w=c) \mixCompA(\vmu_c, \vS_c)$,
$\varpar_{\mix_c}=\log(\frac{\pi_c}{\pi_K})$,
$A_\mix(\vvarpar_\mix) = \log (1+ \sum_{c=1}^{K-1} \exp(\varpar_{\mix_c}) ) $.

\begin{lemma}
\label{lemma:mog_bc}
 The joint FIM is block diagonal under this parameterization.
\begin{align*}
\vfim = \begin{bmatrix}
 \begin{bmatrix} 
 \vfim_{\mu_1} & {\color{green}\mathbf{0}} \\
 {\color{green}\mathbf{0}} &   \vfim_{S_1}
\end{bmatrix}  &   \cdots & {\color{blue}\mathbf{0}} & {\color{red}\mathbf{0}} \\
 \vdots & \ddots & \vdots & \vdots \\
  {\color{blue} \mathbf{0}}  & \cdots & 
 \begin{bmatrix} 
 \vfim_{\mu_K} & {\color{green}\mathbf{0}} \\
{\color{green}\mathbf{0}} &   \vfim_{S_K}
\end{bmatrix} & {\color{red} \mathbf{0} } \\
\\
 {\color{red}  \mathbf{0} }& \cdots & {\color{red}\mathbf{0}} &  \vfim_\mix \\
                   \end{bmatrix}
\end{align*}
Therefore, this parameterization is a BC parameterization.
 
\end{lemma}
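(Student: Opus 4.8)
The plan is to exploit the additive separability of the joint log-density. Writing $\mixCompA(\vmu_c,\vS_c)$ for the per-component Gaussian log-partition function, the conditional exponential-family form stated above gives
\[
\log q(\vmix,\vlat|\vvarpar) = \log q(\vmix|\vvarpar_\mix) + \sum_{c=1}^{K}\mathbb{I}(\mix=c)\Big[ -\half\vlat^T\vS_c\vlat + \vlat^T\vS_c\vmu_c - \mixCompA(\vmu_c,\vS_c)\Big].
\]
Since the joint-FIM entry at global position $(a,b)$ is $\fim_{ab} = -\Unmyexpect{q(\mix,\lat|\varpar)}[\partial_a\partial_b \log q(\vmix,\vlat|\vvarpar)]$, I would inspect the mixed second derivatives for the three families of off-block index pairs that must vanish in the displayed structure: one index in $\vvarpar_\mix$ and one in some $\{\vmu_c,\vS_c\}$ (the red zeros), two indices in distinct components $c\neq c'$ (the blue zeros), and one index in $\vmu_c$ and one in $\vS_c$ of the same component (the green zeros).

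First I would dispatch the red and blue blocks, which vanish identically before any expectation is taken: $\log q(\vmix|\vvarpar_\mix)$ carries no $\vvarpar_\lat$-dependence, and the $c$-th summand depends only on $(\vmu_c,\vS_c)$, so $\partial_\theta\partial_{\theta'}\log q(\vmix,\vlat|\vvarpar) = 0$ whenever $\theta,\theta'$ lie in distinct components, or one of them lies in $\vvarpar_\mix$. This also recovers the coarse two-block split $\vfim = \mathrm{diag}(\vfim_\lat,\vfim_\mix)$ of \citet{lin2019fast} and refines $\vfim_\lat$ into its per-component blocks $\vfim_{\mu_c},\vfim_{S_c}$.

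Next I would handle the green blocks. Differentiating only the surviving summand $\mathbb{I}(\mix=c)[\cdot]$ — exactly as in the single-Gaussian computation of Appendix~\ref{app:gauss_case} — gives $\partial_{S_c^{jk}}\partial_{\mu_c^i}\log q(\vmix,\vlat|\vvarpar) = \mathbb{I}(\mix=c)\,\ve_i^T\vI_{jk}(\vlat-\vmu_c)$ with $\vI_{jk}$ the one-hot matrix. I would then take the expectation by the tower rule, conditioning on $\mix=c$ so that $\vlat\mid(\mix=c)\sim\gauss(\vlat|\vmu_c,\vS_c)$ and hence $\Unmyexpect{}[\vlat-\vmu_c\mid \mix=c]=\mathbf{0}$; this yields $\fim_{(S_c^{jk})(\mu_c^i)} = -\pi_c\,\ve_i^T\vI_{jk}\,\Unmyexpect{}[\vlat-\vmu_c\mid \mix=c] = 0$, and the same chain-rule argument used for the single Gaussian shows the cross block between $\vmu_c$ and $\vSigma_c$ also vanishes. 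Collecting the three cases produces the displayed block-diagonal form, and since it matches the partition $\{\{\vmu_c\}_c,\{\vS_c\}_c,\vvarpar_\mix\}$, the parameterization is BC in the sense of Assumption 2 (equivalently Definition~\ref{assump:BC_general}). I expect the only delicate point to be bookkeeping — carrying the indicators $\mathbb{I}(\mix=c)$ correctly through the differentiations and invoking the law of total expectation properly; everything else is forced by the separable structure of the joint log-density.
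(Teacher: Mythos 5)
Your proposal is correct and follows essentially the same route as the paper's proof: the same three-way case split (mixing-vs-component, distinct components, and mean-vs-precision within a component), with the first two families of cross terms vanishing identically from the additive separability of $\log q(\vmix,\vlat|\vvarpar)$ and the green blocks vanishing in expectation. The only cosmetic difference is that you evaluate $\Unmyexpect{q}\sqr{\mathbb{I}(\mix=c)(\vlat-\vmu_c)}=\mathbf{0}$ via the tower rule, while the paper uses the equivalent unconditional identities $\Unmyexpect{q}\sqr{\mathbb{I}(\mix=c)\vlat}=\pi_c\vmu_c$ and $\Unmyexpect{q}\sqr{\mathbb{I}(\mix=c)}=\pi_c$.
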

\begin{proof}
We will prove this lemma by showing that all cross terms are zeros.

Case 1: First, we will show that cross terms (shown in red) between $\vvarpar_\mix$ and $\vvarpar_\lat:=\{ \vmu_c, \vS_c \}_{c=1}^{K} $ are zeros.

Let's denote $\varpar_\mix^{i}$ be an element of $\vvarpar_\mix$ and $\varpar_\lat^{j}$ be an element of $\vvarpar_\lat$.
By the definition, each cross term in this case is defined as belows.
\begin{align*}
& - \Unmyexpect{q(\lat,\mix|\varpar)}\sqr{\crossop{\varpar_\mix^{i}}{\varpar_\lat^{j}}{\log q(\vlat,\mix|\vvarpar)} } 
= - \Unmyexpect{q(\lat,\mix|\varpar)}\sqr{\crossop{\varpar_\mix^{i}}{\varpar_\lat^{j}}{\big( \log q(\mix|\vvarpar_\mix) + \log q(\vlat|\mix, \vvarpar_\lat \big)} }  = 0
\end{align*}

Case 2: Next, we will show that cross terms between (shown in blue) any two Gaussian components are zeros.

Let's denote $\varpar_a^{i}$ be an element of $\{ \vmu_a, \vS_a \}$ and $\varpar_b^{j}$ be an element of $\{ \vmu_b, \vS_b \}$, where $a \neq b$.

By the definition, each cross term in this case is defined as belows.
\begin{align*}
 & - \Unmyexpect{q(\lat,\mix|\varpar)}\sqr{\crossop{\varpar_a^{i}}{\varpar_b^{j}}{\log q(\vlat,\mix|\vvarpar)} } \\
= & - \Unmyexpect{q(\lat,\mix|\varpar)}\sqr{\crossop{\varpar_a^{i}}{\varpar_b^{j}} {\big( \log q(\vlat|\mix,  \{\vmu_c, \vS_c\}_{c=1}^{K} \big)} } \\
= & - \Unmyexpect{q(\lat,\mix|\varpar)}\Big[\mathbb{I}(w=b)\gradop{\varpar_a^{i}}{ \underbrace{\Big( \gradop{\varpar_b^{j}} {\sqr{ -\half   \vlat^T\vS_b\vlat + \vlat^T\vS_b\vmu_b  - \mixCompA(\vmu_{b},\vS_{b}  )}} \Big) }_{u(\boldsymbol{\lat},\boldsymbol{\mu}_b,\boldsymbol{\Sigma}_b)} } \Big] = 0
\end{align*}
It is obvious that the above expression is 0 since $\gradop{\varpar_a^{i}}u(\vlat,\vmu_b,\vSigma_b)=0$  when $a \neq b$.

Case 3: Finally, we will show that for each component $a$, cross terms (shown in green) between $\vmu_a$ and $\vS_a$ are zeros.

Let's denote $\mu^{i}_a$ be the $i$-th element of $\vmu_a$ and $ S^{jk}_a$ be the element of $\vS_a$ at position $(j,k)$.
Furthermore, $\ve_i$ denotes an one-hot vector where all entries are zeros except the $i$-th entry with value 1, and $\vI_{jk}$ denotes an one-hot matrix where all entries are zeros except the entry at position $(j,k)$ with value 1.
By the definition, the cross term is defined as belows.
\begin{align*}
& - \Unmyexpect{q(\lat,\mix|\varpar)}\sqr{\crossop{\mu^{i}_a}{S^{jk}_a}{\log q(\vlat,\mix|\vvarpar)} } \\
=& - \Unmyexpect{q(\lat,\mix|\varpar)}\sqr{\crossop{\mu^{i}_a}{S^{jk}_a} {\big( \log q(\vlat|\mix,  \{\vmu_c, \vS_c\}_{c=1}^{K} \big)} } \\
=& - \Unmyexpect{q(\lat,\mix|\varpar)}\sqr{\crossop{\mu^{i}_a}{S^{jk}_a} { \Big(  \mathbb{I}(w=a)\sqr{ -\half   \vlat^T\vS_a\vlat + \vlat^T\vS_a\vmu_a  - \mixCompA(\vmu_{a},\vS_{a} )}\Big) } } \\
=& - \Unmyexpect{q(\lat,\mix|\varpar)}\sqr{  \mathbb{I}(w=a)\sqr{  \ve_i^T\vI_{jk} \vlat -\ve_i^T\vI_{jk}\vmu_a}  } \\
=& - \Unmyexpect{q(\lat,\mix|\varpar)}\sqr{   \mathbb{I}(w=a)  \ve_i^T\vI_{jk} \vlat    }  +  \Unmyexpect{q(\lat,\mix|\varpar)}\sqr{   \mathbb{I}(w=a)\ve_i^T\vI_{jk}\vmu_a} \\
=& - \pi_a \ve_i^T\vI_{jk}\vmu_a + \pi_a \ve_i^T\vI_{jk}\vmu_a = 0 
\end{align*} where we use the following fact in the last step.
\begin{align*}
\Unmyexpect{q(\lat,\mix|\varpar)}\sqr{  \mathbb{I}(w=a) \vlat }& =\pi_a \vmu_a\\
\Unmyexpect{q(\lat,\mix|\varpar)}\sqr{  \mathbb{I}(w=a)  } & =\pi_a
\end{align*}
\end{proof}

\begin{lemma}
 The parameterization $\vvarpar=\{ \{ \vmu_c, \vS_c \}_{c=1}^{K}, \vvarpar_\mix \}$ is a BCN parameterization.
 \end{lemma}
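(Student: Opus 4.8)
The plan is to verify that $\vvarpar=\{\{\vmu_c,\vS_c\}_{c=1}^K,\vvarpar_\mix\}$ satisfies the three conditions that define a BCN parameterization for a conditional exponential family (Assumptions~1--3, as extended to mixtures in Appendix~\ref{app:cef_rgvi}). Two of the three are essentially already available. Assumption~1 holds because every block is either unconstrained ($\vmu_c\in\real^d$ and $\vvarpar_\mix\in\real^{K-1}$) or positive-definite ($\vS_c\in\mathbb{S}^{d\times d}_{++}$), and the constraints are mutually exclusive. Assumption~2 is exactly the content of Lemma~\ref{lemma:mog_bc}, which already shows the joint FIM is block-diagonal along precisely this block structure. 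So the genuine work is checking Assumption~3, the block-natural structure.

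For Assumption~3 I would split the check into the mixing-weight part and the component part. For the mixing weights, $q(\mix|\vvarpar_\mix)$ is a categorical distribution written in its softmax/logit natural parameterization $\varpar_{\mix_c}=\log(\pi_c/\pi_K)$, which is a single-block minimal exponential family on the unconstrained set $\real^{K-1}$; hence $\vvarpar_\mix$ is trivially a BCN parameterization of $q(\mix|\vvarpar_\mix)$, fulfilling the first bullet of Assumption~3. For the component parameters I would, for each fixed $c$, condition on $\mix$ and rewrite $q(\vlat|\mix,\{\vmu_c,\vS_c\}_{c=1}^K)$ twice: once isolating $\vmu_c$, absorbing all terms not depending on $\vmu_c$ (including $-\tfrac12\vlat^T\vS_{c'}\vlat$ and the contributions of the other components $c'\neq c$) into a base measure $h_{\lat,\mu_c}(\vmix,\vlat,\vvarpar_\lat^{[-\mu_c]})$, so that the exponent reads $\langle \mathbb{I}(\mix=c)\,\vS_c\vlat,\ \vmu_c\rangle - A_\lat(\vvarpar_\lat,\mix)$; and once isolating $\vS_c$, with sufficient statistic $\mathbb{I}(\mix=c)\big(-\tfrac12\vlat\vlat^T+\vmu_c\vlat^T\big)$. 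These are exactly the one-Gaussian identities already displayed in the main text, now carrying the factor $\mathbb{I}(\mix=c)$ and a constant base measure coming from the remaining components.

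The last ingredient is minimality of each of these one-parameter conditional exponential families given the remaining blocks, which I would obtain from Theorem~2 of \citet{lin2019fast}: on the event $\{\mix=c\}$ the relevant sufficient statistic is affinely non-degenerate (this is the single-Gaussian minimality), and on $\{\mix\neq c\}$ the block in question does not enter the density at all, so no nontrivial affine relation among the sufficient statistics can hold almost surely. Assembling these facts — $\vvarpar_\mix$ is BCN for $q(\mix|\cdot)$, each $\vmu_c$ and each $\vS_c$ is a minimal conditional-EF block given the others, together with Assumptions~1--2 — yields by definition that $\vvarpar$ is a BCN parameterization for the mixture.

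I expect the main obstacle to be the bookkeeping inside Assumption~3: correctly partitioning $\log q(\vlat|\mix,\vvarpar_\lat)$ into an inner product that is \emph{linear} in the chosen block plus an admissible base measure (one allowed to depend on $\vmix$, $\vlat$, and $\vvarpar_\lat^{[-i]}$ but not on $\vvarpar_\lat^{[i]}$), and then establishing minimality of the resulting conditional EF uniformly in the conditioning value of $\mix$ by treating the cases $\mix=c$ and $\mix\neq c$ separately. Everything else is either routine algebra or a direct quotation of earlier results.
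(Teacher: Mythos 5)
Your proposal is correct and follows essentially the same route as the paper's proof: invoke Lemma~\ref{lemma:mog_bc} for the block-diagonal joint FIM, note that $\vvarpar_\mix$ is the natural (softmax) parameterization of the categorical factor, and exhibit the two conditional-EF re-expressions of $q(\vlat|\mix,\vvarpar_\lat)$ with sufficient statistics $\mathbb{I}(\mix=k)\vS_k\vlat$ and $\mathbb{I}(\mix=k)\big(-\tfrac12\vlat\vlat^T+\vmu_k\vlat^T\big)$. The only difference is that you additionally spell out the minimality check by splitting on $\{\mix=c\}$ versus $\{\mix\neq c\}$, a point the paper's proof leaves implicit by deferring to \citet{lin2019fast}.
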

 \begin{proof}
 Clearly, this parameterization satisfies Assumption 1 described in the main text.
 By Lemma \ref{lemma:mog_bc}, we know that this parameterization is a BC parameterization.
 Now, we will show that this parameterization also satisfies Assumption 3 in Appendix  \ref{app:BCN_CEF}.
 
 First note that $\vvarpar_\mix$ has only one block and it is the natural parameterization of exponential family distribution $q(\mix|\vvarpar_\mix)$, which implies that $\vvarpar_\mix$ is a BCN parameterization for  $q(\mix|\vvarpar_\mix)$.

 Note that given the rest blocks are known and conditioning on $\mix$, $q(\vlat|\mix,\vvarpar_\lat)$ can be re-expressed as follows in terms of block $\vmu_k$.
 \begin{align*}
&q( \vlat|\vmix,\vvarpar_\lat )
= \exp\Big( \sum_{c=1}^{K} \mathbb{I}(\mix=c)\sqr{ -\half  \vlat^T\vS_c\vlat + \vlat^T\vS_c\vmu_c} - A_\lat(\{\vmu_c, \vS_c\}_{c=1}^{K},\mix)\Big) \\
=&
\underbrace{\exp\Big( \sum_{c \neq k} \sqr{ \mathbb{I}(\mix=c)\sqr{ -\half  \vlat^T\vS_c\vlat + \vlat^T\vS_c\vmu_c} }  + \mathbb{I}(\mix=k)\sqr{ -\half  \vlat^T\vS_k\vlat } \Big)}_{  h_{\lat_{k_1}}(\mix,\bold{\lat},\bold{\varpar}_\lat^{[-k_1]}) } 
\exp \Big(  \myang{ \underbrace{ \mathbb{I}(\mix=k)\vS_k\vlat }_{\bold{\phi}_{\lat_{k_1}}(\mix,\bold{\lat},\bold{\varpar}_\lat^{[-k_1]})}, \underbrace{\vmu_k}_{\bold{\varpar}_\lat^{k_1}}}   - A_\lat(\{\vmu_c, \vS_c\}_{c=1}^{K},\mix) \Big) 
 \end{align*}
 
Similarly, for block $\vS_k$,  $q(\vlat|\vmix,\vvarpar_\lat)$ can be re-expressed as follows
\begin{align*}
&q( \vlat|\vmix,\vvarpar_\lat )\\
=&
\underbrace{\exp\Big( \sum_{c \neq k} \sqr{ \mathbb{I}(\mix=c)\sqr{ -\half  \vlat^T\vS_c\vlat + \vlat^T\vS_c\vmu_c} } \Big)}_{  h_{\lat_{k_2}}(\mix,\bold{\lat},\bold{\varpar}_\lat^{[-k_2]}) } 
\exp \Big(  \myang{ \underbrace{ \mathbb{I}(\mix=k) \sqr{ -\half  \vlat \vlat^T + \vmu_k\vlat^T  }    }_{\bold{\phi}_{\lat_{k_2}}(\mix,\bold{\lat},\bold{\varpar}_\lat^{[-k_2]})}, \underbrace{\vS_k}_{\bold{\varpar}_\lat^{k_2}}}   - A_\lat(\{\vmu_c, \vS_c\}_{c=1}^{K},\mix) \Big) 
\end{align*}

Since this parameterization satisfies Assumption 1 to 3,  this parameterization is a BCN parameterization.
 \end{proof}

We denote the Christoffel symbols of the first kind and the second kind for $\vmu_k$ as ${\Gamma_{a_{k_1},b_{k_1} c_{k_1}}}$ and ${\Gamma^{a_{k_1}}_{\ \ \ \ b_{k_1} c_{k_1}}}$  respectively.
\begin{lemma}
\label{lemma:mog_mean_chris}
For each component $k$, all entries of $\ {\Gamma^{a_{k_1}}_{\ \ \ \ b_{k_1} c_{k_1}}}$  for $\vmu_k$ are zeros.
\end{lemma}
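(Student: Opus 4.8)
The plan is to reuse, in the mixture setting, exactly the argument that proved Lemma \ref{lemma:gauss_gamma1} for a single Gaussian, but now routed through the conditional-exponential-family simplification of Theorem \ref{thm:CEF_chris}. The immediately preceding lemma establishes that $\vvarpar=\{\{\vmu_c,\vS_c\}_{c=1}^{K},\vvarpar_\mix\}$ is a BCN parameterization of the MOG joint distribution, and, conditioned on $\vmix$ with the other blocks fixed, $q(\vlat|\vmix,\vvarpar_\lat)$ is a minimal one-parameter conditional EF in the $\vmu_k$-block (Theorem 2 of \citet{lin2019fast}). Hence Theorem \ref{thm:CEF_chris} applies to the block $\vvarpar_\lat^{[k_1]}=\vmu_k$ and gives, for all local indices $a_{k_1},b_{k_1},d_{k_1}$,
\[
\Gamma_{d_{k_1},a_{k_1}b_{k_1}} = \tfrac12\,\Unmyexpect{q(\mix|\varpar_\mix)}\sqr{ \crossrdop{\mu_k^{a}}{\mu_k^{b}}{\mu_k^{d}}{A_\lat(\vvarpar_\lat,\vmix)} }.
\]

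First I would compute the inner third derivative and show it vanishes identically in $\vmix$. Using $A_\lat(\{\vmu_c,\vS_c\}_{c=1}^{K},\mix)=\sum_{c=1}^{K}\mathbb{I}(\mix=c)\,\mixCompA(\vmu_c,\vS_c)$ with $\mixCompA(\vmu_c,\vS_c)=\half[\vmu_c^T\vS_c\vmu_c-\log|\vS_c/(2\pi)|]$, only the $c=k$ summand depends on $\vmu_k$, so $\partial_{\mu_k^{a}}A_\lat=\mathbb{I}(\mix=k)\,\ve_a^T\vS_k\vmu_k$, which is affine in $\vmu_k$. Therefore $\partial_{\mu_k^{b}}\partial_{\mu_k^{a}}A_\lat=\mathbb{I}(\mix=k)\,\ve_a^T\vS_k\ve_b$ does not depend on $\vmu_k$, and $\partial_{\mu_k^{c}}\partial_{\mu_k^{b}}\partial_{\mu_k^{a}}A_\lat=0$. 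Since the integrand is identically zero (no differentiation--integration interchange is even needed), taking the expectation over $\mix$ yields $\Gamma_{d_{k_1},a_{k_1}b_{k_1}}=0$.

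Finally I would raise the index: because $\vfim_\lat^{[k_1]}$ is positive definite everywhere (again Theorem 2 of \citet{lin2019fast}), Lemma \ref{lemma:block_identity} gives $\Gamma^{c_{k_1}}_{\ a_{k_1}b_{k_1}}=\fim^{c_{k_1}d_{k_1}}\Gamma_{d_{k_1},a_{k_1}b_{k_1}}=0$, which is the claim. As an alternative route that avoids invoking Theorem \ref{thm:CEF_chris}, one can argue directly as in Lemma \ref{lemma:gauss_gamma1}, writing $\Gamma_{a_{k_1},b_{k_1}c_{k_1}}=\half\Unmyexpect{q(\lat,\mix|\varpar)}\sqr{\crossrdop{\mu_k^{b}}{\mu_k^{c}}{\mu_k^{a}}{A_\lat}}$ and using the same affine-in-$\vmu_k$ observation to kill the integrand; the $\mathbb{I}(\mix=k)$ factor only rescales by $\pi_k$ and is irrelevant. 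There is no real obstacle here beyond keeping the block/local-index bookkeeping straight and confirming that the hypotheses of Theorem \ref{thm:CEF_chris} and of the positive-definiteness statement are in force; the key computational fact is simply that $\partial_{\vmu_k}A_\lat$ is affine in $\vmu_k$.
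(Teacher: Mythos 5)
Your proposal is correct and follows essentially the same route as the paper's own proof: both reduce the Christoffel symbol of the first kind to $\half\,\Unmyexpect{q}\!\sqr{\partial^3_{\vmu_k}A_\lat}$ via the BCN/CEF simplification, observe that only the $\mathbb{I}(\mix=k)$ summand of $A_\lat$ depends on $\vmu_k$ and is quadratic in it (so the third derivative vanishes identically), and then conclude the symbols of the second kind vanish after contraction with the block-inverse FIM. The only cosmetic difference is that you make the index-raising step via Lemma \ref{lemma:block_identity} explicit, which the paper leaves implicit.
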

\begin{proof}
The proof is very similar to the proof of Lemma \ref{lemma:gauss_gamma1}.
We will prove this by showing that  all entries of  ${\Gamma_{a_{k_1},b_{k_1}c_{k_1}}}$ are zeros.
For notation simplicity, we use $\Gamma_{a,bc}$ to denote ${\Gamma_{a_{k_1},b_{k_1}c_{k_1}}}$.
Let $\mu_k^a$ denote the $a$-th element of $\vmu_k$.

The following expression holds for any valid $a$, $b$, and $c$.
\begin{align*}
 \Gamma_{a,bc}
 &= \half \Unmyexpect{q(\lat,\mix|\varpar)}\sqr{  \gradop{\mu_k^b} \gradop{\mu_k^c} \gradop{\mu_k^a} A_\lat(\{\vmu_j, \vS_j\}_{j=1}^{K}, \mix) } \\
 &= \half \Unmyexpect{q(\lat,\mix|\varpar)}\sqr{ \mathbb{I}(\mix=k) \gradop{\mu_k^b} \gradop{\mu_k^c} \gradop{\mu_k^a} \mixCompA(\vmu_k, \vS_k) } \\
 &= \half  \Unmyexpect{q(\lat,\mix|\varpar)}\sqr{ \mathbb{I}(\mix=k)  \gradop{\mu_k^b} \gradop{\mu_k^c} \left(\ve_a^T \vS_k \vmu_k  \right) } \\
 &=  \half \Unmyexpect{q(\lat,\mix|\varpar)}\Big[ \mathbb{I}(\mix=k) \underbrace{\gradop{\mu_k^b}\left(\ve_a^T \vS_k \ve_c \right)}_{0} \Big] = 0
\end{align*}
 where in the last step we use the fact that 
$\vS_k$, $\ve_a$, and $\ve_c$ do not depend on $\vmu_k$.

\end{proof}

Similarly, we denote the Christoffel symbols  of the second kind for $\mathrm{vec}(\vS_k)$ as ${\Gamma^{a_{k_2}}_{\ \ \ \ b_{k_2} c_{k_2}}}$.

\begin{lemma}
\label{lemma:mog_cov_chris}
For each component $k$, the additional term for $\vS_k$ is
$- \vngrad_k^{[2]}   \vS_k^{-1}  \vngrad_k^{[2]}$
\end{lemma}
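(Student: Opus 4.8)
The plan is to reduce the claim to the single-Gaussian computation already carried out in Lemma \ref{lemma:gauss_s_chris}. By Lemma \ref{lemma:mog_bc} the joint FIM is block diagonal, so $\vS_k$ is its own block $\vvarpar_\lat^{[k_2]}$, and by Theorem \ref{thm:CEF_chris} the Fisher sub-block and the first-kind Christoffel symbol for this block are expectations over $\mix$ of second and third derivatives of $A_\lat(\vvarpar_\lat,\vmix)$ in the entries of $\vS_k$. Since $A_\lat(\{\vmu_c,\vS_c\}_{c=1}^K,\mix)=\sum_{c=1}^K\mathbb{I}(\mix=c)\,\mixCompA(\vmu_c,\vS_c)$ and only the $c=k$ summand depends on $\vS_k$, applying $\Unmyexpect{q(\mix|\varpar_\mix)}\sqr{\cdot}$ simply multiplies the corresponding single-Gaussian quantities by $\pi_k$: writing $\mixCompA(\vmu_k,\vS_k)$ for the single-Gaussian log-partition in the precision parameterization of Appendix \ref{app:gauss_case}, I would obtain $\vfim^{[k_2]}=\pi_k\,\partial^2_{\vS_k}\mixCompA(\vmu_k,\vS_k)$ and $\Gamma_{d_{k_2},a_{k_2}b_{k_2}}=\tfrac{\pi_k}{2}\,\partial^3_{\vS_k}\mixCompA(\vmu_k,\vS_k)$, exactly $\pi_k$ times the single-Gaussian expressions used in Appendix \ref{app:gauss_case}.

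The key observation is then that $\pi_k$ cancels in the second-kind symbol: $\Gamma^{a_{k_2}}_{\ b_{k_2}c_{k_2}}=(\vfim^{[k_2]})^{-1}\,\Gamma_{\cdot,b_{k_2}c_{k_2}}$ equals exactly the second-kind Christoffel symbol of a single Gaussian in its precision parameterization (with mean $\vmu_k$ and precision $\vS_k$). Consequently, with $\vmu_k$ and all the other blocks frozen at their current values, the block-wise geodesic ODE \eqref{eq:rgd_ret} for $\vS_k$ coincides verbatim with the single-Gaussian precision geodesic ODE used in the proof of Lemma \ref{lemma:gauss_s_chris}, only with initial velocity $-\vngrad_k^{[2]}$. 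I would then invoke the same closed form $\mathrm{Mat}(\vR^{[k_2]}(\stepsize))=\vU_k\,\mathrm{Exp}(\stepsize\,\vU_k^{-1}\vngrad_k^{[2]}\vU_k^{-1})\,\vU_k$ with $\vU_k=\vS_k^{1/2}$, differentiate twice at $\stepsize=0$ using $\nabla_\stepsize^2\mathrm{Exp}(\stepsize\vX)\big|_{\stepsize=0}=\vX^2$ exactly as in Lemma \ref{lemma:gauss_s_chris}, and obtain $\mathrm{Mat}(\ddot{\vR}^{[k_2]}(0))=\vU_k(\vU_k^{-1}\vngrad_k^{[2]}\vU_k^{-1})^2\vU_k=\vngrad_k^{[2]}\vS_k^{-1}\vngrad_k^{[2]}$. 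Since $\mathrm{Mat}(\ddot{\vR}^{[k_2]}(0))=-\mathrm{Mat}(\Gamma^{a_{k_2}}_{\ b_{k_2}c_{k_2}}\ngrad^{b_{k_2}}\ngrad^{c_{k_2}})$, this yields the additional term $-\vngrad_k^{[2]}\vS_k^{-1}\vngrad_k^{[2]}$, as claimed.

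The main — and essentially only non-bookkeeping — obstacle is making the cancellation argument airtight: one must check cleanly that the joint-FIM sub-block, the first-kind Christoffel symbol, and hence the second-kind symbol for the $\vS_k$ block all pick up the same scalar $\pi_k$, so that the affine-invariant matrix-exponential geodesic on $\mathbb{S}^{d\times d}_{++}$ transfers over unchanged to the mixture setting. A minor companion point to record is that $\vngrad_k^{[2]}$ is symmetric — because $\vS_k$ ranges over symmetric matrices and the natural gradient of a scalar with respect to a symmetric-matrix argument is symmetric — which is what makes $\mathrm{Exp}$ and the factorization $\vU_k=\vS_k^{1/2}$ well defined, so that the chain of identities from the proof of Lemma \ref{lemma:gauss_s_chris} applies verbatim here.
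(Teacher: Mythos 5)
Your proposal is correct and follows essentially the same route as the paper: the paper's proof likewise shows that the first-kind Christoffel symbol and the joint-FIM sub-block for $\vS_k$ each acquire a factor of $\pi_k$ (via $\Unmyexpect{q(\lat,\mix|\varpar)}[\mathbb{I}(\mix=k)\,\cdot\,]=\pi_k(\cdot)$), that this factor cancels in the second-kind symbol so that $\Gamma^{a_{k_2}}_{\ \ b_{k_2}c_{k_2}}$ equals the single-Gaussian symbol with $\bar{\vS}=\vS_k$, and then transfers the matrix-exponential geodesic computation of Lemma \ref{lemma:gauss_s_chris} verbatim. Your additional remark on the symmetry of $\vngrad_k^{[2]}$ is a harmless bookkeeping point the paper leaves implicit.
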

\begin{proof}
Recall that, in the Gaussian case $\gauss(\bar{\vmu}, \bar{\vS})$, the additional term for $\bar{\vS}$ is
$ \mathrm{Mat}( {\mbox{$\bar{\Gamma}^{a_2}_{\ \ b_2 c_2}$}} \ngrad^{b_2} \ngrad^{c_2}) =  \vngrad^{[2]}  \bar{\vS}^{-1}  \vngrad^{[2]}$, where
$\bar{\Gamma}^{a_2}_{\ \ b_2 c_2}$
 denotes the Christoffel symbols of the second kind for $\mathrm{vec}(\bar{\vS})$.

To prove the statement, we will show that the Christoffel symbols of the second kind for $\mathrm{vec}(\vS_k)$  is exactly the same as the Gaussian case, when $\bar{\vS}=\vS_k$.
In other words, when $\bar{\vS}=\vS_k$, we will show
 $\Gamma^{a_{k_2}}_{\ \ \ \ b_{k_2}c_{k_2}} = \bar{\Gamma}^{a_{2}}_{\ \ \ \ b_{2}c_{2}}$.

We denote the Christoffel symbols of the second kind for $\mathrm{vec}(\vS_k)$ using ${\Gamma^{a_{k_2}}_{\ \ \ \ b_{k_2} c_{k_2}}}$.
By definition, the Christoffel symbols of the second kind for $\mathrm{vec}(\vS_k)$ is defined as follows since $\vvarpar$ is a BC parameterization.
\begin{align*}
 \Gamma^{a_{k_2}}_{\ \ \ \ b_{k_2}c_{k_2}}= \fim^{a_{k_2}d_{k_2}}  \Gamma_{d_{k_2},b_{k_2}c_{k_2}}
\end{align*}
We will first show that $\Gamma_{d_{k_2},b_{k_2}c_{k_2}} = \pi_k \bar{\Gamma}_{d_2,b_2 c_2}$.

In the Gaussian case, by definition, we have
\begin{align*}
 \bar{\Gamma}_{d_2,b_2 c_2}
 = \half \Unmyexpect{q(\lat|\bar{\varpar})}\sqr{  \gradop{\bar{S}^b} \gradop{\bar{S}^c} \gradop{\bar{S}^d} A(\bar{\vmu},\bar{\vS}) } 
 = -\frac{1}{4}  \gradop{\bar{S}^b} \gradop{\bar{S}^c}  \gradop{\bar{S}^d}  \left( \log\left| \bar{\vS} \right| \right) 
\end{align*} where 
$A(\bar{\vmu},\bar{\vS})=\half \sqr{ \bar{\vmu}^T\bar{\vS}\bar{\vmu} - \log \left|\bar{\vS}/(2\pi)\right|}$ is the log partition function of the Gaussian distribution and 
 $\bar{S}^{d}$ denotes the $d$-th element of $\mathrm{vec}(\bar{\vS})$ in the Gaussian case.
 
Therefore, we have  the following result in the MOG case when $\vS_k=\bar{\vS}$.
\begin{align*}
 \Gamma_{d_{k_2},b_{k_2}c_{k_2}}
 &= \half \Unmyexpect{q(\lat,\mix|\varpar)}\sqr{  \gradop{S_k^b} \gradop{S_k^c} \gradop{S_k^d} A_\lat(\{\vmu_j, \vS_j\}_{j=1}^{K}, \mix) } \\
 &= \half \Unmyexpect{q(\lat,\mix|\varpar)}\sqr{ \mathbb{I}(\mix=k) \gradop{S_k^b} \gradop{S_k^c} \gradop{S_k^d} \mixCompA(\vmu_k, \vS_k) } \\
 &= \half  \Unmyexpect{q(\lat,\mix|\varpar)}\sqr{ \mathbb{I}(\mix=k)  \gradop{S_k^b} \gradop{S_k^c}  \gradop{S_k^d}  \left( -\half \log\left| \vS_k/(2\pi) \right| \right) } \\
 &= -\frac{\pi_k}{4}  \gradop{S_k^b} \gradop{S_k^c}  \gradop{S_k^d}  \left( \log\left| \vS_k \right| \right) \\
 &=  \pi_k   \bar{\Gamma}_{d_2,b_2 c_2}
\end{align*}
where 
 $S_k^a$ denotes the $a$-th element of $\mathrm{vec}(\vS_k)$ and $\Unmyexpect{q(\lat,\mix|\varpar)}\sqr{  \mathbb{I}(w=k)  }  =\pi_k$.

Let $\fim_{a_{k_2} d_{k_2}}$  denote 
the element at position $(a,d)$ of the sub-block matrix of the joint FIM for block  $\mathrm{vec}(\vS_k)$ in the MOG case.
Similarly, when $\vS_k=\bar{\vS}$,  we can show that $\fim_{a_{k_2} d_{k_2}} = \pi_k \bar{\fim}_{a_2 d_2}$, where
$\bar{\fim}_{a_2 d_2}$
denotes the element at position $(a,d)$ of the sub-block matrix of the FIM for block $\mathrm{vec}(\bar{\vS})$ in the Gaussian case.

Therefore, $\fim^{a_{k_2} d_{k_2}} = \pi_k^{-1} \bar{\fim}^{a_2 d_2}$ when $\bar{\vS}=\vS_k$.

Finally, when $\bar{\vS}=\vS_k$, we obtain the desired result since
\begin{align*}
 \Gamma^{a_{k_2}}_{\ \ \ \ b_{k_2}c_{k_2}} &= \fim^{a_{k_2} d_{k_2}}  \Gamma_{d_{k_2},b_{k_2}c_{k_2}} 
 = \left( \pi_k^{-1} \bar{\fim}^{a_2 d_2} \right) \left( \pi_k \bar{\Gamma}_{d_2,b_2 c_2}\right) 
 = \bar{\fim}^{a_2 d_2}  \bar{\Gamma}_{d_2,b_2 c_2} 
= \bar{\Gamma}^{a_2}_{\ \ b_2c_2}
\end{align*}
where $\bar{\Gamma}^{a_2}_{\ \ b_2 c_2}$
denotes the 
Christoffel symbols of the second kind for $\mathrm{vec}(\bar{\vS})$ in the Gaussian case.

\end{proof}

\subsection{Natural Gradients}
\label{app:mog_ng}
Recall that $\elbofinal(\vvarpar) =\Unmyexpect{q(\lat|\varpar)}\sqr{ \ell(\data,\vlat) -\log p(\vlat) + \log q(\vlat|\vvarpar) }$, where $q(\vlat|\vvarpar)= \int  q(\vlat,\mix|\vvarpar) d\mix$.

\citet{lin2019fast} propose to use
the  importance sampling technique so that the number of Monte Carlo gradient evaluations is independent of the number of mixing components $K$.

Note that $\vvarpar_\mix$ is the natural parameter of exponential family distribution $q(\mix|\vvarpar_\mix)$, we can obtain the natural gradient by computing the gradient w.r.t. the mean parameter as shown by 
\citet{lin2019fast}.
\begin{align*}
\ngrad_{\mix}  = \gradop{ \pi }{\elbofinal}.
\end{align*} where $\pi_c:=\Unmyexpect{q(\mix)}\sqr{ \mathbb{I}(\mix=c) }$, $\gradop{ \pi_c }{\elbofinal}$ denotes the $c$-th element of $\gradop{ \pi }{\elbofinal}$, and
the gradient $\gradop{ \pi_c }{\elbofinal}$ can be computed as below as suggested by \citet{lin2019fast}.
\begin{align*}
 \gradop{ \pi_c }{\elbofinal} = \Unmyexpect{q(\lat)}{\sqr{ (\delta_c-\delta_K) b(\vlat) } }
\end{align*} where $b(\vlat):=\ell(\data,\vlat) -\log p(\vlat) + \log q(\vlat|\varpar)$,
and 
$\delta_c:=\gauss(\vlat|\vmu_c,\vS_c)/ \sum_{k=1}^{K}\pi_k \gauss(\vlat|\vmu_k,\vS_k)$.

Recall that $\vvarpar_\mix$ is unconstrained in this case, there is no need to compute the addition term for $\vvarpar_\mix$.

Now, we discuss how to compute the natural gradients $\{ \vngrad_{c}^{[1]}, \vngrad_{c}^{[2]}\}_{c=1}^{K}$.
Since $\{ \vmu_c, \vS_c\}_{c=1}^{K}$ are BCN parameters, we can obtain the natural gradients by computing gradients w.r.t. its BC expectation parameter due to Theorem  \ref{thm:CEF_chris}.

Given the rest of blocks are known, 
the BC expectation parameter for block $\vmu_k$ is
\begin{align*}
\vvarmean_{k_1} & = 
\Unmyexpect{q(\mix,\lat)}\sqr{ \mathbb{I}(\mix=k)\rnd{ \vS_k\vlat }} 
 = \pi_k \vS_k\vmu_k 
\end{align*}

In this case, we know that $\gradop{ \mu_k}{\elbofinal}= \pi_k\vS_k\gradop{\varmean_{k_1}}{\elbofinal}$.
Therefore, the natural gradient w.r.t. $\vmu_k$ is $\vngrad_{k}^{[1]}= \gradop{ \varmean_{k_1} }{\elbofinal}= \pi_k^{-1}\vS_k^{-1}\gradop{\mu_k}{\elbofinal}= \pi_k^{-1}\vSigma_k \gradop{\mu_k}{\elbofinal}$, where 
the gradient $\gradop{ \mu_k }{\elbofinal}$ can be computed as belows as suggested by \citet{lin2019fast}.
\begin{align*}
 \gradop{ \mu_k }{\elbofinal}= \Unmyexpect{q(\lat)}{\sqr{ \pi_k \delta_k \nabla_\lat b(\vlat) } }
\end{align*}

Likewise, given the rest of blocks are known, 
the BC expectation parameter for block $\vS_k$ is
\begin{align*}
\vvarmean_{k_2}  = 
 \Unmyexpect{q(\mix,\lat)}\sqr{ \mathbb{I}(\mix=k)\rnd{ -\half \vlat \vlat^T + \vmu_k\vlat^T } } 
 = \frac{\pi_k}{2} \left( \vmu_k\vmu_k^T - \vS_k^{-1} \right)
\end{align*}

Therefore,  the natural gradient w.r.t. $\vS_{k}$ is $\vngrad_k^{[2]}=\gradop{\varmean_{k_2}}{\elbofinal}=-\frac{2}{\pi_k}\gradop{S_k^{-1} }{f}=-\frac{2}{\pi_k} \gradop{\Sigma_k}{f}$, where
where the gradient $\gradop{ \Sigma_k }{f}$ can be computed as belows as suggested by \citet{lin2019fast}.
\begin{align*}
 \gradop{ \Sigma_k }{\elbofinal}= \half \Unmyexpect{q(\lat)}{\sqr{ \pi_k \delta_k \nabla_\lat^2 b(\vlat) } }
\end{align*}
Alternatively, we can use the re-parametrization trick to compute the gradient as below.
\begin{align*}
 \gradop{ \Sigma_k }{\elbofinal}= \half \Unmyexpect{q(\lat)}{\sqr{ \pi_k \delta_k \vS_k (\vlat-\vmu_k) \nabla_\lat^T b(\vlat) } }
\end{align*}

By Lemma 
\ref{lemma:mog_mean_chris} and
\ref{lemma:mog_cov_chris},
the proposed update induced by our rule is

\begin{align}
 \log(\pi_c/\pi_K) & \leftarrow \log(\pi_c/\pi_K) - \stepsize \Unmyexpect{q(\lat)}{\sqr{ (\delta_c-\delta_K) b(\vlat) } }\nonumber \\
\vmu_c & \leftarrow  \vmu_c - \stepsize \vS_c^{-1} \Unmyexpect{q(\lat)}{\sqr{ \delta_c \nabla_\lat b(\vlat) } } \color{red}{+ \mathbf{0} }\nonumber \\
\vS_c & \leftarrow  \vS_c - \stepsize \hat{\vG}_c \color{red}{+ \frac{\stepsize^2}{2}\hat{\vG}_c \left( \vS_c \right)^{-1} \hat{\vG}_c } \label{eq:mog_iblr}
\end{align} where we do not compute the additional term for $\vvarpar_\mix$ since $\vvarpar_\mix$ is unconstrained,
$\delta_c:=\gauss(\vlat|\vmu_c,\vS_c)/ \sum_{k=1}^{K}\pi_k \gauss(\vlat|\vmu_k,\vS_k)$,
$b(\vlat):=\ell(\data,\vlat) -\log p(\vlat) + \log q(\vlat|\vvarpar)$ and
$\hat{\vG}_c$ can be computed as below.

Note that $b(\vlat)$ can be the logarithm of an unnormalized target function as such $b(\vlat)=\bar{\ell}(\vlat) + \mathrm{Constant} + \log q(\vlat|\vvarpar) $. Recall that $\ell(\data,\vlat) -\log p(\vlat) = \bar{\ell}(\vlat) + \mathrm{Constant}$.
\citet{lin2019fast} suggest  using the Hessian trick to compute $\hat{\vG}_c$ as shown in \eqref{eq:mog_cov_hess}.
We can also use the re-parameterization trick to compute $\hat{\vG}_c$ as shown in \eqref{eq:mog_cov_rep}.
\begin{align}
 \hat{\vG}_c&=-\Unmyexpect{q(\lat)}{\sqr{ \delta_c \vS_c (\vlat-\vmu_c) \nabla_\lat^T b(\vlat) } } = -\Unmyexpect{q(\lat)}{\sqr{ \delta_c \vS_c (\vlat-\vmu_c) \nabla_\lat^T \bar{\ell}(\vlat) } } - \Unmyexpect{q(\lat)}{\sqr{ \delta_c \nabla_\lat^2 \log q(\vlat|\vvarpar) } }  \label{eq:mog_cov_rep} \\
 &=-\Unmyexpect{q(\lat)}{\sqr{ \delta_c \nabla_\lat^2 b(\vlat) } } = -\Unmyexpect{q(\lat)}{\sqr{ \delta_c \nabla_\lat^2 \bar{\ell}(\vlat) } } - \Unmyexpect{q(\lat)}{\sqr{ \delta_c \nabla_\lat^2 \log q(\vlat|\vvarpar) } } . \label{eq:mog_cov_hess}
\end{align}
We use the MC approximation to compute $\hat{\vG}_c$ as below.
\begin{align*}
 \hat{\vG}_c& \approx  - \delta_c \Big( \frac{\bar{\vS}_c +\bar{\vS}_c^T}{2} + \nabla_\lat^2  \log q(\vlat|\vvarpar) \Big)  & \text{ referred to as ``-rep'' } \\
 \hat{\vG}_c&\approx -\delta_c \Big( \nabla_\lat^2 \bar{\ell}(\vlat)  + \nabla_\lat^2  \log q(\vlat|\vvarpar) \Big)   &\text{ referred to as ``-hess'' }
\end{align*}  where
$\vlat  \sim q(\vlat|\vvarpar)$, 
$\bar{\vS}_c:= \vS_c (\vlat-\vmu_c) \nabla_\lat^T \bar{\ell}(\vlat) $
and $\nabla_\lat^2  \log q(\vlat|\vvarpar)$ can be manually coded or computed by Auto-Diff.

Recall that when $q(\vlat|\vvarpar)$ is Gaussian, $-\Unmyexpect{q(\lat)}\sqr{\nabla_\lat^2  \log q(\vlat|\vvarpar)} = \vSigma^{-1}$, which is positive definite.
VOGN is proposed to approximate $\Unmyexpect{q(\lat)}{\sqr{ \nabla_\lat^2 \bar{\ell}(\vlat) } } $ by a positive definite matrix when $q(\vlat|\vvarpar)$ is Gaussian.
In MOG cases, $-\Unmyexpect{q(\lat)}\sqr{\nabla_\lat^2  \log q(\vlat|\vvarpar)} $ is no longer a positive definite matrix.
VOGN does not guarantee that the update for $\vS_c$ stays in the constraint set.
Furthermore, directly approximating
$-\hat{\vG}_c$ by naively extending the idea of VOGN does not give a good posterior approximation. Unlike VOGN, our update satisfies the constraint without the loss of the approximation accuracy for both Gaussian and MOG cases.

\section{Example: Skew Gaussian Approximation}
\label{app:skewg_case}
We consider the skew Gaussian approximation proposed by \citet{lin2019fast}. The joint distribution is given below.
\begin{align*}
q(\vlat,\mix|\valpha,\vmu,\vSigma) & =  q(\vlat|\mix,\valpha,\vmu,\vSigma) \gauss(\mix|0,1) \\
q(\vlat|\mix,\valpha,\vmu,\vSigma) &= \gauss(\vlat|\vmu +  |\mix|\valpha, \vSigma) \\
&=  \exp(  \crl{ \mathrm{Tr} \left( -\half\vSigma^{-1} \vlat\vlat^T \right) + |\mix| \valpha^T \vSigma^{-1} \vlat + \vmu^T \vSigma^{-1} \vlat- \half(  (\vmu+|\mix|\valpha)^T \vSigma^{-1}(\vmu+|\mix|\valpha)  +   \log \left|2\pi\vSigma\right|  } )
\end{align*}

We consider the parameterization $\vvarpar=\{ \begin{bmatrix} \vmu \\ \valpha \end{bmatrix}, \vS\}$, where $\vS=\vSigma^{-1}$,
$\vvarpar^{[1]}=\begin{bmatrix} \vmu \\ \valpha \end{bmatrix}$, and $\vvarpar^{[2]}=\vS$. The open-set constraint is $\vvarpar\in \mathbb{R}^{2d} \times \mathbb{S}^{d \times d}_{++}$.
Under this parameterization, the distribution $q(\vlat|\mix)$ can be re-expressed as below.
\begin{align*}
q(\vlat|\mix, \vvarpar)
&=  \exp\crl{ \mathrm{Tr} \left( -\half\vS \vlat\vlat^T \right) + \vlat^T  \vS  \left(\vQ(\mix)\right)^T \vvarpar^{[1]} -  A_\lat(\vvarpar,\mix) } 
\end{align*} where $\vQ(\mix):=\begin{bmatrix} \vI_d \\ \left|\mix\right|\vI_d \end{bmatrix}$ is a $2d$-by-$d$ matrix and $A_\lat(\vvarpar,\mix) = \half\sqr{ \begin{bmatrix} \vmu^T & \valpha^T \end{bmatrix}  \vQ(\mix)\vS \left(\vQ(\mix)\right)^T  \begin{bmatrix} \vmu \\ \valpha \end{bmatrix} -   \log \left|\vS/(2\pi) \right| }$.

\begin{lemma}
\label{lemma:skewg_bc}
 The joint FIM is block diagonal with two blocks under this parameterization.
 \begin{align*}
 \vfim = \begin{bmatrix}
 \vfim^{[1]} & {\color{red} \mathbf{0}} \\
 {\color{red} \mathbf{0}} & \vfim^{[2]}
         \end{bmatrix}
 \end{align*}

 Therefore, this parameterization is a BC parameterization.
\end{lemma}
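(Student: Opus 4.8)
The plan is to follow the template already used for the multivariate Gaussian (the block-diagonality lemma in Appendix~\ref{app:gauss_case}) and for the mixture of Gaussians (Lemma~\ref{lemma:mog_bc}): since the two blocks are $\vvarpar^{[1]}=\begin{bmatrix}\vmu\\\valpha\end{bmatrix}$ and $\vvarpar^{[2]}=\vS$, it suffices to show that every cross entry of the joint FIM between these blocks vanishes, i.e.
\[
-\Unmyexpect{q(\lat,\mix|\varpar)}\sqr{\crossop{S^{jk}}{\varpar^{a_1}}{\log q(\vlat,\mix|\vvarpar)}}=0
\]
for every entry $\varpar^{a_1}$ of $\vvarpar^{[1]}$ and every entry $S^{jk}$ of $\vS$; block-diagonality then follows immediately (the internal structure of $\vfim^{[1]}$ coupling $\vmu$ and $\valpha$ need not be diagonal and is not claimed to be). Because $q(\vlat,\mix|\vvarpar)=q(\vlat|\mix,\vvarpar)\gauss(\mix|0,1)$ and $\gauss(\mix|0,1)$ is independent of $\vvarpar$, only $\log q(\vlat|\mix,\vvarpar)$ contributes, and I would evaluate the expectation iteratively: first the inner expectation over $q(\vlat|\mix,\vvarpar)$, then the outer expectation over $q(\mix)$.

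The first step is to differentiate
\[
\log q(\vlat|\mix,\vvarpar)=\mathrm{Tr}\rnd{-\tfrac12\vS\vlat\vlat^{T}}+\vlat^{T}\vS\rnd{\vQ(\mix)}^{T}\vvarpar^{[1]}-A_\lat(\vvarpar,\mix)
\]
with respect to $\varpar^{a_1}$. Using $\gradop{\varpar^{a_1}}{A_\lat}=\ve_a^{T}\vQ(\mix)\vS\rnd{\vQ(\mix)}^{T}\vvarpar^{[1]}$ (the $\log|\vS/(2\pi)|$ term does not depend on $\vvarpar^{[1]}$) together with the symmetry of $\vS$, this reduces to
\[
\gradop{\varpar^{a_1}}{\log q(\vlat|\mix,\vvarpar)}=\ve_a^{T}\vQ(\mix)\vS\rnd{\vlat-\rnd{\vQ(\mix)}^{T}\vvarpar^{[1]}},
\]
where $\ve_a$ is the $a$-th standard basis vector of $\real^{2d}$. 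Differentiating once more in $S^{jk}$, writing $\vI_{jk}$ for the one-hot matrix with a $1$ in position $(j,k)$, gives
\[
\crossop{S^{jk}}{\varpar^{a_1}}{\log q(\vlat|\mix,\vvarpar)}=\ve_a^{T}\vQ(\mix)\vI_{jk}\rnd{\vlat-\rnd{\vQ(\mix)}^{T}\vvarpar^{[1]}},
\]
which is affine in $\vlat$ with a coefficient matrix that does not depend on $\vlat$.

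The crucial identity is $\Unmyexpect{q(\lat|\mix,\varpar)}\sqr{\vlat}=\vmu+|\mix|\valpha=\rnd{\vQ(\mix)}^{T}\vvarpar^{[1]}$, so the conditional expectation of the last display equals $\ve_a^{T}\vQ(\mix)\vI_{jk}\cdot\mathbf{0}=0$ for every fixed $\mix$, and the outer expectation over $q(\mix)=\gauss(\mix|0,1)$ then also vanishes. Hence the cross block between $\vvarpar^{[1]}$ and $\vvarpar^{[2]}$ is $\mathbf{0}$, the joint FIM has the claimed two-block diagonal form, and $\vvarpar$ is a BC parameterization. I do not expect a genuine obstacle: the only care needed is in tracking transposes and in pairing the $2d$-dimensional basis vectors $\ve_a$ with the $2d\times d$ selector $\vQ(\mix)$, and the non-smooth factor $|\mix|$ causes no difficulty since we never differentiate with respect to $\mix$ — it is merely integrated against a fixed Gaussian.
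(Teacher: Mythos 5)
Your proposal is correct and follows essentially the same route as the paper: both reduce the claim to showing that every cross entry of the joint FIM between $\vvarpar^{[1]}$ and $\vS$ vanishes, compute the mixed second derivative of $\log q(\vlat|\mix,\vvarpar)$ (which is affine in $\vlat$), and then kill it via the iterated expectation together with the identity $\Unmyexpect{q(\lat|\mix,\varpar)}\sqr{\vlat}=\vmu+|\mix|\valpha=\rnd{\vQ(\mix)}^{T}\vvarpar^{[1]}$. The only cosmetic difference is the order of differentiation (you take $\partial_{\varpar^{a_1}}$ first, the paper takes $\partial_{S^{bc}}$ first), which does not change the argument.
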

\begin{proof}
We will prove this lemma by showing that all cross terms shown in red are zeros.

Let's denote $\varpar^{a_1}$ be the $a$-th element of $\vvarpar^{[1]}$ and $ S^{b c}$ be the element  of $\vS$ at position $(b,c)$.
Furthermore, $\ve_a$ denotes an one-hot vector where all entries are zeros except the $a$-th entry with value 1, and $\vI_{bc}$ denotes an one-hot matrix where all entries are zeros except the entry at position $(b,c)$ with value 1.

By definition, the cross term is defined as belows.
\begin{align*}
& - \Unmyexpect{q(\lat,\mix|\varpar)}\sqr{\crossop{\varpar^{a_1}}{S^{bc}}{\log q(\vlat,\mix|\vvarpar)} } \\
=& - \Unmyexpect{q(\lat,\mix|\varpar)}\sqr{ \vlat^T  \vI_{bc}  \left(\vQ(\mix)\right)^T \ve_a - \left( \vvarpar^{[1]} \right)^T \vQ(\mix)\vI_{bc} \left(\vQ(\mix)\right)^T  \ve_a } \\
=& - \Unmyexpect{q(\mix)}\sqr{ \Unmyexpect{q(\lat|\mix,\varpar)}\sqr{ \vlat^T  \vI_{bc}  \left(\vQ(\mix)\right)^T \ve_a - \left( \vvarpar^{[1]} \right)^T \vQ(\mix)\vI_{bc} \left(\vQ(\mix)\right)^T  \ve_a } } \\
=& - \Unmyexpect{q(\mix)}\sqr{ \Unmyexpect{q(\lat|\mix,\varpar)}\sqr{ \vlat^T  \vI_{bc}  \left(\vQ(\mix)\right)^T \ve_a} - \left( \vvarpar^{[1]} \right)^T \vQ(\mix)\vI_{bc} \left(\vQ(\mix)\right)^T  \ve_a  } \\
=& - \Unmyexpect{q(\mix)}\sqr{   \left( \vvarpar^{[1]}\right)^T \vQ(\mix) \vI_{bc}  \left(\vQ(\mix)\right)^T \ve_a - \left( \vvarpar^{[1]} \right)^T \vQ(\mix)\vI_{bc} \left(\vQ(\mix)\right)^T  \ve_a  } = 0 
\end{align*} where  we use the following expression in the last step.
\begin{align*}
\Unmyexpect{q(\lat|\mix,\varpar)} \sqr{ \vlat } =\left| \mix \right| \valpha + \vmu = \left(\vQ(\mix)\right)^T \vvarpar^{[1]}
\end{align*}
\end{proof}

Note that another parameterization $\{\vmu,\valpha,\vS\}$ is { \color{red} not} a BC parameterization since the joint FIM is not block-diagonal under this parameterization.

\begin{lemma}
 Parameterization $\vvarpar$ is a BCN parameterization.
\end{lemma}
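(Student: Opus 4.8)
The plan is to verify that the parameterization $\vvarpar=\{\vvarpar^{[1]},\vvarpar^{[2]}\}$ with $\vvarpar^{[1]}=[\vmu;\valpha]$ and $\vvarpar^{[2]}=\vS$ satisfies the three conditions defining a BCN parameterization for a conditional exponential family (Appendix \ref{app:BCN_CEF}). Assumption 1 is immediate: the constraint set is $\mathbb{R}^{2d}\times\mathbb{S}^{d\times d}_{++}$, so the first block is unconstrained, the second is positive-definite, and the two constraints are mutually exclusive. Assumption 2 is exactly Lemma \ref{lemma:skewg_bc}. Since the mixing law $\gauss(\mix|0,1)$ carries no free parameters, there is no $\vvarpar_\mix$ block, so the first bullet of Assumption 3 (that $\vvarpar_\mix$ be a BCN parameterization of $q(\mix|\vvarpar_\mix)$) is vacuous — equivalently $q(\mix|\vvarpar_\mix)$ is a fixed-parameter exponential family. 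The substantive work is therefore the second bullet of Assumption 3.

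For that I would start from the displayed conditional density
\[q(\vlat|\mix,\vvarpar)=\exp\Big\{\mathrm{Tr}\big(-\half\vS\vlat\vlat^T\big)+\vlat^T\vS\,(\vQ(\mix))^T\vvarpar^{[1]}-A_\lat(\vvarpar,\mix)\Big\}\]
and exhibit it as a one-block conditional exponential family in each block. Holding $\vvarpar^{[2]}=\vS$ fixed, write
\[q(\vlat|\mix,\vvarpar)=\underbrace{\exp\big(\mathrm{Tr}(-\half\vS\vlat\vlat^T)\big)}_{h_{\lat_1}(\mix,\vlat,\vvarpar^{[-1]})}\exp\Big\{\myang{\underbrace{\vQ(\mix)\vS\vlat}_{\vphi_{\lat_1}(\mix,\vlat,\vvarpar^{[-1]})},\,\vvarpar^{[1]}}-A_\lat(\vvarpar,\mix)\Big\},\]
so that $\vvarpar^{[1]}$ is the natural parameter. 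Holding $\vvarpar^{[1]}$ fixed and setting $\vr(\mix):=(\vQ(\mix))^T\vvarpar^{[1]}$, write
\[q(\vlat|\mix,\vvarpar)=\underbrace{1}_{h_{\lat_2}(\mix,\vlat,\vvarpar^{[-2]})}\exp\Big\{\myang{\underbrace{-\half\vlat\vlat^T+\vr(\mix)\vlat^T}_{\vphi_{\lat_2}(\mix,\vlat,\vvarpar^{[-2]})},\,\vS}-A_\lat(\vvarpar,\mix)\Big\},\]
using the symmetric-matrix inner product exactly as in the Gaussian case of Appendix \ref{app:gauss_case}, so that $\vS$ is the natural parameter.

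It then remains to confirm that both one-block representations are \emph{minimal} conditional exponential families in the sense of \citet{lin2019fast}; this follows from Theorem 2 of \citet{lin2019fast} (or by the same direct argument as in the Gaussian and MOG examples: the natural-parameter coordinates range over the stated open sets, and for each fixed $\mix$ the sufficient statistics are affinely independent functions of $\vlat$). Since Assumptions 1--3 all hold, $\vvarpar$ is a BCN parameterization by definition. I expect the only mildly delicate point to be the bookkeeping for the $\vS$-block sufficient statistic under the symmetric-matrix inner product and the affine-independence check — essentially identical to the Gaussian case — together with explicitly flagging that the parameter-free mixing component renders the $\vvarpar_\mix$ part of Assumption 3 vacuous.
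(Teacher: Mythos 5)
Your proposal is correct and follows essentially the same route as the paper's proof: verify Assumption 1 directly, invoke Lemma \ref{lemma:skewg_bc} for Assumption 2, and establish Assumption 3 by exhibiting the same two one-block exponential-family re-expressions with $\vphi_{1}=\vQ(\mix)\vS\vlat$ (base measure $\exp(\mathrm{Tr}(-\half\vS\vlat\vlat^T))$) and $\vphi_{2}=-\half\vlat\vlat^T+(\vQ(\mix))^T\vvarpar^{[1]}\vlat^T$ (base measure $1$). Your added remarks on the parameter-free mixing law and the minimality check are harmless elaborations of points the paper leaves implicit.
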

\begin{proof}
 Clearly, this parameterization satisfies Assumption 1 described in the main text.
 By Lemma \ref{lemma:skewg_bc}, we know that this parameterization is a BC parameterization.
 Now, we will show that this parameterization also satisfies Assumption 3 in Appendix  \ref{app:BCN_CEF}.
 
 Note that given the rest blocks are known and conditioning on $\mix$, $q(\vlat|\mix,\vvarpar)$ can be re-expressed as follows in terms of block $\vvarpar^{[1]}$.
 \begin{align*}
q(\vlat|\mix, \vvarpar)
&=  \exp\crl{ \mathrm{Tr} \left( -\half\vS \vlat\vlat^T \right) + \vlat^T  \vS  \left(\vQ(\mix)\right)^T \vvarpar^{[1]} -  A_\lat(\vvarpar,\mix) } \\
&= \underbrace{ \exp\crl{ \mathrm{Tr} \left( -\half\vS \vlat\vlat^T \right)} }_{  h_{1}(\mix,\bold{\lat},\bold{\varpar}^{[-1]}) }
 \exp \Big[   \myang{  \underbrace{    \vQ(\mix) \vS \vlat }_{ \bold{\phi}_{1}(\mix,\bold{\lat},\bold{\varpar}^{[-1]}) } , \vvarpar^{[1]} } -  A_\lat(\vvarpar,\mix) \Big] 
 \end{align*}

Similarly, for block $\vS$,  $q(\vlat|\vmix,\vvarpar)$ can be re-expressed as follows
\begin{align*}
q(\vlat|\mix, \vvarpar) 
&= \underbrace{ 1 }_{  h_{2}(\mix,\bold{\lat},\bold{\varpar}^{[-2]}) }
 \exp \Big[   \myang{  \underbrace{ -\half \vlat\vlat^T + \vlat \left( \vvarpar^{[1]}\right)^T \vQ(\mix)  }_{ \bold{\phi}_{2}(\mix,\bold{\lat},\bold{\varpar}^{[-2]}) } , \vS  } -  A_\lat(\vvarpar,\mix) \Big] 
\end{align*}

Since this parameterization satisfies Assumption 1 to 3,  this parameterization is a BCN parameterization.
\end{proof}

We denote the Christoffel symbols of the first kind and the second kind for $\vvarpar^{[1]}$ as ${\Gamma_{a_1,b_1 c_1}}$ and ${\Gamma^{a_1}_{\ \ \ \ b_1 c_1}}$  respectively.
\begin{lemma}
All entries of $\ {\Gamma^{a_1}_{\ \ \ \ b_1 c_1}}$  for $\vvarpar^{[1]}$ are zeros.
\end{lemma}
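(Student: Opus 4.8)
The plan is to mirror the proof of Lemma~\ref{lemma:gauss_gamma1} (Gaussian mean) and Lemma~\ref{lemma:mog_mean_chris} (mixture-of-Gaussians means). Since the preceding lemma establishes that $\vvarpar = \{\vvarpar^{[1]}, \vvarpar^{[2]}\}$ is a BCN parameterization of this skew Gaussian, which is a conditional exponential family, I would first invoke Theorem~\ref{thm:CEF_chris}, by which the Christoffel symbol of the first kind for the block $\vvarpar^{[1]}$ reduces to $\Gamma_{a_1, b_1 c_1} = \half \Unmyexpect{q(\mix)}\sqr{\crossrdop{\varpar^{a_1}}{\varpar^{b_1}}{\varpar^{c_1}}{A_\lat(\vvarpar, \mix)}}$, where $\varpar^{a_1}$ is the $a$-th entry of $\vvarpar^{[1]}$. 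So it suffices to show that every third partial derivative of $A_\lat(\vvarpar,\mix)$ in the entries of $\vvarpar^{[1]}$ vanishes.

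I would then plug in the form $A_\lat(\vvarpar, \mix) = \half\big[ (\vvarpar^{[1]})^T \vQ(\mix)\vS(\vQ(\mix))^T \vvarpar^{[1]} - \log|\vS/(2\pi)| \big]$ derived above, with $\vQ(\mix)$ the $2d$-by-$d$ matrix defined previously. The key point, exactly as in the two earlier lemmas, is that this is a \emph{quadratic} form in $\vvarpar^{[1]}$: the matrix $\vQ(\mix)\vS(\vQ(\mix))^T$ depends only on $\mix$ and on the other block $\vS$, never on $\vvarpar^{[1]}$, while the $\log\det$ term does not involve $\vvarpar^{[1]}$ at all. Hence $\crossrdop{\varpar^{a_1}}{\varpar^{b_1}}{\varpar^{c_1}}{A_\lat(\vvarpar,\mix)} = 0$ pointwise in $\mix$ for all local indices $a,b,c$, and averaging over $q(\mix) = \gauss(\mix|0,1)$ gives $\Gamma_{a_1, b_1 c_1} = 0$ for every entry.

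Finally, since Lemma~\ref{lemma:skewg_bc} shows $\vvarpar$ is a BC parameterization, Lemma~\ref{lemma:block_identity} gives $\Gamma^{a_1}_{\ \ b_1 c_1} = \fim^{a_1 d_1}\Gamma_{d_1, b_1 c_1} = 0$ for all entries, which is the claim; in particular the additional term for $\vvarpar^{[1]}$ in our rule is $\mathbf{0}$. I do not foresee a real obstacle: the only steps needing a line of justification are that $\vQ(\mix)$ carries no $\vvarpar^{[1]}$-dependence and that differentiation may be exchanged with the $\mix$-expectation (immediate, since $q(\mix)$ is parameter-free), both as routine as in the Gaussian and MOG cases.
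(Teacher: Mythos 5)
Your proposal is correct and follows essentially the same route as the paper: both reduce $\Gamma_{a_1,b_1c_1}$ to $\half$ times the expected third derivative of $A_\lat(\vvarpar,\mix)$ in the entries of $\vvarpar^{[1]}$ (via Theorem~\ref{thm:CEF_chris}), and both kill that derivative because $A_\lat$ is quadratic in $\vvarpar^{[1]}$ with coefficient matrix $\vQ(\mix)\vS(\vQ(\mix))^T$ independent of $\vvarpar^{[1]}$. The paper simply carries out the differentiation explicitly with one-hot vectors $\ve_a$, $\ve_c$ rather than appealing to the quadratic structure in one stroke, and leaves the final passage from first-kind to second-kind symbols implicit where you cite Lemma~\ref{lemma:block_identity}.
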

\begin{proof}
We will prove this by showing that  all entries of $\ {\Gamma^{a_1}_{\ \ b_1 c_1}}$  are zeros.
Let $\varpar^{a_1}$ denote the $a$-th element of $\vvarpar^{[1]}$.

The following expression holds for any valid $a$, $b$, and $c$.
\begin{align*}
 \Gamma_{a_1,b_1 c_1}
 &= \half \Unmyexpect{q(\lat,\mix|\varpar)}\sqr{  \gradop{\varpar^{b_1}} \gradop{\varpar^{c_1}} \gradop{\varpar^{a_1}} A_\lat(\vvarpar, \mix) } \\
 &= \half  \Unmyexpect{q(\lat,\mix|\varpar)}\sqr{  \gradop{\varpar^{b_1}} \gradop{\varpar^{c_1}} \left(\left(\ve_a\right)^T  \vQ(\mix)\vS \left(\vQ(\mix)\right)^T \vvarpar^{[1]}  \right) } \\
 &=  \half \Unmyexpect{q(\lat,\mix|\varpar)}\sqr{\gradop{\varpar^{b_1}} \left(\ve_a^T  \vQ(\mix)\vS \left(\vQ(\mix)\right)^T \ve_c \right) } = 0
\end{align*}
 where in the last step we use the fact that 
$\vS$ , $\vQ(\mix)$, $\ve_a$, and $\ve_c$ do not depend on $\vvarpar^{[1]}$.

\end{proof}

We denote the Christoffel symbols of the second kind for $\mathrm{vec}(\vS)$ as  ${\Gamma^{a_2}_{\ \ \ \ b_2 c_2}}$.

\begin{lemma}
The additional  term for $\vS$ is
$- \vngrad^{[2]}   \vS^{-1}  \vngrad^{[2]}$
\end{lemma}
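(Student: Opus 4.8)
The plan is to follow the route taken in the proof of Lemma~\ref{lemma:gauss_s_chris} for the ordinary Gaussian precision block, relying on a simple structural fact about the skew-Gaussian log-partition. Holding $\vvarpar^{[1]}$ fixed and conditioning on $\mix$, the density $q(\vlat|\mix,\vvarpar)$ is Gaussian with precision $\vS$ and mean $(\vQ(\mix))^T\vvarpar^{[1]} = \vmu + |\mix|\valpha$, so
\begin{align*}
A_\lat(\vvarpar,\mix) = \half\sqr{ \big( (\vQ(\mix))^T\vvarpar^{[1]} \big)^T \vS \big( (\vQ(\mix))^T\vvarpar^{[1]} \big) - \log\left|\vS/(2\pi)\right| }
\end{align*}
is \emph{linear} in $\vS$ apart from the $\mix$-independent term $-\half\log|\vS/(2\pi)|$. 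In other words, the skewness parameter $\valpha$ and the mixing variable $|\mix|$ enter $A_\lat$ only through the mean, which belongs to the other block $\vvarpar^{[1]}$; this is what lets the $\vS$-block geometry collapse onto the plain Gaussian one.

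Next I would apply Theorem~\ref{thm:CEF_chris} to the $\vS$-block: with $S^{a_2}$ denoting the $a$-th entry of $\mathrm{vec}(\vS)$, the block metric is $\fim_{a_2b_2} = \Unmyexpect{q(\mix)}\sqr{ \crossop{S^{a_2}}{S^{b_2}}{A_\lat(\vvarpar,\mix)} }$ and the first-kind Christoffel symbol is $\Gamma_{d_2,a_2b_2} = \half \Unmyexpect{q(\mix)}\sqr{ \crossrdop{S^{a_2}}{S^{b_2}}{S^{d_2}}{A_\lat(\vvarpar,\mix)} }$. The linear-in-$\vS$ piece drops out of the second derivative, and the surviving $-\half\log|\vS/(2\pi)|$ piece does not depend on $\mix$, so both $\vfim^{[2]}$ and the symbols $\Gamma_{d_2,a_2b_2}$ are exactly those already computed for the Gaussian precision block in Appendix~\ref{app:gauss_case}; hence so is the second-kind symbol $\Gamma^{a_2}_{\ \ b_2c_2}$. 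Because the induced block geometry coincides with the Gaussian one, the block-coordinate geodesic for $\vS$ (with $\vvarpar^{[1]}$ held fixed) has the closed form used in Lemma~\ref{lemma:gauss_s_chris},
\begin{align*}
\mathrm{Mat}(\vR^{[2]}(\stepsize)) = \vU\,\mathrm{Exp}\big( \stepsize\, \vU^{-1} \vngrad^{[2]} \vU^{-1} \big)\,\vU, \qquad \vU := \vS^{1/2}.
\end{align*}

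The last step is the same matrix-exponential differentiation as in Lemma~\ref{lemma:gauss_s_chris}: from $\nabla_\stepsize^2\mathrm{Exp}(\stepsize\vX)\big|_{\stepsize=0} = \vX^2$ one gets $\mathrm{Mat}(\ddot{\vR}^{[2]}(0)) = \vngrad^{[2]}\vS^{-1}\vngrad^{[2]}$, and since the additional term is $-\ddot{\vR}^{[2]}(0)$, i.e. $\mathrm{Mat}(\Gamma^{a_2}_{\ \ b_2c_2}\ngrad^{b_2}\ngrad^{c_2}) = -\mathrm{Mat}(\ddot{\vR}^{[2]}(0))$, this yields $-\vngrad^{[2]}\vS^{-1}\vngrad^{[2]}$, exactly as claimed and exactly mirroring Lemma~\ref{lemma:mog_cov_chris} for the mixture case. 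The only genuine verification point — the main obstacle — is the reduction of the second paragraph: carefully checking that differentiating $A_\lat(\vvarpar,\mix)$ twice in $\vS$ already annihilates all $\valpha$- and $|\mix|$-dependence, so that the $\vS$-block metric and Christoffel symbol are literally those of an ordinary Gaussian. Everything downstream is then the routine computation already carried out for the Gaussian precision block.
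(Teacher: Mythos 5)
Your proposal is correct and follows essentially the same route as the paper: both reduce the skew-Gaussian $\vS$-block to the plain Gaussian one by observing that $A_\lat(\vvarpar,\mix)$ is linear in $\vS$ except for the $\mix$-independent term $-\half\log|\vS/(2\pi)|$, so the block metric and Christoffel symbols (via Theorem~\ref{thm:CEF_chris}) coincide with those of the Gaussian precision block, after which the computation of Lemma~\ref{lemma:gauss_s_chris} applies verbatim. The only cosmetic difference is that you re-run the geodesic/matrix-exponential calculation while the paper simply cites the Gaussian result once the symbols are shown to agree.
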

\begin{proof}
Recall that, in the Gaussian case $\gauss(\bar{\vmu}, \bar{\vS})$, the additional term for $\bar{\vS}$ is
$ \mathrm{Mat}( {\mbox{$\bar{\Gamma}^{a_2}_{\ \ b_2 c_2}$}} \ngrad^{b_2} \ngrad^{c_2} ) =  \vngrad^{[2]} \bar{\vS}^{-1}  \vngrad^{[2]}$, where
${\mbox{$\bar{\Gamma}^{a_2}_{\ \ b_2 c_2}$}}$
 denotes the Christoffel symbols of the second kind for $\mathrm{vec}(\bar{\vS})$.

To prove the statement, we will show that the Christoffel symbols of the second kind for $\mathrm{vec}(\vS)$  is exactly the same as the Gaussian case, when $\bar{\vS}=\vS$.

We denote the Christoffel symbols of the second kind for $\mathrm{vec}(\vS)$ as ${\Gamma^{a_2}_{\ \ b_2 c_2}}$.
By definition, the Christoffel symbols of the second kind for $\mathrm{vec}(\vS)$ is defined as follows.
\begin{align*}
 \Gamma^{a_2}_{\ \ b_2 c_2}= \fim^{a_2 d_2}  \Gamma_{d_2,b_2 c_2}
\end{align*}

We will show that $\Gamma_{a_2,b_2 c_2} =  \bar{\Gamma}_{a_2,b_2 c_2}$.

In the Gaussian case, we have
\begin{align*}
 \bar{\Gamma}_{d_2,b_2 c_2}
 &= -\frac{1}{4}  \gradop{\bar{S}^{b}} \gradop{\bar{S}^{c}}  \gradop{\bar{S}^{d} }  \left( \log\left|\bar{\vS} \right| \right) 
\end{align*} where 
$A(\bar{\vmu},\bar{\vS})=\half \sqr{\bar{\vmu}^T\bar{\vS}\bar{\vmu} - \log \left|\bar{\vS}/(2\pi)\right|}$ is the log partition function of the Gaussian distribution and 
 $\bar{S}^{a}$ is the $a$-th element of $\mathrm{vec}(\bar{\vS})$ in the Gaussian case.

Therefore, we have  the following result  when $\bar{\vS}=\vS$.
\begin{align*}
 \Gamma_{d_2,b_2 c_2}
 = \half \Unmyexpect{q(\lat,\mix|\varpar)}\sqr{  \gradop{S^{b}} \gradop{S^{c}} \gradop{S^{d}} A_\lat(\vvarpar, \mix) } 
 =  -\frac{1}{4}  \gradop{S^{b}} \gradop{S^c} \gradop{S^d} \log\left| \vS \right|  
 = \bar{\Gamma}_{d_2,b_2 c_2}
\end{align*}
where 
 $S^{a}$ denotes the $a$-th element of $\mathrm{vec}(\vS)$.

Let $\fim_{a_2 d_2}$  denote
the element at position $(a,d)$ of the sub-block matrix of the joint FIM for $\mathrm{vec}(\vS)$.
Similarly, we can show that $\fim_{a_2 d_2} = \bar{\fim}_{a_2 d_2}$, where
$\bar{\fim}_{a_2 d_2}$
denotes the element at position $(a,d)$ of the FIM for  $\mathrm{vec}(\bar{\vS})$ in the Gaussian case.
Therefore, $\fim^{a_2 d_2} =  \bar{\fim}^{a_2 d_2}$.

Finally,
 when $\bar{\vS}=\vS$,
  we obtain the desired result since
\begin{align*}
 \Gamma^{a_2}_{\ \ b_2 c_2} &= \fim^{a_2 d_2}  \Gamma_{d_2,b_2 c_2} 
= \bar{\fim}^{a_2 d_2}  \bar{\Gamma}_{d_2,b_2 c_2}
= \bar{\Gamma}^{a_2}_{\ \ b_2 c_2}
\end{align*}
where $\bar{\Gamma}^{a_2}_{\ \ b_2 c_2}$
denotes  the 
Christoffel symbols of the second kind for $\mathrm{vec}(\bar{\vS})$ in the Gaussian case.
\end{proof}

Using these lemmas, the proposed update induced by our rule is
\begin{align*}
\begin{bmatrix} \vmu \\ \valpha \end{bmatrix} & \leftarrow  \begin{bmatrix} \vmu \\ \valpha \end{bmatrix} - \stepsize \vngrad^{[1]} \color{red}{ + \mathbf{0} }\\
\vS & \leftarrow  \vS - \stepsize \vngrad^{[2]} \color{red}{+ \frac{\stepsize^2}{2}\vngrad^{[2]} \vS^{-1} \vngrad^{[2]} } 
\end{align*} where $\vngrad^{[1]}$ and $\vngrad^{[2]}$ are natural gradients.

Similarly, it can be shown that the above update satisfies the underlying constraints.

\subsection{Natural Gradients}
Now, we discuss how to compute the natural gradients.
Since the parameterization is a BCN parameterization,
gradients w.r.t. BC expectation parameters are natural gradients for BCN parameters due to Theorem \ref{thm:CEF_chris}.

Recall that $\vvarpar^{[1]}=\begin{bmatrix} \vmu \\ \valpha \end{bmatrix}$.
Let $\vvarmean_{[1]}=\begin{bmatrix} \vvarmean_\mu \\ \vvarmean_\alpha \end{bmatrix}$ denote the BC expectation  parameter for $\vvarpar^{[1]}$.
Given 
$\vS$ is known, the BC expectation parameter is
\begin{align*}
\begin{bmatrix} \vvarmean_\mu \\ \vvarmean_\alpha \end{bmatrix} &= 
\Unmyexpect{q(\mix,\lat)}\sqr{ \vQ(\mix) \vS \vlat }  \\
&= \Unmyexpect{q(\mix)}\sqr{ \vQ(\mix) \vS \left( \vQ(\mix) \right)^{T} \vvarpar^{[1]} }  \\
&= \Unmyexpect{q(\mix)}\sqr{ \begin{bmatrix} \vS & \left|\mix\right| \vS \\ \left|\mix\right| \vS & \mix^2 \vS\end{bmatrix} \begin{bmatrix} \vmu \\ \valpha \end{bmatrix} }  \\
&=\begin{bmatrix} \vS & c \vS \\ c \vS &  \vS \end{bmatrix} \begin{bmatrix} \vmu \\ \valpha \end{bmatrix} \\
&= \begin{bmatrix} \vS \vmu + c \vS \valpha  \\  c \vS \vmu +  \vS \valpha  \end{bmatrix}
\end{align*} where $c=\Unmyexpect{q(\mix)}\sqr{ \left| \mix\right| }=\sqrt{\frac{2}{\pi}}$.

Since $\vS=\vSigma^{-1}$, we have the following expressions.
\begin{align*}
\vmu = \frac{1}{1-c^2} \vSigma\rnd{ \vvarmean_\mu - c \vvarmean_\alpha }, \quad 
\valpha = \frac{1}{1-c^2} \vSigma\rnd{ \vvarmean_\alpha - c \vvarmean_\mu } 
\end{align*}
By the chain rule, we have
\begin{align*}
 \gradop{ \varmean_\mu }{\elbofinal} = \vSigma\rnd{ \frac{1}{1-c^2}\gradop{\mu}{\elbofinal} - \frac{c}{1-c^2} \gradop{\alpha}{\elbofinal} }, \quad
 \gradop{ \varmean_\alpha }{\elbofinal} = \vSigma\rnd{ \frac{1}{1-c^2}\gradop{\alpha}{\elbofinal} - \frac{c}{1-c^2} \gradop{\mu}{\elbofinal} } 
\end{align*}
Therefore, the natural gradient w.r.t. $\vvarpar^{[1]}=\begin{bmatrix} \vmu \\ \valpha \end{bmatrix}$ is
$\vngrad^{[1]}=
\begin{bmatrix}
 \gradop{ \varmean_\mu }{\elbofinal} \\
 \gradop{ \varmean_\alpha }{\elbofinal}
\end{bmatrix} $
where 
the gradient $\gradop{ \mu }{\elbofinal}$ and $\gradop{ \alpha }{\elbofinal}$ can be computed as suggested by \citet{lin2019fast}.

Likewise, 
 the BC expectation  parameter for block $\vS$ is
\begin{align*}
\vvarmean_{[2]}  = 
\Unmyexpect{q(\mix,\lat)} \sqr{ -\half \vlat \vlat^T + \vlat \left(\vvarpar^{[1]}\right)^T  \vQ(\mix)  }
=  -\half \vS^{-1} + \Unmyexpect{q(\mix)} \sqr{  \half \left( \vQ(\mix) \right)^T \vvarpar^{[1]} \left( \vvarpar^{[1]}\right)^T \vQ(\mix)    }
\end{align*}
Since
 $\vvarpar^{[1]}$ is known, $\Unmyexpect{q(\mix)} \sqr{  \half \left( \vQ(\mix) \right)^T \vvarpar^{[1]} \left( \vvarpar^{[1]}\right)^T \vQ(\mix) }$ does not depend on $\vS$.
Therefore,  the natural gradient w.r.t. $\vS$ is $\vngrad^{[2]}=\gradop{\varmean_{[2]}}{\elbofinal}=-2\gradop{ S^{-1} }{\elbofinal}=-2 \gradop{\Sigma}{\elbofinal}$,
where we compute  the gradient $\gradop{ \Sigma }{\elbofinal}$ as suggested by \citet{lin2019fast}.

\section{More Results}
\label{app:more_plots}

\vspace{-0.25cm}
\begin{figure}[H]
	\centering
	\hspace*{-1.5cm}
\subfigure[]{
\label{fig:ma}
  \includegraphics[width=0.34\linewidth]{./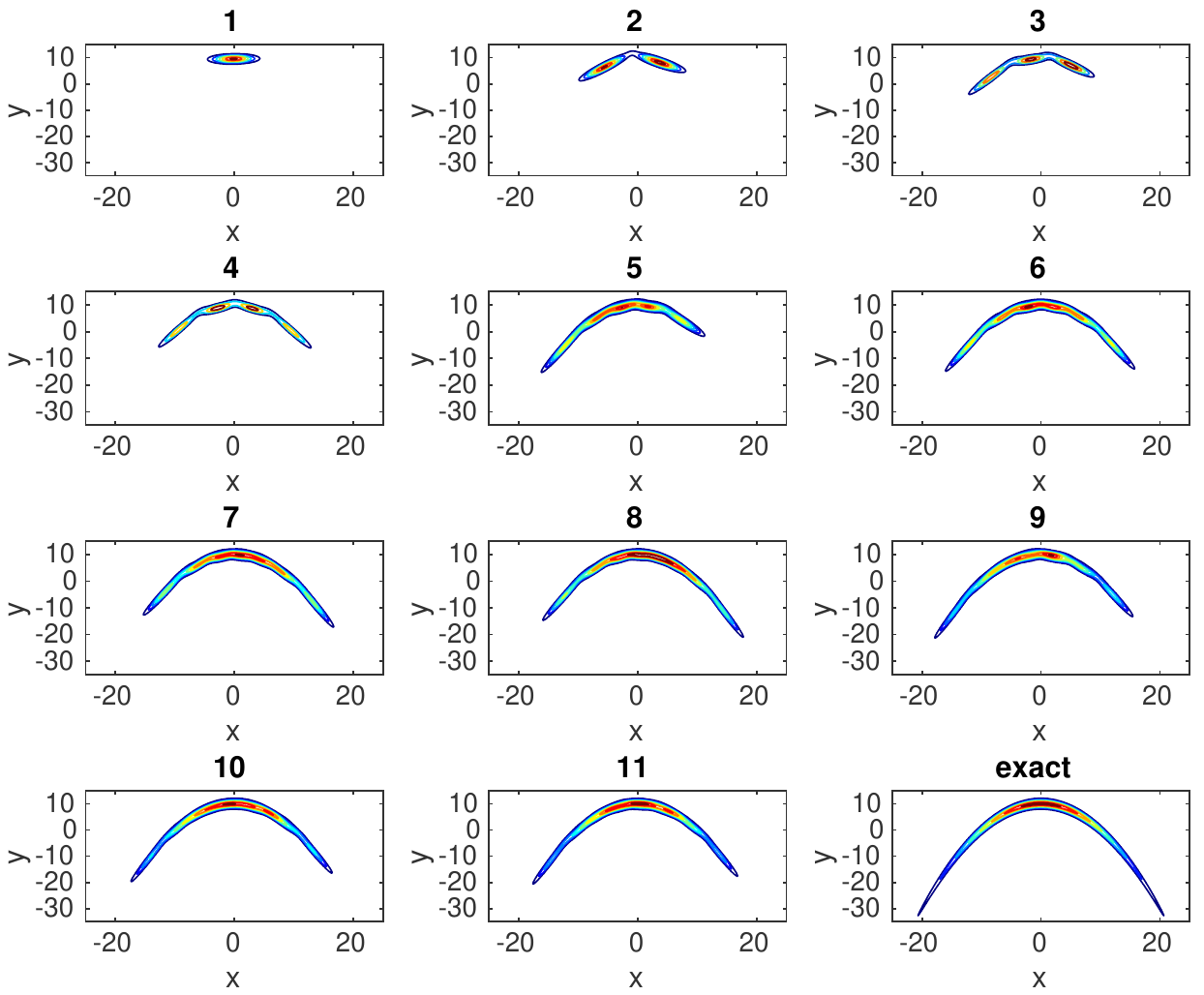}
  }
	\hspace*{-0.2cm}
\subfigure[]{
\label{fig:mb}
  \includegraphics[width=0.34\linewidth]{./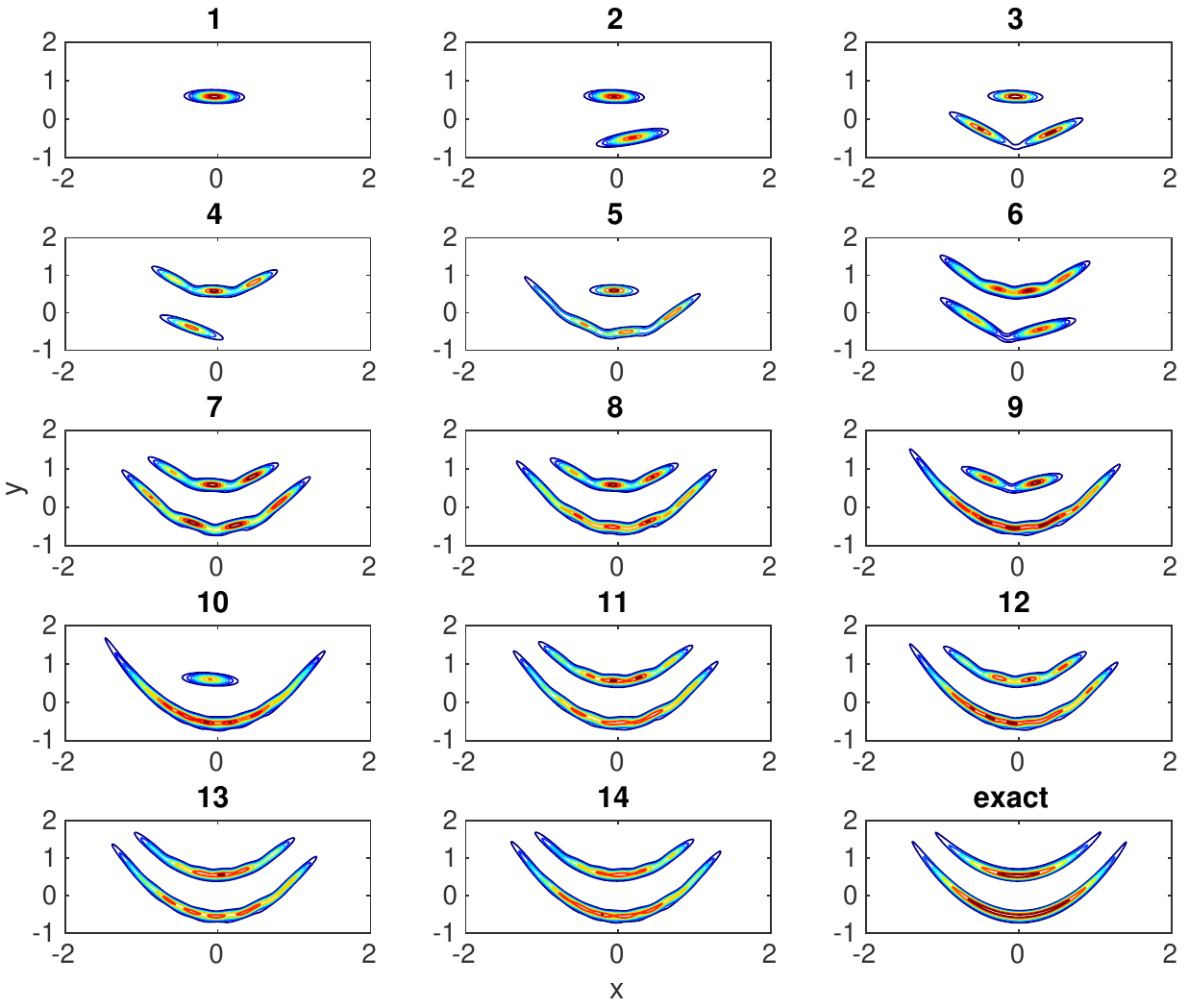}
  }
	\hspace*{-0.2cm}
\subfigure[]{
\label{fig:mc}
  \includegraphics[width=0.34\linewidth]{./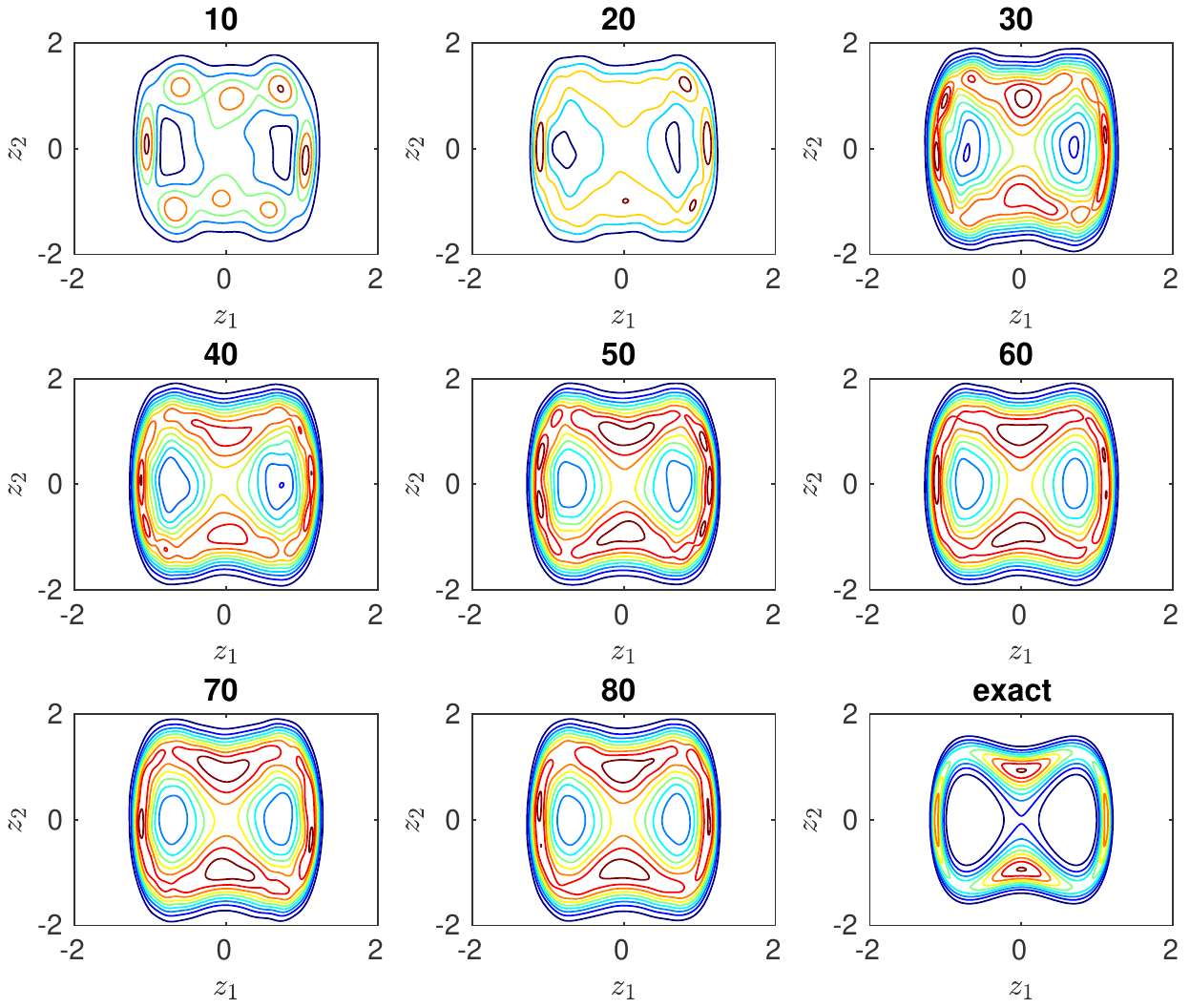}
  }
	\hspace*{-1.5cm}
\vspace{-0.5cm}
   \caption{ 
   The leftmost figure is MOG approximations for the banana distribution mentioned at Section \ref{sec:syn_examples}, where the number indicates the number of components used in the approximations.
  The middle figure is a complete version of MOG approximations for the double banana distribution (the rightmost plot in Figure \ref{figure:toy_examples}), where the number indicates the number of components used in the approximations.
   The rightmost figure is MOG approximations for the  posterior $p(\vlat|y=1)$ of a BNN with a Gaussian prior $p(\vlat)=\gauss(\vlat|\mathbf{0},\vI)$ and a NN likelihood $p(y|\vlat)=\gauss(y| 3\lat_1^2 (\lat_1^2-1) +\lat_2^2 , 0.5^2)$, where the number indicates the number of components used in the approximations.
   }
	\label{figure:banana_plots}
\end{figure}
\vspace{-0.85cm}

\begin{figure}[H]
	\centering
	\hspace*{-1.5cm}
\subfigure[]{
\label{fig:mma}
  \includegraphics[width=0.34\linewidth]{./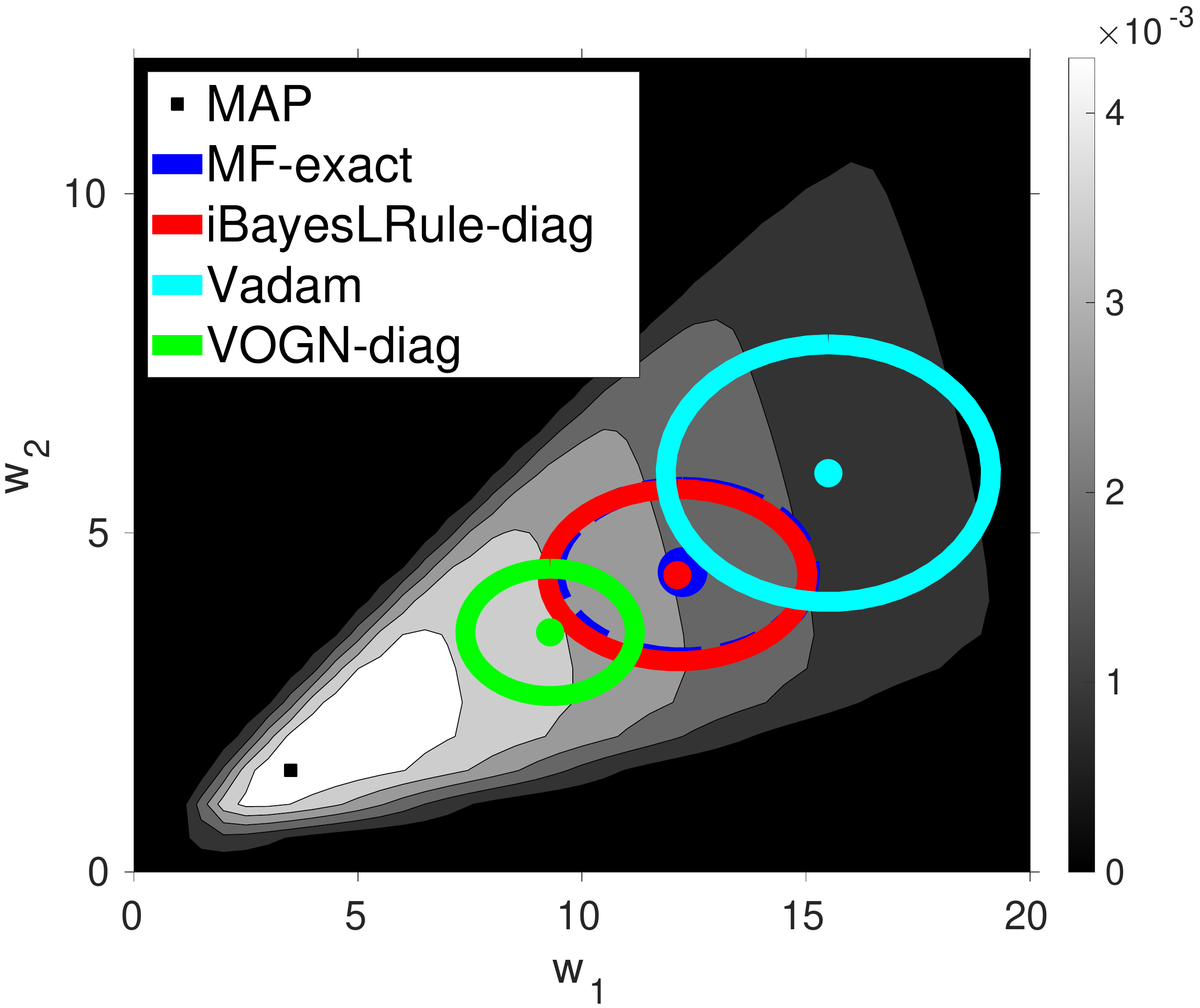}
  }
	\hspace*{0.5cm}
\subfigure[]{
\label{fig:mmb}
  \includegraphics[width=0.34\linewidth]{./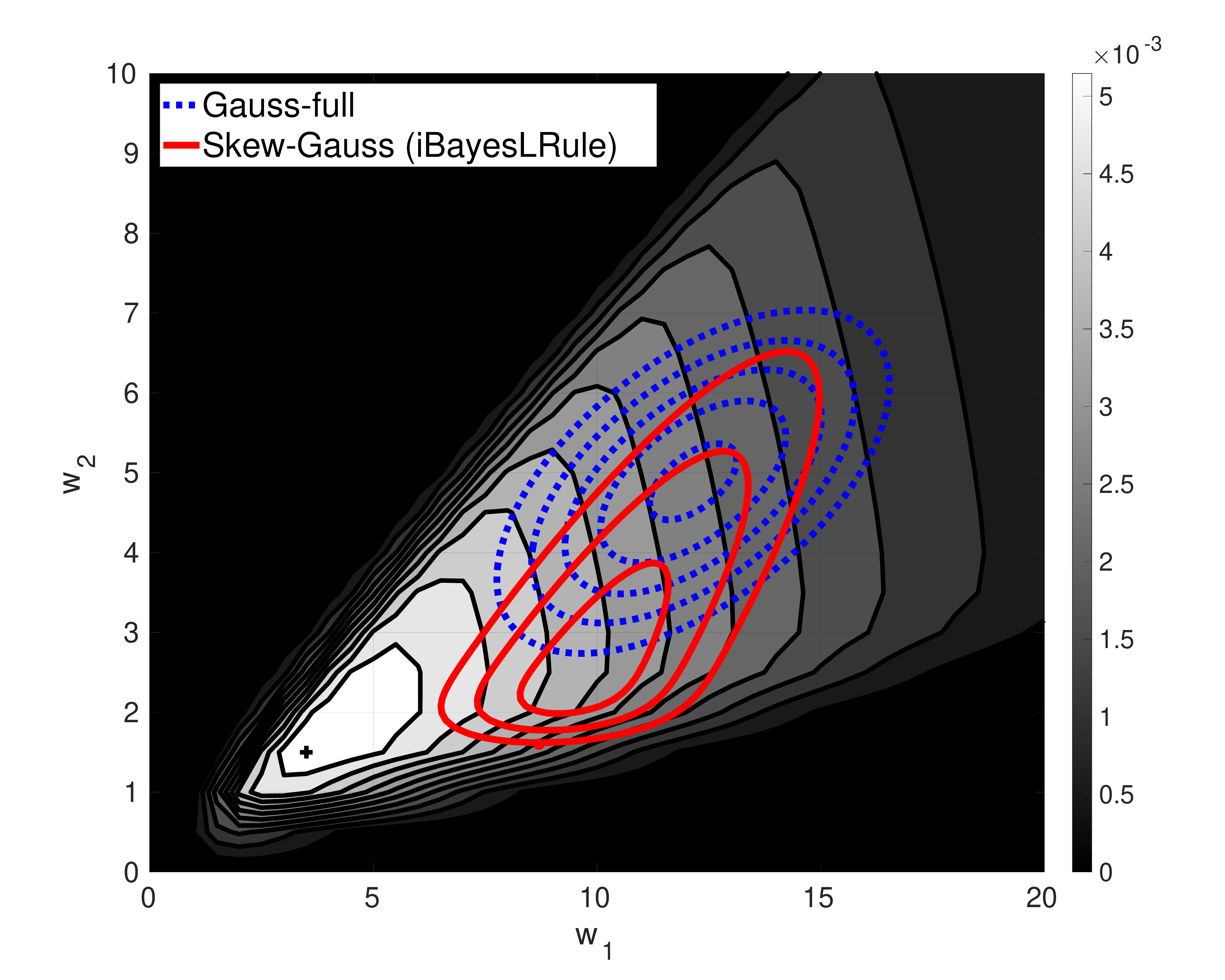}
  }
	\hspace*{-1.5cm}
\vspace{-0.5cm}
   \caption{
   The leftmost plot is mean-field Gaussian approximations for the toy Bayesian logistic regression example considered at Section \ref{sec:syn_examples}, where Vadam is proposed by \citet{khan18a}.
   The rightmost plot is a skew-Gaussian approximation with full covariance structure for the same example.
   }
	\label{figure:blr_plots}
\end{figure}
\vspace{-0.95cm}

\begin{figure}[H]
	\centering
	\hspace*{-1.5cm}
\subfigure[]{
\label{fig:unma}
  \includegraphics[width=0.34\linewidth]{./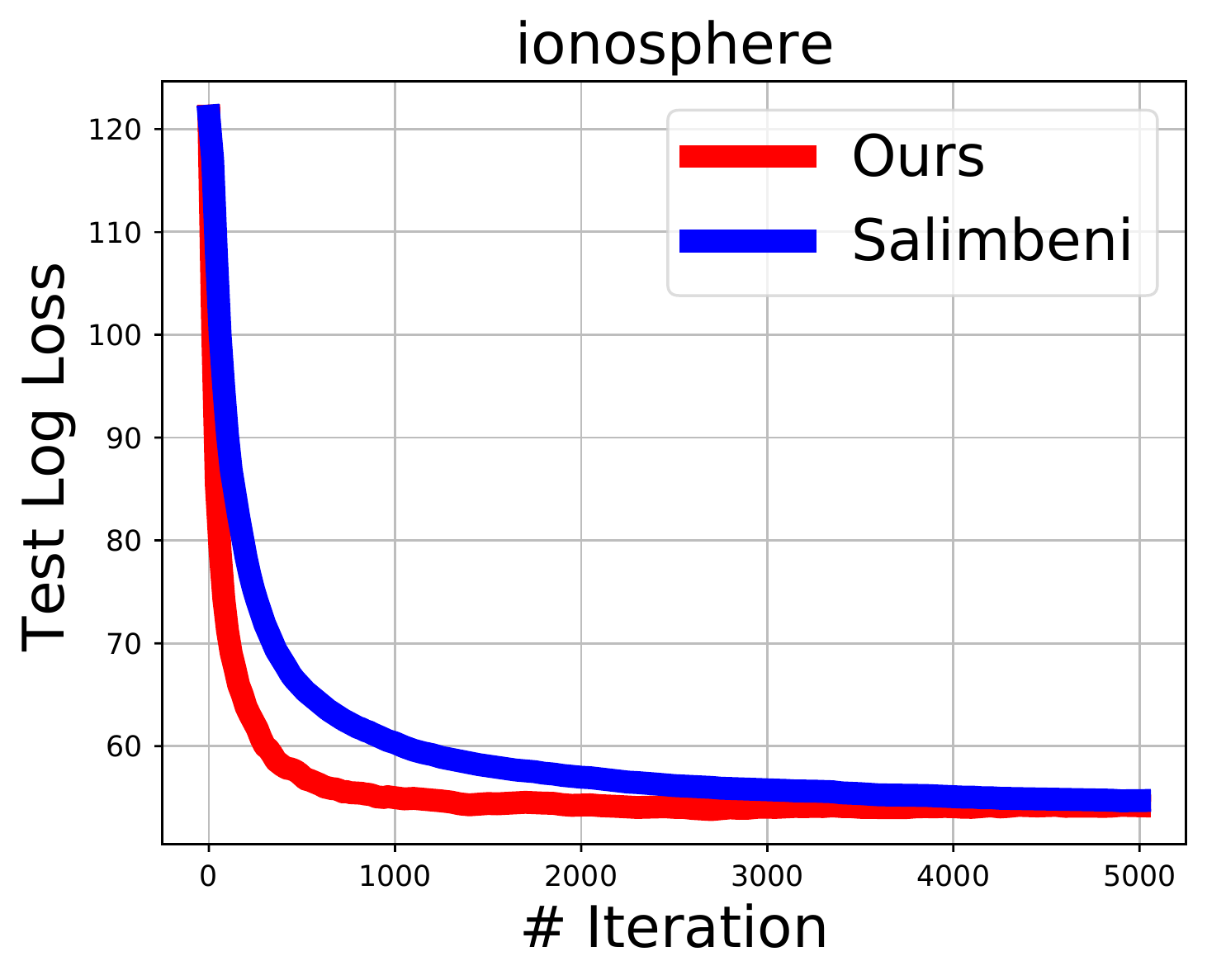}
  }
	\hspace*{0.5cm}
\subfigure[]{
\label{fig:unmb}
  \includegraphics[width=0.34\linewidth]{./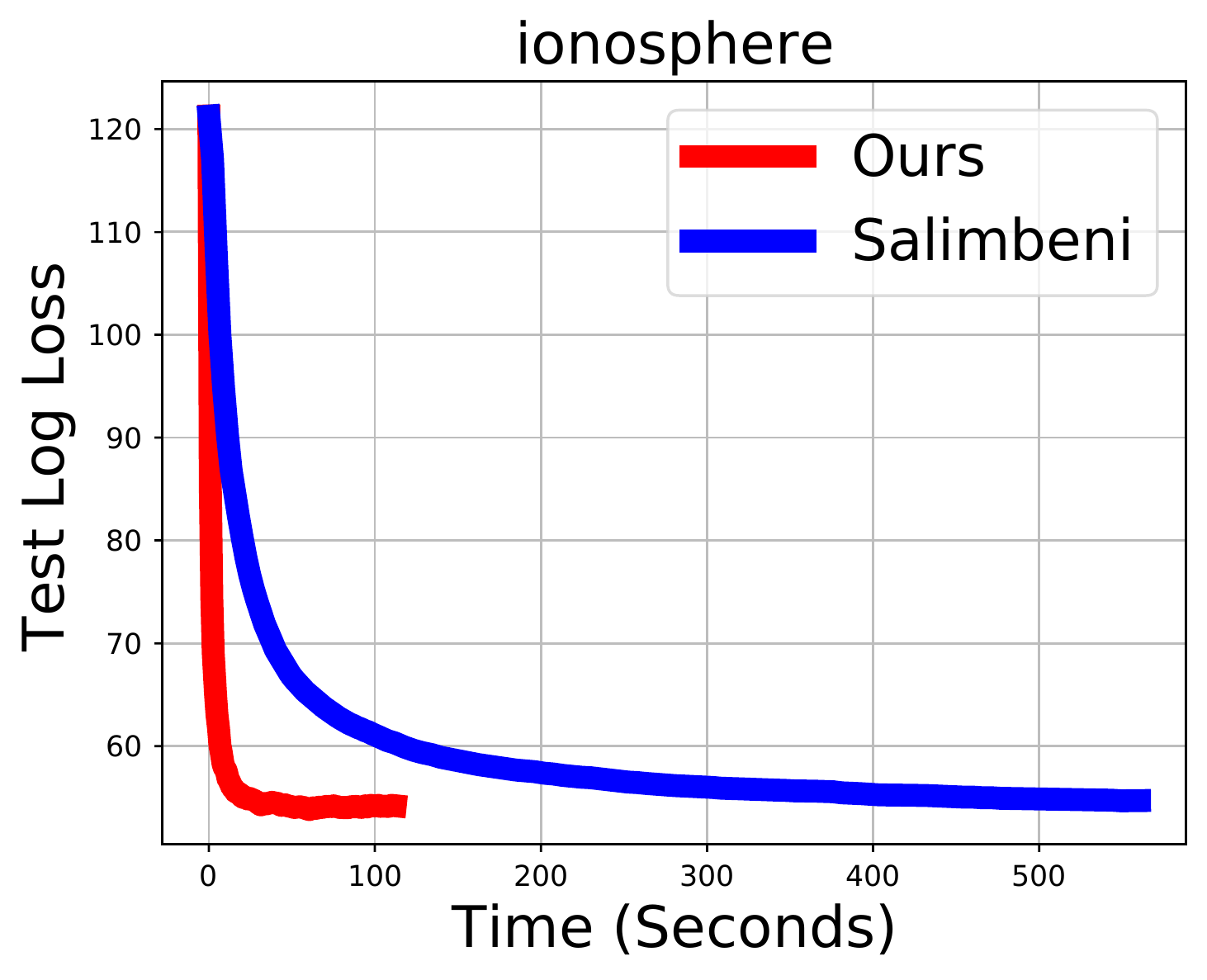}
  }
	\hspace*{-1.5cm}
\vspace{-0.5cm}
   \caption{
   We compare our method to the unconstrained transformation method proposed by \citet{salimbeni2018natural} in the Bayesian logistic model  with a full Gaussian approximation, where we use the re-parametrization trick to avoid computing the Hessian ($\nabla_{\lat}^2 \bar{\ell}(\vlat)$).  Both methods are implemented and tested in the same  environment.
   We tune the step size for each method by grid search.
   Our method can use a bigger step-size than theirs while still optimizing the objective function and maintaining numerical stability.
   Moreover, our method requires less memory than theirs.
   The leftmost plot shows the performance of both methods in terms of the number of iterations.
   The rightmost plot shows the performance of both methods from another perspective in terms of time, where both methods iterate $5,000$ iterations. From this plot, we can clearly see that our method has a lower iteration cost than theirs, where
    the standard gradient  computation time also is considered. If the standard gradient ($\nabla_{\lat} \bar{\ell}(\vlat)$) is computed beforehand, our update is at least  $6 \sim 10$ times faster than theirs.
   }
	\label{figure:unblr_plots}
\end{figure}
\vspace{-0.5cm}

\begin{figure*}[t]
	\centering
	\hspace*{-0.5cm}
  \includegraphics[width=0.7\linewidth]{./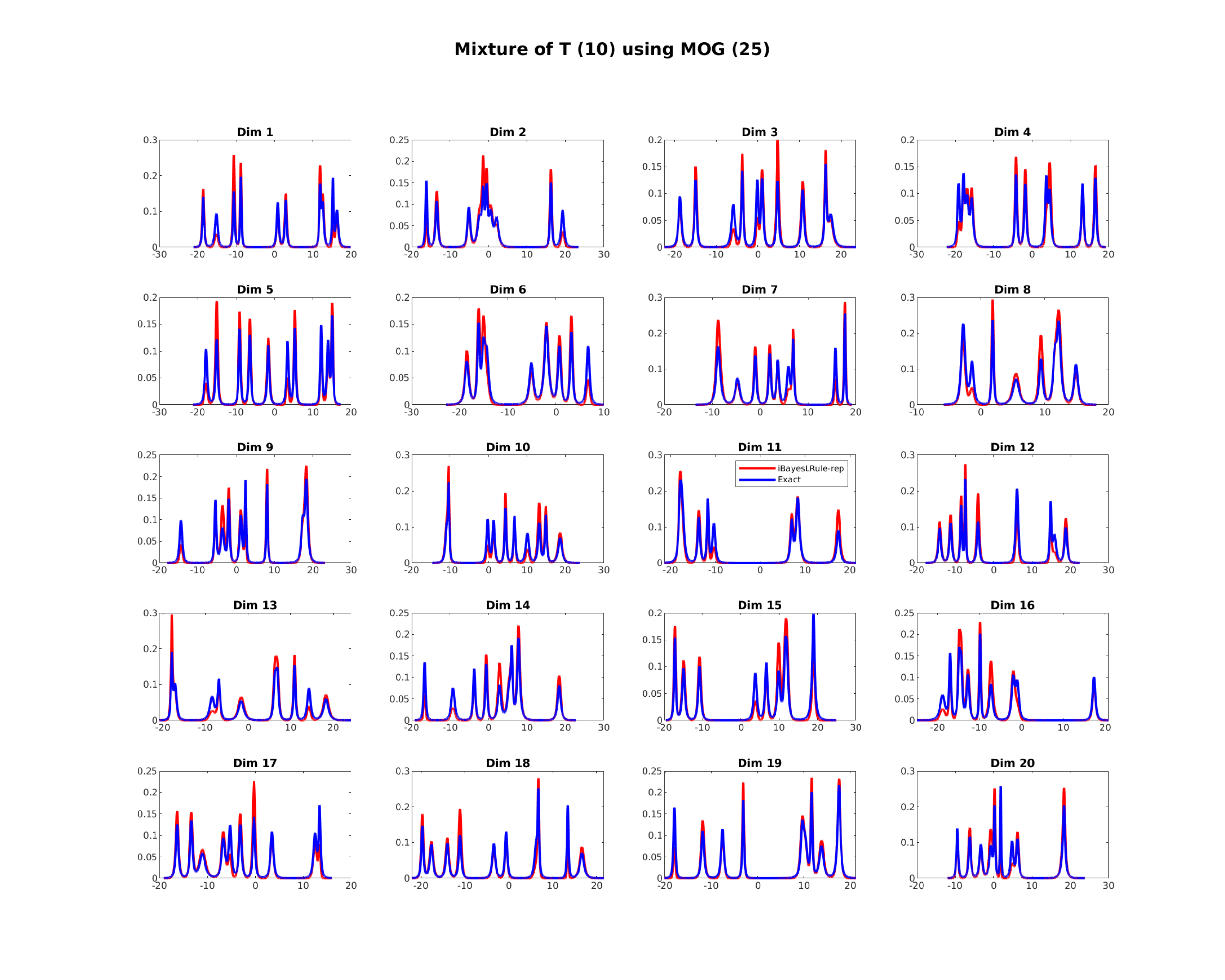}
  \includegraphics[width=0.7\linewidth]{./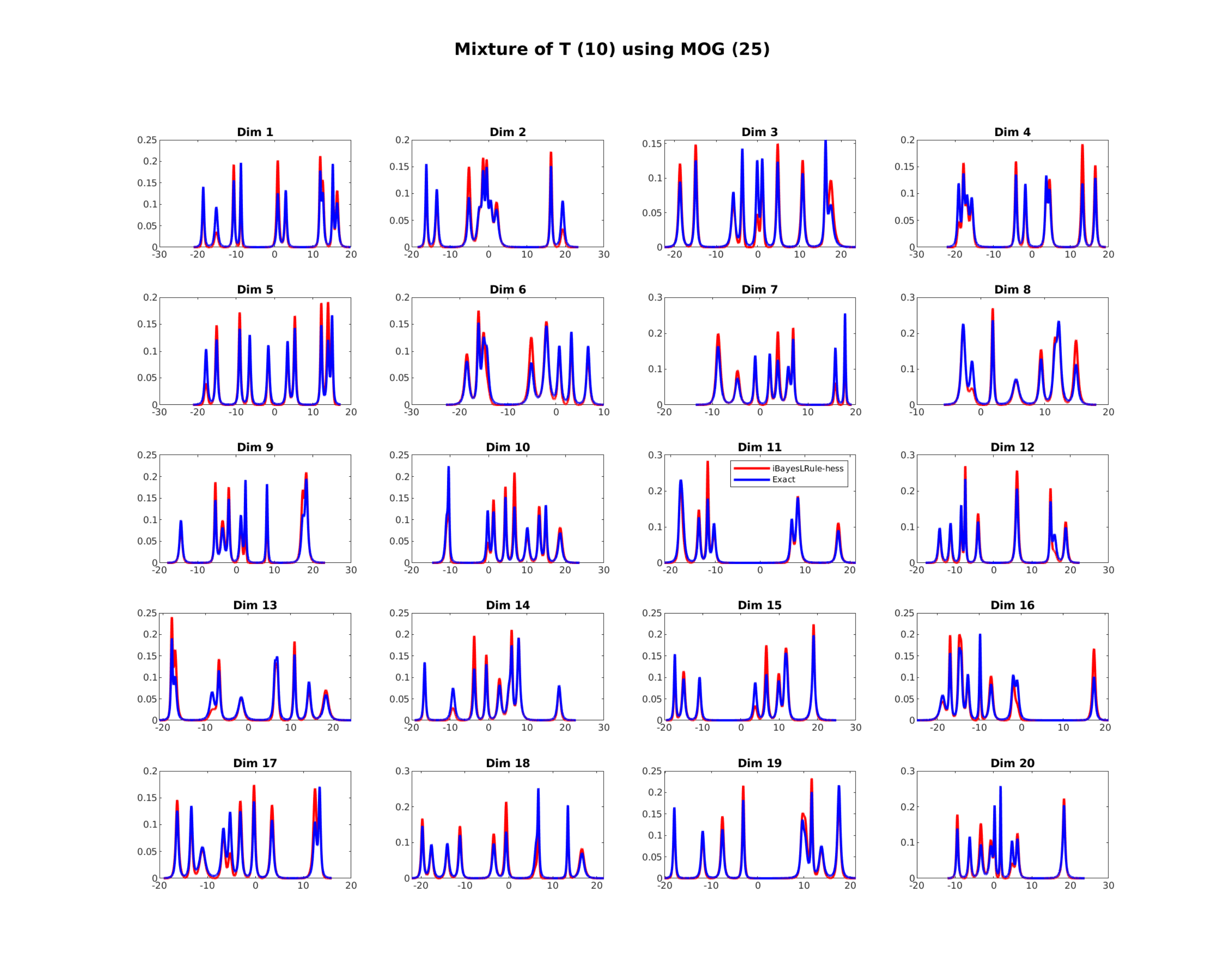}
   \caption{ This is a complete version of the leftmost figure in Figure \ref{figure:toy_examples}. The figure shows MOG approximation (with $K=25$) to fit an MOG model with 10 components in a 20 dimensional problem.}
	\label{figure:mixT20_plots}
\end{figure*}

\begin{figure*}[t]
	\centering
	\hspace*{-2.5cm}
	\begin{minipage}{\textwidth}
  \vspace{-2.5cm}
  \includegraphics[width=1.1\linewidth]{./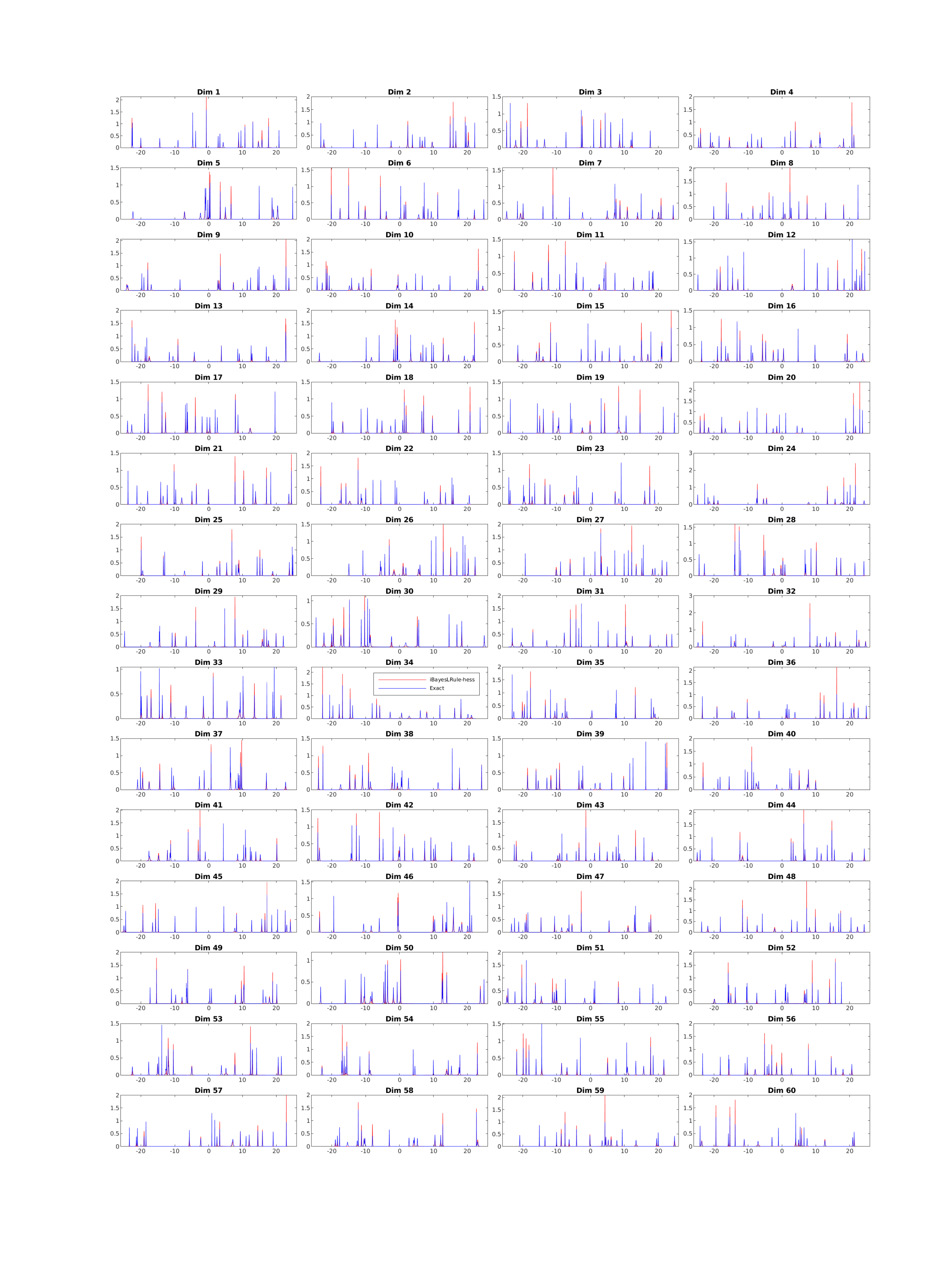}
	\end{minipage}
  \vspace{-2.5cm}
   \caption{
This is the first 60 marginal distributions obtained from a MOG approximation with $K=60$ for a 300-dimensional mixture of Student's T distributions with 20 components. 
 We describe the problem at Section \ref{sec:syn_examples},
where the approximation is obtained by our method at the 50,000-th iteration.
   }
	\label{figure:mixT300_p1_plots}
\end{figure*}

\begin{figure*}[t]
	\centering
	\hspace*{-2.5cm}
	\begin{minipage}{\textwidth}
  \vspace{-2.5cm}
  \includegraphics[width=1.1\linewidth]{./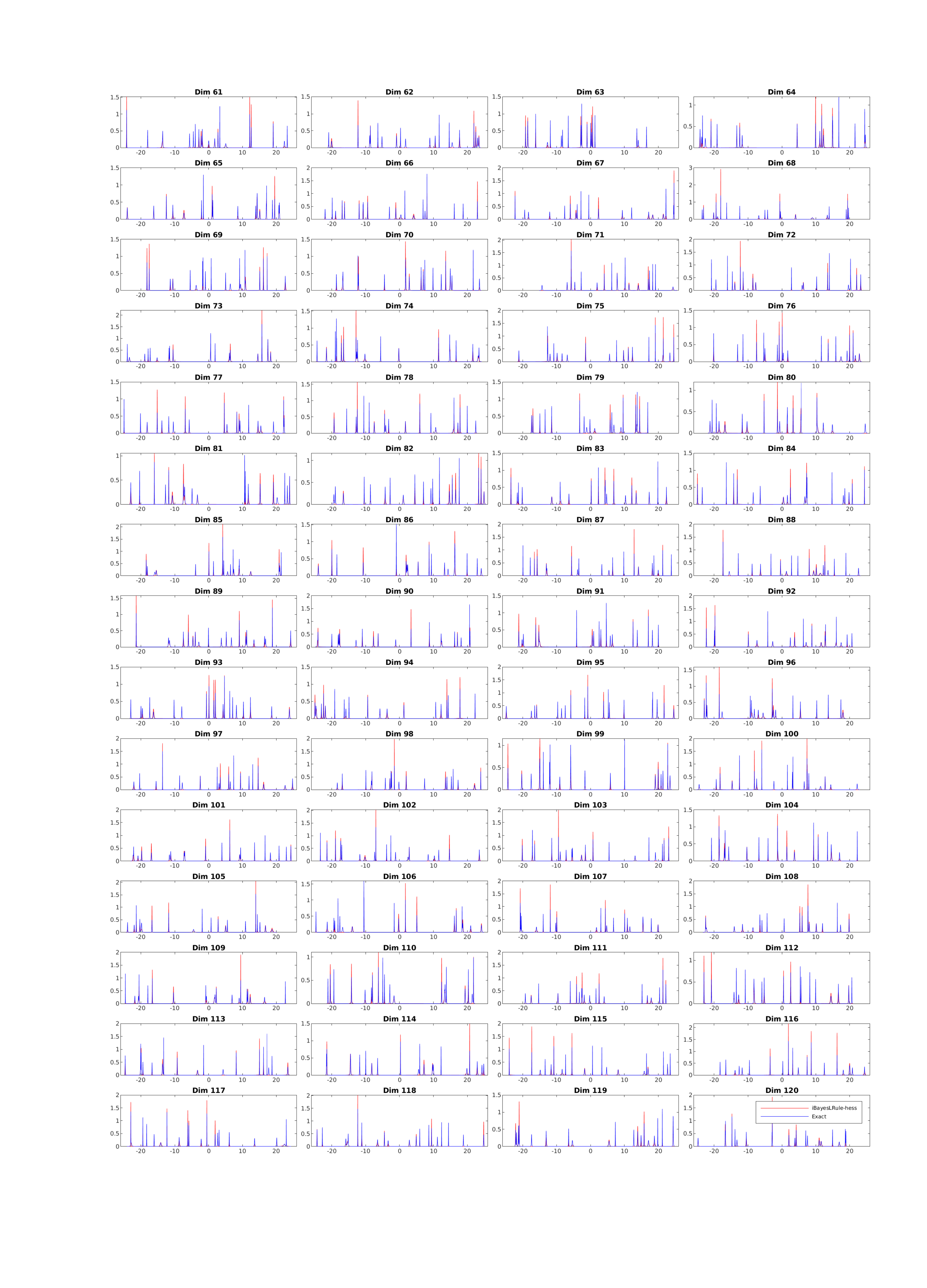}
  \end{minipage}
  \vspace{-2.5cm}
   \caption{
This is the second 60 marginal distributions obtained from a MOG approximation with $K=60$ for a 300-dimensional mixture of Student's T distributions with 20 components. 
 We describe the problem at Section \ref{sec:syn_examples},
where the approximation is obtained by our method at the 50,000-th iteration.
   }
	\label{figure:mixT300_p2_plots}
\end{figure*}

\begin{figure*}[t]
	\centering
	\hspace*{-2.5cm}
	\begin{minipage}{\textwidth}
  \vspace{-2.5cm}
  \includegraphics[width=1.1\linewidth]{./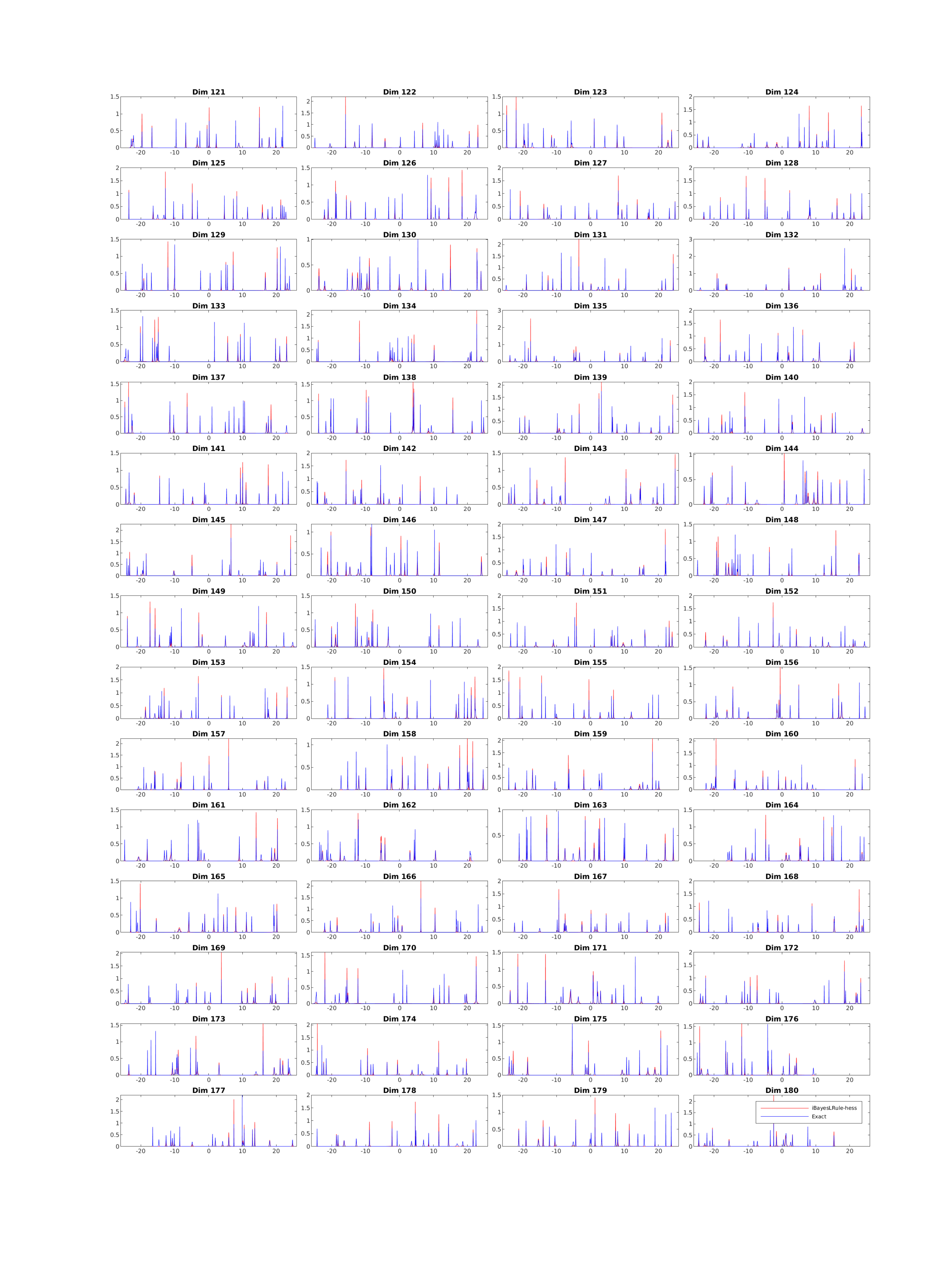}
  \end{minipage}
  \vspace{-2.5cm}
   \caption{
This is the third 60 marginal distributions obtained from a MOG approximation with $K=60$ for a 300-dimensional mixture of Student's T distributions with 20 components. 
 We describe the problem at Section \ref{sec:syn_examples},
where the approximation is obtained by our method at the 50,000-th iteration.
   }
	\label{figure:mixT300_p3_plots}
\end{figure*}

\begin{figure*}[t]
	\centering
	\hspace*{-2.5cm}
	\begin{minipage}{\textwidth}
  \vspace{-2.5cm}
  \includegraphics[width=1.1\linewidth]{./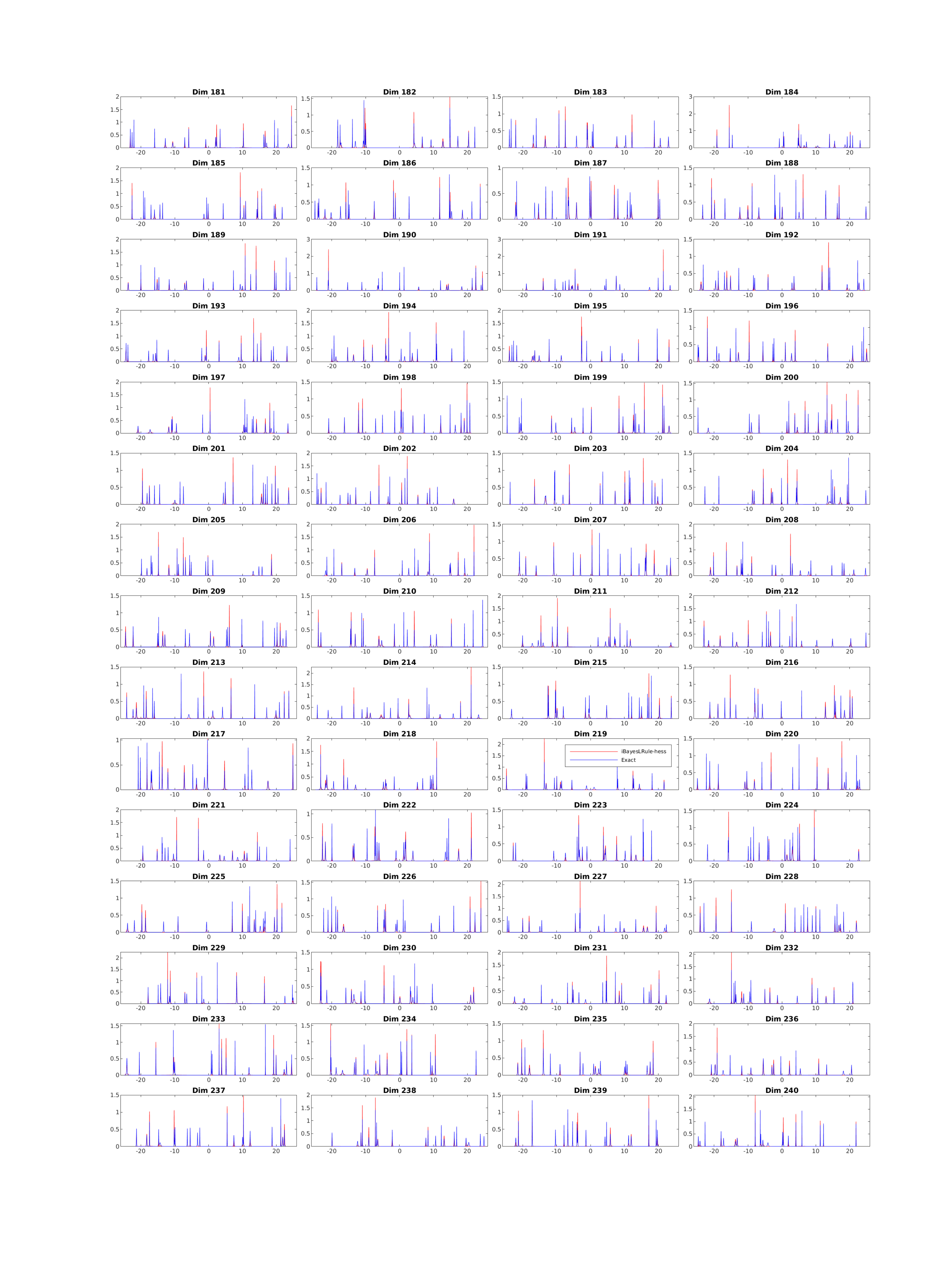}
  \end{minipage}
  \vspace{-2.5cm}
   \caption{
This is the fourth 60 marginal distributions obtained from a MOG approximation with $K=60$ for a 300-dimensional mixture of Student's T distributions with 20 components. 
 We describe the problem at Section \ref{sec:syn_examples},
where the approximation is obtained by our method at the 50,000-th iteration.
   }
	\label{figure:mixT300_p4_plots}
\end{figure*}

\begin{figure*}[t]
	\centering
	\hspace*{-2.5cm}
	\begin{minipage}{\textwidth}
  \vspace{-2.5cm}
  \includegraphics[width=1.1\linewidth]{./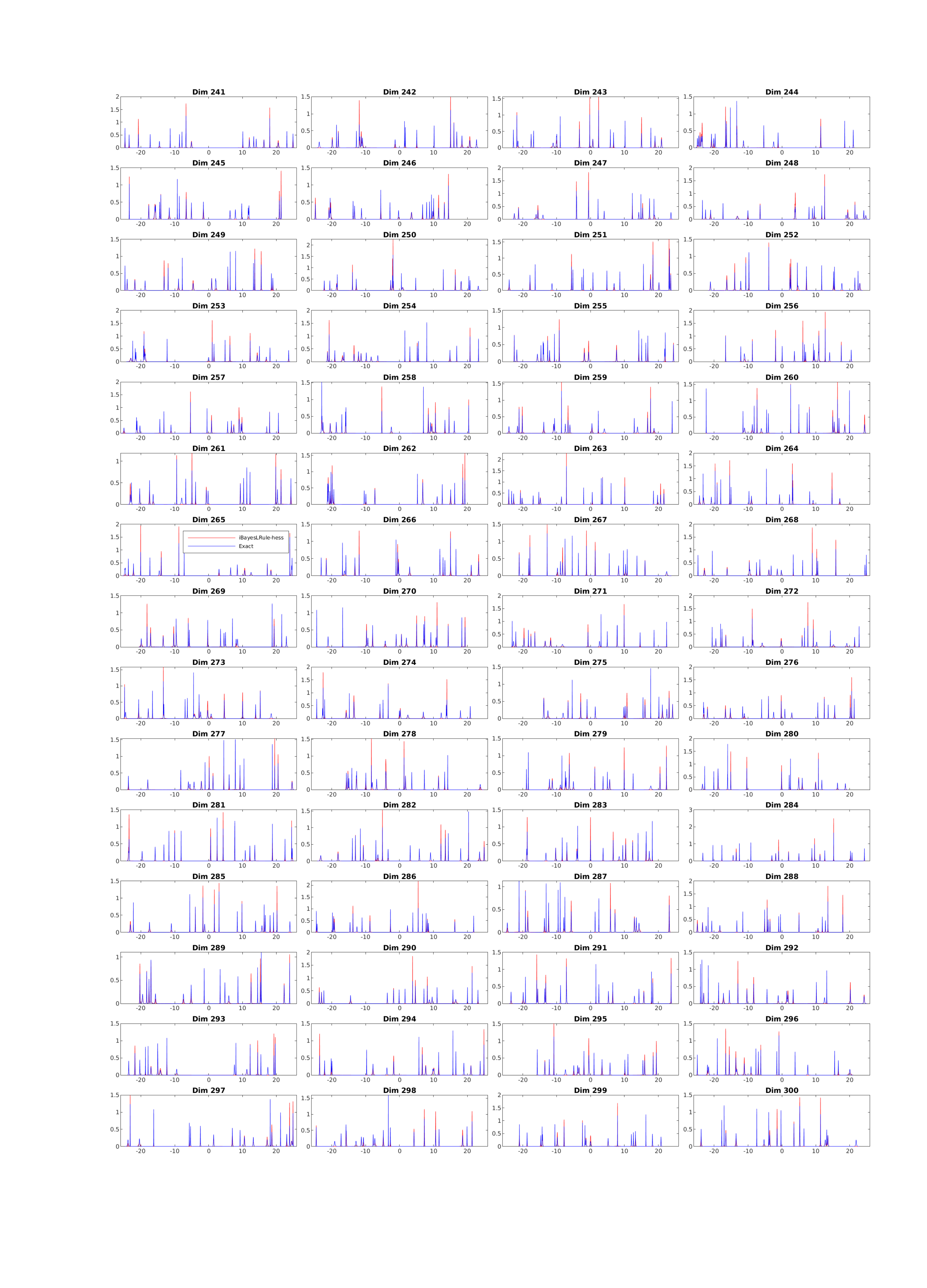}
  \end{minipage}
  \vspace{-2.5cm}
   \caption{
This is the last 60 marginal distributions obtained from a MOG approximation with $K=60$ for a 300-dimensional mixture of Student's T distributions with 20 components. 
 We describe the problem at Section \ref{sec:syn_examples},
where the approximation is obtained by our method at the 50,000-th iteration.
   }
	\label{figure:mixT300_p5_plots}
\end{figure*}

\end{appendices}

\end{document}